
\documentclass[10pt,journal,compsoc]{IEEEtran}
%


%
\usepackage{graphicx}
\usepackage{amsmath}
\usepackage{amsthm}
\usepackage{amssymb}
\usepackage{booktabs}
\usepackage{algorithm}
\usepackage{xcolor}
\definecolor{mydarkred}{rgb}{0.6,0,0}
\definecolor{mydarkgreen}{rgb}{0,0.6,0}
\usepackage[breaklinks=true,
            colorlinks,
            linkcolor = mydarkred,
            urlcolor  = purple, 
            citecolor = mydarkgreen,
            bookmarks = false]{hyperref}
\usepackage{algorithmic}
\usepackage{multirow,amsfonts,makecell,array}
\usepackage{threeparttable}
\usepackage{xcolor}

\newcommand{\mb}[1]{\mathbb{#1}}
\newcommand{\myPara}[1]{\vspace{.05in}\noindent\textbf{#1}}
\newcommand{\best}[1]{{\textbf{#1}}}
\newcommand{\secbest}[1]{{\textcolor{blue}{#1}}}
\newcommand{\ie}{\textit{i}.\textit{e}.}
\newcommand{\eg}{\textit{e}.\textit{g}.}

\graphicspath{{./fig/}}
\usepackage{float}
\usepackage{subfig}
\usepackage{bbm}

\newtheorem{Definition}{Definition}
\newtheorem{Theorem}{Theorem}
\newtheorem{Lemma}{Lemma}
\newtheorem{Assumption}{Assumption}

\newenvironment{thmbis}[1]
{%
	\addtocounter{Theorem}{-1}%
	\begin{Theorem}}
	{\end{Theorem}}
%
\ifCLASSOPTIONcompsoc
  \usepackage[nocompress]{cite}
\else
  \usepackage{cite}
\fi
%

%
\ifCLASSINFOpdf
\else
\fi
\hyphenation{op-tical net-works semi-conduc-tor}

\begin{document}
%
\title{Multi-Label Noise Transition Matrix Estimation with Label Correlations: Theory and Algorithm}
%
%
%
%

\author{Shikun~Li,
   Xiaobo~Xia,
   Hansong~Zhang,
   Shiming Ge$^\dagger$,~\IEEEmembership{Senior Member, IEEE,}\\
  Tongliang Liu,~\IEEEmembership{Senior Member, IEEE}
  
\thanks{$^\dagger$\quad \ Corresponding author.}
\IEEEcompsocitemizethanks{
\IEEEcompsocthanksitem S. Li, H. Zhang, and S. Ge are with the Institute of Information Engineering, Chinese
Academy of Sciences, Beijing 100095, China, and also with the School of Cyber
Security at University of Chinese Academy of Sciences, Beijing 100049,
China (e-mail: \{lishikun,hansongzhang,geshiming\}@iie.ac.cn).
\IEEEcompsocthanksitem X. Xia and T. Liu are with the Sydney AI Center, School of Computer Science, Faculty of Engineering, The University of Sydney, Darlington, NSW2008, Australia (e-mail: xxia5420@uni.sydney.edu.au;  tongliang.liu@sydney.edu.au).}
}

\IEEEtitleabstractindextext{%
\begin{abstract}
Noisy multi-label learning has garnered increasing attention due to the challenges posed by collecting large-scale accurate labels, making noisy labels a more practical and cost-effective alternative. Motivated by noisy multi-class learning, where the noise transition matrix is employed to represent the probabilities that clean labels flip into noisy labels, the introduction of transition matrices can help model multi-label noise and enable the development of statistically consistent algorithms for noisy multi-label learning. However, estimating multi-label noise transition matrices remains a challenging task, as most existing estimators in noisy multi-class learning rely on anchor points and accurate fitting of noisy class posteriors, which is hard to satisfy in noisy multi-label learning. In this paper, we address this problem by first investigating the identifiability of class-dependent transition matrices in noisy multi-label learning. Building upon the insights gained from the identifiability results, we propose a novel estimator that leverages label correlations without the need for anchor points or precise fitting of noisy class posteriors. Specifically, we first estimate the occurrence probability of two noisy labels to capture noisy label correlations. Subsequently, we employ sample selection techniques to extract information implying clean label correlations, which are then used to estimate the occurrence probability of one noisy label when a certain clean label appears. By exploiting the mismatches in label correlations implied by these occurrence probabilities, we demonstrate that the transition matrix becomes identifiable and can be acquired by solving a straightforward bilinear decomposition problem. Theoretically, we establish an estimation error bound for our multi-label transition matrix estimator and derive a generalization error bound for our statistically consistent algorithm with true transition matrices. Empirically, we validate the effectiveness of our estimator in estimating multi-label noise transition matrices, leading to excellent classification performance.

\end{abstract}

\begin{IEEEkeywords}
noisy multi-label learning, noise transition matrix, label correlation, robustness, generalization
\end{IEEEkeywords}}

\maketitle

\IEEEdisplaynontitleabstractindextext

%
\IEEEpeerreviewmaketitle

\IEEEraisesectionheading{\section{Introduction}\label{sec:introduction}}
	\IEEEPARstart{I}{n} real-world scenarios, an instance is naturally associated with multiple labels, and these labels have \textit{complex entangled correlations}~\cite{ChenWWG19}. 
	Recently, the problem of label-noise learning in multi-label classification has received more and more attention~\cite{Liu2021TPAMI,Pene2021MultiLabelGA,Xie2022TPAMI,xia2022sample,wu2021class2simi,wu2021lr,Song2022LearningFN}, since it is time-consuming and expensive to collect large-scale accurate labels and the noisy labels are much cheaper and easier to acquire.
	In the setting of \textit{noisy multi-label learning}, the multiple labels assigned to an instance may be corrupted simultaneously. That is to say, any label for each class can be flipped with its respective \textit{transition matrix} that denotes the transition relationship from clean labels to noisy labels. 
	
	The noise transition matrix has been utilized to build a series of \textit{statistically consistent} algorithms for \textit{noisy multi-class learning}~\cite{natarajan2013learning,XiaLW00NS19,xia2022extended,Cheng22nips}.  The main advantage of these consistent algorithms is that they can guarantee to vanish the differences between the classifiers learned from noisy data and the optimal ones from clean data by increasing the size of noisy examples~\cite{Liu2016TPAMI,PatriniRMNQ17,XiaLW00NS19,shu2020meta}.
	
	Fortunately, these statistically consistent algorithms for noisy multi-class learning can also be applied in such noisy multi-label learning with a little  modification~\cite{Xie2022TPAMI}.
	However, the effectiveness of these algorithms heavily relies on estimating the transition matrix. Although the estimation of the transition matrix has been investigated in noisy multi-class learning, the estimation of the transition matrix in noisy multi-label learning has not been studied and remains challenging. Specifically, a series of methods~\cite{Liu2016TPAMI,PatriniRMNQ17,YaoL0GD0S20,XiaLW00NS19,LiL00S21} have been proposed to estimate the transition matrix for noisy multi-class learning. Most of them assume the existence of anchor points~\cite{Liu2016TPAMI,PatriniRMNQ17,YaoL0GD0S20} that are defined as the training examples belonging to a particular clean class surely. Nevertheless, the assumption is strong and hard to check when we only have noisy data~\cite{XiaLW00NS19}. Also, the methods need to accurately fit the noisy or intermediate class posterior of anchor points, which is rather difficult in multi-label cases, due to severe positive-negative imbalance~\cite{RidnikBZNFPZ21}.
 
	In this paper, to address the problem of estimating the noise transition matrix in noisy multi-label learning, we consider utilizing label correlations among noisy multiple labels~\cite{xia2023multi}. Specifically, some label correlations that should \textit{not exist} in practice are included in noisy multi-label learning. For example, as illustrated in Fig.~\ref{solution1}, ``fish'' and ``water'' always co-occur, while ``bird'' and ``sky'' always co-occur.
	Nevertheless, due to label errors, there is a \textit{slight correlation} between ``fish'' and ``sky'', which is impractical. At a high level, we can utilize \textit{the mismatch of label correlations} to identify the transition matrix without neither anchor points nor accurate fitting of noisy class posterior.
    \begin{figure}[!t]
		\centering
   \includegraphics[width=1.0\linewidth, trim=5 5 5 5,clip]{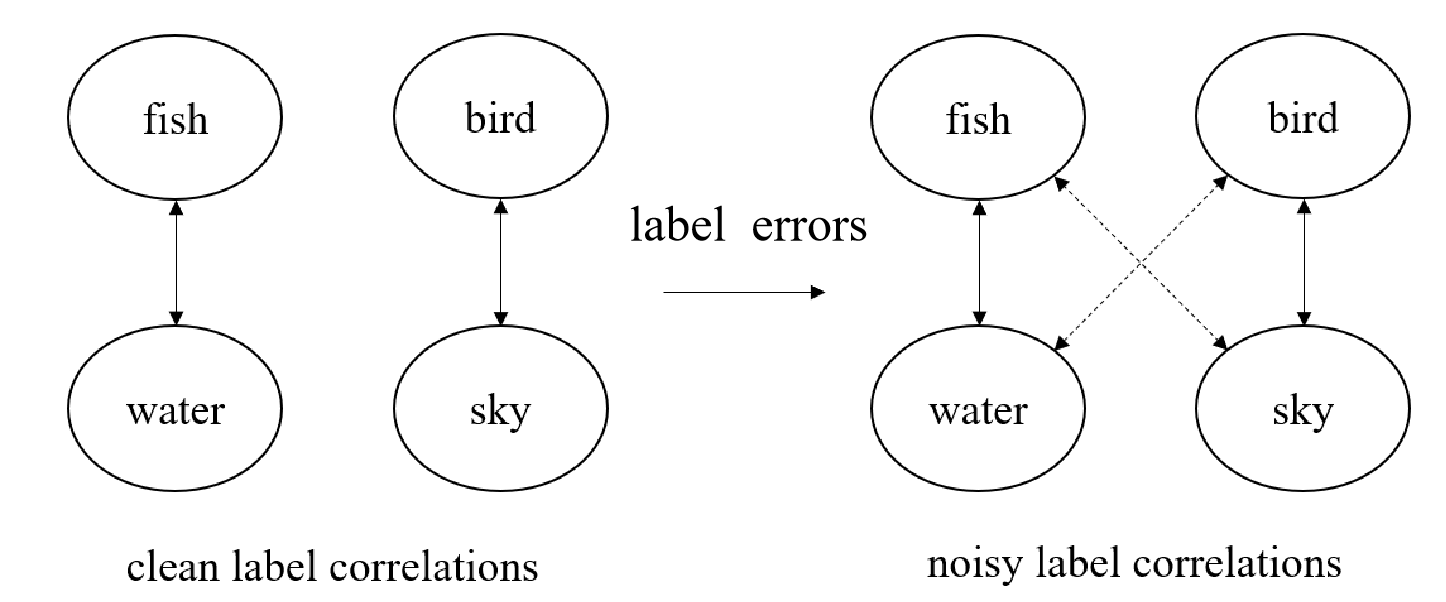}

		\caption{The illustration of the mismatch of label correlations among multiple class labels.}

		\label{solution1}
	\end{figure}
	
	In more detail, we first focus on the \textit{identifiability problem} of the class-dependent transition matrix in noisy multi-label learning. Accordingly, a new method that estimates the transition matrix by exploiting label correlations is proposed. That is, 
	motivated by the identifiability result that the label correlation of two noisy labels can not suffice to identify the transition matrix in noisy multi-label learning, we utilize \textit{sample selection} to extract useful information from noisy data, which implies clean label correlations to achieve the identifiability. Afterward, we not only estimate the \textit{occurrence probability} of two noisy labels in noisy data, but also of one noisy label when a certain clean label appears in selected data. By utilizing the mismatch of label correlations implied in these occurrence probabilities, we can prove the identifiability, and transform the problem of estimating the transition matrix using label correlations into a problem of \textit{bilinear decomposition}.
	Finally, with easy frequency counting, we can get a great estimation of the noise transition matrix.
    
	We conduct extensive experiments to justify our claims. Empirical results illustrate the effectiveness of the proposed estimator for estimating the transition matrix in noisy multi-label learning, and the consistent algorithms with our estimator can achieve better classification performance over various comparison methods.

    A preliminary version of this work was presented at the Thirty-seventh Conference on Neural Information Processing Systems (NeurIPS 2022) and selected to be ``Spotlight''~\cite{li2022estimating}. Compared to the preliminary version, we include new theoretical analyses and more experimental results. Specifically, in Section~\ref{addtheory}, we derive theoretical analyses that present an estimation error bound for our multi-label transition matrix estimator and a generalization error bound for the statistically consistent algorithm~\cite{Liu2016TPAMI,xia2020part} with true transition matrices. The proofs of the established error bounds are also provided in Appendices. Additional discussions and empirical results with instance-dependent label noise are included in Section~\ref{instance}. More experiments about ablation studies are also presented in Section~\ref{other}, which are consistent with our theoretical analyses. Our implementation is open-sourced at: \href{https://github.com/tmllab/Multi-Label-T}{https://github.com/tmllab/Multi-Label-T}.
\subsection{Contributions}
Before delving into details, we highlight the main contributions of this paper as follows:
\begin{itemize}
		\item We focus on an important but little explored problem of noisy multi-label learning, \ie, the estimation of multi-label noise transition matrices. In particular, we first prove some identifiability results of class-dependent transition matrices in such a setting.
		\item Inspired by the identifiability results, we propose a new estimator to learn the transition matrix by utilizing the wealth of label correlations without anchor points and accurate fitting of noisy class posterior.
        \item We provide an estimation error bound for our multi-label transition matrix estimator and a generalization error bound for the statistically consistent algorithm with true transition matrices, which theoretically justify the effectiveness of the proposed method.
	\item Empirical results across various noisy multi-label benchmarks with simulated class-dependent, instance-dependent, and real-world label noise, demonstrate the superiority of our transition matrix estimator.  Comprehensive ablation studies and discussions are also provided.
	\end{itemize}

\subsection{Organization}
The rest of the paper is organized as follows. In Section~\ref{setting}, we introduce the problem setting of label-noise learning in multi-label classification. In Section~\ref{method_intro}, we discuss the identifiability of the noise transition matrix under such a noisy multi-label setting, and introduce our estimation method. In Section~\ref{addtheory}, we provide theoretical analyses to justify the effectiveness of our proposed method. Experimental results are provided in Section~\ref{experiments}. 
Ablation studies about transition matrix estimation, multi-label classifier learning and the choice of hyperparameter are presented in Section~\ref{other}. 
Finally, we conclude the paper in Section~\ref{conclusion}. 
To improve readability, more discussions, proofs, and experimental results are provided in supplemental materials.




\section{Problem Setting}
	\label{setting}
	In this section, we introduce the problem setting of noisy multi-label learning. In what follows, scalars/vectors are in lowercase letters, and matrices/variables are in uppercase letters. For simplicity, let $[q]=\{1,\ldots,q\}$.
	
	\myPara{Preliminaries.}
	Let $D$ be the distribution of a pair of random variables $({X}, {Y})$, where ${X} \in \mathcal{X} \subseteq \mathbb{R}^{d}$ denotes the variable of instances, and  ${Y}=\{Y^{1},Y^{2},...,Y^{q}\} \in \{0,1\}^q $ denotes the variable of targets with $q$ possible class labels. As for ${Y}$, $Y^{j}=1$ indicates that the instance ${x}$ is associated with the class $j$; $Y^{j}=0$, otherwise. In multi-label learning, the goal is to learn a function about $D$ which maps each unseen instance ${x} \in \mathcal{X} $ to proper labels ${y}$. However, as discussed, ${Y}$ is hard to be annotated precisely. Before being observed, their true labels are independently flipped and what we can obtain are noisy training examples $\mathcal{D}_{t}=\left\{\left({x}_{i}, \bar{{{y}}}_i\right)\right\}_{i=1}^{n}$, where $\bar{{y}}_i=\{\bar{y}^{1}_{i},\bar{y}^{2}_{i},...,\bar{y}^{q}_{i}\}$ denotes noisy labels. Let $\bar{D}$ be the distribution of the noisy random variables $({X}, {\bar{Y}}) \in \mathcal{X} \times \{0,1\}^q $. In noisy multi-label learning, our goal is to infer proper labels for each unseen instance by \textit{only} using noisy training examples.
	
	\myPara{Noise transition matrix.} The random variables $\bar{Y}^{j}$ and $Y^{j}$ for the class $j$ are related through a noise transition matrix ${T}^{j} \in[0,1]^{2 \times 2}$, $j\in [q]$. 
	Generally, the transition matrix depends on instances, \ie, its element 
 ${T}^{j}_{i k}({X}={x})$ represents the probability $\mathbbm{P}(\bar{Y}^{j}=k \mid Y^{j}=i, {X}={x})$. Nevertheless, given only noisy examples, the instance-dependent transition matrix is \textit{non-identifiable} without any additional assumption~\cite{XiaLW00NS19,yao2021instance}. For example, both $\mathbbm{P}(\bar{Y}^j=k|{X}={x})=\sum_{i=0}^1 T^j_{ik}({X}={x})\mathbbm{P}(Y^j=i|{X}={x})$ and $\mathbbm{P}(\bar{Y}^j=k|{X}={x})=\sum_{i=0}^1 T^{\prime j}_{ik}({X}={x}) \mathbbm{P}^{\prime}(Y^j=i|{X}={x})$ are valid, for any $\mathbbm{P}^{\prime}(Y^j=0|{X}={x}) \in [0,1]$, 
 $\mathbbm{P}^{\prime}(Y^j=1|{X}={x})=1-\mathbbm{P}^{\prime}(Y^j=0|{X}={x})$,
 and
	$T_{ik}^{\prime j}({X}={x})=T^j_{ik}({X}={x}) \mathbbm{P}(Y^j=i | {X}={x}) / \mathbbm{P}^{\prime}(\bar{Y}^j=i | {X}={x})$. 
	Therefore, in this paper, we assume that the transition matrix is class-dependent and instance-independent~\cite{PatriniRMNQ17}, 
	\ie, $\mathbbm{P}(\bar{Y}^{j}=k \mid Y^{j}=i, {X}={x})=\mathbbm{P}(\bar{Y}^{j}=k \mid Y^{j}=i)$. The definition of the class-dependent label noise can be found in Appendix~\ref{discuss5}, where we further discuss its differences with the class-dependent label noise in the multi-class cases.
 
	\myPara{Statistically consistent algorithms.}
	The transition matrix bridges the class posterior probabilities for noisy and clean data, \ie, $\mathbbm{P}(\bar{Y}=k \mid {X}={x})=\sum_{i=0}^{1} T_{i k} \mathbbm{P}(Y=i \mid{X}={x})$. Therefore, it has been exploited to achieve many statistically consistent algorithms in noisy multi-class learning. Specifically, it has been utilized to build risk-consistent estimators via correcting loss functions \cite{Liu2016TPAMI,PatriniRMNQ17,XiaLW00NS19}, and to design classifier-consistent estimators via limiting hypotheses, \eg, \cite{PatriniRMNQ17,ChengZLGSL21,ZhuL021}. Since the multi-label task can be decomposed into multiple conditionally independent binary classification problems, we also can apply these consistent methods in noisy multi-label learning~\cite{Xie2022TPAMI}.
	In this paper, without loss of generality, by employing the importance reweighting (abbreviated as ``Reweight'') algorithm~\cite{Liu2016TPAMI,XiaLW00NS19}, we build the risk-consistent estimator $\bar{R}_{n, w}(\{{T}^{j}\}^q_{j=1}, {f})$ to learn statistically consistent classifiers with noise transition matrices:
\begin{align}
& \quad \bar{R}_{n, w}(\{{T}^{j}\}^q_{j=1}, {f})\\\nonumber
  &=\frac{1}{n} \sum_{i=1}^{n} \sum_{j=1}^{q} \frac{\mathbbm{\hat{P}}( {Y}^{j}=\bar{y}^{j}_{i} \mid {X}={x}_i)}{\mathbbm{\hat{P}}( \bar{Y}^{j}=\bar{y}^{j}_{i} \mid {X}={x}_i)} \ell\left(f_j\left({x}_{i}\right), \bar{y}^{j}_{i}\right),
		\label{reweiht}
\end{align}
where $\ell$ is the binary cross entropy function, and ${f}=(f_1,f_2,...,f_q)$ is the learnable $q$ classification functions; the subscript $w$ denotes that the loss function is weighted. 
 More details of the Reweight algorithm for noisy multi-label learning can be found in Appendix~\ref{Reweight}.
	
	\myPara{Transition matrix estimation.}  As inaccurate transition matrices will degenerate the performances of these consistent algorithms, a series of estimation methods~\cite{Liu2016TPAMI,XiaL0WGL0TS20,YaoL0GD0S20,ZhuSL21,LiL00S21} have been proposed for noisy multi-class learning to efficiently identify the transition matrix. However, most of them require the assumption of anchor points~\cite{PatriniRMNQ17,YaoL0GD0S20,LiL00S21}, which is strong and hard to check in multi-label cases when only noisy data are provided~\cite{XiaLW00NS19}. Besides, severe positive-negative imbalance in multi-label learning ~\cite{RidnikBZNFPZ21} will make it difficult to accurately approximate the noisy or intermediate class posterior of anchor points, which is crucial for these methods. This motivates us to seek for a better estimator that can do without anchor points and avoid estimating noisy posterior in noisy multi-label learning.
	
	\section{Estimating Transition Matrices with Label Correlations}
	\label{method_intro}
	In this section, we first study the identifiability problem~\cite{abs-2202-02016} of class-dependent transition matrices in the cases of multi-label learning (Section~\ref{Identifiability}). Furthermore, inspired by these results, we propose a new estimator to estimate the noise transition matrix by utilizing label correlations~(Section~\ref{ours}). 
	
	
	\subsection{Identifiability of Transition Matrix}
	\label{Identifiability}
	Recently, Liu et al.~\cite{abs-2202-02016} built identifiability of the noise transition matrix on Kruskal’s identifiability results. Inspired by them, with the complex correlations among class labels, we can get some identifiability results of the class-dependent transition matrix in noisy multi-label learning.
	
	Following~\cite{abs-2202-02016}, to define identifiability, we denote an observation space by $\Omega$. For a general parametric
	space $\Theta$, we denote the distribution (probability density function) induced by the parameter $\theta \in \Theta$ on the observation space $\Omega$ as $\mathbbm{P}_{\theta}$~\cite{Allman2009IdentifiabilityOP}. The identifiability for a general parametric space is defined as follows.
	\begin{Definition} [Identifiability~\cite{abs-2202-02016}] The parameter $\theta$ is identifiable if $\mathbbm{P}_\theta \neq \mathbbm{P}_{\theta^{\prime}}, \forall \theta \neq \theta^{\prime}$.
	\end{Definition}
	

	For the class-dependent transition matrix ${T}^j$ for class $j$, the identifiability can be defined as the following, when ${T}^j$ is a part of $\theta$.
	\begin{Definition} [Identifiability of ${T}^j$] ${T}^j$ is identifiable if  $\mathbbm{P}_{\theta} \neq \mathbbm{P}_{\theta^{\prime}}$ for
		$\theta \neq \theta^{\prime}$, up to the label permutation of class $j$.
		\label{define2_}
	\end{Definition}
	\myPara{Remark~1.} The label permutation of class $j$ means swapping 1 and 0 class values in class $j$, and the rows of ${T}^j$ will also swap. 
	Note that $\Omega$ does not necessarily include all observable variables. We use an example to better understand it. For example, let $\Omega:=\{\bar{Y}^j, {X}\}$, $\theta:=\{{T}^j, \mathbbm{P}({{Y}}^{j}|{X})\}$, and $\mathbbm{P}_\theta:=\mathbbm{P}(\bar{Y}^j\mid{X})$, the identifiability of ${T}^j$ can be achieved with the anchor point assumption~\cite{PatriniRMNQ17,YaoL0GD0S20,LiL00S21}.
	
	To use notations without confusion, for the identifiability of ${T}^j$ using label correlations, we let $\Omega:=\{\bar{Y}^j, {\bar{Y}}^{-j}\}$, where ${-j}$ means other classes having correlations with class $j$, $\theta:=\{{T}^j,\mathbbm{P}(Y^j),\mathbbm{P}(\bar{{Y}}^{-j}|{Y}^j)\}$, and $\mathbbm{P}_\theta:=\mathbbm{P}(\bar{Y}^j, {\bar{{Y}}}^{-j}) $. Also, $\Omega$ does not necessarily include all noisy class labels. As we need $\Omega$ to provide useful information to achieve identifiability, the exploration of an effective $\Omega$ and necessary conditions is one of the focuses of the paper. 
\subsubsection{Assumptions}	
In the following, we discuss some assumptions that are used to simplify theoretical analysis. 
	\begin{Assumption} 
		$\mathbbm{P}(\bar{Y}^{j}=0 \mid Y^{j}=1)+\mathbbm{P}(\bar{Y}^{j}=1 \mid Y^{j}=0)<1$, $j\in[q]$.
		\label{assum1}
	\end{Assumption}
	\myPara{Remark~2.} Assumption~\ref{assum1} means that the noisy label agrees with the clean label on average, which is a standard condition for analysis under the class-dependent transition matrix~\cite{natarajan2013learning,MenonROW15}.
	\begin{Assumption} 
		$\mathbbm{P}(Y^{i}=0 \mid Y^{j}=0) \neq \mathbbm{P}(Y^{i}=0 \mid Y^{j}=1)$,  $i,j\in[q]$ and $i \neq j$.
		\label{assum2}
	\end{Assumption}
\myPara{Remark~3.} Assumption~\ref{assum2} means that the multiple labels have correlations between each other, which is satisfied by most of $(i,j)$ pairs in the real-world dataset (more details are provided in Appendix~\ref{Validation}). 

\subsubsection{Theoretical results}

 When considering multi-label learning, the simplest case is having two class labels. In this case, the following theoretical results can be obtained. 
	
	\begin{Theorem} Two noisy labels $\{\bar{Y}^{j},\bar{Y}^{i}\}$ will not suffice to identify ${T}^{j}$.
		\label{theorem1}
	\end{Theorem}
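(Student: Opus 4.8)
The plan is to prove non-identifiability constructively, by exhibiting a continuum of parameter settings that induce the \emph{same} observed distribution over $\Omega=\{\bar{Y}^{j},\bar{Y}^{i}\}$ but genuinely different transition matrices. First I would make the generative model explicit. Because the clean labels $Y^{j},Y^{i}$ are corrupted by \emph{independent} per-class noise, $\bar{Y}^{j}$ and $\bar{Y}^{i}$ are conditionally independent given $Y^{j}$ once the clean correlation of class $i$ with class $j$ is absorbed into $\mathbbm{P}(\bar{Y}^{i}\mid Y^{j})$. Hence the observed $2\times2$ joint matrix $M$ with entries $M_{ab}=\mathbbm{P}(\bar{Y}^{j}=a,\bar{Y}^{i}=b)$ factors as
\[
M=\sum_{y\in\{0,1\}}\mathbbm{P}(Y^{j}=y)\,u_{y}^{\top}v_{y},
\]
a convex combination of two rank-one matrices, where $u_{y}=(T^{j}_{y0},T^{j}_{y1})$ is the $y$-th row of $T^{j}$ and $v_{y}=(\mathbbm{P}(\bar{Y}^{i}=0\mid Y^{j}=y),\mathbbm{P}(\bar{Y}^{i}=1\mid Y^{j}=y))$. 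Assumption~\ref{assum2} (together with Assumption~\ref{assum1}, which keeps the noise mild) guarantees $v_{0}\neq v_{1}$, so $M$ has rank two and the factorization is non-degenerate.

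Next I would count degrees of freedom. The parameter $\theta=\{T^{j},\mathbbm{P}(Y^{j}),\mathbbm{P}(\bar{Y}^{i}\mid Y^{j})\}$ carries five free scalars: two from the rows of $T^{j}$, one from the prior $p=\mathbbm{P}(Y^{j}=0)$, and two from the conditionals $v_{0},v_{1}$. The observable $\mathbbm{P}_{\theta}=M$ lives on the probability simplex over $\{0,1\}^{2}$ and thus has only three free entries. The map $\theta\mapsto M$ therefore sends a five-dimensional domain into a three-dimensional target, so its generic fibres are at least two-dimensional. Since the label permutation of class $j$ acts on each fibre only as a finite, two-element symmetry (swapping rows of $T^{j}$, $p\leftrightarrow 1-p$, and $v_{0}\leftrightarrow v_{1}$), a positive-dimensional fibre must contain a $\theta'\neq\theta$ whose $T^{j\prime}$ is \emph{not} the row permutation of $T^{j}$. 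This already contradicts identifiability in the sense of Definition~\ref{define2_}, and it mirrors the classical fact that a latent-class model with one binary latent and only two binary manifest variables is unidentified — exactly the gap that a third view (and Kruskal's theorem) later closes.

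To upgrade this count into an airtight statement I would produce the fibre explicitly. Fix a valid interior $\theta^{*}$ (for instance $p=1/2$, a $T^{j}$ with distinct rows obeying Assumption~\ref{assum1}, and $v_{0}\neq v_{1}$). The Jacobian of $\theta\mapsto M$ at $\theta^{*}$ is a $3\times5$ matrix and hence has a kernel of dimension at least $5-3=2>0$; choosing a nonzero kernel direction (none of which is tangent to the discrete permutation orbit) and correcting by the implicit function theorem yields a smooth curve $\theta(t)$ through $\theta^{*}$ with $M(\theta(t))\equiv M(\theta^{*})$ while $T^{j}(t)$ varies nontrivially. Equivalently, and most cleanly for rigour, I would simply solve the three matching equations $M(\theta')=M(\theta^{*})$ for three of the five unknowns, leaving an open range of admissible alternatives $T^{j\prime}\neq T^{j}$, and display one concrete stochastic triple as a witness.

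The main obstacle is feasibility, not algebra: I must ensure the alternative parameters stay genuine probabilities (entries in $[0,1]$, row-stochastic $T^{j\prime}$, $p'\in[0,1]$) and that $T^{j\prime}$ differs from $T^{j}$ by more than a row swap. Both are handled by anchoring at an interior point $\theta^{*}$ where all constraints hold strictly, so a sufficiently small perturbation along the fibre remains admissible, and by choosing the perturbation so the two rows of $T^{j}$ move \emph{asymmetrically}, breaking the permutation symmetry. I expect Assumption~\ref{assum2} to be precisely what keeps $M$ of rank two (so the two mixture components are distinct and the construction is non-vacuous), and Assumption~\ref{assum1} to keep $\theta^{*}$ safely in the admissible region throughout the perturbation.
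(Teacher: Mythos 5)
Your starting point coincides with the paper's: by conditional independence of $\bar{Y}^j$ and $\bar{Y}^i$ given $Y^j$ (the paper's Lemma~\ref{lemma1}), the observed joint factors as ${E}=({T}^{j})^\top {P}{M}$, and non-identifiability is the statement that this bilinear decomposition is non-unique. Where you diverge is in how non-uniqueness is established. The paper is purely constructive: starting from any solution $\{{T}^0,{P}^0,{M}^0\}$ it inserts invertible row-stochastic matrices ${A},{B}$, writes ${E}=({A}{T}^0)^\top({A}^\top)^{-1}{P}^0{B}^{-1}({B}{M}^0)$, and solves the scalar conditions forcing $({A}^\top)^{-1}{P}^0{B}^{-1}$ to remain diagonal; this leaves a two-parameter family (with $a_-,b_-$ free and $a_{+}=\frac{b_{-}(1-p_0)}{b_{-}-p_0}$, $b_{+}=\frac{a_{-}(1-p_0)}{a_{-}-p_0}$) of genuinely different solutions, row-stochasticity being guaranteed by Lemma~\ref{lemma1.5}. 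Your second, ``cleanest for rigour'' variant --- solve the matching equations for three of the five unknowns and exhibit a concrete alternative stochastic triple --- is essentially this proof, and it is sound.

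Your primary route (degree-of-freedom count upgraded by a Jacobian/implicit-function-theorem argument) is a genuinely different argument, but as stated it has a gap: a nonzero kernel vector of the $3\times 5$ Jacobian at $\theta^*$ is tangent to a curve inside the fibre only if $\theta^*$ is a \emph{regular} point, i.e.\ the Jacobian has full rank $3$ there, so that the fibre is locally a $2$-manifold whose tangent space is that kernel. You never verify this rank condition, and without it the step can fail: for smooth maps, fibres through critical points may have dimension strictly less than $n-m$ (e.g.\ $f(x,y)=x^2+y^2$ has the $0$-dimensional fibre $\{(0,0)\}$ over $0$, not a $1$-dimensional one). Relatedly, you need a kernel direction that actually moves ${T}^j$ rather than only $p,v_0,v_1$; this is true --- if ${T}^j$ is held fixed and is invertible (Assumption~\ref{assum1}), then ${P}{M}=(({T}^{j})^\top)^{-1}{E}$ pins down $p$ from the row sums and then ${M}$, so no fibre direction can fix ${T}^j$ --- but this observation is absent from your write-up, and it is exactly what makes the non-identifiability a statement about ${T}^j$ rather than about the nuisance parameters. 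Both holes are patchable (check the rank at one explicit interior $\theta^*$, or simply fall back to the explicit construction), and the paper's proof can be read as carrying out that explicit version once and for all, which also buys something your local perturbation does not: a globally parametrized, arbitrarily large family of indistinguishable alternatives rather than an infinitesimal one.
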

	This result tells us that the label correlations of two noisy labels can not offer enough information to achieve the identifiability of ${T}^j$. We provide Theorem~\ref{theorem2} based on the Kruskal’s identifiability result~\cite{kruskal1977three,sidiropoulos2000uniqueness}.

	
	\begin{Theorem} If $\bar{Y}^{i}$ and $\bar{Y}^{k}$ are independent given $Y^{j}$, three noisy labels $\{\bar{Y}^{j},\bar{Y}^{i}, \bar{Y}^{k}\}$ are sufficient to identify ${T}^{j}$.
		\label{theorem2}
	\end{Theorem}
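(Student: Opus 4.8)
The plan is to recognize the joint law of the three noisy labels as a rank-$2$ three-way tensor and to read off the identifiability of ${T}^{j}$ from Kruskal's uniqueness theorem~\cite{kruskal1977three}. Since labels of distinct classes are flipped independently, $\bar{Y}^{j}$ is conditionally independent of $(\bar{Y}^{i},\bar{Y}^{k})$ given $Y^{j}$; combined with the hypothesis $\bar{Y}^{i}\perp\bar{Y}^{k}\mid Y^{j}$, the three noisy labels are \emph{mutually} conditionally independent given the latent clean label $Y^{j}$. Consequently the observable distribution factorizes as
\[
\mathbbm{P}(\bar{Y}^{j}{=}a,\bar{Y}^{i}{=}b,\bar{Y}^{k}{=}c)=\sum_{r=0}^{1}\mathbbm{P}(Y^{j}{=}r)\,\mathbbm{P}(\bar{Y}^{j}{=}a\mid Y^{j}{=}r)\,\mathbbm{P}(\bar{Y}^{i}{=}b\mid Y^{j}{=}r)\,\mathbbm{P}(\bar{Y}^{k}{=}c\mid Y^{j}{=}r),
\]
which is exactly a CANDECOMP/PARAFAC decomposition of rank $R=2$ whose factor matrices $A,B,C$ collect, column by column in the latent state $r\in\{0,1\}$, the conditionals of $\bar{Y}^{j},\bar{Y}^{i},\bar{Y}^{k}$. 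Here $A$ is nothing but ${T}^{j}$ up to transposition, and the mixing weights $\mathbbm{P}(Y^{j}{=}r)$---positive because both clean labels occur---can be folded into a single factor.

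Next I would invoke Kruskal's theorem, whose sufficient condition for the decomposition to be unique is $k_{A}+k_{B}+k_{C}\ge 2R+2=6$, where $k_{\bullet}$ denotes the Kruskal rank. As each factor is a $2\times2$ matrix with maximal possible Kruskal rank $2$, the bound forces, and is met exactly when, all three factors have full column rank. I would establish this from the standing assumptions: Assumption~\ref{assum1} makes the diagonal of ${T}^{j}$ dominant and hence ${T}^{j}=A$ invertible, giving $k_{A}=2$. For $B$ and $C$ the task is to show that the noisy conditional laws $\mathbbm{P}(\bar{Y}^{i}\mid Y^{j}{=}0)$ and $\mathbbm{P}(\bar{Y}^{i}\mid Y^{j}{=}1)$ differ (likewise for $\bar{Y}^{k}$); this follows by propagating the genuine clean-label correlation guaranteed by Assumption~\ref{assum2} through the transition matrix ${T}^{i}$, which is invertible by Assumption~\ref{assum1} and therefore cannot collapse the gap between the two clean conditionals, so $k_{B}=k_{C}=2$.

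With the Kruskal bound satisfied, the factorization is unique up to a common permutation and rescaling of the columns of $A,B,C$. The rescaling freedom is removed by the stochasticity constraint that every column is a probability vector summing to one, while the residual permutation of the two latent states is precisely the swap of the $0/1$ values of class $j$ described in Remark~1. Hence ${T}^{j}$ is determined up to that label permutation, which is exactly the identifiability of Definition~\ref{define2_}.

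The step I expect to be the main obstacle is the full-column-rank verification of $B$ and $C$: translating the clean-label correlation of Assumption~\ref{assum2} into a \emph{strict} separation of the noisy conditionals $\mathbbm{P}(\bar{Y}^{i}\mid Y^{j})$, and confirming that the flip induced by ${T}^{i}$---though it shrinks this separation---never annihilates it, so that the Kruskal rank stays at $2$ rather than dropping to $1$.
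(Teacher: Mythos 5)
Your proposal is correct and follows essentially the same route as the paper: both reduce the problem to Kruskal's uniqueness condition (the paper's latent-class formulation $\sum_t \operatorname{Kr}(M^{(t)})\ge 2K+N-1$ is exactly your CP-decomposition condition $k_A+k_B+k_C\ge 2R+2$), and both verify full Kruskal rank of the ``side'' factors via the factorization $\mathbbm{P}(\bar{Y}^{i}\mid Y^{j})=\sum_{Y^i}\mathbbm{P}(\bar{Y}^{i}\mid Y^{i})\mathbbm{P}(Y^{i}\mid Y^{j})$, with Assumption~\ref{assum1} making ${T}^{i}$ (and ${T}^{j}$) invertible and Assumption~\ref{assum2} separating the clean conditionals. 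The step you flagged as the main obstacle is handled in the paper precisely by this product-of-full-rank-matrices argument, so your outline closes without any gap.
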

	The assumption that $\bar{Y}^{i}$ and $\bar{Y}^{k}$ are independent given $Y^{j}$ can be satisfied in certain cases, \eg, the occurrences of ``blue'' and ``dolphin'' may be independent given ``sea'' appearing or not. Nevertheless, due to the complex correlations among labels, sometimes, this condition is hard to hold in most cases. When the condition can not hold, these label correlations are no longer sufficient to determine ${T}^{j}$, as shown in Theorem~\ref{theorem3} below. 
	
	\begin{Theorem} If $\bar{Y}^{i}$ and $\bar{Y}^{k}$ are not independent given $Y^{j}$, three noisy labels $\{\bar{Y}^{j},\bar{Y}^{i}, \bar{Y}^{k}\}$will not suffice to identify ${T}^{j}$.
		\label{theorem3}
	\end{Theorem}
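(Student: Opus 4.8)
The plan is to prove non\textendash identifiability by exhibiting a continuum of parameters that all induce the same observable law on $\Omega=\{\bar{Y}^j,\bar{Y}^i,\bar{Y}^k\}$, so that $T^j$ cannot be recovered even up to the label permutation of class $j$. First I would group the two auxiliary labels into a single four\textendash valued variable $Z:=(\bar{Y}^i,\bar{Y}^k)\in\{0,1\}^2$. Since the clean labels are flipped independently across classes, $\bar{Y}^j$ is conditionally independent of $Z$ given $Y^j$, so the observable distribution is the two\textendash component latent\textendash class model $\mathbbm{P}(\bar{Y}^j=a,Z=z)=\sum_{y=0}^{1}\mathbbm{P}(Y^j=y)\,\mathbbm{P}(\bar{Y}^j=a\mid Y^j=y)\,\mathbbm{P}(Z=z\mid Y^j=y)$. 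Collecting the joint probabilities into $M\in\mathbb{R}^{2\times 4}$, this is the nonnegative factorization $M=\tilde{A}\,\mathrm{diag}(\pi)\,B$, where $\tilde{A}=(T^j)^\top$ is column\textendash stochastic, $\pi_y=\mathbbm{P}(Y^j=y)$, and $B_{yz}=\mathbbm{P}(Z=z\mid Y^j=y)$ is row\textendash stochastic. The claim thus reduces to showing that this factorization is not unique.

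The key step is the gauge freedom of a two\textendash view factorization: for any invertible $2\times 2$ matrix $G$ one has $M=(\tilde{A}G^{-1})(G\,\mathrm{diag}(\pi)\,B)$, and re\textendash extracting $\tilde{A}'=\tilde{A}G^{-1}$, $\pi'_y=\sum_z(G\,\mathrm{diag}(\pi)\,B)_{yz}$, and $B'_{yz}=(G\,\mathrm{diag}(\pi)\,B)_{yz}/\pi'_y$ yields a legitimate alternative $\theta'$ with $\mathbbm{P}_{\theta'}=\mathbbm{P}_{\theta}$ provided the results are valid distributions. Requiring $\tilde{A}'$ to be column\textendash stochastic forces the columns of $G$ to sum to one, which automatically normalizes $\pi'$ and makes each row of $B'$ sum to one; so the only remaining constraints are the nonnegativity inequalities. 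The crucial observation is that, precisely because $\bar{Y}^i$ and $\bar{Y}^k$ are \emph{not} conditionally independent given $Y^j$, the matrix $B$ carries no product structure across its four columns, and therefore nothing beyond nonnegativity restricts $G$. This leaves a two\textendash parameter family of admissible $G$ near the identity, each producing $T'^j=(\tilde{A}')^\top\neq T^j$ that is not a relabelling of $T^j$. A degrees\textendash of\textendash freedom count is consistent: the observable carries $7$ free parameters while $(T^j,\pi,B)$ carries $2+1+6=9$, leaving a two\textendash dimensional solution manifold.

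The hard part will be guaranteeing that the perturbed triple $(\tilde{A}',\pi',B')$ stays genuinely valid (all entries nonnegative, correctly normalized) and that $T'^j$ differs from $T^j$ by more than a row swap. I would handle this by working at an interior point: under Assumptions~\ref{assum1}\textendash\ref{assum2} the transition entries and mixing proportions can be taken strictly inside $(0,1)$ with the two rows of $T^j$ distinct, so every entry of $\tilde{A}$, $\pi$, and $B$ is bounded away from zero and the nonnegativity constraints become strict inequalities that persist under sufficiently small $G\neq I$ of unit column sums, which merely shears the two conditional vectors without interchanging them. Geometrically this is transparent: the observed posteriors $\mathbbm{P}(\bar{Y}^j=1\mid Z=z)$ are convex combinations of the two endpoints $T^j_{0,1}$ and $T^j_{1,1}$, and any pair of endpoints whose interval still contains all four posteriors reproduces the data, an interval that can be widened continuously. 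Finally I would contrast this with Theorem~\ref{theorem2}: when $\bar{Y}^i$ and $\bar{Y}^k$ are independent given $Y^j$, the matrix $B$ factorizes as an outer product across the $\bar{Y}^i$ and $\bar{Y}^k$ coordinates, and this product structure is destroyed by every non\textendash permutation $G$, collapsing the gauge family to the discrete permutation and restoring Kruskal identifiability. The failure of conditional independence is therefore exactly what reopens the gauge freedom and breaks identifiability of $T^j$.
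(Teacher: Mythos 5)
Your proposal is correct in substance, but it is not the paper's argument; the two routes are genuinely different. The paper keeps the latent pair $(Y^j,Y^i)$ and writes the observable law $\mathbbm{P}(\bar{Y}^i,\bar{Y}^j,\bar{Y}^k)$ as a three-way CP decomposition over \emph{four} latent states, with factors $M^{(1)}=\mathbbm{P}(\bar{Y}^j\mid Y^j,Y^i)$, $M^{(2)}=\mathbbm{P}(\bar{Y}^i\mid Y^j,Y^i)$, $M^{(3)}=\mathbbm{P}(\bar{Y}^k\mid Y^j,Y^i)$ and weights $\Lambda=\mathbbm{P}(Y^j,Y^i)$; non-uniqueness then comes from the degeneracy that $M^{(2)}$ has pairwise repeated rows (Lemma~\ref{lemma4}, i.e., Kruskal rank one), which licenses an explicit row swap in the other factors, in the spirit of ten Berge's results. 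You instead collapse $(\bar{Y}^i,\bar{Y}^k)$ into one four-valued view $Z$, reducing to a two-view, two-component factorization $M=(T^j)^\top\mathrm{diag}(\pi)B$ --- exactly the structure of the paper's own proof of Theorem~\ref{theorem1} --- and run the continuous gauge argument with invertible $G$ of unit column sums. What each buys: the paper's route keeps Theorems~\ref{theorem2} and~\ref{theorem3} inside a single Kruskal framework (rank $\geq 2$ versus rank $1$), while yours is more elementary, exhibits the degrees-of-freedom gap as a genuine two-parameter continuum of solutions, and produces alternatives that are manifestly \emph{not} the row swap of $T^j$ (its rows are distinct by Assumption~\ref{assum1} and your perturbation is small), hence not excluded by the ``up to label permutation'' clause of Definition~\ref{define2_}. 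On this last point your argument is arguably tighter than the paper's: the paper swaps rows $0\leftrightarrow 1$ and $2\leftrightarrow 3$ of $M^{(1)},M^{(3)},\Lambda$ while leaving $M^{(2)}$ fixed, and since $M^{(2)}$ is invariant under precisely that permutation, the constructed alternative coincides with the global relabeling $Y^j\mapsto 1-Y^j$ of the original solution, i.e., with the equivalence that Definition~\ref{define2_} quotients out; your continuum of non-permutation solutions sidesteps that delicacy. The one detail worth adding to your write-up is realizability of the perturbed $B'$ within the underlying noise model (it must arise from some clean joint together with class-dependent flips for classes $i,k$); this is immediate by taking those two classes noiseless, so that $B'$ itself serves as the clean conditional joint.
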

	The inspiration from Theorem~\ref{theorem3} is that, the increase of the number of noisy labels may make the identifiability decrease due to the entangled correlations (see Appendix~\ref{proof3} and~\ref{inspiration}).
	To handle this problem,
	we prove Theorem~\ref{theorem4} by assuming the transition relationship between the noisy label for class $i$ and the clean label for class $j$ is known.
	\begin{Theorem} If $\mathbbm{P}(\bar{Y}^{i}\mid {Y}^{j})$ is known, two noisy labels $\{\bar{Y}^{j},\bar{Y}^{i}\}$ are sufficient to identify ${T}^{j}$.
		\label{theorem4}
	\end{Theorem}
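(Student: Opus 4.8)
The plan is to reduce the identification of $T^{j}$ to inverting a single \emph{known} linear map, turning the statement into a straightforward bilinear-decomposition problem. First I would exploit that clean labels are flipped independently across classes: conditioned on $Y^{j}$, the noisy label $\bar{Y}^{j}$ depends only on the flipping noise of class $j$, hence $\bar{Y}^{j}$ and $\bar{Y}^{i}$ are independent given $Y^{j}$. Writing $M_{kl}=\mathbbm{P}(\bar{Y}^{j}=k,\bar{Y}^{i}=l)$ for the observed joint table, $B_{al}=\mathbbm{P}(\bar{Y}^{i}=l\mid Y^{j}=a)$ for the matrix assumed known, and $\pi_{a}=\mathbbm{P}(Y^{j}=a)$ for the clean prior, this conditional independence yields the bilinear relation
\begin{equation}
M_{kl}=\sum_{a\in\{0,1\}}\pi_{a}\,T^{j}_{ak}\,B_{al},
\end{equation}
that is, $M=(T^{j})^{\top}\operatorname{diag}(\pi)\,B$ in matrix form. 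Here $M$ is observable from $\mathbbm{P}(\bar{Y}^{j},\bar{Y}^{i})$, $B$ is given by hypothesis, and only $T^{j}$ and $\pi$ are unknown.

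Second, I would set $G:=(T^{j})^{\top}\operatorname{diag}(\pi)$, so that $G_{ka}=\pi_{a}T^{j}_{ak}$ and $M=GB$. Provided $B$ is invertible, $G=MB^{-1}$ is uniquely pinned down by the observables. The prior is then recovered as a column sum, $\pi_{a}=\sum_{k}G_{ka}$, because each row of $T^{j}$ sums to one; assuming $\pi_{a}>0$, the transition entries follow by normalisation, $T^{j}_{ak}=G_{ka}/\pi_{a}$. Since every quantity on the right is determined by $\mathbbm{P}(\bar{Y}^{j},\bar{Y}^{i})$ and the known $B$, the matrix $T^{j}$ is uniquely determined, which is exactly the identifiability claim. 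Note that the labelling of the rows of the given $B$ fixes the labelling of $Y^{j}$, so the permutation ambiguity of Definition~\ref{define2_} does not arise here.

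The main obstacle is establishing that the known matrix $B$ is invertible, and this is precisely where both assumptions enter. As the rows of $B$ are probability vectors over $\bar{Y}^{i}$, the matrix is singular exactly when its two rows coincide, i.e. when $\mathbbm{P}(\bar{Y}^{i}=0\mid Y^{j}=0)=\mathbbm{P}(\bar{Y}^{i}=0\mid Y^{j}=1)$. Expanding $\mathbbm{P}(\bar{Y}^{i}=0\mid Y^{j}=a)=\sum_{b}\mathbbm{P}(\bar{Y}^{i}=0\mid Y^{i}=b)\,\mathbbm{P}(Y^{i}=b\mid Y^{j}=a)$ and using that the conditional masses over $b$ sum to one, I expect the gap between the two rows to factor cleanly as
\begin{equation}
\bigl[\mathbbm{P}(\bar{Y}^{i}=0\mid Y^{i}=0)-\mathbbm{P}(\bar{Y}^{i}=0\mid Y^{i}=1)\bigr]\bigl[\mathbbm{P}(Y^{i}=0\mid Y^{j}=0)-\mathbbm{P}(Y^{i}=0\mid Y^{j}=1)\bigr].
\end{equation}
The second factor is nonzero by Assumption~\ref{assum2} (genuine label correlation), while the first equals $1-\mathbbm{P}(\bar{Y}^{i}=1\mid Y^{i}=0)-\mathbbm{P}(\bar{Y}^{i}=0\mid Y^{i}=1)>0$ by Assumption~\ref{assum1}. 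Hence the gap is nonzero, $B$ is invertible, and the reduction goes through. This also makes transparent the role of each hypothesis: Assumption~\ref{assum1} keeps the class-$i$ noise sub-random so the clean-to-noisy map is non-degenerate, and Assumption~\ref{assum2} guarantees that $Y^{j}$ genuinely influences $\bar{Y}^{i}$, so that the known $B$ carries enough information to decode $T^{j}$.
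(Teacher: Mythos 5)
Your proposal is correct and takes essentially the same route as the paper's proof: both reduce the statement to the bilinear decomposition $E=(T^{j})^{\top}PM$ (your $M=(T^{j})^{\top}\operatorname{diag}(\pi)B$), invert the known matrix, and justify its invertibility through Assumptions~\ref{assum1} and~\ref{assum2} via the factorization through $\mathbbm{P}(Y^{i}\mid Y^{j})$ and the class-$i$ transition matrix (the paper cites the lemma from its Theorem~\ref{theorem2} proof, while your row-gap identity is the same fact written out as a $2\times 2$ determinant). Your recovery of the clean prior via column sums of $G=MB^{-1}$ followed by normalization is just a cleaner packaging of the paper's explicit formula for $p$ and the final step $T^{j}=[E M^{-1}P^{-1}]^{\top}$.
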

	Theorem~\ref{theorem4} theoretically guarantees that the identifiability of the class-dependent transition matrix can be achieved by utilizing the occurrence probabilities $\mathbbm{P}(\bar{Y}^{i}, \bar{Y}^{j})$ and $\mathbbm{P}(\bar{Y}^{i} \mid {Y}^{j})$. Note that $\mathbbm{P}(\bar{Y}^{i}, \bar{Y}^{j})$ can represent noisy label correlations, and $\mathbbm{P}(\bar{Y}^{i}\mid {Y}^{j})=\sum_{{Y}^{i}} \mathbbm{P}(\bar{Y}^{i}\mid {Y}^{i})\mathbbm{P}({Y}^{i}\mid {Y}^{j})$, which can imply clean label correlations. At a high level, \textit{the mismatch of label correlations} implied in the occurrence probabilities can achieve the identifiability.
	
	The detailed proofs of Theorems~\ref{theorem1}-\ref{theorem4} are provided in Appendix~\ref{proof1}-\ref{proof4} respectively. 
	
	\subsection{The Proposed Estimator}
	\label{ours}
	Our estimator is based on Theorem~\ref{theorem4}, which needs extra information to estimate $\mathbbm{P}(\bar{Y}^{i}\mid {Y}^{j})$.  
	Recently, the \textit{memorization effect}~\cite{arpit2017closer} of deep networks has received much attention in learning with noisy labels, which shows that deep networks will first memorize the training data with clean labels and then those with noisy labels. Prior works utilize this characteristic to develop \textit{sample selection} methods~\cite{Jiang2018icml,Han2018NIPS,LiSH20,KimKCCY21,song2019selfie,song2021robust,Li2022CVPR,xia2023combating},  where we select some examples with more likely clean labels for each class $j$ respectively in the early learning phase. The selected examples can serve as useful extra information that implies clean label correlations, through which we can achieve estimation via counting. However, when implementing sample-selection-based methods, a major concern is whether the sampling bias will lead to large estimation errors. 
	
	Generally speaking, according to the memorization effect, the sampling bias is about selecting easy examples for class $j$, which usually means these examples have easy-to-discriminate features. We can reasonably assume that given ${Y}^{j}$, the distribution of the features about class $j$ is biased, while the distribution of the features about another class $i$ is unbiased,
	\ie,
	\begin{equation}  
 \begin{aligned}
		\mathbbm{P}_{{D}^{j}_{s}}(\bar{Y}^{i}|{Y}^{j})
		&= \int \mathbbm{P}_{{D}^{j}_{s}}(\bar{Y}^{i}|{X}^{i}) \mathbbm{P}_{{D}^{j}_{s}}({X}^{i}|{Y}^{j}) d{x} 
		\\ &= \int \mathbbm{P}_{\bar{D}^{j}_{s}}(\bar{Y}^{i}|{X}^{i}) \mathbbm{P}_{\bar{D}^{j}_{s}}({X}^{i}|{Y}^{j}) d{x} 
		\\ &= \mathbbm{P}_{\bar{D}^{j}_{s}}(\bar{Y}^{i}|{Y}^{j}),
		\label{eq1}
  \end{aligned}
	\end{equation}
	where ${D}^{j}_{s}$ is the distribution of $({X}, {Y_s})$, ${Y_s}=\{\bar{Y}^{1},\bar{Y}^{2},...,Y^{j},...,\bar{Y}^{q}\}$, $\bar{D}^{j}_{s}$ is the biased distribution of $({X}, {Y_s})$, and ${X}^{i}$ is the part of ${X}$, which represents all information about class $i$ appearing or not.
	When the assumption is satisfied, the sample selection will not lead to large estimation errors on $\mathbbm{P}(\bar{Y}^{i}\mid {Y}^{j})$, and it can converge to zero exponentially fast by counting~\cite{Boucheron2013ConcentrationI}.

	In real-world scenarios, due to the complex label correlations, this assumption will not strictly hold. While, it may be roughly met when class labels $i$ and $j$ do not share the major discriminative features. Intuitively speaking, the classifying of simplest examples for one label is not easily affected by the presence or absence of other significantly different labels. Also, since most label pairs from typical real-world multi-label datasets are significantly different (see Appendix~\ref{real-world}), the assumption can be roughly held for those label pairs in typical cases.
	In Section~\ref{experiment1}, our empirical results justify this by showing a little gap between the estimation error of our estimator with a biased sample selection and an unbiased one.

	Based on the above discussions, we can approximate $\mathbbm{P}(\bar{Y}^{i}, \bar{Y}^{j})$ and $\mathbbm{P}(\bar{Y}^{i} \mid {Y}^{j})$ with \textit{frequency counting}, and utilize the \textit{mismatch} of label correlations implied in these occurrence probabilities to estimate the transition matrix. In this work, we propose to estimate transition matrices $\{{T}^{j}\}_{j=1}^q$ below. 

	First, we utilize sample selection to obtain useful information that implies clean label correlations. Specifically, we train a classifier with the standard multi-label classification loss on noisy training examples $\mathcal{D}_{t}$ for a few epochs, and then perform sample selection to get a selected clean set $\mathcal{D}^{j}_{s}$ for each class label $j$. Specially, we use a commonly used sample selection way~\cite{ArazoOAOM19,LiSH20} in learning with noisy labels, which extracts the subset of examples with small losses by modeling the distribution of losses for class $j$ with a Gaussian mixture model~(GMM).
	
	Second, we perform co-occurrence estimation by frequency counting, and then estimate the transition matrix by solving a simple bilinear decomposition problem. For class label $j$, we first choose another class label $i$ and estimate $\mathbbm{P}(\bar{Y}^{i}, \bar{Y}^{j})$ and $\mathbbm{P}(\bar{Y}^{i} \mid {Y}^{j})$  via frequency counting, \ie,
	\begin{equation}
		\mathbbm{\hat{P}}(\bar{Y}^{i}=v,\bar{Y}^{j}=k)=\frac{1}{n}\sum_{({x}, \bar{{y}}) \in \mathcal{D}_{t}}\mb{I}[\bar{y}^{i}=v,\bar{y}^{j}=k],
		\label{pro_two_labels}
	\end{equation}
	\begin{equation}
		\mathbbm{\hat{P}}(\bar{Y}^{i}=v \mid {Y}^{j}=k)=\frac{\sum_{({x}, \bar{{y}}) \in \mathcal{D}^{j}_{s}}\mb{I}[\bar{y}^{i}=v,\bar{y}^{j}=k]}{\sum_{({x}, \bar{{y}}) \in \mathcal{D}^{j}_{s}}\mb{I}[\bar{y}^{j}=k]},
		\label{pro_condition}
	\end{equation}
	where $\mb{I}[\cdot]$ is the indicator function which takes 1 if the identity index is true and 0 otherwise.

	Then, these co-occurrence probabilities, which imply the mismatch of label correlations, can lead to four equations involving ${T}^{j}$:
	\begin{equation}
		\begin{aligned}
			\mathbbm{P}\left(\bar{Y}^{j}=0,  \bar{Y}^{i}=0\right)&=\mathbbm{P}({Y}^{j}=0)T^{j}_{00}\mathbbm{P}(\bar{Y}^{i}=0|  {Y}^{j}=0)\\&+\mathbbm{P}({Y}^{j}=1)T^{j}_{10}\mathbbm{P}(\bar{Y}^{i}=0|  {Y}^{j}=1),\\
			\mathbbm{P}\left(\bar{Y}^{j}=0,  \bar{Y}^{i}=1\right)&=\mathbbm{P}({Y}^{j}=0)T^{j}_{00}\mathbbm{P}(\bar{Y}^{i}=1|  {Y}^{j}=0)\\&+\mathbbm{P}({Y}^{j}=1)T^{j}_{10}\mathbbm{P}(\bar{Y}^{i}=1|  {Y}^{j}=1),\\
			\mathbbm{P}\left(\bar{Y}^{j}=1,  \bar{Y}^{i}=0\right)&=\mathbbm{P}({Y}^{j}=0)T^{j}_{01}\mathbbm{P}(\bar{Y}^{i}=0|  {Y}^{j}=0)\\&+\mathbbm{P}({Y}^{j}=1)T^{j}_{11}\mathbbm{P}(\bar{Y}^{i}=0|  {Y}^{j}=1),\\
			\mathbbm{P}\left(\bar{Y}^{j}=1,  \bar{Y}^{i}=1\right)&=\mathbbm{P}({Y}^{j}=0)T^{j}_{01}\mathbbm{P}(\bar{Y}^{i}=1|  {Y}^{j}=0)\\&+\mathbbm{P}({Y}^{j}=1)T^{j}_{11}\mathbbm{P}(\bar{Y}^{i}=1|  {Y}^{j}=1).
		\end{aligned}
	\end{equation}
	For simplicity, we denote
 \begin{footnotesize}
	\begin{equation*}
		\begin{aligned}
			&{E}=\begin{pmatrix}\mathbbm{P}(\bar{Y}^{j}=0,  \bar{Y}^{i}=0) & \mathbbm{P}(\bar{Y}^{j}=0,  \bar{Y}^{i}=1)\\ \mathbbm{P}(\bar{Y}^{j}=1,  \bar{Y}^{i}=0) & \mathbbm{P}(\bar{Y}^{j}=1,  \bar{Y}^{i}=1) \end{pmatrix}=\begin{pmatrix} e_{00} & e_{01} \\ e_{10} & e_{11} \end{pmatrix},
			\\& {P} =\begin{pmatrix}\mathbbm{P}({Y}^{j}=0) & 0 \\ 0 & \mathbbm{P}({Y}^{j}=1) \end{pmatrix}=\begin{pmatrix}1-p & 0 \\ 0 & p \end{pmatrix},
			\\& {T}^{j}=\begin{pmatrix}\mathbbm{P}(\bar{Y}^{j}=0 \mid {Y}^{j}=0) & \mathbbm{P}(\bar{Y}^{j}=1 \mid {Y}^{j}=0)\\ \mathbbm{P}(\bar{Y}^{j}=0\mid {Y}^{j}=1) & \mathbbm{P}(\bar{Y}^{j}=1 \mid {Y}^{j}=1) \end{pmatrix}\\&\quad \ =\begin{pmatrix} 1-\rho_{-} & \rho_{-} \\ \rho_{+} & 1-\rho_{+} \end{pmatrix},
			\quad\text{and}\\&	{M}=\begin{pmatrix}\mathbbm{P}(\bar{Y}^{i}=0 \mid {Y}^{j}=0) & \mathbbm{P}(\bar{Y}^{i}=1 \mid {Y}^{j}=0)\\ \mathbbm{P}(\bar{Y}^{i}=0\mid {Y}^{j}=1) & \mathbbm{P}(\bar{Y}^{i}=1 \mid {Y}^{j}=1) \end{pmatrix}\\&\quad\  =\begin{pmatrix} 1-\rho^{\prime}_{-} & \rho^{\prime}_{-} \\ \rho^{\prime}_{+} & 1-\rho^{\prime}_{+} \end{pmatrix}.
		\end{aligned}
	\end{equation*}
 \end{footnotesize}
	Then the system of equations can be expressed as \\ ${E} =  ({T}^{j})^\top {P} {M}$, \ie,
 \begin{footnotesize}
	\begin{equation}
		\begin{pmatrix} e_{00} & e_{01} \\ e_{10} & e_{11} \end{pmatrix}=
		{\begin{pmatrix} 1-\rho_{-} & \rho_{-} \\ \rho_{+} & 1-\rho_{+} \end{pmatrix}}^\top
		\begin{pmatrix}1-p & 0 \\ 0 & p \end{pmatrix}
		{\begin{pmatrix} 1-\rho^{\prime}_{-} & \rho^{\prime}_{-} \\ \rho^{\prime}_{+} & 1-\rho^{\prime}_{+} \end{pmatrix}}.
		\label{system}
	\end{equation}
\end{footnotesize}	
	Denote the estimation of ${E}$, ${P}$, ${T}^{j}$, and ${M}$ as 
 \begin{footnotesize}
 $${\hat{{E}}}=\begin{pmatrix} \hat{e}_{00} & \hat{e}_{01} \\ \hat{e}_{10} & \hat{e}_{11} \end{pmatrix}, \hat{{P}}=\begin{pmatrix}1-\hat{p} & 0 \\ 0 & \hat{p} \end{pmatrix},
	\hat{{T}}^{j}=\begin{pmatrix} 1-\hat{\rho}_{-} & \hat{\rho}_{-} \\ \hat{\rho}_{+} & 1-\hat{\rho}_{+} \end{pmatrix},$$ and $
	\hat{{M}}=\begin{pmatrix} 1-\hat{\rho}^{\prime}_{-} & \hat{\rho}^{\prime}_{-} \\ \hat{\rho}^{\prime}_{+} & 1-\hat{\rho}^{\prime}_{+} \end{pmatrix}.$
\end{footnotesize}

	As ${\hat{{E}}}$ and $\hat{{M}}$ can be derived from Eq.~(\ref{pro_two_labels}) and Eq.~(\ref{pro_condition}), the problem is hence equivalent to a bilinear decomposition problem:
	\begin{equation}
		\hat{{E}} (\hat{{M}})^{-1} =  (\hat{{T}}^{j})^\top\hat{{P}}.
		\label{system2}
	\end{equation}
	By solving the above matrix equation, we can get 
	\begin{equation}\hat{p}= \frac{(1-\hat{\rho}^{\prime}_{-})-(\hat{e}_{00}+\hat{e}_{10})}{1-\hat{\rho}^{\prime}_{-}-\hat{\rho}^{\prime}_{+}},
 \label{eq6}
	\end{equation}
	and the estimation of the transition matrix 
	\begin{equation}
		\hat{{T}}^{j} = [\hat{{E}} (\hat{{M}})^{-1} (\hat{{P}})^{-1}]^\top.
		\label{eq7}
	\end{equation}
	\myPara{Implementation of our estimator.} The pseudo-code of our estimator is described in Algorithm~\ref{alg1}. A little difference from the above is that in order to make better use of correlations among labels, we perform $R$ times co-occurrence estimation and bilinear decomposition for different classes $i$ in the second stage to get $R$ estimations, $\hat{{T}}^{j}_r, r=1,2,...,R$. 
 \vspace{-2pt}
 Finally, we estimate the transition matrix ${{T}}^{j}$ by the below aggregation rule from :
	\begin{equation}
 \vspace{-5pt}
 \hat{{T}}^{j} = \arg \min_{\hat{{T}}^{j}_r} \sum_{i=1}^R \Vert \hat{{T}}^{j}_r - \hat{{T}}^{j}_i \Vert_1,
		\label{final}
  \vspace{-2pt}
	\end{equation}
	where $\Vert \cdot \Vert_1$ denotes $\ell_1$ norm.
 
	\begin{algorithm}[!t]
		\caption{Estimating Multi-Label Noise Transition Matrices with Label Correlations.}
		\label{alg1}
		\begin{algorithmic}[1]
			\REQUIRE Noisy training examples $\mathcal{D}_{t}$, the number of classes $q$, the early warmup training epoch $E_{warm}$, the threshold of sample selection $\tau$, and repeated estimation times $R$.
			
			\textbf{Stage1: Standard Training and Sample Selection}
			\STATE  Standard training with the standard multi-label loss for $E_{warm}$ epochs.
			\FOR {$j=1,2,...,q$}
			\STATE  Model losses with a trained classifier on $\mathcal{D}_{t}$ by a GMM.
			\STATE  Get the selected  set $\mathcal{D}^{j}_{s}$ for class label $j$ with the threshold $\tau$.
			\ENDFOR
			
			\textbf{Stage2: Co-occurrence Estimation and Bilinear Decomposition}
			\FOR {$j=1,2,...,q$}
			\FOR {$r=1,2,...,R$}
			\STATE Choose another class label $i$.
			\STATE Estimate  $\mathbbm{P}(\bar{Y}^{i}, \bar{Y}^{j})$ by $\mathbbm{\hat{P}}(\bar{Y}^{i}, \bar{Y}^{j})$ with Eq.~(\ref{pro_two_labels}) on $\mathcal{D}_{t}$, and $\mathbbm{P}(\bar{Y}^{i} \mid {Y}^{j})$ by $\mathbbm{\hat{P}}(\bar{Y}^{i} \mid {Y}^{j})$ with Eq.~(\ref{pro_condition}) on $\mathcal{D}^{j}_{s}$.
			\STATE Solve a bilinear decomposition problem (Eq.~(\ref{system2})) to get an estimation $\hat{{T}}^{j}_r$ by Eq.~(\ref{eq7}).
			\ENDFOR
			\STATE Estimate ${{T}}^{j}$ by $\hat{{T}}^{j}$ which has the minimum error with Eq.~(\ref{final}) from $R$ estimations.
			\ENDFOR
			\ENSURE The estimated transition matrices $\{\hat{{T}}^{j}\}_{j=1}^q$.	
		\end{algorithmic}
  \vspace{-3pt}
	\end{algorithm}
 
    \section{Theoretical Analyses}
    \label{addtheory}
    In this section, we provide an estimation error bound for our multi-label transition matrix estimator~(Section~\ref{est_error}) and a generalization error bound for the statistically consistent algorithm with true transition matrices~(Section~\ref{generalization}), which theoretically justify the effectiveness of the proposed method.

\vspace{-3pt}
\subsection{Estimation Error Bound}
\label{est_error}
In this subsection, we theoretically justify what factors will influence the effectiveness of our multi-label transition matrix estimator and how accurate it will be with a finite sample. As shown in Section~\ref{ours}, our estimator of ${T}^j$ need to 
perform $R$ times of estimations with different class $i$.
Then, for the $r$-th estimation, let  
	\begin{equation}
 \begin{aligned}
&\mathbbm{P}(\bar{Y}^{i}=v, \bar{Y}^{j}=k)=e_{kv}^r,\\
&\mathbbm{P}(\bar{Y}^{j}=1|{Y}^{j}=0)=\rho_{-}^{r}, \mathbbm{P}(\bar{Y}^{j}=0|{Y}^{j}=1)=\rho_{+}^{r},\\
&\mathbbm{P}_{{D}_s^j}(\bar{Y}^{i}=1|{Y}^{j}=0)=\rho_{-}^{\prime r}, \mathbbm{P}_{{D}_s^j}(\bar{Y}^{i}=0|{Y}^{j}=1)=\rho_{+}'^{r},\\
&\mathbbm{P}_{\bar{D}_s^j}(\bar{Y}^{i}=1| {Y}^{j}=0)-\mathbbm{P}_{{D}_s^j}(\bar{Y}^{i}=1 \mid {Y}^{j}=0)=\Delta_{0}^r,   \\
&\mathbbm{P}_{\bar{D}_s^j}(\bar{Y}^{i}=0| {Y}^{j}=1)-\mathbbm{P}_{{D}_s^j}(\bar{Y}^{i}=0 \mid {Y}^{j}=1)=\Delta_{1}^r.
\label{eq10}
\end{aligned}
	\end{equation}
Besides, let $\lambda_k$ be the ratio of 
selected examples with $\bar{y}^{j}=k$ in training examples. We assume $\lambda_1 < \lambda_0$.


To derive an estimation error bound of ${\hat{T}}^j$, we first introduce the following two assumptions.
	\begin{Assumption} 
		$|\Delta_0^r| \leq \Delta, |\Delta_1^r| \leq \Delta, r\in\{1,\ldots,R\}.$
		\label{assum3}
	\end{Assumption}
	Assumption~\ref{assum3} means that for convenience, we assume the influence of sampling bias is smaller than a \textit{certain constant} $\Delta$.
 	\begin{Assumption} 
		$\|{T}^j-{\hat{T}}^j\|_{1} \leq \|{T}^j-{\bar{T}}^j\|_{1}$, for $j\in\{1,\ldots,q\}$, where ${\hat{T}}^{j} =\arg \min_{\hat{{T}}^{j}_i} \sum_{r=1}^R \Vert \hat{{T}}^{j}_i - \hat{{T}}^{j}_r \Vert_1$ and $\bar{{T}}^{j} =\arg \min_{\hat{{T}}}  =\sum_{r=1}^R \Vert \hat{{T}} - \hat{{T}}^{j}_r \Vert_1 =\sum_{r=1}^R\frac{1}{R}\hat{{T}}^{j}_r$.
		\label{assum4}
	\end{Assumption}
	Assumption~\ref{assum4} means that the error of the final estimation $\hat{{T}}^{j}$ by Eq.~(\ref{final}) is smaller than that of the average of $R$ estimations. Note that this assumption has been verified by the experiments in Section~\ref{other1}. With the above assumptions, we are ready to present our theoretical results. 
 
 
\begin{Theorem}\label{thm:main2}
Let $C$ be the maximum absolute value of the first derivative of $\hat{\rho}^r_+$ and $\hat{\rho}^r_-$ w.r.t. the estimated co-occurrence probabilities. Under Assumptions~\ref{assum3} and~\ref{assum4},
with a size $n$ of training examples, for any $\delta>0$, with probability at least $1-\delta$,
\begin{align*}
  \|{T}^j-{\hat{T}}^j\|_{1} \leq  8C\Delta + \min \left( 4C\sqrt{\left(\frac{3}{4n}+\frac{1}{2\lambda_1 n}
\right)\frac{2}{{R\delta}}},\right.\\
\left.
4C\sqrt{(\frac{3}{4n}+\frac{1}{2\lambda_1 n})\frac{2\log (4 / \delta)}{R}}+\frac{8 \log (4/\delta)}{3 R} \right).  
\end{align*}
Further, if sample selection is unbiased ($\Delta\rightarrow0$), then $\hat{{T}}^j \stackrel{p}{\rightarrow} {T}^j$ as $n \rightarrow \infty$.
\end{Theorem}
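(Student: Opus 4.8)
The plan is to first reduce the target quantity to the estimation error of the \emph{averaged} estimator, and then split that error into a systematic (sampling-bias) part and a random (finite-sample) part, controlling the former through the Lipschitz constant $C$ together with Assumption~\ref{assum3}, and the latter through concentration inequalities. By Assumption~\ref{assum4} it suffices to bound $\|{T}^j-{\bar{T}}^j\|_1$ with ${\bar{T}}^j=\frac{1}{R}\sum_{r=1}^R {\hat{T}}^j_r$, since $\|{T}^j-{\hat{T}}^j\|_1\le\|{T}^j-{\bar{T}}^j\|_1$. Because both ${T}^j$ and ${\bar{T}}^j$ are row-stochastic $2\times 2$ matrices, their $\ell_1$ distance collapses to $\|{T}^j-{\bar{T}}^j\|_1=2|\rho_--\bar\rho_-|+2|\rho_+-\bar\rho_+|$, where $\bar\rho_\pm=\frac{1}{R}\sum_r\hat\rho^r_\pm$ are the averaged off-diagonal entries. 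Thus the whole problem reduces to controlling the deviation of these two averaged entries.

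Second, I would isolate the bias. For each $r$ the estimator solves the bilinear system (Eq.~(\ref{system2})), so $\hat\rho^r_\pm$ is a smooth function of the estimated co-occurrence probabilities $\hat e^r_{kv}$ and the estimated conditionals $\hat\rho^{\prime r}_\pm$, and $C$ bounds its first derivatives. Writing $\hat\rho^r_\pm-\rho_\pm=b^r_\pm+s^r_\pm$, the deterministic part $b^r_\pm$ stems only from the sampling bias $\Delta^r_0,\Delta^r_1$ entering ${M}$. A first-order (mean-value) expansion with Assumption~\ref{assum3} gives $|b^r_\pm|\le C(|\Delta^r_0|+|\Delta^r_1|)\le 2C\Delta$, so this contributes at most $2\cdot 2C\Delta+2\cdot 2C\Delta=8C\Delta$ to the $\ell_1$ norm. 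This yields the leading $8C\Delta$ term and is the only contribution surviving as $n\to\infty$.

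Third, I would control the random part $\frac{1}{R}\sum_r s^r_\pm$. Each $\hat e^r_{kv}$ is an empirical mean of indicators over the $n$ training examples, with variance $\le\tfrac{1}{4n}$, and each conditional $\hat\rho^{\prime r}$ is an empirical frequency over a selected subset of size $\lambda_k n$, with variance $\le\tfrac{1}{4\lambda_k n}$ and $\lambda_1<\lambda_0$; propagating these through the $C$-Lipschitz map and collecting the relevant entries gives a per-estimate variance of order $C^2\big(\tfrac{3}{4n}+\tfrac{1}{2\lambda_1 n}\big)$. Treating the $R$ per-class estimates as approximately uncorrelated so that averaging divides the variance by $R$, I would then apply two tail bounds to $\frac{1}{R}\sum_r s^r_\pm$ with a union bound over the two entries: Chebyshev's inequality gives the $\tfrac{1}{\delta}$-type term $4C\sqrt{(\tfrac{3}{4n}+\tfrac{1}{2\lambda_1 n})\tfrac{2}{R\delta}}$, while Bernstein's inequality (using boundedness of the indicators) gives the $\log(4/\delta)$-type term $4C\sqrt{(\tfrac{3}{4n}+\tfrac{1}{2\lambda_1 n})\tfrac{2\log(4/\delta)}{R}}+\tfrac{8\log(4/\delta)}{3R}$. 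Taking the smaller of the two and adding the bias term delivers the claimed bound. Consistency then follows: setting $\Delta\to 0$ kills the bias term, and as $n\to\infty$ (with $\lambda_1$ fixed, so $\lambda_1 n\to\infty$) both random terms vanish, giving $\hat{{T}}^j \stackrel{p}{\rightarrow} {T}^j$.

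The main obstacle is the error-propagation step: the estimator inverts ${M}$ and ${P}$ (Eq.~(\ref{eq7})), so I must verify that $\rho_\pm$ is genuinely Lipschitz with the stated constant $C$ on the relevant domain, i.e.\ that ${M}$ and ${P}$ stay bounded away from singularity, which is guaranteed by Assumptions~\ref{assum1} and~\ref{assum2} (ruling out degenerate noise and ensuring genuine label correlation). A secondary delicate point is justifying the $1/R$ variance reduction, since the $R$ estimates are built from the same sample $\mathcal{D}_t$ and are not exactly independent; making this rigorous requires either an uncorrelatedness argument across the chosen classes $i$ or a controlled bound on the cross-covariances, which is the least routine part of the concentration step.
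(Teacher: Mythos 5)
Your proposal follows essentially the same route as the paper's proof: reduction via Assumption~\ref{assum4} to the averaged estimator $\bar{T}^j$, a first-order (delta-method) expansion of $\hat{\rho}^r_\pm$ in the estimated co-occurrence probabilities that yields the $8C\Delta$ bias term and the per-estimate variance $C^2\bigl(\tfrac{3}{4n}+\tfrac{1}{2\lambda_1 n}\bigr)$, followed by Chebyshev and Bernstein tail bounds with a union bound over the two off-diagonal entries, and taking the minimum. The caveat you flag as the least routine step --- that the $1/R$ variance reduction presumes the $R$ estimates are uncorrelated even though they are built from the same sample $\mathcal{D}_t$ --- is likewise assumed without justification in the paper's own proof (it writes the variance of the average as the average of the variances over $R^2$), so your treatment is, if anything, more candid about that point.
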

A detailed proof is provided in Appendix~\ref{proof_6}. 

\myPara{Remark~4.} First, this bound theoretically shows that the effectiveness of our estimator is neither dependent on the existence of anchor points nor accurate fitting of noisy class posterior, which are major problems of existing estimators. Second, the derived estimation error bound shows that a sufficiently large number of training samples $n$ can ensure that the estimation error is mainly due to the error from sampling bias $\Delta$. When sample selection is unbiased ($\Delta\rightarrow0$), the proposed estimator ${\hat{T}}^j$ will converge to the true transition matrix ${T}^j$ in probability as $n$ increases, which we empirically verify in the ablation experiments (See Section~\ref{other1}). It guarantees the effectiveness of the proposed estimator in typical scenarios with large samples.  Third, it reveals that obtaining a final transition matrix via multiple estimations can promote its accuracy as the times of estimations $R$ increases. In addition, this bound tells us a limitation that when the number of one label appearing is too few, leading to very small $\lambda_1$, the estimation of the transition matrix for this class label will be difficult to be accurate. 

\subsection{Generalization Error Bound}
\label{generalization}
Here, we theoretically justify how the Reweight method~\cite{Liu2016TPAMI} generalizes for learning classifiers in noisy multi-label cases given true transition matrices $\{{T}^j\}_{j=1}^q$. 

Assume the neural network has $d$ layers, parameter matrices ${W}_1,\ldots,{W}_d$, and activation functions $\sigma_1,\ldots,\sigma_{d-1}$ for each layer. Let denote the mapping of the neural network by $h: {x}\mapsto {W}_d\sigma_{d-1}({W}_{d-1}\sigma_{d-2}(\ldots \sigma_1({W}_1{x})))\in\mathbb{R}^q$. The output of the sigmoid function is defined by $g_j({x})=1/(1+\exp{(-h_j({x}))}), j=1,\ldots,q$.
Let $f_j({x})=\mathbb{I}\left[g_j({x})>0.5\right], j=1,\ldots,q,$ be the classifier learned from the hypothesis space ${F}$ determined by the real-valued parameters of the neural network, \ie, $\hat{{f}}=\arg\min_{{f}\in {F}}\bar{R}_{n,w}({f})$.

To derive a generalization bound, following~\cite{XiaLW00NS19}, we assume that instances are upper bounded by $B$, \ie, $\|{x}\|\leq B$ for all ${x}\in\mathcal{{X}}$, and that the loss function $\mathcal{L}({f}({x}),\bar{{y}})$ is $L$-Lipschitz continuous w.r.t. ${f}({x})$ and upper bounded by $M$, \ie, for any ${f},{f}^{\prime} \in {F}$ and any $({x},\bar{{y}})$, $|\mathcal{L}({f}({x}),\bar{{y}})-\mathcal{L}({f}^{\prime}({x}),\bar{{y}})|\leq L|{f}({x})-{f}^{\prime}({x})|$, and for any $({x},\bar{{y}})$, $\mathcal{L}({f}({x}),\bar{{y}})\leq M$. 


\begin{Theorem}\label{thm:main}
Assume the Frobenius norm of the weight matrices ${W}_1,\ldots,{W}_d$ are at most $M_1,\ldots, M_d$. Let the activation functions be 1-Lipschitz, positive-homogeneous, and applied element-wise (such as the ReLU).
Let the loss function be the binary cross-entropy loss, \ie, $\mathcal{L}({f}({x}),\bar{{y}})=\sum_{j=1}^{q}-\bar{y}^j\log(g_j({x}))-(1-\bar{y}^j)\log(1-g_j({x}))$.
Let $\hat{{f}}$ be the learned classifier.
Assume the true transition matrices $\{{T}^j\}_{j=1}^q$ is given.  Let $U=\frac{1}{\min_j(1-\max(\mathbbm{P}(\bar{Y}^{j}=0 \mid Y^{j}=1),\mathbbm{P}(\bar{Y}^{j}=1 \mid Y^{j}=0)))}$.
Then, for any $\delta>0$, with probability at least $1-\delta$,
    \begin{align}
\mathbb{E}[\bar{R}_{n,w}(\{{T}^j\}_{j=1}^q,\hat{{f}})]- \bar{R}_{n,w}(\{{T}^j\}_{j=1}^q,\hat{{f}}) \nonumber \\ 
\leq \frac{2BqL(\sqrt{2d\log2}+1)\Pi_{i=1}^{d}M_i}{\sqrt{n}}+UM\sqrt{\frac{\log{(1/\delta)}}{2n}}.\nonumber
    \end{align}
\end{Theorem}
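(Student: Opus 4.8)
The plan is to treat this as a standard Rademacher-complexity generalization bound for the reweighted empirical risk $\bar{R}_{n,w}$, in the style of the reference Xia et al.~\cite{XiaLW00NS19}, with the deep-network complexity controlled by the size-independent estimate of Golowich et al. First I would bound the importance weights. Writing $\rho_+^j=\mathbbm{P}(\bar{Y}^j=0\mid Y^j=1)$ and $\rho_-^j=\mathbbm{P}(\bar{Y}^j=1\mid Y^j=0)$, and using the transition relation $\mathbbm{P}(\bar{Y}^j=k\mid X)=\sum_i T^j_{ik}\mathbbm{P}(Y^j=i\mid X)$, I would view the reweighting factor $\mathbbm{P}(Y^j=\bar{y}^j\mid X)/\mathbbm{P}(\bar{Y}^j=\bar{y}^j\mid X)$ as a monotone function of the clean posterior and show it is maximized at a degenerate posterior, yielding the closed form $1/(1-\max(\rho_+^j,\rho_-^j))$. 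Taking the worst class then shows every weight is at most $U$, so each per-example reweighted loss $\sum_j w_{ij}\,\ell(f_j(x_i),\bar{y}^j_i)$ is bounded by $UM$; Assumption~\ref{assum1} guarantees $U<\infty$.

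Next I would reduce the quantity of interest to a uniform deviation. Since the learned $\hat{{f}}$ lies in ${F}$, we have $\mathbb{E}[\bar{R}_{n,w}(\{{T}^j\},\hat{{f}})]-\bar{R}_{n,w}(\{{T}^j\},\hat{{f}})\le \sup_{{f}\in{F}}\bigl(\mathbb{E}[\bar{R}_{n,w}({f})]-\bar{R}_{n,w}({f})\bigr)$. I would then apply McDiarmid's bounded-difference inequality to this supremum: replacing one of the $n$ examples changes $\bar{R}_{n,w}$ by at most $UM/n$ (by the weight-times-loss bound from the first step), so the bounded differences are $c_i=UM/n$. McDiarmid then gives, with probability at least $1-\delta$, that the supremum exceeds its expectation by at most $UM\sqrt{\log(1/\delta)/(2n)}$, which is exactly the second term of the claimed bound.

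It then remains to control the expected supremum by the Rademacher complexity of the reweighted loss class $\mathcal{G}=\{(x,\bar{{y}})\mapsto\sum_j w_j\,\ell(f_j(x),\bar{y}^j):{f}\in{F}\}$. Standard symmetrization gives the factor $2\mathfrak{R}_n(\mathcal{G})$. I would split the sum over the $q$ classes and apply Talagrand's contraction lemma, peeling off the $L$-Lipschitz binary cross-entropy (which is Lipschitz in the network output since its derivative $g_j-\bar{y}^j$ is bounded), reducing the complexity to $L\sum_{j=1}^q\mathfrak{R}_n(H_j)$, where $H_j$ is the $j$-th output coordinate of the network. Finally I would invoke the bound of Golowich et al.\ for depth-$d$ networks with Frobenius-norm-bounded weights, $1$-Lipschitz positive-homogeneous activations, and inputs of norm at most $B$, namely $\mathfrak{R}_n(H_j)\le B(\sqrt{2d\log2}+1)\prod_{i=1}^d M_i/\sqrt{n}$. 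Multiplying by $2Lq$ yields the first term, and adding the concentration term of the previous paragraph completes the proof; the final claim that $\hat{{T}}^j\!\to\!{T}^j$ when sample selection is unbiased follows from Theorem~\ref{thm:main2}.

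The main obstacle I anticipate is the careful treatment of the data-dependent importance weights inside the Rademacher complexity. Because the $w_{ij}$ differ from example to example, the contraction step must be carried out with the per-example Lipschitz maps $z\mapsto w_{ij}\,\ell(z,\bar{y}^j_i)$ rather than a single globally Lipschitz loss, and one must argue that this does not inflate the size-independent deep-network estimate beyond the constants appearing in the statement. Correctly propagating the product-of-norms factor $\prod_{i=1}^d M_i$ through the layer-peeling argument, and verifying the precise Lipschitz and boundedness constants of binary cross-entropy composed with the sigmoid, is where the bookkeeping is most delicate; the remaining concentration and symmetrization steps are routine once the weight bound $U$ and loss bound $M$ are in hand.
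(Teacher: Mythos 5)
Your overall architecture matches the paper's: you bound the importance weights by $U$ through the same monotonicity-in-the-posterior argument, you apply McDiarmid's inequality with bounded differences $UM/n$ to obtain the term $UM\sqrt{\log(1/\delta)/(2n)}$ (this is exactly the paper's Lemma~\ref{lemma:1}), you symmetrize, and you finish with the Golowich et al.\ complexity bound. The genuine gap sits at the step you yourself flag as the ``main obstacle,'' and it is not mere bookkeeping. Your plan is to push the importance weights through Talagrand contraction as ``per-example Lipschitz maps $z\mapsto w_{ij}\,\ell(z,\bar{y}^j_i)$.'' This does not go through as stated, for two reasons. First, the weights are not per-example constants: $w_{ij}=[g'_j({x}_i)]_{\bar{y}^j_i}/[({T}^j)^\top g'_j({x}_i)]_{\bar{y}^j_i}$ depends on $g_j$, i.e., on the very function ${f}$ over which the supremum is taken, so there is no fixed family of Lipschitz maps to contract against; one would instead have to analyze the composite map $h_j({x}_i)\mapsto w\cdot\ell$ and bound its Lipschitz constant, a separate and delicate computation that you do not carry out. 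Second, even if the weights could be frozen, they are only bounded by $U$, so contraction would yield per-example Lipschitz constants $UL$ and the first term of the bound would inflate to $2BqUL(\sqrt{2d\log 2}+1)\Pi_{i=1}^{d}M_i/\sqrt{n}$, with $U>1$ whenever noise is present --- this does not match the theorem you are asked to prove.

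The paper closes precisely this gap with a dedicated lemma (Lemma~\ref{lemma:2}): the Rademacher complexity of the \emph{weighted} loss class is bounded by that of the \emph{unweighted} loss class, so the weights are removed entirely before any contraction is attempted. Its proof processes one Rademacher variable at a time with a supremum/sign-splitting argument, in the style of the proof of the contraction lemma itself, using only the bound $w\leq U$. Only after the weights are gone does the paper contract, via Lemma~\ref{lemma:3}, which shows that binary cross-entropy composed with the sigmoid is $1$-Lipschitz in the network output $h_j({x})$ --- note that your observation that ``the derivative $g_j-\bar{y}^j$ is bounded'' pertains to this unweighted loss, not to the weighted one appearing in your class $\mathcal{G}$. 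Without an argument of the type of Lemma~\ref{lemma:2}, or alternatively a proof that the composite weighted loss is itself $L$-Lipschitz in $h_j$, your outline cannot recover the stated constant. (A minor additional point: the convergence claim $\hat{{T}}^j\rightarrow{T}^j$ you append belongs to Theorem~\ref{thm:main2}, not to this theorem, which assumes the true transition matrices are given.)
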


A detailed proof is provided in Appendix~\ref{proof_5}. 

 	\begin{table*}[h]
		\caption{Comparison for estimating transition matrices on Pascal-VOC2007 dataset.}
		\centering
		\scriptsize
		\setlength\tabcolsep{5.8pt}
		\begin{tabular}{l|cc|cc|cc|cc}
			\hline	
			Noise rates $(\rho_{-}, \rho_{+})$  & (0,0.2) & (0,0.6) & (0.2,0) & (0.6,0)  & (0.1,0.1) & (0.2,0.2) &(0.017,0.2)& (0.034,0.4) \\	
			\hline	
			T-estimator max  &  3.89$\pm$0.03 & 10.52$\pm$0.58  & 3.01$\pm$0.12  & 4.47$\pm$0.22    & 3.18$\pm$0.22  &  5.28$\pm$0.20 & 3.99$\pm$0.10 &  6.28$\pm$0.44   \\	
			T-estimator 97\%  & 4.95$\pm$0.17  & \secbest{4.42$\pm$0.18} & 1.77$\pm$0.03 & \secbest{2.13$\pm$0.12}  & 6.99$\pm$0.10 & 6.94$\pm$0.17  & 5.38$\pm$0.14  & 5.17$\pm$0.09 \\	
			Dual T-estimator max & \secbest{1.94$\pm$0.13} & 7.29$\pm$0.16 & \secbest{1.03$\pm$0.04} & 2.68$\pm$0.13 & \textbf{2.13$\pm$0.23}  & \secbest{4.02$\pm$0.18}  &  \textbf{1.71$\pm$0.08} &  \secbest{2.67 $\pm$0.27} \\
			Dual T-estimator 97\%   & 12.59$\pm$0.06  &  7.43$\pm$0.06 & 1.09$\pm$0.03 &  2.41$\pm$0.33  & 14.39$\pm$0.10 &  11.78$\pm$0.06 &  13.71$\pm$0.16 &  11.15$\pm$0.09  \\
			\hline	
			Our estimator  &  
   \textbf{1.63$\pm$0.16} & \textbf{1.54$\pm$0.30} & \textbf{0.36$\pm$0.08} & \textbf{1.17$\pm$0.37} & \secbest{2.94$\pm$0.24} & \textbf{3.10$\pm$0.28} & \secbest{2.09$\pm$0.06} & \textbf{1.85$\pm$0.13} \\
			Our estimator gold &  0.44$\pm$0.05 & 0.51$\pm$0.09 & 0.38$\pm$0.08 & 0.37$\pm$0.11 & 0.83$\pm$0.05 & 2.15$\pm$0.30 & 0.65$\pm$0.10 & 1.40$\pm$0.20  \\
			\hline	
		\end{tabular}
		\label{est_VOC2007}
	\end{table*}
	\begin{table*}[h]
		\caption{Comparison for estimating transition matrices on Pascal-VOC2012 dataset.}
		\centering
		\scriptsize
		\setlength\tabcolsep{5.8pt}
		\begin{tabular}{l|cc|cc|cc|cc}
			\hline	
			Noise rates $(\rho_{-}, \rho_{+})$  & (0,0.2) & (0,0.6) & (0.2,0) & (0.6,0)  & (0.1,0.1) & (0.2,0.2) &(0.017,0.2)& (0.034,0.4) \\
			\hline	
			T-estimator max  &  3.90$\pm$0.01 & 10.28$\pm$0.33 & 2.87$\pm$0.09 &  4.55$\pm$0.08 & 3.29$\pm$0.07  &  5.25$\pm$0.15 &  4.05$\pm$0.04 &  6.82$\pm$0.20  \\	
			T-estimator 97\%  &  5.42$\pm$0.09 & \secbest{3.98$\pm$0.09} & 1.53$\pm$0.06 &  \secbest{1.91$\pm$0.07} & 6.43$\pm$0.16  &  6.20$\pm$0.17 &  5.76$\pm$0.27 &  5.16$\pm$0.14  \\
			Dual T-estimator max &  \secbest{1.02$\pm$0.20} & 5.13$\pm$0.26 & \secbest{1.07$\pm$0.07} &  2.06$\pm$0.12 &  \secbest{1.94$\pm$0.05} &  \secbest{2.59$\pm$0.16} &  \secbest{1.17$\pm$0.13} & \secbest{1.93$\pm$0.08} \\
			Dual T-estimator 97\%   &  12.94$\pm$0.06 & 7.49$\pm$0.03 & 1.14$\pm$0.04 &  2.94$\pm$0.18 & 14.23$\pm$0.08  &  11.56$\pm$0.05 &  13.97$\pm$0.09 &  11.10$\pm$0.08  \\
			\hline	
			Our estimator  &  \textbf{0.83$\pm$0.10} & \textbf{1.94$\pm$0.15} & \textbf{0.26$\pm$0.03} &  \textbf{0.91$\pm$0.12} & \textbf{1.74$\pm$0.22}  &  \textbf{1.79$\pm$0.17} &  \textbf{0.94$\pm$0.07} &  \textbf{1.07$\pm$0.14}  \\
			Our estimator gold &  0.33$\pm$0.05 & 0.34$\pm$0.05 & 0.25$\pm$0.05 & 0.45$\pm$0.05 & 0.51$\pm$0.05 & 1.67$\pm$0.29 & 0.42$\pm$0.06 & 0.91$\pm$0.16 \\
			\hline		
		\end{tabular}
		\label{est_VOC2012}
	\end{table*}
	\begin{table*}[h]
		\caption{Comparison for estimating transition matrices on MS-COCO dataset.}
		\centering
		\scriptsize
		\setlength\tabcolsep{5.8pt}
		\begin{tabular}{l|cc|cc|cc|cc}
			\hline	
			Noise rates $(\rho_{-}, \rho_{+})$  & (0,0.2) & (0,0.6) & (0.2,0) & (0.6,0)  & (0.1,0.1) & (0.2,0.2) & (0.008,0.2)& (0.015,0.4)\\	
			\hline	
			T-estimator max  &  16.14$\pm$0.33 & 39.09$\pm$0.47 & 10.39$\pm$0.21 &  11.49$\pm$0.60 & 13.95$\pm$0.41  &  20.50$\pm$0.04 &  16.70$\pm$0.06 &  28.16$\pm$0.45  \\	
			T-estimator 97\%  &  50.49$\pm$0.01 & 25.70$\pm$0.08 & 4.04$\pm$0.08 &  \secbest{3.70$\pm$0.02} & 51.17$\pm$0.16  &  39.45$\pm$0.11 &  49.96$\pm$0.18 &  37.54$\pm$0.10  \\
			Dual T-estimator max &  \textbf{5.04$\pm$0.04} & \textbf{11.22$\pm$0.70} & 4.65$\pm$0.07 &  9.55$\pm$0.84 & \secbest{13.02$\pm$0.45}  &  \secbest{15.79$\pm$0.38} &  \textbf{7.04$\pm$0.31} &  \textbf{6.34$\pm$0.11}  \\
			Dual T-estimator 97\%   &  61.49$\pm$0.02 & 30.97$\pm$0.03 & \secbest{1.53$\pm$0.00} &  7.86$\pm$0.12 & 64.20$\pm$0.02 &  48.67$\pm$0.01 &  63.12$\pm$0.02 &  46.91$\pm$0.01  \\
			\hline	
			Our estimator  &  \secbest{7.42$\pm$0.38} & \secbest{11.23$\pm$0.11} & \textbf{0.50$\pm$0.03} &  \textbf{0.83$\pm$0.06} & \textbf{8.88$\pm$0.10}  &  \textbf{10.27$\pm$0.19} &  \secbest{7.51$\pm$0.43} &  \secbest{8.77$\pm$0.20}  \\
			Our estimator gold &  0.82$\pm$0.03 & 0.80$\pm$0.04 & 0.40$\pm$0.04 & 0.66$\pm$0.04 & 1.94$\pm$0.04 & 8.14$\pm$0.06 & 0.95$\pm$0.05 & 2.02$\pm$0.08 \\
			\hline		                   
		\end{tabular}                
		\label{est_MSCOCO}     
	\end{table*}
 
\myPara{Remark~5.} The factor $(\sqrt{2d\log2}+1)\Pi_{i=1}^{d}M_i$ is induced by the hypothesis complexity of the deep neural network \cite{golowich2018size} (see Theorem 1 therein). 
This derived generalization error bound shows that the Reweight algorithm in multi-label cases can achieve a small difference between training error ($\bar{R}_{n,w}(\{{T}^j\}_{j=1}^q,\hat{{f}})$) and test error ($\mathbb{E}[\bar{R}_{n,w}(\{{T}^j\}_{j=1}^q,\hat{{f}})]$) with the rate of convergence $\mathcal{O}\left({\sqrt{1/n}}\right)$. Also, note that deep learning is powerful in yielding a small training error. Therefore, if the training sample size $n$ is large, then the upper bound in Theorem \ref{thm:main} is small, which implies a small $\mathbb{E}[\bar{R}_{n,w}(\{{T}^j\}_{j=1}^q,\hat{{f}})]$ and justifies why such a method with true transition matrices will have small test errors in the experiment section. Meanwhile, in Section~\ref{cls_ccn}, we show that given true transition matrices $\{{T}^j\}_{j=1}^q$, the Reweight method in multi-label cases can overall outperform the state-of-the-art methods in classification performance, implying that the small generalization error is not obtained at the cost of enlarging the approximation error.

Overall, according to Theorem~\ref{thm:main2} and Theorem~\ref{thm:main}, it can be guaranteed that when sampling bias is small and given sufficiently large examples, the estimated multi-label transition matrices will be accurate, and then, with such accurate transition matrices, the classifier learned by the Reweight algorithm will achieve a good generalization performance.

\section{Experiments}
	\label{experiments}
\subsection{Experiments on Class-Dependent Noisy Multi-Label Datasets}
\label{ccn}
	\myPara{Dataset.}
	We verify the effectiveness of the proposed method on three synthetic noisy multi-label datasets, \ie, Pascal-VOC2007~\cite{pascal-voc-2007}, Pascal-VOC2012~\cite{pascal-voc-2012}, and MS-COCO~\cite{lin2014microsoft}.
	Pascal-VOC2007~\cite{pascal-voc-2007} and Pascal-VOC2012~\cite{pascal-voc-2012} datasets are two popular datasets for object recognition. They both contain images from the same 20 object classes, with an average of $n_a=1.5$ labels per image. Pascal-VOC2007 contains
	a training set of 5,011 images and a test set of 4,952 images.
	Pascal-VOC2012 consists of 11,540 images as the training set and 10,991 images as the test set~\cite{MCAR_TIP_2021}. As the labels of the test set in Pascal-VOC2012 are not publicly available, we use the test set in Pascal-VOC2007 for Pascal-VOC2012 evaluation. MS-COCO~\cite{lin2014microsoft} is a widely used multi-label dataset. It contains 82,081 images as the training set and 40,137 images as the test set and covers 80 object classes with an average of $n_a=2.9$ labels per image.
	For these datasets, we corrupted the training sets manually according to true transition matrices $\{{{T}}^{j}\}_{j=1}^q$. For convenience, we use the same true transition matrices for all classes, \ie, ${T}^{j}={T}= \begin{pmatrix} 1-\rho_{-} & \rho_{-} \\ \rho_{+} & 1-\rho_{+} \end{pmatrix}$, but do not divulge this information for algorithms. We generate
	four different types of synthetic datasets by using different transition matrices: (1) $\rho_{-}=0, \rho_{+}= \rho$, which annotates some positive examples as negative examples, also known as multi-label learning with missing labels~\cite{Wu2014MultilabelLW,Wu2018MultilabelLW} (termed MLML); (2) $\rho_{-}=\rho, \rho_{+}= 0$, which annotates some negative examples as positive examples, also known as partial multi-label learning~\cite{Xie2018PartialML,Xie2022PartialML} (termed PML); (3) $\rho_{-}=\rho, \rho_{+}= \rho$, where positive examples and negative examples are mislabeled with the same probability $\rho$ (termed uniform label flipping, ULF); (4)  $\rho_{-}=\frac{n_a}{q-n_a}\rho, \rho_{+}= \rho$, where positive examples and negative examples are mislabeled with the same number (termed asymmetric label flipping, ALF). In the experiments, we test the algorithms on various $\rho$ (two cases for each noise type). 
	For all datasets, we leave out 10\% of the noisy training examples as a noisy validation set. We use mAP on noisy validation set as the criterion for model selection. 
	
	\myPara{Implementation details.} For a fair comparison, we implement all methods with default parameters by PyTorch on NVIDIA RTX 3090. We use a ResNet-50 network~\cite{he2016cvpr} pre-trained on ImageNet~\cite{russakovsky2015imagenet} for all datasets, and the optimizer is Adam optimizer~\cite{KingmaB14} with $\beta=(0.9,0.999)$. The batch size is 128, the learning rate is 5e-5. The number of training epochs is 20 for Pascal-VOC2007/VOC2012, and 30 for MS-COCO. For the transition matrix estimation method, $E_{warm}$ is the same as the normal training epoch. For our estimator, we perform sample selection based on the average losses of 5 epochs before a certain warmup epoch~(10th epoch for Pascal-VOC2007/VOC2012, 15th epoch for MS-COCO), $R=q-1$ and $\tau=0.5$ in all experiments. All experiments are run at least three times with different random seeds, and the average and standard deviation values of results are reported. The best results are in \textbf{bold}, and the second-best results are in \secbest{blue}.
 
	\subsubsection{Comparison for estimating transition matrices}
	\label{experiment1}
	\myPara{Baselines.} For evaluating the effectiveness of estimating the transition matrix under multi-label cases, we compare the proposed method with the following methods: (1) T-estimator max~\cite{Liu2016TPAMI,PatriniRMNQ17}, which estimates the transition matrix via the noisy class posterior probabilities for anchor points that have the largest estimated noisy class posteriors. (2) T-estimator 97\%~\cite{Liu2016TPAMI,PatriniRMNQ17}, which selects the points with 97\% largest estimated noisy class posteriors to be anchor points. (3) Dual T-estimator max~\cite{YaoL0GD0S20}, which introduces an intermediate class to avoid directly estimating the noisy class posterior, and selects the points with the largest estimated intermediate class posteriors to be the anchor points. (4) Dual T-estimator 97\%~\cite{YaoL0GD0S20}, which selects the points with the 97\% largest estimated intermediate class posteriors to be the anchor points.
 
	\myPara{Metrics.} We use the sum of estimation error for the transition matrices as the estimation evaluation metric, \ie, $\sum_{j=1}^q\|{T}^{j}-\hat{{T}}^{j}\|_{1}$.
	
	\myPara{Results.}
	In Tab.~\ref{est_VOC2007}, \ref{est_VOC2012} and \ref{est_MSCOCO}, we can see that for all cases on three datasets, the proposed estimation method leads to the smallest or second-smallest estimator errors across various noise rates. Note that since the fitting of the noisy or intermediate class posterior is hard to be accurate in noisy multi-label learning, the T-estimator and Dual T-estimator need to carefully tune a hyperparameter for better estimation under different noise rates, and it's very sensitive in some cases, \eg, MS-COCO datasets with noise rates $(0.1, 0.1)$. In contrast, our method uses the same hyperparameters on one dataset to get good results for all cases, which reflects its robustness to various noise rates.
	Besides, to study the ablation of sampling bias, we also run our method with an unbiased sample selection, named ``our estimator gold''. We can see that sample bias is the main factor that contributes to the error for our estimator, but the little error gap between our estimator and our estimator gold shows it will not lead to large estimation errors. 
	
	\begin{table}[t]
		\caption{Comparison for classification performance on Pascal-VOC2007 dataset with class-dependent label noise.}
  \renewcommand{\arraystretch}{1.0}
		\centering
		\scriptsize
		\setlength\tabcolsep{5.8pt}
		\begin{tabular}{l|l|cccc|c}
			\hline	
			& Noise type  &  MLML & PML & ULF & ALF  & Avg. \\	
			\hline
			\multirow{12}*{\rotatebox{90}{mAP}}&  Standard & 80.71 & 75.68 & 80.97 & 82.45 & 79.95\\
  & GCE & 78.59 & 75.25 & 81.54 & 82.54 & 79.48\\
  & CDR & 81.03 & 75.81 & 81.12 & 82.76 & 80.18\\
  & AGCN & 79.37 & 73.78 & 77.44 & 79.38 & 77.49\\
  & CSRA & \best{82.29} & 75.15 & 80.90 & \best{83.23} & 80.39\\
  & WSIC & 80.16 & 74.56 & 80.67 & 82.46 & 79.46\\
  & Reweight-T max & 80.59 & 77.20 & 81.29 & 82.54 & 80.41\\
  & Reweight-T 97\% & 81.49 & 78.52 & 80.76 & 82.90 & 80.92\\
  & Reweight-DualT max & 81.06 & 78.75 & \secbest{81.71} & \secbest{83.19} & \secbest{81.18}\\
  & Reweight-DualT 97\% & 80.04 & \best{80.16} & 77.48 & 79.62 & 79.33\\
\cline{2-7} & Reweight-Ours & \secbest{81.58} & \secbest{79.27} & \best{82.24} & 82.53 & \best{81.41}\\
  & Reweight-TrueT & 81.04 & 79.94 & 82.26 & 82.96 & 81.55\\
			\hline	
			\multirow{12}*{\rotatebox{90}{OF1}}&  Standard & 53.63 & 46.97 & 77.55 & 67.83 & 61.5\\
  & GCE & 56.15 & 47.07 & \secbest{78.00} & 68.53 & 62.44\\
  & CDR & 55.08 & 47.02 & 77.67 & 67.94 & 61.93\\
  & AGCN & 52.95 & 46.15 & 75.67 & 65.86 & 60.16\\
  & CSRA & 55.30 & 46.83 & \best{78.64} & 70.05 & 62.71\\
  & WSIC & 45.75 & 46.95 & 75.72 & 62.78 & 57.80\\
  & Reweight-T max & 59.26 & 63.67 & 76.79 & 71.86 & 67.90\\
  & Reweight-T 97\% & \secbest{73.00} & \secbest{75.42} & 70.16 & \secbest{76.28} & \secbest{73.72}\\
  & Reweight-DualT max & \best{73.60} & 72.72 & 72.62 & 62.80 & 70.44\\
  & Reweight-DualT 97\% & 64.99 & \best{76.64} & 44.87 & 62.48 & 62.25\\
\cline{2-7} & Reweight-Ours & 72.15 & 74.14 & 77.58 & \best{76.80} & \best{75.17}\\
  & Reweight-TrueT & 72.44 & 77.22 & 77.76 & 77.35 & 76.19\\
			\hline	
			\multirow{12}*{\rotatebox{90}{CF1}}& Standard & 51.59 & 45.90 & 73.27 & 62.59 & 58.34\\
  & GCE & 53.09 & 46.01 & 73.99 & 65.13 & 59.56\\
  & CDR & 53.07 & 45.92 & 73.44 & 63.75 & 59.05\\
  & AGCN & 53.43 & 44.66 & 72.65 & 63.95 & 58.67\\
  & CSRA & 53.77 & 45.50 & 75.48 & 66.79 & 60.39\\
  & WSIC & 41.89 & 45.05 & 69.56 & 54.42 & 52.73\\
  & Reweight-T max & 56.71 & 63.82 & 75.26 & 68.54 & 66.08\\
  & Reweight-T 97\% & \best{74.03} & \best{72.44} & 71.53 & \best{76.09} & \best{73.52}\\
  & Reweight-DualT max & 60.29 & 70.28 & 73.38 & 71.99 & 68.99\\
  & Reweight-DualT 97\% & 67.89 & 73.20 & 56.67 & 65.79 & 65.89\\
\cline{2-7} & Reweight-Ours & \secbest{69.08} & \secbest{72.34} & \best{76.06} & \secbest{74.73} & \secbest{73.05}\\
  & Reweight-TrueT & 69.57 & 73.22 & 73.25 & 73.97 & 72.50\\
			\hline		
		\end{tabular}
		\label{cls_VOC2007}
	\end{table}
 	\begin{table}[h]
  \renewcommand{\arraystretch}{1.0}
		\caption{Comparison for classification performance on Pascal-VOC2012 dataset with class-dependent label noise.}
		\centering
		\scriptsize
		\setlength\tabcolsep{5.8pt}
		\begin{tabular}{l|l|cccc|c}
			\hline	
			& Noise type  &  MLML & PML & ULF & ALF  & Avg. \\	
			\hline
			\multirow{12}*{\rotatebox{90}{mAP}}  & Standard & 83.00 & 80.92 & 84.23 & 84.47 & 82.72\\
  & GCE & 82.37 & 80.49 & \secbest{84.74} & 84.40 & 82.53\\
  & CDR & 83.31 & 81.07 & 84.40 & 84.70 & 82.93\\
  & AGCN & 82.42 & 81.23 & 82.42 & 83.30 & 82.02\\
  & CSRA & \best{84.32} & 80.24 & 83.56 & \best{85.19} & 82.71\\
  & WSIC & \secbest{83.57} & 81.27 & 83.84 & 84.63 & 82.89\\
  & Reweight-T max & 82.23 & 82.20 & 84.43 & 83.95 & 82.95\\
  & Reweight-T 97\% & 83.51 & 82.97 & 84.24 & 84.64 & 83.57\\
  & Reweight-DualT max & 82.31 & 83.24 & 85.22 & \secbest{84.79} & \secbest{83.59}\\
  & Reweight-DualT 97\% & 81.95 & \best{84.11} & 80.50 & 82.14 & 82.19\\
\cline{2-7} & Reweight-Ours & 83.17 & \secbest{83.33} & \best{84.83} & 84.74 & \best{83.78}\\
  & Reweight-TrueT & 84.26 & 83.95 & 85.04 & 85.12 & 84.42\\
\hline	
			\multirow{12}*{\rotatebox{90}{OF1}}  & Standard & 52.88 & 47.70 & 79.89 & 69.75 & 60.16\\
  & GCE & 51.46 & 47.81 & \best{80.13} & 70.79 & 59.80\\
  & CDR & 53.74 & 47.78 & 79.96 & 69.88 & 60.49\\
  & AGCN & 53.59 & 47.96 & 78.15 & 68.71 & 59.90\\
  & CSRA & 55.49 & 47.80 & 79.97 & 70.41 & 61.09\\
  & WSIC & 53.23 & 47.83 & 79.17 & 68.04 & 60.08\\
  & Reweight-T max & 60.86 & 65.06 & \secbest{80.08} & 73.35 & 68.67\\
  & Reweight-T 97\% & \best{75.55} & 78.33 & 74.55 & 76.55 & 76.14\\
  & Reweight-DualT max & 72.06 & \best{78.09} & 79.88 & \secbest{78.02} & \secbest{76.68}\\
  & Reweight-DualT 97\% & 59.62 & 79.08 & 43.35 & 61.43 & 60.68\\
\cline{2-7} & Reweight-Ours & \secbest{74.81} & \secbest{78.58} & 79.45 & \best{79.03} & \best{77.61}\\
  & Reweight-TrueT & 77.72 & 79.88 & 80.15 & 79.51 & 79.25\\
\hline	
			\multirow{12}*{\rotatebox{90}{CF1}}  & Standard & 52.46 & 46.97 & 77.33 & 68.20 & 58.92\\
  & GCE & 50.75 & 47.19 & 77.49 & 69.02 & 58.48\\
  & CDR & 53.41 & 47.03 & 77.35 & 68.07 & 59.26\\
  & AGCN & 53.75 & 47.12 & 75.69 & 66.24 & 58.85\\
  & CSRA & 54.37 & 47.17 & 77.70 & 67.97 & 59.75\\
  & WSIC & 53.33 & 47.24 & 76.30 & 65.58 & 58.96\\
  & Reweight-T max & 57.60 & 64.70 & \best{78.03} & 71.18 & 66.78\\
  & Reweight-T 97\% & \best{76.30} & \best{76.56} & 74.97 & \secbest{77.16} & \best{75.94}\\
  & Reweight-DualT max & 68.67 & \secbest{76.34} & 77.83 & 72.19 & 74.28\\
  & Reweight-DualT 97\% & 64.93 & 75.99 & 58.69 & 66.18 & 66.54\\
\cline{2-7} & Reweight-Ours & \secbest{72.79} & 76.33 & \secbest{77.90} & \best{77.27} & \secbest{75.67}\\
  & Reweight-TrueT & 75.39 & 76.91 & 77.76 & 77.56 & 76.69\\
			\hline		
		\end{tabular}
		\label{cls_VOC2012}
	\end{table}
  	\begin{table}[h]
		\caption{Comparison for classification performance on MS-COCO dataset with class-dependent label noise.}
  \vspace{-6pt}
  \renewcommand{\arraystretch}{1.0}
		\centering
		\scriptsize
		\setlength\tabcolsep{5.8pt}
		\begin{tabular}{l|l|cccc|c}
			\hline	
			& Noise type  &  MLML & PML & ULF & ALF  & Avg. \\	
			\hline
			\multirow{12}*{\rotatebox{90}{mAP}}& Standard & 66.87 & 61.11 & 65.14 & 68.29 & 64.37\\
  & GCE & 66.24 & 60.67 & 65.92 & 68.36 & 64.28\\
  & CDR & 66.96 & 61.26 & 65.33 & 68.36 & 64.52\\
  & AGCN & \best{68.62} & 62.58 & \best{66.69} & \secbest{69.38} & \secbest{65.96}\\
  & CSRA & \secbest{68.23} & 59.71 & 65.32 & \best{69.39} & 64.42\\
  & WSIC & 66.01 & 59.92 & 64.87 & 67.62 & 63.60\\
  & Reweight-T max & 66.97 & 62.84 & 65.57 & 68.00 & 65.13\\
  & Reweight-T 97\% & 65.25 & \secbest{63.72} & 62.86 & 66.27 & 63.94\\
  & Reweight-DualT max & 63.98 & 63.50 & 65.09 & 67.26 & 64.19\\
  & Reweight-DualT 97\% & 61.91 & 59.43 & 52.89 & 60.74 & 58.08\\
\cline{2-7} & Reweight-Ours & 66.93 & \best{65.63} & \secbest{66.58} & 68.55 & \best{66.38}\\
  & Reweight-TrueT & 68.14 & 65.93 & 67.38 & 69.25 & 67.15\\
\hline		\multirow{12}*{\rotatebox{90}{OF1}} & Standard & 42.83 & 38.32 & 66.74 & 58.89 & 49.30\\
  & GCE & 43.14 & 38.43 & 67.21 & 59.31 & 49.59\\
  & CDR & 43.03 & 38.39 & 66.72 & 59.26 & 49.38\\
  & AGCN & 41.58 & 38.84 & \best{67.87} & 59.50 & 49.43\\
  & CSRA & 46.03 & 38.60 & \secbest{67.30} & 59.59 & 50.64\\
  & WSIC & 44.85 & 38.02 & 65.05 & 59.23 & 49.31\\
  & Reweight-T max & 53.30 & 64.55 & 60.86 & 60.04 & \secbest{59.57}\\
  & Reweight-T 97\% & 55.86 & \secbest{67.26} & 38.73 & 53.61 & 53.95\\
  & Reweight-DualT max & \secbest{60.12} & 63.53 & 54.71 & \secbest{63.69} & 59.45\\
  & Reweight-DualT 97\% & 29.35 & 59.56 & 17.17 & 37.54 & 35.36\\
\cline{2-7} & Reweight-Ours & \best{65.92} & \best{68.15} & 61.53 & \best{67.67} & \best{65.20}\\
  & Reweight-TrueT & 68.29 & 68.46 & 68.77 & 69.38 & 68.51\\
\hline		\multirow{12}*{\rotatebox{90}{CF1}} & Standard & 41.50 & 35.87 & 59.58 & 52.70 & 45.65\\
  & GCE & 40.91 & 36.16 & 60.16 & 52.47 & 45.74\\
  & CDR & 41.34 & 36.17 & 59.52 & 52.63 & 45.68\\
  & AGCN & 40.55 & 36.67 & 61.53 & 52.12 & 46.25\\
  & CSRA & 43.32 & 36.43 & 59.94 & 52.78 & 46.56\\
  & WSIC & 44.14 & 35.37 & 56.8 & 53.02 & 45.44\\
  & Reweight-T max & 47.19 & \secbest{60.75} & 61.64 & 54.66 & 56.53\\
  & Reweight-T 97\% & 54.23 & 60.34 & 44.34 & 53.76 & 52.97\\
  & Reweight-DualT max & \secbest{58.48} & 59.07 & \secbest{63.81} & \secbest{63.03} & \secbest{60.45}\\
  & Reweight-DualT 97\% & 31.24 & 49.50 & 21.38 & 39.19 & 34.04\\
\cline{2-7} & Reweight-Ours & \best{62.32} & \best{62.03} & \best{63.89} & \best{64.68} & \best{62.75}\\
  & Reweight-TrueT & 62.71 & 62.11 & 62.68 & 64.06 & 62.50\\
			\hline		
		\end{tabular}
		\label{cls_coco}
		\vspace{-7pt} 
	\end{table}
   	\begin{table}[h]
		\caption{Comparison for classification performance on multi-label datasets with instance-dependent label noise.}
  \vspace{-6pt}
  \renewcommand{\arraystretch}{1.0}
		\centering
		\scriptsize
	\setlength\tabcolsep{2.5pt}
		\begin{tabular}{l|l|ccc}
			\hline	
			& Dataset  &  Pascal-VOC2007 & Pascal-VOC2012 & MS-COCO \\	
			\hline
			\multirow{11}*{\rotatebox{90}{mAP}}& Standard & \secbest{81.86} & 83.49 & 65.72\\
  & GCE & 81.80 & \secbest{83.75} & 66.02\\
  & CDR & 81.81 & 83.63 & 65.79\\
  & AGCN & 79.54 & 82.04 & \best{66.89}\\
  & CSRA & \best{82.59} & \best{83.81} & \secbest{66.45}\\
  & WSIC & 81.47 & \best{83.81} & 65.13\\
  & Reweight-T max & 81.32 & 83.08 & 65.77\\
  & Reweight-T 97\% & 81.00 & 83.32 & 63.30\\
  & Reweight-DualT max & 81.58 & 83.67 & 63.63\\
  & Reweight-DualT 97\% & 79.04 & 79.87 & 58.38\\
\cline{2-5} & Reweight-Ours & 81.45 & 83.73 & 66.25\\
\hline		\multirow{11}*{\rotatebox{90}{OF1}}   & Standard & 74.11 & 75.64 & 62.15\\
  & GCE & 73.96 & 75.47 & 62.15\\
  & CDR & 74.26 & 75.49 & 62.21\\
  & AGCN & 72.56 & 74.46 & 61.68\\
  & CSRA & 75.32 & 75.78 & 62.21\\
  & WSIC & 71.88 & 74.25 & 62.92\\
  & Reweight-T max & 75.21 & 76.90 & \secbest{63.94}\\
  & Reweight-T 97\% & 75.94 & 76.25 & 53.39\\
  & Reweight-DualT max & \secbest{76.56} & \best{78.48} & 62.40\\
  & Reweight-DualT 97\% & 64.17 & 61.35 & 33.75\\
\cline{2-5} & Reweight-Ours & \best{77.11} & \secbest{78.22} & \best{66.61}\\
\hline	
\multirow{11}*{\rotatebox{90}{CF1}} & Standard & 69.62 & 72.63 & 55.53\\
  & GCE & 69.81 & 72.41 & 55.67\\
  & CDR & 69.90 & 72.67 & 55.67\\
  & AGCN & 69.26 & 71.64 & 56.05\\
  & CSRA & 72.35 & 73.45 & 56.20\\
  & WSIC & 65.45 & 71.35 & 57.22\\
  & Reweight-T max & 71.92 & 74.19 & 57.17\\
  & Reweight-T 97\% & \secbest{74.27} & \secbest{75.70} & 51.82\\
  & Reweight-DualT max & 73.33 & 74.14 & \secbest{61.32}\\
  & Reweight-DualT 97\% & 67.49 & 66.41 & 37.33\\
\cline{2-5} & Reweight-Ours & \best{75.00} & \best{76.97} & \best{63.49}\\
			\hline		
		\end{tabular}
  \vspace{-11pt}
		\label{ins}
	\end{table}
 
	\subsubsection{Comparison for classification performance}
 \label{cls_ccn}
	\myPara{Baselines.} We exploit 10 baselines: (1) Standard, which trains with a standard multi-label classification loss. (2) GCE~\cite{zhang2018generalized}, which proposes a Generalized Cross-Entropy loss for robustness. (3) CDR~\cite{xia2021robust}, which performs different update rules for two types of parameters to achieve robust learning. (4) AGCN~\cite{ye2020add}, which adopts a dynamic GCN to model the relation of content-aware class representations. (5) CSRA~\cite{zhu2021residual}, which generates class-specific features for every category by proposing a spatial attention score. (6) WSIC~\cite{hu2019weakly} proposes to use a small set with clean labels to learn a residual net for regularization in noisy multi-label learning, and we only provide noisy datasets to it for a fair comparison. (7) Reweight-T max, which learns with a reweighting algorithm using transition matrices estimated by T-estimator max~\cite{PatriniRMNQ17}. (8) Reweight-T 97\%, which learns with a reweighting algorithm using transition matrices estimated by T-estimator 97\%~\cite{PatriniRMNQ17}. (9) Reweight-DualT max, which learns with a reweighting algorithm using transition matrices estimated by  Dual T-estimator max~\cite{YaoL0GD0S20}. (10) Reweight-DualT 97\%, which learns with a reweighting algorithm using transition matrices estimated by  Dual T-estimator 97\%~\cite{YaoL0GD0S20}. Note that Standard, AGCN, and CSRA are designed for clean multi-label data, and GCE and CDR are designed for noisy multi-class learning. 
	
	\myPara{Metrics.} Following conventional setting~\cite{pascal-voc-2007,ChenWWG19,RidnikBZNFPZ21,Ridnik2021MLDecoderSA}, we compute the mean average precision (mAP),  overall F1-measure (OF1) and per-class F1-measure (CF1) as classification evaluation metrics. For each image, we assign a positive label if its prediction probability is greater than 0.5.
	
	\myPara{Results.}
 As shown in Tab.~\ref{cls_VOC2007}, \ref{cls_VOC2012} and \ref{cls_coco}, we report the results of four noise types (MLML, PML, ULF, and ALF) and the overall average results on three multi-label datasets, where some observations are concluded.~\footnote{Note that to present the results more concise here, we only report the average number of two cases for each noise type. The complete numerical results can be found in Appendix~\ref{complete}} First, we can find those statistically consistent methods achieve the best or second-best results on all three metrics in the vast majority of cases, while other methods can only achieve good results in some cases. For example, on the Pascal-VOC2012 dataset with MLML label noise, CSRA achieves the best result in the mAP metric with the help of its well-designed network, but its performance is far lower than those consistent methods on the OF1 and CF1 metrics, which shows the learned model can not approximate well the true class posterior $\mathbbm{P}({Y}|{X})$. Note that since network structure and loss correction are compatible, the risk-consistent methods can also help AGCN and CSRA perform better (shown in Section~\ref{other2}). Second, 
 to show the performance of the consistent algorithm with true transition matrices, we also evaluate the performance of the Reweight method with true transition matrix (named "Reweight-TrueT"). Its excellent performance not only verifies the small generalization error with a sufficiently large number of training samples as analyzed in Section~\ref{generalization}, but also implies that it is not obtained at the cost of enlarging the approximation error.
Third, among those consistent methods with the estimated transition matrices, Reweight algorithm with our estimator (named "Reweight-Ours") obtains the best or second-best results on the three metrics, which is due to the smaller error of our estimation. Especially on the challenging and large-scale MS-COCO dataset, Reweight-Ours outperforms all state-of-the-art methods on the CF1 metric.

	\begin{table*}[h]
  \label{ablation1}
		\caption{Ablation study of the influence of repeated multiple estimations on Pascal-VOC2007 dataset with class-dependent label noise. The best results are in \textbf{bold}.}
		\centering
		\scriptsize
		\setlength\tabcolsep{5.8pt}
		\begin{tabular}{l|cc|cc|cc|cc}
			\hline	
			Noise rates $(\rho_{-}, \rho_{+})$  & (0,0.2) & (0,0.6) & (0.2,0) & (0.6,0)  & (0.1,0.1) & (0.2,0.2) &(0.017,0.2)& (0.034,0.4) \\	
			\hline	
			Our estimator ($R=1$)  &  2.05$\pm$0.33 & 2.04$\pm$0.44 & 0.76$\pm$0.10 & 3.06$\pm$0.74 & 4.20$\pm$0.40 & 4.65$\pm$0.56 & 2.91$\pm$0.26 & 3.49$\pm$0.33  \\	
			Our estimator (Avg.)  & 1.81$\pm$0.16 & \best{1.31$\pm$0.26} & 0.52$\pm$0.02 & 3.16$\pm$0.42 & 3.51$\pm$0.24 & \best{2.84$\pm$0.27} & 2.21$\pm$0.13 & \best{1.39$\pm$0.18} \\	
			Our estimator & \best{1.63$\pm$0.16} & 1.54$\pm$0.30 & \best{0.36$\pm$0.08} & \best{1.17$\pm$0.37} & \best{2.94$\pm$0.24} & 3.10$\pm$0.28& \best{2.09$\pm$0.06} & 1.85$\pm$0.13
 \\
			\hline	
		\end{tabular}
		\label{ablation1}
	\end{table*}
\begin{table*}[h]
		\caption{Comparison for estimation error with different numbers of training samples on Pascal-VOC dataset with class-dependent label noise.}
		\centering
		\scriptsize
		\setlength\tabcolsep{5.8pt}
		\begin{tabular}{c|cc|cc|cc|cc}
			\hline	
			Noise rates $(\rho_{-}, \rho_{+})$  & (0,0.2) & (0,0.6) & (0.2,0) & (0.6,0)  & (0.1,0.1) & (0.2,0.2) &(0.017,0.2)& (0.034,0.4) \\	
			\hline	
$n=$1000  &  0.22$\pm$0.22 & 0.31$\pm$0.37 & 0.65$\pm$0.09 & 0.58$\pm$0.10 & 1.45$\pm$0.23 & 3.07$\pm$0.36 & 1.12$\pm$0.20 & 2.96$\pm$0.33  \\
$n=$5000  &  0.13$\pm$0.12 & 0.21$\pm$0.07 & 0.38$\pm$0.14 & 0.49$\pm$0.16 & 0.76$\pm$0.10 & 2.32$\pm$0.12 & 0.48$\pm$0.14 & 1.35$\pm$0.21  \\
$n=$10000  &  0.16$\pm$0.09 & 0.18$\pm$0.10 & 0.25$\pm$0.02 & 0.47$\pm$0.10 & 0.64$\pm$0.04 & 1.96$\pm$0.21 & 0.40$\pm$0.09 & 0.94$\pm$0.09  \\
$n=$20000  &  0.07$\pm$0.02 & 0.10$\pm$0.06 & 0.19$\pm$0.01 & 0.38$\pm$0.09 & 0.36$\pm$0.07 & 1.30$\pm$0.09 & 0.23$\pm$0.02 & 0.58$\pm$0.03 \\
			\hline	
		\end{tabular}
		\label{ablation2}
	\end{table*}
	\begin{table*}[h]
		\caption{Comparison for estimation error of $T^{49}$ using different levels of label correlations on the MS-COCO dataset. The best results are in \textbf{bold}.}
		\centering
		\scriptsize		\setlength\tabcolsep{5.8pt}
		\begin{tabular}{l|cc|cc|cc|cc}
			\hline	
			Noise rates $(\rho_{-}, \rho_{+})$  &  (0,0.2) & (0,0.6) & (0.2,0) & (0.6,0)  & (0.1,0.1) & (0.2,0.2) &(0.017,0.2)& (0.034,0.4) \\		
			\hline	
			Using the correlations with label A  &
			\textbf{0.02$\pm$0.02} & \textbf{0.02$\pm$0.02} & \textbf{0.09$\pm$0.04} & \textbf{0.07$\pm$0.06} & \textbf{0.10$\pm$0.05} & \textbf{0.10$\pm$0.08} & \textbf{0.09$\pm$0.04} & \textbf{0.08$\pm$0.03}\\		
			Using the correlations with label B &0.05$\pm$0.06 & 0.05$\pm$0.07 & 0.13$\pm$0.11 & 0.15$\pm$0.14 & 0.19$\pm$0.12 & 0.18$\pm$0.11 & 0.17$\pm$0.10 & 0.17$\pm$0.11\\	
			\hline	
		\end{tabular}
		\label{ablation_n1}
	\end{table*}
 	\begin{table}[h]
		\caption{Comparison for classification performance with different loss correction ways on Pascal-VOC2007 dataset with class-dependent label noise.}
		\centering
		\scriptsize
		\setlength\tabcolsep{5.8pt}
		\begin{tabular}{l|l|cccc|c}
			\hline	
			& Noise type  &  MLML & PML & ULF & ALF  & Avg. \\	
			\hline
			\multirow{4}*{\rotatebox{90}{mAP}}&  Reweight-Ours & 81.58 & 79.27 & 82.24 & 82.53 & 81.41\\
&Backward-Ours & 75.32 & 72.48 & 74.72 & 75.25 & 74.44\\
&Forward-Ours & \best{82.60} & 77.43 & 79.98 & 82.39 & 80.60\\
&Revision-Ours & 82.13 & \best{79.41} & \best{82.55} & \best{83.56} & \best{81.91}\\
			\hline	
			\multirow{4}*{\rotatebox{90}{OF1}}&  Reweight-Ours & 72.15 & 74.14 & 77.58 & 76.80 & 75.17\\
&Backward-Ours & 45.67 & 46.56 & 68.98 & 54.03 & 53.81\\
&Forward-Ours & \best{75.24} & 69.90 & 76.85 & \best{77.55} & 74.89\\
&Revision-Ours & 71.52 & \best{74.97} & \best{77.95} & 77.53 & \best{75.49}\\
			\hline	
			\multirow{4}*{\rotatebox{90}{CF1}}& Reweight-Ours & \best{69.08} & 72.34 & 76.06 & 74.73 & \best{73.05}\\
&Backward-Ours & 42.83 & 45.02 & 56.23 & 44.67 & 47.19\\
&Forward-Ours & 71.25 & 69.70 & 74.09 & 74.75 & 72.45\\
&Revision-Ours & 67.69 & \best{72.58} & \best{76.25} & \best{74.99} & 72.88\\
			\hline		
		\end{tabular}
		\label{ablation3}
	\end{table}
 	\begin{table}[h]
		\caption{Comparison for classification performance with different base multi-label learning algorithms on Pascal-VOC2007 dataset with class-dependent label noise.}
		\centering
		\scriptsize
		\setlength\tabcolsep{3.5pt}
		\begin{tabular}{l|l|cccc|c}
			\hline	
			& Noise type  &  MLML & PML & ULF & ALF  & Avg. \\	
			\hline
			\multirow{6}*{\rotatebox{90}{mAP}}&Standard & 80.71 & 75.68 & 80.97 & 82.45 & 79.95\\
&+R-Ours & 81.58{\tiny (\textcolor{red}{+0.87})} & 79.27{\tiny (\textcolor{red}{+3.59})} & 82.24{\tiny (\textcolor{red}{+1.27})} & 82.53{\tiny (\textcolor{red}{+0.08})} & 81.41{\tiny (\textcolor{red}{+1.46})}\\ \cline{2-7}
&AGCN & 79.37 & 73.78 & 77.44 & 79.38 & 77.49\\
&+R-Ours & 82.71{\tiny (\textcolor{red}{+3.34})} & 74.74{\tiny (\textcolor{red}{+0.96})} & 80.99{\tiny (\textcolor{red}{+3.55})} & 82.57{\tiny (\textcolor{red}{+3.19})} & 80.25{\tiny (\textcolor{red}{+2.76})}\\ \cline{2-7}
&CSRA & 82.29 & 75.15 & 80.9 & 83.23 & 80.39\\
&+R-Ours & 81.74{\tiny (\textcolor{brown}{-0.55})} & 80.04{\tiny (\textcolor{red}{+4.89})} & 82.85{\tiny (\textcolor{red}{+1.95})} & 84.19{\tiny (\textcolor{red}{+0.96})} & 82.21{\tiny (\textcolor{red}{+1.82})}\\
			\hline	
			\multirow{6}*{\rotatebox{90}{OF1}}&Standard & 53.63 & 46.97 & 77.55 & 67.83 & 61.5\\
&+R-Ours & 72.15{\tiny (\textcolor{red}{+18.52})} & 74.14{\tiny (\textcolor{red}{+27.17})} & 77.58{\tiny (\textcolor{red}{+0.03})} & 76.80{\tiny (\textcolor{red}{+8.97})} & 75.17{\tiny (\textcolor{red}{+13.67})}\\\cline{2-7}
&AGCN & 52.95 & 46.15 & 75.67 & 65.86 & 60.16\\
&+R-Ours & 75.82{\tiny (\textcolor{red}{+22.87})} & 62.98{\tiny (\textcolor{red}{+16.83})} & 76.10{\tiny (\textcolor{red}{+0.43})} & 77.64{\tiny (\textcolor{red}{+11.78})} & 73.14{\tiny (\textcolor{red}{+12.98})}\\\cline{2-7}
&CSRA & 55.30 & 46.83 & 78.64 & 70.05 & 62.71\\
&+R-Ours & 71.05{\tiny (\textcolor{red}{+15.75})} & 69.35{\tiny (\textcolor{red}{+22.52})} & 78.24{\tiny (\textcolor{brown}{-0.40})} & 79.22{\tiny (\textcolor{red}{+9.17})} & 74.47{\tiny (\textcolor{red}{+11.76})}\\
			\hline	
			\multirow{6}*{\rotatebox{90}{CF1}}&Standard & 51.59 & 45.9 & 73.27 & 62.59 & 58.34\\
&+R-Ours & 69.08{\tiny (\textcolor{red}{+17.49})} & 72.34{\tiny (\textcolor{red}{+26.44})} & 76.06{\tiny (\textcolor{red}{+2.79})} & 74.73{\tiny (\textcolor{red}{+12.14})} & 73.05{\tiny (\textcolor{red}{+14.71})}\\\cline{2-7}
&AGCN & 53.43 & 44.66 & 72.65 & 63.95 & 58.67\\
&+R-Ours & 73.66{\tiny (\textcolor{red}{+20.23})} & 64.54{\tiny (\textcolor{red}{+19.88})} & 74.93{\tiny (\textcolor{red}{+2.28})} & 76.21{\tiny (\textcolor{red}{+12.26})} & 72.34{\tiny (\textcolor{red}{+13.67})}\\\cline{2-7}
&CSRA & 53.77 & 45.5 & 75.48 & 66.79 & 60.39\\
&+R-Ours & 68.97{\tiny (\textcolor{red}{+15.2})} & 70.46{\tiny (\textcolor{red}{+24.96})} & 77.13{\tiny (\textcolor{red}{+1.65})} & 76.95{\tiny (\textcolor{red}{+10.16})} & 73.38{\tiny (\textcolor{red}{+12.99})}\\
			\hline		
		\end{tabular}
		\label{ablation4}
	\end{table}
 
\subsection{Experiments on Instance-Dependent Noisy Multi-Label Datasets}
\label{instance}
\myPara{Datasets.} To illustrate the applicability in realistic scenarios, we perform the experiments with two types of instance-dependent label noise on Pascal-VOC2007, Pascal-VOC2012 and MS-COCO datasets: (1) Pair-wise label noise~\cite{Han2018NIPS}, where one class label is mistaken as another class label with a certain probability; (2) PMD label noise~\cite{ZhangZW0021}, where data points near the decision boundary are harder to distinguish and more likely to be mislabeled. Both of them are dependent on class labels and instances, which are much more realistic than class-dependent label noise.  In the experiments, we test the algorithms on cases for each noise type. Same to Section~\ref{ccn}, for all datasets, we also leave out 10\% of the noisy training examples as a noisy validation set, and use mAP on noisy validation set as the criterion for model selection. Note that as class labels in those multi-label datasets are unbalanced, we only mislabel one class into another class when the class change does not occur.

\myPara{Implementation details.} We use the same baselines and implementation way as the experiments in Section~\ref{ccn}.

\myPara{Results.} The comparison for classification performance with instance-dependent label noise can be seen in Tab.~\ref{ins}, where we report the overall average classification performance in three noisy multi-label datasets.
The results show the proposed method with our estimator (Reweight-Ours) can achieve the best or second-best performance among the state-of-the-art methods in the OF1 and CF1 metrics. Especially, on the MS-COCO dataset, it outperforms all baselines by a large margin (at least +2.67\%) in the OF1 metric. Besides, in terms of the mAP metric, Reweight-Ours consistently outperforms the Reweight algorithm with other estimators across three datasets, showing the superiority of our multi-label transition matrix estimator.

\section{Ablation Study}
\label{other}
\subsection{Ablation Study about Our Multi-Label Transition Matrix Estimation}
\label{other1}
\myPara{Influence of repeated multiple estimations.}
To exploit the correlations among multiple labels, our estimator performs $R$ estimations and obtains the final estimation by Eq.~(\ref{final}). To study the ablation of repeated estimations, we test its estimation error when performing one estimation, \ie, our estimator ($R=1$) and our estimator (Avg.). As shown in Tab.~\ref{ablation1}, our estimator can consistently get a significantly better estimation than our estimator ($R=1$) across various cases, revealing the improvement brought by repeated estimations. Further, we evaluate the estimation error with the average aggregation rule. The results in most cases verified the assumption that the error of the obtained estimation by Eq.~(\ref{final}) is smaller than that of the average estimation from multiple estimations.
 
\myPara{Influence of the number of training examples.} To study the ablation of the number of training samples for our estimator, we consider testing its estimation errors with different training samples. Note that to eliminate the influence of sampling bias here, we actually test the estimation errors of "our estimator gold" where the sample selection is unbiased. As shown in Tab.~\ref{ablation2}, the estimated multi-label transition matrices will become more accurate as the number of training examples $n$ increases, which verifies our theoretical analyses in Section~\ref{est_error}.

\myPara{Influence of label correlations}
	According to the results in Appendix~\ref{Validation}, we divide all labels on the MS-COCO dataset into two categories: one is very likely to have a correlation with class label 50 (the label belonging to this category is termed "label A") and the other is less likely to has a correlation with class label 50 (the label belonging to this category is termed as "label B"). Tab.~\ref{ablation_n1} shows the estimation error of the transition matrix for class label 50 using the correlations with label A (or B) in various cases.  The results represent using the correlations with label A, the estimation error of $T^{49}$ will be smaller, showing stronger label correlations can lead to better estimation in our approach.

\subsection{Ablation Study about Our Multi-Label Classifier Learning}
\label{other2}
\myPara{Influence of loss correction ways.}
Many statistically consistent algorithms~\cite{Liu2016TPAMI,PatriniRMNQ17,XiaLW00NS19} consist of a two-step training procedure. The first step estimates the transition matrix, and the second step builds statistically consistent algorithms via modifying loss functions. Our proposed estimator can be seamlessly embedded into their frameworks. Here, we compare classification performance of applying our estimation method to four different loss correction ways: Reweight~\cite{Liu2016TPAMI} (named Reweight-Ours), Backward~\cite{natarajan2013learning} (named Backward-Ours), Forward~\cite{PatriniRMNQ17} (named Forward-Ours) and T-Revision~\cite{XiaLW00NS19} (named Revision-Ours) on Pascal-VOC2007 dataset. As shown in Tab.~\ref{ablation3}, T-Revision with our estimator achieves the best performance in most cases, which may be because it can further tune the transition matrices via classification learning.

\myPara{Influence of base multi-label learning algorithms.}
Recently, many advanced multi-label learning algorithms~\cite{zhu2021residual,ye2020add,MCAR_TIP_2021}, which designed exquisite networks for multi-label learning, have been proposed, and they perform well on the clean dataset. As they also use binary cross entropy loss function, we can also apply the Reweight method with our estimator to their frameworks. Here, we compare the classification performance of applying the Reweight method with our estimator to three different base learning algorithms:
	Standard, AGCN~\cite{ye2020add} and CSRA~\cite{zhu2021residual} on Pascal-VOC2007 dataset. Besides, to show the importance of consistent algorithms, we also present the performance of base learning algorithms here. As shown in Tab.~\ref{ablation4}, the consistent algorithms with our estimator can work well with more advanced networks.

 \subsection{Ablation Study about the Hyperparameter $\tau$}
	\label{discuss3}
	The sample selection we adopted~\cite{Sanchez2019UnsupervisedLN,LiSH20} is to estimate this clean probability of examples by modeling loss values with a GMM model using the Expectation-Maximization algorithm.  If the clean sample can be distinguished according to loss values and its estimated probability is accurate, the best threshold will be about 0.5. Hence, $\tau=0.5$ is a typical value in related works~\cite{Sanchez2019UnsupervisedLN,LiSH20}, and we follow this practice in our experiments. Here, we conduct the ablation study about the sample selection threshold on  Pascal-VOC2007 dataset with class-dependent label noise. As shown in Fig.~\ref{learning_process}, $\tau=0.5$ is a good choice both according to mAP scores on the noisy validation set and according to mAP scores on the clean test dataset.
	
	\begin{figure}[h]
		\centering
		\subfloat[]{\includegraphics[width=1.0\linewidth,trim=5 10 5 10,clip]{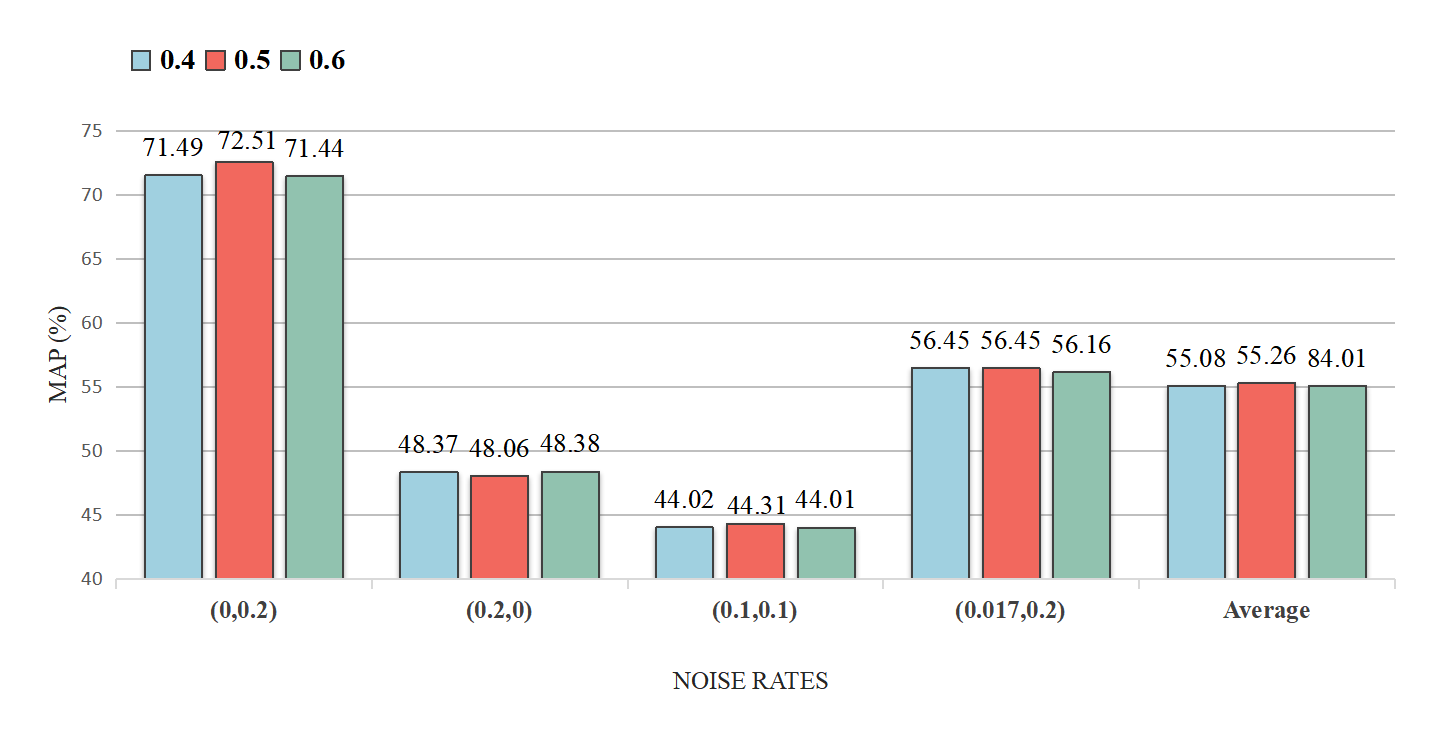}}
		\hfil
		\subfloat[]{\includegraphics[width=1.0\linewidth,trim=5 10 5 10,clip]{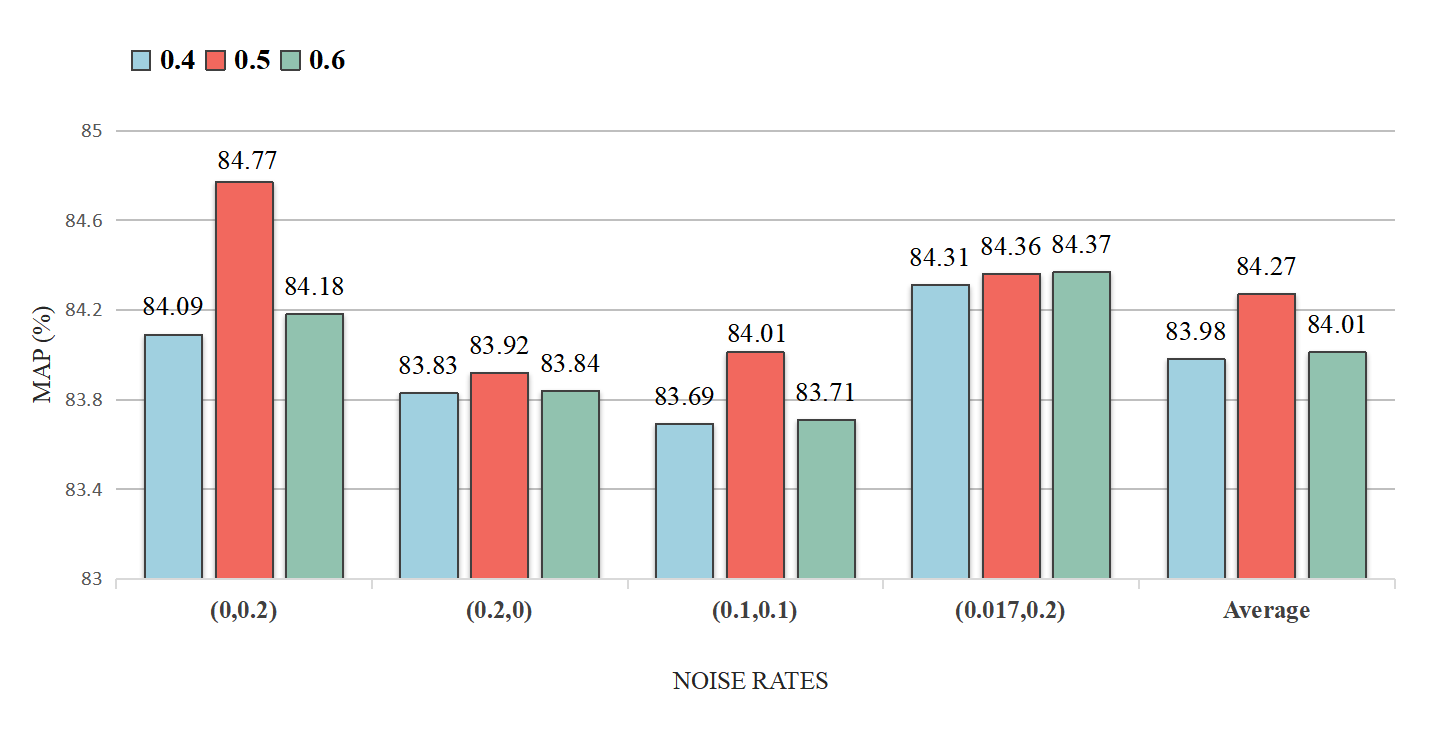}}
		\caption{(a) mAP scores on the noisy validation set of the Reweight method with our estimator with $\tau=0.4,0.5,0.6$; (b) mAP scores on the clean test dataset of the Reweight method with our estimator with $\tau=0.4,0.5,0.6$. }
		\label{learning_process}
  \vspace{-5pt}
	\end{figure}



 
	\section{Conclusion}
	\label{conclusion}
	In this paper, we study the estimation problem of the transition matrices in the noisy multi-label setting. We prove some identifiability results of class-dependent transition matrices in such a setting, inspired by which we propose a new estimator to estimate the transition matrix. The proposed estimator utilizes the information of label correlations, and demands neither anchor points nor accurate fitting of noisy class posterior. The estimation error bound and generalization error bound theoretically justify the effectiveness of the proposed method. Experiments on popular multi-label datasets illustrate the superiority of the proposed estimator to accurately estimate transition matrices, and the consistent algorithms with this estimator achieve better classification performance. 

\ifCLASSOPTIONcaptionsoff
  \newpage
\fi



\bibliographystyle{IEEEtran}
\bibliography{IEEEabrv,multi-label}
%
\vspace{-30pt}
\begin{IEEEbiography}[{\includegraphics[width=1in,height=1.25in,clip,keepaspectratio]{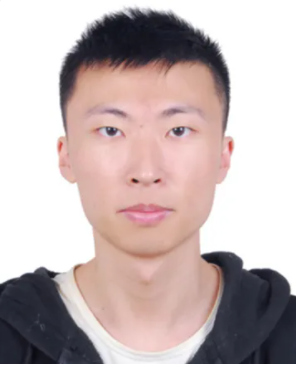}}]
		{Shikun Li} received the B.S. degree from the School of Information and Communication Engineering, Beijing University of Posts and Telecommunications (BUPT), Beijing, China. He is currently pursuing the Ph.D. degree with the Institute of Information Engineering, Chinese Academy of Sciences, Beijing, and the School of Cyber Security, University of Chinese Academy of Sciences, Beijing. His research interests include machine learning, data analysis and computer vision.
	\end{IEEEbiography}
\begin{IEEEbiography}[{\includegraphics[width=1in,height=1.25in,clip,keepaspectratio]{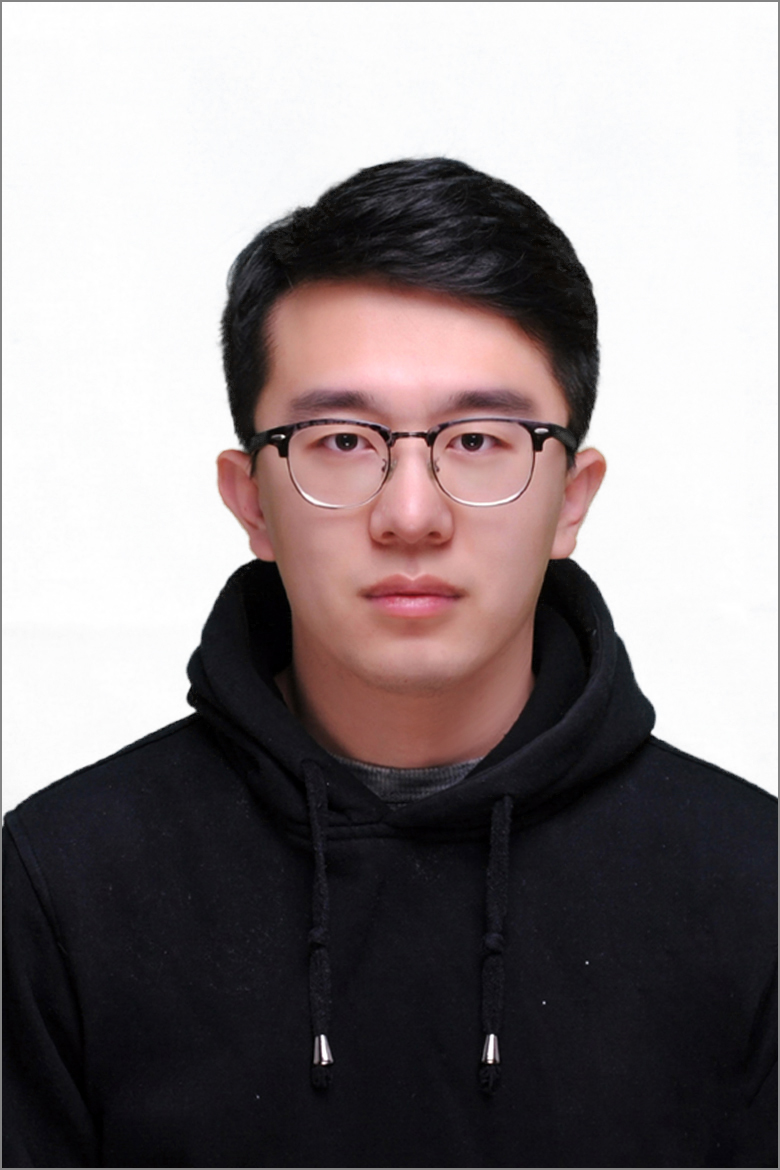}}]{Xiaobo Xia} received the B.E. degree in
telecommunications engineering from Xidian University, in 2020. He is currently pursuing a Ph.D. degree in computer science from the University of Sydney. He has published
more than 20 papers at top-tiered conferences
or journals such as IEEE T-PAMI, ICML, ICLR, NeurIPS, CVPR, ICCV, and KDD. He also serves as the reviewer for top-tier conferences such as ICML, NeurIPS, ICLR, CVPR, ICCV, and ECCV. His research interest lies in machine learning, with a particular emphasis on weakly-supervised learning. He was a recipient of the Google Ph.D. Fellowship~(machine learning) in 2022. 
\end{IEEEbiography}

\begin{IEEEbiography}[{\includegraphics[width=1in,height=1.25in,clip,keepaspectratio]{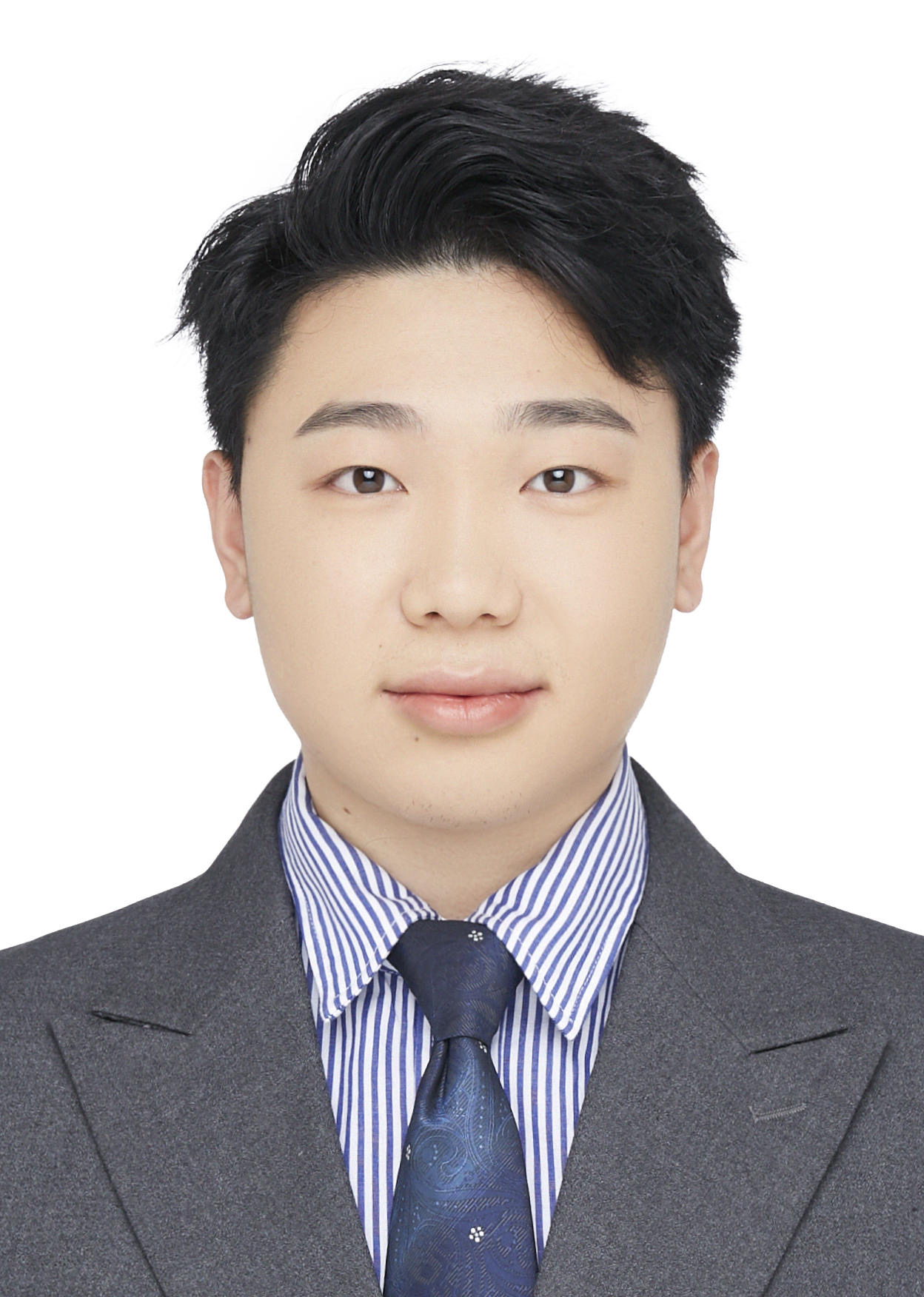}}]
		{Hansong Zhang} received the B.S. degree from the School of Mathematics and physics in North China Electric Power University, Beijing, China. He is currently a Ph.D. candidate in the Institute of Information Engineering, Chinese Academy of Sciences, Beijing, and the School of Cyber Security, University of Chinese Academy of Sciences, Beijing. His research interests include machine learning, data-centric AI, and computer vision.
	\end{IEEEbiography}
\begin{IEEEbiography}[{\includegraphics[width=1in,height=1.25in,clip,keepaspectratio]{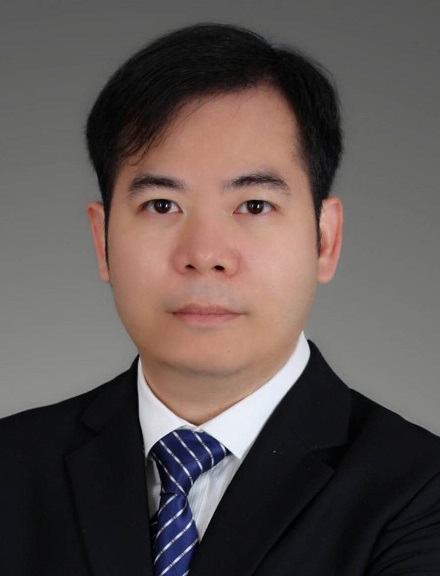}}]{Shiming Ge}
		(M'13-SM'15) is a Professor with the Institute of Information Engineering, Chinese Academy of Sciences. He is also the member of Youth Innovation Promotion Association, Chinese Academy of Sciences. Prior to that, he was a senior researcher and project manager in Shanda Innovations, a researcher in Samsung Electronics and Nokia Research Center. He received the B.S. and Ph.D degrees both in Electronic Engineering from the University of Science and Technology of China (USTC) in 2003 and 2008, respectively. His research mainly focuses on computer vision, data analysis, machine learning and AI security, especially efficient learning solutions towards scalable applications. He is a senior member of IEEE, CSIG and CCF.
	\end{IEEEbiography}
\begin{IEEEbiography}[{\includegraphics[width=1in,height=1.25in,clip,keepaspectratio]{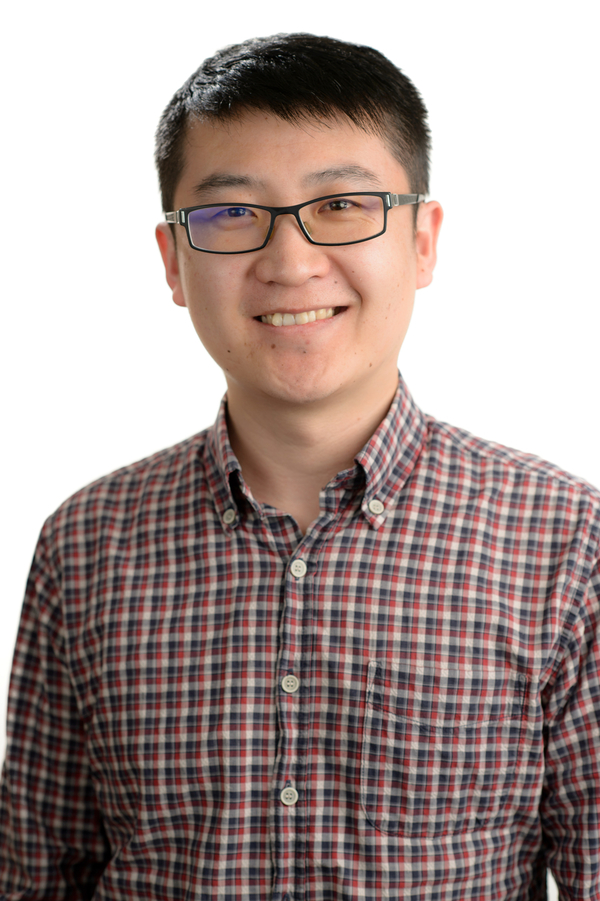}}]{Tongliang Liu} (Senior Member, IEEE) is the Director of Sydney AI Centre at the University of Sydney.
He is broadly interested in the fields of trustworthy
machine learning and its interdisciplinary applications, with a particular emphasis on learning with
noisy labels, adversarial learning, transfer learning,
unsupervised learning, and statistical deep learning
theory. He has authored and co-authored more than
200 research articles including ICML, NeurIPS,
ICLR, CVPR, AAAI, IJCAI, JMLR, and TPAMI.
His monograph on machine learning with noisy
labels will be published by MIT Press. He is/was a (senior-) meta reviewer
for many conferences, such as ICML, NeurIPS, ICLR, UAI, AAAI, IJCAI,
and KDD, and was a notable AC for ICLR.
Dr. Liu is an Associate Editor of TMLR and is on the Editorial Boards of
JMLR and MLJ. He is a recipient of the AI’s 10 to Watch Award from IEEE in
2023, the Future Fellowship Award from Australian Research Council (ARC)
in 2022, and the Discovery Early Career Researcher Award (DECRA) from
ARC in 2018.
\end{IEEEbiography}
 



	\newpage
    \onecolumn
	\begin{appendices}
	
	\section{Definition of Class-Dependent Label Noise}
	\label{discuss5}
	In this paper, the class-dependent multi-label noise for class label $Y^j$ means that the flip probabilities of $\bar{Y}^j$ are only dependent on the value of class label $Y^j$, \ie, $Y^j=0$ or $Y^j=1$. The corresponding class-dependent transition matrix represents $\mathbbm{P}(\bar{Y}^j|Y^j)$. The differences between this definition and the class-dependent label noise in the single-label cases are as follows.
	
	First, the class-dependent label noise in the single-label cases represents the flip probability from class $i$ to class $j$ ($i$ and $j$ are two different classes), while in this paper, the class-dependent label noise for class label $Y^j$ is only dependent on $Y^j=0$ or $Y^j=1$, which is independent on another class label $Y^i$, \ie, $\mathbbm\mathbbm{P}\left(\bar{Y}^{j}| Y^{j}, Y^{i}\right) = \mathbbm\mathbbm{P}\left(\bar{Y}^{j}| Y^{j} \right)$.
	
	Second, the class-dependent label noise in the single-label cases can be modeled by a $C \times C$ transition matrix, bridging the transition from clean single label to noisy single label,   while in this paper, the class-dependent label noise for class label $Y^j$ can be modeled by a $2 \times 2$ transition matrix, bridging the transition from clean label $Y^j$ to noisy label $\bar{Y}^{j}$.
	
	Third, the definition of class-dependent multi-label noise in this paper can be extended to instance-dependent multi-label noise, where the flip probabilities of $\bar{Y}^j$ are dependent on class label $Y^j$ and instance feature $X$. Such instance-dependent label model can sufficiently model various multi-label noise cases such as missing multi-labels~\cite{Wu2014MultilabelLW,Wu2018MultilabelLW}, partial multi-labels~\cite{Xie2018PartialML,Xie2022PartialML}, pair-wise label noise~\cite{Han2018NIPS,Li2020aaai,Li2022TMM}, and PMD label noise~\cite{ZhangZW0021}. While the instance-dependent label noise in the single-label cases can not simultaneously model such complex multi-label noise.
	\section{Reweight Algorithm for Noisy Multi-Label Learning}
	\label{Reweight}	
 In this appendix, we present applying a risk-consistent algorithm, \ie, Reweight~\cite{Liu2016TPAMI,XiaLW00NS19}, in noisy multi-label learning.
	First of all, we decompose the task into $q$ independent binary classification problems given ${X}$, which is a widely used assumption for deep multi-label learning~\cite{ChenWWG19,RidnikBZNFPZ21,Xie2022TPAMI} and the surrogate loss is as:
	\begin{equation}
		\mathcal{L}({{f}}({X}), {Y})=\sum_{j=1}^{q} \ell(f_j({X}),Y^{j})
		\label{standard}
	\end{equation}
	where ${f}=(f_1,f_2,...,f_q)$ is the learnable $q$ classification functions, and $\ell$ is the base loss function.
	In the deep learning commonunity, ${f}$ is usually modeled by a deep nerual network with the outputs of $q$ sigmoid functions, and $\ell$ is usually the binary cross entropy function.
	
	Similar to the single-label case~\cite{Liu2016TPAMI,XiaLW00NS19}, Reweight method employs the importance reweighting technique to rewrite the expected risk w.r.t. clean data:
	\begin{equation}
		\begin{aligned}
			&R({f})=\mathbb{E}_{({X}, {Y}) \sim D}[\mathcal{L}({{f}}({X}), {Y})]=
			\sum_{j=1}^{q} \mathbb{E}_{({X}, {Y}) }[\ell(f_j({X}),Y^{j})]\\
			&=\sum_{j=1}^{q} \int_{{x}} \sum_{i} \mathbbm{P}_{D}({X}={x}, Y^{j}=i) \ell(f_j({x}), i) d {x} \\
			&=\sum_{j=1}^{q} \int_{{x}} \sum_{i} \mathbbm{P}_{\bar{D}}({X}={x}, \bar{Y}^{j}=i) \frac{\mathbbm{P}_{D}({X}={x}, Y^{j}=i)}{\mathbbm{P}_{\bar{D}}({X}={x}, \bar{Y}^{j}=i)} \ell(f_j({x}), i) d {x} \\
			&=\sum_{j=1}^{q} \int_{{x}} \sum_{i} \mathbbm{P}_{\bar{D}}({X}={x}, \bar{Y}^{j}=i) \frac{\mathbbm{P}_{D}(Y^{j}=i \mid {X}={x})}{\mathbbm{P}_{\bar{D}}(\bar{Y}^{j}=i \mid {X}={x})} \ell(f_j({x}), i) d {x} \\
			&=\sum_{j=1}^{q} \mathbb{E}_{({X}, \bar{{Y}})}[\bar{\ell}_j(f_j({X}),\bar{Y}^{j})] =\mathbb{E}_{({X}, \bar{{Y}}) \sim \bar{D}}[\bar{\mathcal{L}}({{f}}({X}), \bar{{Y}})],
		\end{aligned}
	\end{equation}
	where $D$ denotes the distribution for clean data, $\bar{D}$ for noisy data, $\bar{\ell}_j(f_j({x}), i)= \frac{\mathbbm{P}_{D}(Y^{j}=i \mid {X}={x})}{\mathbbm{P}_{\bar{D}}(\bar{Y}^{j}=i \mid {X}={x})} \ell(f_j({x}), i)$, $\bar{\mathcal{L}}({{f}}({x}), \bar{{y}})= \sum_{j=1}^{q} \bar{\ell}_j(f_j({x}), \bar{y}^{j})$, and the third last equation holds because label noise is assumed to be independent of instances. In the paper, we have omitted the subscript for $P$ when no confusion is caused.
	
	We use the output of the sigmoid function $g_j({x})$ to approximate $\mathbbm{P}(Y^{j}=1|{X}={x})$, \ie, $\mathbbm{P}(Y^{j}=1|{X}={x}) \approx \mathbbm{\hat{P}}(Y^{j}=1|{X}={x}) = g_j({x})$ and $\mathbbm{P}(Y^{j}=0|{X}={x}) \approx \mathbbm{\hat{P}}(Y^{j}=0|{X}={x}) = 1-g_j({x})$. Then, $\mathbbm{\hat{P}}(\bar{Y}^{j}=k \mid {X}={x}) = \sum_{i=0}^{1} {T}^{j}_{ik} \mathbbm{\hat{P}}(Y^{j}=i \mid{X}={x})$ is an approximation for $\mathbbm{P}(\bar{Y}^{j}=k \mid {X}={x})$. By employing the Reweight algorithm, we build the risk-consistent estimator as:
	\begin{equation}
		\bar{R}_{n, w}(\{{T}^{j}\}^q_{j=1}, {f})=\frac{1}{n} \sum_{i=1}^{n} \sum_{j=1}^{q} \frac{\mathbbm{\hat{P}}( {Y}^{j}=\bar{y}^{j}_{i} \mid {X}={x}_i)}{\mathbbm{\hat{P}}( \bar{Y}^{j}=\bar{y}^{j}_{i} \mid {X}={x}_i)} \ell\left(f_j\left({x}_{i}\right), \bar{y}^{j}_{i}\right),
		\label{reweiht}
	\end{equation}
	where $f_j({x})=\mb{I}[g_j({x})>0.5]$, and $\mb{I}[.]$ is the indicator function which takes 1 if the identity index is true and 0 otherwise; the subscript $w$ denotes that the loss function is weighted. 
	\section{Validation of Assumption~\ref{assum2}}
	\label{Validation}
	\begin{table*}[h]
		\label{frequencies}
		\caption{The frequencies of $Y^{i}=0$ given $Y^{50}=0/1$ on MS-COCO training dataset. The frequencies whose difference between given $Y^{50}=0$ and $Y^{50}=1$ is greater than 0.015 are in \textbf{bold}.}
		\centering
		\scriptsize
		\setlength\tabcolsep{3.8pt}
		\begin{tabular}{cccccccccc}
			\hline	
			$i=1$  &  $i=2$  &$i=3$  & $i=4$  & $i=5$   & $i=6$  & $i=7$   &$i=8$  & $i=9$  & $i=10$ \\		
			\hline	
			\textbf{0.960/0.983}&0.979/0.992 &\textbf{0.991/0.920}&0.974/0.985 &\textbf{0.999/0.961}&\textbf{0.999/0.959}&\textbf{0.984/0.999}&\textbf{0.955/0.980}&\textbf{0.974/0.936}&\textbf{0.989/0.958}\\	
			\hline	
			$i=11$  &  $i=12$  &$i=13$  & $i=14$  & $i=15$   & $i=16$  & $i=17$   &$i=18$  & $i=19$  & $i=20$ \\		
			\hline	
			\textbf{0.955/0.987}&0.980/0.970 &\textbf{0.943/0.964}&0.925/0.929 &\textbf{0.920/0.954}&\textbf{0.969/0.996}&\textbf{0.983/0.952}&0.975/0.975 &\textbf{0.931/0.866}&\textbf{0.975/0.995}\\	
			\hline
			$i=21$ & $i=22$  &$i=23$  &$i=24$  &$i=25$  &$i=26$  &$i=27$  &$i=28$  &$i=29$  &$i=30$\\		
			\hline	
			\textbf{0.937/0.989}&\textbf{0.982/0.941}&\textbf{0.912/0.874}&\textbf{0.950/0.971}&0.958/0.964 &\textbf{0.974/0.990}&0.918/0.923 &\textbf{0.886/0.908}&0.957/0.968 &0.987/0.987 \\	
			\hline
			$i=31$ & $i=32$  &$i=33$  &$i=34$  &$i=35$  &$i=36$  &$i=37$  &$i=38$  &$i=39$  &$i=40$\\		
			\hline	
			0.976/0.986 &0.981/0.989 &0.966/0.972 &\textbf{0.994/0.972}&\textbf{0.959/0.993}&0.998/0.999 &\textbf{0.989/0.902}&0.982/0.969 &0.990/0.990 &\textbf{0.972/0.991}\\	
			\hline
			$i=41$ & $i=42$  &$i=43$  &$i=44$  &$i=45$  &$i=46$  &$i=47$  &$i=48$  &$i=49$  &$i=50$\\		
			\hline	
			\textbf{0.997/0.967}&0.962/0.962 &0.966/0.973 &0.980/0.992 &\textbf{0.987/0.957}&\textbf{0.974/0.993}&\textbf{0.977/0.992}&\textbf{0.965/0.984}&0.993/0.995 &\textbf{1.000/0.000}\\	
			\hline
			$i=51$ & $i=52$  &$i=53$  &$i=54$  &$i=55$  &$i=56$  &$i=57$  &$i=58$  &$i=59$  &$i=60$\\		
			\hline	
			0.970/0.975 &\textbf{0.954/0.969}&0.972/0.986 &0.981/0.967 &0.974/0.985 &0.989/0.994 &\textbf{0.977/0.994}&\textbf{0.930/0.984}&\textbf{0.999/0.945}&\textbf{0.999/0.952}\\	
			\hline
			$i=61$ & $i=62$  &$i=63$  &$i=64$  &$i=65$  &$i=66$  &$i=67$  &$i=68$  &$i=69$  &$i=70$\\		
			\hline	
			\textbf{0.999/0.975}&0.965/0.973 &\textbf{0.997/0.936}&0.980/0.990 &0.984/0.977 &\textbf{0.998/0.950}&0.976/0.987 &\textbf{0.999/0.948}&\textbf{0.997/0.944}&0.997/0.999 \\	
			\hline
			$i=71$ & $i=72$  &$i=73$  &$i=74$  &$i=75$  &$i=76$  &$i=77$  &$i=78$  &$i=79$  &$i=80$\\		
			\hline	
			\textbf{0.945/0.994}&0.991/0.992 &0.969/0.961 &0.964/0.975 &\textbf{0.959/0.938}&\textbf{0.950/0.970}&\textbf{0.991/0.947}&\textbf{0.948/0.987}&0.984/0.974 &\textbf{0.966/0.998}\\	
			\hline
		\end{tabular}
		\label{validation_coco}
	\end{table*}
	
	In order to verify the assumption~\ref{assum2}, we count the frequencies of $Y^{i}=0$ given $Y^{50}=0/1$ on MS-COCO training dataset, \ie, $\mathbbm{\hat{P}}(Y^{i}=0|Y^{50}=0)/ \mathbbm{\hat{P}}(Y^{i}=0|Y^{50}=1)$. According to Hoeffding's inequality~\cite{Boucheron2013ConcentrationI}, when the frequencies whose difference between given $Y^{50}=0$ and $Y^{50}=1$ is greater than 0.015, we have at least 95\% confidence to make sure $\mathbbm{P}(Y^{i}=0|Y^{50}=0) \neq\mathbbm{P}(Y^{i}=0|Y^{50}=1)$. As shown in Tab.~\ref{validation_coco}, class 50 has a correlation with another 46 classes with a high probability, which means they are very likely to hold Assumption~\ref{assum2}, accounting for the majority of all 79 classes.
	
	In the implementation of our estimator, we always assume class $j$ and class $i$ have a strong correlation at first, and use them to estimate. While, when a reasonable solution cannot be obtained, we will abandon class $i$ and choose another class.
	
	\section{Discussion about Significantly Different Label Pairs in Real-world Datasets}
	\label{real-world}
	Although the real-world scenarios have complex label correlations, we claim significantly different class label pairs usually account for the majority of all label pairs in the typical multi-label datasets, \eg, MS-COCO~\cite{lin2014microsoft} and OpenImages~\cite{OpenImages2} datasets. It is because, among a large number of classes, most label pairs belong to significantly different superclasses. For example, in MS-COCO datasets, there are 80 classes, which belong to 10 significantly different superclasses (outdoor, food, indoor, appliance, sports, person, animal, vehicle, furniture, accessory, electronic, kitchen). In the OpenImages dataset, there are 19,957 class labels, and also have significantly different superclasses, such as Toy, Building, Medical equipment, Clothing, Insect, and so on. A part of these superclasses can be seen in~\cite{OpenImages_hyper}, which clearly shows most of the label pairs are significantly different and do not share the major discriminant features.

 	\section{Discussion about Relaxation of Instance-independent Assumption}
	\label{noise_assumption}
	In this work, we assume that label noise is class-dependent but instance-independent. While, in real-world scenarios, label noise is instance-dependent. Actually, this instance-independent assumption can be roughly related to the assumption that the label noise of one class label is dependent on the label correlations with a few classes, and independent on the label correlations with most classes, which means most label pairs $(i,j)$s meet $\mathbbm\mathbbm{P}\left(\bar{Y}^j \mid Y^j, Y^i\right)=\mathbbm\mathbbm{P}\left(\bar{Y}^j \mid Y^j\right), \mathbbm\mathbbm{P}\left(\bar{Y}^i \mid Y^j, Y^i\right)=\mathbbm\mathbbm{P}\left(\bar{Y}^i \mid Y^i\right)$. With such labels, the system of equations involving ${T}^{j}$ in Section~\ref{ours} holds, and our approach also works well. 
	This relaxed assumption can be nearly satisfied in many real-world scenes because generally speaking, the multi-label label noises for class $j$ are usually dependent on confusing features for itself, and the majority of classes will not share the same confusing features with class $j$. This claim agrees with the discussion about significantly different label pairs in Appendix~\ref{real-world}, and some research works about real-world label noise~\cite{Wei2022LearningWN,SongK019} also show that noisy labels usually flip to some similar class labels in the real-world scene. For example, In CIFAR-100N, which is a re-annotated version of the CIFAR-100 with real-world human annotations, most classes are more likely to be mislabeled into less than four fine classes~\cite{Wei2022LearningWN}. In ANIMAL-10N, the label noise mainly happens between five pairs of confusing animals~\cite{SongK019}. Besides, the experiments in Section~\ref{instance} also verify the effectiveness of our approach in two typical instance-dependent multi-label noise cases.
 
	\section{Proof of Theorem~\ref{theorem1}}
	\label{proof1}
	\begin{Lemma} $\bar{Y}^{i}$ and $\bar{Y}^{j}$  are independent given $Y^{j}$.
		\label{lemma1}
	\end{Lemma}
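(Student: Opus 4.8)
The plan is to read off the conditional independence directly from the generative structure of the class-dependent, instance-independent noise model of Section~\ref{setting}. In that model the clean vector $(Y^1,\ldots,Y^q)$ is first drawn from its joint distribution and then each coordinate is flipped \emph{independently} by its own transition matrix, so that given the clean labels the noisy labels factorize as $\mathbbm{P}(\bar{Y}^1,\ldots,\bar{Y}^q \mid Y^1,\ldots,Y^q)=\prod_{k=1}^{q}\mathbbm{P}(\bar{Y}^k \mid Y^k)$, with each factor depending on its own clean coordinate alone (and, by instance-independence, not on $X$). Consequently, conditioned on the pair $(Y^i,Y^j)$ the noisy labels satisfy $\mathbbm{P}(\bar{Y}^i,\bar{Y}^j \mid Y^i,Y^j)=\mathbbm{P}(\bar{Y}^i \mid Y^i)\,\mathbbm{P}(\bar{Y}^j \mid Y^j)$, and marginalizing out the remaining clean labels $\{Y^k\}_{k\neq i,j}$ preserves this factorization because $\bar{Y}^i$ and $\bar{Y}^j$ do not depend on them.

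First I would fix arbitrary $a,b,c\in\{0,1\}$ and expand the target probability by the law of total probability over $Y^i$, namely $\mathbbm{P}(\bar{Y}^i=a,\bar{Y}^j=b \mid Y^j=c)=\sum_{y}\mathbbm{P}(\bar{Y}^i=a,\bar{Y}^j=b \mid Y^i=y,Y^j=c)\,\mathbbm{P}(Y^i=y \mid Y^j=c)$. Next I would substitute the factorization above into the first factor, which also lets me drop $Y^i$ from the conditioning set of $\bar{Y}^j$, giving $\mathbbm{P}(\bar{Y}^i=a \mid Y^i=y)\,\mathbbm{P}(\bar{Y}^j=b \mid Y^j=c)$. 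Since $\mathbbm{P}(\bar{Y}^j=b \mid Y^j=c)$ does not involve $y$, I would pull it out of the sum, and the remaining sum $\sum_{y}\mathbbm{P}(\bar{Y}^i=a \mid Y^i=y)\,\mathbbm{P}(Y^i=y \mid Y^j=c)$ collapses, once more by the law of total probability together with the class-dependence of $\bar{Y}^i$, to $\mathbbm{P}(\bar{Y}^i=a \mid Y^j=c)$. This produces $\mathbbm{P}(\bar{Y}^i=a,\bar{Y}^j=b \mid Y^j=c)=\mathbbm{P}(\bar{Y}^i=a \mid Y^j=c)\,\mathbbm{P}(\bar{Y}^j=b \mid Y^j=c)$, which is exactly the assertion of the lemma.

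An equivalent and more transparent route is a d-separation argument on the Bayesian network $\bar{Y}^i \leftarrow Y^i - Y^j \rightarrow \bar{Y}^j$, where $Y^i$ and $Y^j$ are merely correlated through the clean label distribution: the unique path joining $\bar{Y}^i$ and $\bar{Y}^j$ runs through the non-collider $Y^j$, so conditioning on $Y^j$ blocks it and yields $\bar{Y}^i \perp \bar{Y}^j \mid Y^j$. The only point requiring care is the bookkeeping of the conditional independences — justifying that $\bar{Y}^j$ detaches from $Y^i$ given $Y^j$ and that integrating out the other clean labels does not reintroduce coupling — but both are immediate consequences of the ``independently flipped'' assumption and the class-dependence of each $T^k$, so no genuinely difficult step arises. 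This lemma is the structural ingredient behind Theorem~\ref{theorem1}: it is precisely the conditional independence that lets the observed co-occurrence $\mathbbm{P}(\bar{Y}^i,\bar{Y}^j)$ be written as the bilinear mixture over $Y^j$, which turns out to carry too few constraints to pin down $T^j$.
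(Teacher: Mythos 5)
Your proposal is correct and rests on exactly the same fact the paper's proof uses: under the independently-flipped, class-dependent, instance-independent noise model, each noisy label $\bar{Y}^k$ depends on nothing but its own clean label $Y^k$, so conditioning on $Y^j$ decouples $\bar{Y}^j$ from every other variable, including $\bar{Y}^i$. The paper simply asserts this in one line (``$\bar{Y}^{j}$ and any variable are independent given $Y^{j}$''), whereas you verify the same structural fact by explicit marginalization and a d-separation reading — more detail, but the same route.
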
	
	\begin{proof}
		Since we assume that the transition matrix is class-dependent and instance-independent, $\bar{Y}^{j}$ and any variable are independent given $Y^{j}$. Therefore, this lemma holds.
	\end{proof}
	
	\begin{Lemma} The product of two row-stochastic matrices is still a row-stochastic one.
		\label{lemma1.5}
	\end{Lemma}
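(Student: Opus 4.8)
The plan is to verify directly the two defining properties of a row-stochastic matrix for the product. Recall that a square matrix $A$ is row-stochastic precisely when (i) all its entries are nonnegative and (ii) each of its rows sums to one; equivalently, writing $\mathbf{1}$ for the all-ones column vector, $A$ is row-stochastic if and only if $A$ has nonnegative entries and $A\mathbf{1}=\mathbf{1}$. I would adopt this compact characterization, since it converts the row-sum condition into a single matrix identity and makes the argument transparent. Note that this lemma is exactly what licenses replacing a transition matrix by a product of transition matrices in the non-identifiability constructions, so the only thing that must be checked is that the product lands back in the same class of matrices.

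First I would establish nonnegativity of the product. Let $A$ and $B$ be row-stochastic of the same size. For the $(i,k)$ entry of $AB$ we have $(AB)_{ik}=\sum_j A_{ij}B_{jk}$, which is a sum of products of nonnegative numbers and is therefore nonnegative; this disposes of property (i). Next I would verify the row-sum condition using the all-ones vector: since $B\mathbf{1}=\mathbf{1}$ and $A\mathbf{1}=\mathbf{1}$, associativity of matrix multiplication gives $(AB)\mathbf{1}=A(B\mathbf{1})=A\mathbf{1}=\mathbf{1}$, so every row of $AB$ sums to one, establishing property (ii). Together these two facts show that $AB$ is row-stochastic.

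There is no genuine obstacle here; the entire argument reduces to the one-line identity $(AB)\mathbf{1}=A(B\mathbf{1})=A\mathbf{1}=\mathbf{1}$ together with the obvious closure of nonnegativity under multiplication. The only point demanding care is the bookkeeping, namely that \emph{both} conditions must be checked rather than the row-sum alone, since a matrix with unit row sums but some negative entry would not be stochastic. An entirely equivalent route, if one prefers to avoid the vector characterization, is to compute the row-sum of $AB$ by interchanging the order of summation, $\sum_k(AB)_{ik}=\sum_j A_{ij}\bigl(\sum_k B_{jk}\bigr)=\sum_j A_{ij}=1$, which yields the same conclusion.
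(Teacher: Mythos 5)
Your proof is correct, and it takes a genuinely different (and more general) route than the paper. The paper works only with $2\times 2$ matrices: it parametrizes $P=\begin{pmatrix} 1-p_{-} & p_{-} \\ p_{+} & 1-p_{+} \end{pmatrix}$ and $Q=\begin{pmatrix} 1-q_{-} & q_{-} \\ q_{+} & 1-q_{+} \end{pmatrix}$, multiplies them out explicitly, and observes that each row of the resulting matrix sums to $1$. You instead prove the statement for row-stochastic matrices of arbitrary size via the characterization $A\mathbf{1}=\mathbf{1}$ plus nonnegativity of entries, reducing the row-sum condition to the one-line identity $(AB)\mathbf{1}=A(B\mathbf{1})=A\mathbf{1}=\mathbf{1}$. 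Your argument is cleaner and dimension-free, and it is also slightly more complete: the paper's verification only checks row sums and leaves nonnegativity implicit, whereas you check both defining properties (correctly noting that unit row sums alone do not suffice). What the paper's brute-force computation buys in return is the explicit entry-level form of the product, which is convenient in the surrounding non-identifiability argument (the proof of Theorem 1), where the authors need to see that products such as $AT^0$ remain within the same $2\times 2$ parametrized family $\begin{pmatrix} 1-\rho_{-} & \rho_{-} \\ \rho_{+} & 1-\rho_{+} \end{pmatrix}$; your abstract argument delivers the same conclusion, since any $2\times 2$ row-stochastic matrix can be written in that form, but the paper gets the explicit parameters for free.
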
	
	\begin{proof}
		Let ${P}$ and ${Q}$ be row-stochastic matrices of the following form:	${P}=\begin{pmatrix} 1-p_{-} & p_{-} \\ p_{+} & 1-p_{+} \end{pmatrix}$ and	${Q}=\begin{pmatrix} 1-q_{-} & q_{-} \\ q_{+} & 1-q_{+} \end{pmatrix}$. Then the product of ${P}$ and ${Q}$ is ${P Q}=\begin{pmatrix} 1-q_{-}+p_{-}(-1+q_{-}+q_{+})  & q_{-}-p_{-}(-1+q_{-}+q_{+}) \\ q_{+}-p_{+}(-1+q_{-}+q_{+}) & 1-q_{+}+p_{+}(-1+q_{-}+q_{+}) \end{pmatrix}$. It can be readily verified that the sum of each row of ${P Q}$ is equal to $1$, meaning that ${P Q}$ is a row-stochastic matrix.
	\end{proof}
	
	\begin{thmbis}{theorem1} Two noisy labels $\{\bar{Y}^{j},\bar{Y}^{i}\}$ will not suffice to identify ${T}^{j}$.
	\end{thmbis}
	\begin{proof}
		First, the information from $\bar{Y}^{j},\bar{Y}^{i}$ can be fully captured by the following four quantities: $\mathbbm{P}\left(\bar{Y}^{j}=0,  \bar{Y}^{i}=0\right)$, $\mathbbm{P}\left(\bar{Y}^{j}=1,  \bar{Y}^{i}=0\right)$, $\mathbbm{P}\left(\bar{Y}^{j}=0,  \bar{Y}^{i}=1\right)$, and $\mathbbm{P}\left(\bar{Y}^{j}=1,  \bar{Y}^{i}=1\right)$.
		According to Lemma~\ref{lemma1}, these four quantities can lead to four equations that depend on ${T}^{j}$:
		\begin{normalsize} 
			\begin{equation*}
				\begin{aligned}
					&\mathbbm{P}\left(\bar{Y}^{j}=0,  \bar{Y}^{i}=0\right)=\mathbbm{P}({Y}^{j}=0)T^{j}_{00}\mathbbm{P}(\bar{Y}^{i}=0|  {Y}^{j}=0)+\mathbbm{P}({Y}^{j}=1)T^{j}_{10}\mathbbm{P}(\bar{Y}^{i}=0|  {Y}^{j}=1)\\
					&\mathbbm{P}\left(\bar{Y}^{j}=0,  \bar{Y}^{i}=1\right)=\mathbbm{P}({Y}^{j}=0)T^{j}_{00}\mathbbm{P}(\bar{Y}^{i}=1|  {Y}^{j}=0)+\mathbbm{P}({Y}^{j}=1)T^{j}_{10}\mathbbm{P}(\bar{Y}^{i}=1|  {Y}^{j}=1)\\
					&\mathbbm{P}\left(\bar{Y}^{j}=1,  \bar{Y}^{i}=0\right)=\mathbbm{P}({Y}^{j}=0)T^{j}_{01}\mathbbm{P}(\bar{Y}^{i}=0|  {Y}^{j}=0)+\mathbbm{P}({Y}^{j}=1)T^{j}_{11}\mathbbm{P}(\bar{Y}^{i}=0|  {Y}^{j}=1)\\
					&\mathbbm{P}\left(\bar{Y}^{j}=1,  \bar{Y}^{i}=1\right)=\mathbbm{P}({Y}^{j}=0)T^{j}_{01}\mathbbm{P}(\bar{Y}^{i}=1|  {Y}^{j}=0)+\mathbbm{P}({Y}^{j}=1)T^{j}_{11}\mathbbm{P}(\bar{Y}^{i}=1|  {Y}^{j}=1).
				\end{aligned}
			\end{equation*}
		\end{normalsize}
		For simplicity, we denote
		\begin{equation*}
			\begin{aligned}
				&{E}=\begin{pmatrix}\mathbbm{P}(\bar{Y}^{j}=0,  \bar{Y}^{i}=0) & \mathbbm{P}(\bar{Y}^{j}=0,  \bar{Y}^{i}=1)\\ \mathbbm{P}(\bar{Y}^{j}=1,  \bar{Y}^{i}=0) & \mathbbm{P}(\bar{Y}^{j}=1,  \bar{Y}^{i}=1) \end{pmatrix}=\begin{pmatrix} e_{00} & e_{01} \\ e_{10} & e_{11} \end{pmatrix},
				\\& {P} =\begin{pmatrix}\mathbbm{P}({Y}^{j}=0) & 0 \\ 0 & \mathbbm{P}({Y}^{j}=1) \end{pmatrix}=\begin{pmatrix}1-p & 0 \\ 0 & p \end{pmatrix},
				\\& {T}^{j}=\begin{pmatrix}\mathbbm{P}(\bar{Y}^{j}=0 \mid {Y}^{j}=0) & \mathbbm{P}(\bar{Y}^{j}=1 \mid {Y}^{j}=0)\\ \mathbbm{P}(\bar{Y}^{j}=0\mid {Y}^{j}=1) & \mathbbm{P}(\bar{Y}^{j}=1 \mid {Y}^{j}=1) \end{pmatrix}=\begin{pmatrix} 1-\rho_{-} & \rho_{-} \\ \rho_{+} & 1-\rho_{+} \end{pmatrix}, \text{and}
				\\&	{M}=\begin{pmatrix}\mathbbm{P}(\bar{Y}^{i}=0 \mid {Y}^{j}=0) & \mathbbm{P}(\bar{Y}^{i}=1 \mid {Y}^{j}=0)\\ \mathbbm{P}(\bar{Y}^{i}=0\mid {Y}^{j}=1) & \mathbbm{P}(\bar{Y}^{i}=1 \mid {Y}^{j}=1) \end{pmatrix}=\begin{pmatrix} 1-\rho^{\prime}_{-} & \rho^{\prime}_{-} \\ \rho^{\prime}_{+} & 1-\rho^{\prime}_{+} \end{pmatrix}.
			\end{aligned}
		\end{equation*}
		Then, the system of equations can be expressed as ${E} =  ({T}^{j})^\top {P} {M}$, \ie,
		\begin{equation}
			\begin{pmatrix} e_{00} & e_{01} \\ e_{10} & e_{11} \end{pmatrix}=
			{\begin{pmatrix} 1-\rho_{-} & \rho_{-} \\ \rho_{+} & 1-\rho_{+} \end{pmatrix}}^\top
			\begin{pmatrix}1-p & 0 \\ 0 & p \end{pmatrix}
			{\begin{pmatrix} 1-\rho^{\prime}_{-} & \rho^{\prime}_{-} \\ \rho^{\prime}_{+} & 1-\rho^{\prime}_{+} \end{pmatrix}}.
			\label{system_proof}
		\end{equation}
		
		Assuming $\{{T}^{0},{P}^0, {M}^0\}$ satisfies 
		\begin{equation}
			{E} =  ({T}^{0})^\top {P}^0 {M}^0,
			\label{solution}
		\end{equation}
		Next, we will prove that by selecting proper parameters, a different solution of Eq.~(\ref{system_proof}) can be derived, which ruins the identifiability of ${T}^{j}$.
		Let ${A}=\begin{pmatrix} 1-a_{-} & a_{-} \\ a_{+} & 1-a_{+} \end{pmatrix}$ and ${B}=\begin{pmatrix} 1-b_{-} & b_{-} \\ b_{+} & 1-b_{+} \end{pmatrix}$ be invertible, row-stochastic matrices. Based on the invertibility of ${A}$ and ${B}$, Eq.~(\ref{solution}) can be rewritten as ${E} =  ({A}{T}^{0})^\top ({A}^\top)^{-1} {P}^0 {B}^{-1}{B} {M}^0$. According to Lemma~\ref{lemma1.5}, ${T}^1={A}{T}^0$ and ${M}^1={B}{M}^0$ are row-stochastic matrices, which is consistent with the form of ${T}^j$ and ${M}$.
		
		Last, denoting ${P}^{0}=\begin{pmatrix}1-p_0 & 0 \\ 0 & p_0 \end{pmatrix}$ and ${P}^1=\begin{pmatrix} p_{00} & p_{01} \\ p_{10} & p_{11} \end{pmatrix}$, by letting ${P}^1=({A}^\top)^{-1} {P}^{0}{B}^{-1}=	\left[{\begin{pmatrix} 1-a_{-} & a_{-} \\ a_{+} & 1-a_{+} \end{pmatrix}}^\top\right]^{-1} \begin{pmatrix}1-p_0 & 0 \\ 0 & p_0 \end{pmatrix} {\begin{pmatrix} 1-b_{-} & b_{-} \\ b_{+} & 1-b_{+} \end{pmatrix}}={\begin{pmatrix} p_{00} & p_{01} \\ p_{10} & p_{11} \end{pmatrix}}$ be in the form of ${P}$, \ie, solving the following equations:
		\begin{equation*}
			\begin{cases}
				p_{00}+p_{11}=1,\\
				p_{01}=0,\\
				p_{10}=0,
			\end{cases}
		\end{equation*}
		we can get $a_{+}=\frac{b_{-}(1-p_0)}{b_{-}-p_0}$ and $b_{+}=\frac{a_{-}(1-p_0)}{a_{-}-p_0}$. It means that when we have a solution $\{{T}^0,{P}^0, {M}^0\}$ of Eq.~(\ref{system_proof}), we can get another different solution $\{{T}^1,{P}^1, {M}^1\}$ by setting appropriate values of $b_{-}$ and $a_{-}$. Hence, ${T}^{j}$ is unidentifiable in this situation. 
		
	\end{proof}
	
	\section{Proof of Theorem~\ref{theorem2}}
	\label{proof2}
	To make the proof clear, following~\cite{abs-2202-02016}, we reproduce the Kruskal’s identifiability result here. The setup of Kruskal’s identifiability result is as follows: suppose
	that there is an unobserved variable $Z$ that takes values in
	$\{0,1,...,K-1\}$. $Z$ has a non-degenerate prior $\mathbbm{P}(Z = i) >
	0$. Instead of observing $Z$, we observe a set of  conditionally independent variables
	$\left\{{O}^{(t)}\right\}_{t=1}^{N}$. Each ${O}^{(t)}$ has a finite state space with cardinality $\kappa_{t}$. Let ${M}^{(t)}$ be a matrix of size $K \times \kappa_{t}$, which $j$-th row is simply
	$\left[\mathbbm{P}\left({O}^{(t)}=1 \mid Z=j\right), \ldots, \mathbbm{P}\left({O}^{(t)}=\kappa_{t} \mid Z=j\right)\right]$.
	The previous works~\cite{kruskal1977three,sidiropoulos2000uniqueness} have proved the following Theorem~\ref{Kruskal_result}.
	
	\begin{Definition} [Kruskal rank~\cite{abs-2202-02016}] For a
		matrix ${M}$, the Kruskal rank of ${M}$ is the largest number $I$ such that every set of $I$ rows of ${M}$ are independent. The symbol is $\operatorname{Kr}({M})=I$.
		\label{define2}
	\end{Definition}
	\begin{Theorem}[Kruskal’s identifiability result~\cite{kruskal1977three,sidiropoulos2000uniqueness}] The model parameters are uniquely identifiable, up to label permutation, if $$\sum_{t=1}^{N} \operatorname{Kr}\left({M}^{(t)}\right) \geq 2 K+N-1.$$
		\label{Kruskal_result}
	\end{Theorem}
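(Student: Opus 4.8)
The plan is to recognize this statistical identifiability claim as the essential-uniqueness theorem for the canonical polyadic (CP) decomposition of a finite tensor, and then to prove it in two stages: reduce the $N$-way case to the classical three-way case, and settle the three-way case through Kruskal's permutation lemma. First I would tensorize. By conditional independence of the $\{O^{(t)}\}$ given $Z$, the joint law factorizes as
\begin{equation*}
\mathbbm{P}(O^{(1)}=o_1,\ldots,O^{(N)}=o_N)=\sum_{z=0}^{K-1}\mathbbm{P}(Z=z)\prod_{t=1}^{N} M^{(t)}_{z,o_t},
\end{equation*}
so the array $\mathcal{T}\in\mathbb{R}^{\kappa_1\times\cdots\times\kappa_N}$ of joint probabilities is a CP decomposition with $K$ rank-one components, the $z$-th being $\mathbbm{P}(Z=z)\,m^{(1)}_z\otimes\cdots\otimes m^{(N)}_z$ with $m^{(t)}_z$ the $z$-th row of $M^{(t)}$. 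Identifiability of the parameters up to label permutation is then exactly essential uniqueness of this decomposition (up to a common permutation $\Pi$ of the $K$ components and per-mode diagonal rescalings), and the scale ambiguity is pinned down by the constraint that each row of $M^{(t)}$ and the prior $\mathbbm{P}(Z=\cdot)$ sum to one. So it suffices to prove uniqueness of the CP decomposition under $\sum_t\operatorname{Kr}(M^{(t)})\geq 2K+N-1$.

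Second, I would reduce to three modes. Partitioning $\{1,\ldots,N\}$ into three blocks $S_1,S_2,S_3$ and merging factors within each block by the column-wise Khatri-Rao product $\tilde A=\bigodot_{t\in S_1}(M^{(t)})^\top$, and likewise $\tilde B,\tilde C$, turns $\mathcal{T}$ into a three-way tensor with the same $K$ components. The workhorse is the Kruskal-rank bound $\operatorname{Kr}(A\odot B)\geq\min(\operatorname{Kr}(A)+\operatorname{Kr}(B)-1,\,K)$, which applied repeatedly gives $\operatorname{Kr}(\tilde A)\geq\min(\sum_{t\in S_1}\operatorname{Kr}(M^{(t)})-(|S_1|-1),\,K)$ and similarly for $\tilde B,\tilde C$. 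A counting argument then shows the hypothesis lets us choose the partition so that $\operatorname{Kr}(\tilde A)+\operatorname{Kr}(\tilde B)+\operatorname{Kr}(\tilde C)\geq 2K+2$, i.e. the three-way Kruskal condition. Since each column of a Khatri-Rao product is the Kronecker product of the corresponding columns of its factors, recovering the merged factors up to $\Pi$ and scaling unfolds uniquely back into each original $M^{(t)}$, so three-way uniqueness yields the $N$-way claim.

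Third, the three-way case itself. Given two decompositions with $\kappa\times K$ factor matrices $(A,B,C)$ and $(\bar A,\bar B,\bar C)$ satisfying $\operatorname{Kr}(A)+\operatorname{Kr}(B)+\operatorname{Kr}(C)\geq 2K+2$, contract the third mode against a generic vector $c$ to obtain the combined-slice identity $A\,\mathrm{diag}(C^\top c)\,B^\top=\bar A\,\mathrm{diag}(\bar C^\top c)\,\bar B^\top$. Rank bookkeeping on these slices as $c$ varies, driven by the Kruskal ranks of $B,C$ and of $\bar B,\bar C$, forces the zero patterns of the linear forms $x^\top\bar A$ to be dominated by those of $x^\top A$ for every $x$. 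Kruskal's permutation lemma converts this support domination into $\bar A=A\Pi\Lambda_A$ for a single permutation $\Pi$ and diagonal $\Lambda_A$; the symmetric argument on the other two modes gives $\bar B=B\Pi\Lambda_B$ and $\bar C=C\Pi\Lambda_C$ with the same $\Pi$, and $\Lambda_A\Lambda_B\Lambda_C=I$ because both decompositions reproduce $\mathcal{T}$.

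The hard part will be the two combinatorial cores. The permutation lemma is the crux of the three-way step: proving that a uniform domination of the Hamming weights of linear combinations forces a genuine column permutation is a delicate induction, and it is exactly here that the constant $2K+2$ is forced, with weaker bounds failing. For the $N$-way step, the analogous difficulty is showing that a valid three-way partition always exists and that the Khatri-Rao merge does not shed too much Kruskal rank, where the edge cases in which some merged factor already attains full Kruskal rank $K$ must be treated separately. Everything else—the tensorization, the scale normalization from the stochasticity constraints, and the rank bookkeeping on the slices—is routine once these two lemmas are in hand.
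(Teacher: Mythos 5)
The paper offers no proof of this statement: it is reproduced verbatim as a known result, with the proof deferred entirely to the cited works of Kruskal (1977) and Sidiropoulos--Bro (2000), so there is no in-paper argument to compare against. Your outline is essentially the argument of those references themselves---tensorize the mixture as a $K$-component CP decomposition (with the stochasticity constraints absorbing the per-mode scaling ambiguity), settle the three-way base case via Kruskal's permutation lemma, and handle the $N$-way case by Khatri--Rao merging together with the bound $\operatorname{Kr}(A\odot B)\ge\min\bigl(\operatorname{Kr}(A)+\operatorname{Kr}(B)-1,\,K\bigr)$ and the counting $2K+N-1-(N-3)=2K+2$---so you take the same route as the paper's sources; the only caveat, which you yourself flag, is that the two combinatorial cores (the permutation lemma, and the edge cases where a merged factor clips at full $k$-rank $K$) are named rather than proved, so the proposal is a correct plan for the standard proof rather than a complete one.
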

	
	We can prove Theorem~\ref{theorem2} with the following lemmas:
	\begin{Lemma} $\bar{Y}^{i}$ and ${Y}^{j}$  are independent given $Y^{i}$.
		\begin{proof}
			Since we assume that the transition matrix is class-dependent and instance-independent, $\bar{Y}^{i}$ and any variable are independent given $Y^{i}$. Therefore, this lemma holds.
			\label{lemma2_1}
		\end{proof}
	\end{Lemma}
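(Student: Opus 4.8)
The plan is to reduce the conditional-independence claim to the single identity $\mathbbm{P}(\bar{Y}^{i} \mid Y^{i}, Y^{j}) = \mathbbm{P}(\bar{Y}^{i} \mid Y^{i})$, and then to obtain this identity directly from the noise-generation assumption. Recall from the problem setting that the clean labels are \emph{independently flipped} and that the transition matrix for each class is \emph{class-dependent and instance-independent}; concretely, the corruption mechanism producing $\bar{Y}^{i}$ reads off only the value of the clean label $Y^{i}$, so that for every instance ${x}$ and every value of $Y^{j}$,
\begin{equation*}
\mathbbm{P}(\bar{Y}^{i} \mid Y^{i}, Y^{j}, {X}={x}) = \mathbbm{P}(\bar{Y}^{i} \mid Y^{i}).
\end{equation*}
This is the multi-label statement, also recorded in Appendix~\ref{discuss5}, that the flip probability of $\bar{Y}^{i}$ is determined by $Y^{i}$ alone.

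Given this, I would argue by marginalizing over ${X}$. Writing the conditional law of $\bar{Y}^{i}$ given $(Y^{i}, Y^{j})$ via the law of total probability,
\begin{equation*}
\mathbbm{P}(\bar{Y}^{i} \mid Y^{i}, Y^{j}) = \int \mathbbm{P}(\bar{Y}^{i} \mid Y^{i}, Y^{j}, {X}={x})\, \mathbbm{P}({X}={x} \mid Y^{i}, Y^{j})\, d{x},
\end{equation*}
each factor $\mathbbm{P}(\bar{Y}^{i} \mid Y^{i}, Y^{j}, {X}={x})$ collapses to $\mathbbm{P}(\bar{Y}^{i} \mid Y^{i})$ by the displayed assumption. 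Since this quantity no longer depends on ${x}$, it pulls out of the integral, and the remaining mass $\int \mathbbm{P}({X}={x} \mid Y^{i}, Y^{j})\, d{x}$ equals one, leaving $\mathbbm{P}(\bar{Y}^{i} \mid Y^{i}, Y^{j}) = \mathbbm{P}(\bar{Y}^{i} \mid Y^{i})$. This is precisely the assertion that $\bar{Y}^{i}$ and $Y^{j}$ are independent given $Y^{i}$, and it mirrors the one-line argument already used for Lemma~\ref{lemma1}.

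The step I expect to carry the real content is the first one: correctly invoking the class-dependent, instance-independent noise model so that conditioning additionally on $Y^{j}$ (and on ${X}$) leaves the corruption distribution of $\bar{Y}^{i}$ unchanged. Everything after that is a routine marginalization. The only point requiring care is to make explicit that $Y^{j}$ can influence $\bar{Y}^{i}$ solely through its correlation with ${X}$ and $Y^{i}$, a channel that is severed once $Y^{i}$ is fixed; there is no direct dependence of the flip of class $i$ on the label of class $j$.
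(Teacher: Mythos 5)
Your proposal is correct and takes essentially the same route as the paper: both arguments rest entirely on the class-dependent, instance-independent noise assumption (the flip producing $\bar{Y}^{i}$ reads off only $Y^{i}$), which the paper compresses into the one-line claim that $\bar{Y}^{i}$ and any variable are independent given $Y^{i}$. Your explicit marginalization over ${X}$ simply spells out the routine step the paper leaves implicit, so there is no substantive difference.
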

	\begin{Lemma} If $Y^{j} \in\{0,1\}$ corresponds to $Z$, $\{\bar{Y}^{j},\bar{Y}^{i}, \bar{Y}^{k}\}$ correspond to the observations $\left\{{O}^{(t)}\right\}_{t=1}^{3}$, then $\operatorname{Kr}\left({M}^{(t)}\right)=2,t\in[3]$.
		\label{lemma2}
	\end{Lemma}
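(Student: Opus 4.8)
The plan is to reduce the claim to a single determinant computation for each of the three conditional-probability matrices $M^{(t)}$. Since $Z=Y^j$ takes only $K=2$ values and each of the noisy observations $\bar{Y}^{j},\bar{Y}^{i},\bar{Y}^{k}$ is binary (so $\kappa_t=2$ for all $t$), every $M^{(t)}$ is a $2\times 2$ row-stochastic matrix. For a matrix with exactly two rows, $\operatorname{Kr}(M^{(t)})=2$ holds precisely when its two rows are linearly independent, and for a $2\times 2$ matrix this is equivalent to $\det(M^{(t)})\neq 0$. Hence it suffices to verify that each of the three matrices has nonvanishing determinant.

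For the observation $\bar{Y}^{j}$, the matrix $M^{(1)}$ has rows $\mathbbm{P}(\bar{Y}^{j}=\cdot\mid Y^{j}=0)$ and $\mathbbm{P}(\bar{Y}^{j}=\cdot\mid Y^{j}=1)$, so $M^{(1)}$ is exactly the transition matrix $T^{j}=\begin{pmatrix} 1-\rho_{-} & \rho_{-} \\ \rho_{+} & 1-\rho_{+} \end{pmatrix}$. Its determinant equals $(1-\rho_{-})(1-\rho_{+})-\rho_{-}\rho_{+}=1-\rho_{-}-\rho_{+}$, which is strictly positive by Assumption~\ref{assum1}. Therefore $\operatorname{Kr}(M^{(1)})=2$.

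For the observations $\bar{Y}^{i}$ and $\bar{Y}^{k}$, the key is to exploit the conditional-independence structure. Using Lemma~\ref{lemma2_1} (that $\bar{Y}^{i}$ is independent of $Y^{j}$ given $Y^{i}$), we can write $\mathbbm{P}(\bar{Y}^{i}\mid Y^{j})=\sum_{Y^{i}}\mathbbm{P}(\bar{Y}^{i}\mid Y^{i})\,\mathbbm{P}(Y^{i}\mid Y^{j})$, so that $M^{(2)}=C^{(i)}T^{i}$, where $C^{(i)}$ denotes the clean-label correlation matrix with entries $\mathbbm{P}(Y^{i}=\cdot\mid Y^{j}=\cdot)$. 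Consequently $\det(M^{(2)})=\det(C^{(i)})\det(T^{i})$. Here $\det(T^{i})\neq 0$ by Assumption~\ref{assum1} applied to class $i$, while $\det(C^{(i)})\neq 0$ exactly because its two rows are distinct probability vectors, which is guaranteed by $\mathbbm{P}(Y^{i}=0\mid Y^{j}=0)\neq\mathbbm{P}(Y^{i}=0\mid Y^{j}=1)$, i.e., Assumption~\ref{assum2}. Thus $\det(M^{(2)})\neq 0$, and the identical argument with $k$ in place of $i$ yields $\det(M^{(3)})\neq 0$. Hence $\operatorname{Kr}(M^{(t)})=2$ for all $t\in[3]$.

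The \emph{main obstacle} is recognizing the factorization $M^{(2)}=C^{(i)}T^{i}$: one must invoke the class-dependent, instance-independent noise model through Lemma~\ref{lemma2_1} to separate the noise channel $T^{i}$ from the underlying clean-label dependence $C^{(i)}$, and then observe that the two assumptions act on the two determinant factors independently—Assumption~\ref{assum1} keeping each noise channel invertible, and Assumption~\ref{assum2} keeping the clean-label correlation nondegenerate. Once this decomposition is established, the determinant evaluations are routine.
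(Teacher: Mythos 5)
Your proposal is correct and follows essentially the same route as the paper's proof: identifying $M^{(1)}=T^{j}$ and invoking Assumption~\ref{assum1}, then factorizing $M^{(2)}$ (and $M^{(3)}$) as the clean-label correlation matrix times the noise transition matrix $T^{i}$ via the conditional independence in Lemma~\ref{lemma2_1}, with Assumptions~\ref{assum1} and~\ref{assum2} guaranteeing each factor is nonsingular. The only cosmetic difference is that you phrase nonsingularity through explicit $2\times 2$ determinant computations, whereas the paper argues directly via row independence and full rank of the two factors.
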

	\begin{proof}
		As ${M}^{(1)}={T}^{j}$ is a row-stochastic matrix,  according to Assumption~\ref{assum1}, every set of 2 rows of it are independent, thus its Kruskal rank is 2. Therefore, according to Definition~\ref{define2}, $\operatorname{Kr}\left({T}^{j}\right)=2$.
		
		The $(p, q)$ entry of ${M}^{(2)}$ is 
		\begin{equation}
			\begin{aligned}
				&{M}^{(2)}_{p q} = \mathbbm{P}(\bar{Y}^{i}=q \mid Y^{j}=p)=\sum_{c=0}^1 \mathbbm{P}(\bar{Y}^{i}=q, Y^{i}=c \mid Y^{j}=p)\\
				&= \sum_{c=0}^1 \mathbbm{P}(\bar{Y}^{i}=q \mid Y^{i}=c, Y^{j}=p) \mathbbm{P}(Y^{i}=c \mid Y^{j}=p) \\
				&= \sum_{c=0}^1 \mathbbm{P}(\bar{Y}^{i}=q \mid Y^{i}=c) \mathbbm{P}(Y^{i}=c \mid Y^{j}=p),
			\end{aligned}
			\label{m2-derivation}
		\end{equation}
		where the last equation holds because of Lemma.~\ref{lemma2_1}.
		
		Denote ${M}^{\prime(2)}= \begin{pmatrix} \mathbbm{P}(Y^{i}=0 \mid Y^{j}=0) & \mathbbm{P}(Y^{i}=1 \mid Y^{j}=0) \\ \mathbbm{P}(Y^{i}=0 \mid Y^{j}=1) & \mathbbm{P}(Y^{i}=1 \mid Y^{j}=1) \end{pmatrix}$. Then Eq.~(\ref{m2-derivation}) can be rewritten as the following matrix form:
		\begin{equation}
			{M}^{(2)} = {M}^{\prime(2)}{T}^{i},
		\end{equation}
		where ${T}^{i}$ denotes the transition matrix of class $i$. According to Assumption~\ref{assum1}, $\operatorname{Kr}\left({T}^{i}\right)=2$.
		As ${M}^{\prime(2)}$ is a row-stochastic matrix,  according to Assumption~\ref{assum2}, every set of 2 rows of it are independent. Hence, according to Definition~\ref{define2}, $\operatorname{Kr}\left({M}^{\prime(2)}\right)=2$.
		
		Since ${T}^{i}$ and ${M}^{\prime(2)}$ are full-rank matrices, based on Eq.~(\ref{m2-derivation}), we can get the Kruskal rank of ${M}^{(2)}$ as $\operatorname{Kr}\left({M}^{(2)}\right)=2$. Similarly, $\operatorname{Kr}\left({M}^{(3)}\right)=2$.
	\end{proof}
	\begin{thmbis}{theorem2} If $\bar{Y}^{i}$ and $\bar{Y}^{k}$ are independent given $Y^{j}$, three noisy labels $\{\bar{Y}^{j},\bar{Y}^{i}, \bar{Y}^{k}\}$ are sufficient to identify ${T}^{j}$.
	\end{thmbis}
	\begin{proof}
		According to Lemma~\ref{lemma1} and that $\bar{Y}^{i}$ and $\bar{Y}^{k}$ are independent given $Y^{j}$, we can relate our multi-label noise setting to the setup of Kruscal’s identifiability scenario: $Y^{j} \in\{0,1\}$ corresponds to the unobserved hidden variable $Z$; $\mathbbm{P}(Y^{j} = i)$ corresponds to the prior of this hidden variable; 
		Noisy labels $\{\bar{Y}^{j},\bar{Y}^{i}, \bar{Y}^{k}\}$ correspond to the
		observations $\left\{{O}^{(t)}\right\}_{t=1}^{3}$. $\kappa_{t}$ is then simply the cardinality of the noisy label space, \ie, $\kappa_{t}=K=2$; Each ${O}^{(t)}$ has a corresponding observation matrix ${M}^{(t)}$, and  ${M}^{(t)}_{vk}=\mathbbm{P}\left({O}^{(t)}=k \mid Y^{j}=v\right)$.
		Now we can get the following result about identifiability of ${T}$.

		According to Lemma~\ref{lemma2}, the Kruskal ranks satisfy 
		$$\sum_{t=1}^{3} \operatorname{Kr}\left({M}^{(t)}\right) = 3K = 2 K +2 \geq 2 K+N-1, \text{ when } N=3.$$
		Calling Theorem~\ref{Kruskal_result} proves the uniqueness of ${M}^{(t)}$. As ${M}^{(1)}={T}^{j}$, then ${T}^{j}$ is identifiable.
	\end{proof}
	
	\section{Proof of Theorem~\ref{theorem3}}
	\label{proof3}
	
	\begin{Lemma} $\bar{Y}^{i}, \bar{Y}^{j}$ and $\bar{Y}^{k}$ are independent given $Y^{j}$ and $Y^{i}$.
		\label{lemma3}
	\end{Lemma}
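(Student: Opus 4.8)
The plan is to reduce the claimed three-way conditional independence to repeated use of the single structural fact already invoked in the proofs of Lemmas~\ref{lemma1} and~\ref{lemma2_1}: because the transition matrix is class-dependent and instance-independent, and because the clean labels are flipped independently across classes, each noisy label $\bar{Y}^{m}$ is conditionally independent of \emph{all} remaining random variables once its own clean label $Y^{m}$ is fixed. Equivalently, the full joint factorizes as $\mathbbm{P}(Y^1,\ldots,Y^q,\bar{Y}^1,\ldots,\bar{Y}^q)=\mathbbm{P}(Y^1,\ldots,Y^q)\prod_{m=1}^{q}\mathbbm{P}(\bar{Y}^{m}\mid Y^{m})$. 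I would state this factorization first, since the rest of the argument follows mechanically from it.

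Second, I would peel off $\bar{Y}^{i}$. Since $\bar{Y}^{i}$ is independent of every other variable given $Y^{i}$, it is in particular independent of the pair $(\bar{Y}^{j},\bar{Y}^{k})$ given $(Y^{i},Y^{j})$, so
\[
\mathbbm{P}(\bar{Y}^{i},\bar{Y}^{j},\bar{Y}^{k}\mid Y^{i},Y^{j})=\mathbbm{P}(\bar{Y}^{i}\mid Y^{i},Y^{j})\,\mathbbm{P}(\bar{Y}^{j},\bar{Y}^{k}\mid Y^{i},Y^{j}).
\]
Third, I would peel off $\bar{Y}^{j}$: since $\bar{Y}^{j}$ is independent of everything given $Y^{j}$, it is in particular independent of $\bar{Y}^{k}$ given $(Y^{i},Y^{j})$, so the second factor splits as $\mathbbm{P}(\bar{Y}^{j}\mid Y^{i},Y^{j})\,\mathbbm{P}(\bar{Y}^{k}\mid Y^{i},Y^{j})$. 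Chaining the two steps yields the full factorization, which is precisely the statement that $\bar{Y}^{i},\bar{Y}^{j},\bar{Y}^{k}$ are mutually independent given $Y^{i}$ and $Y^{j}$.

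The step that demands the most care is the one involving $\bar{Y}^{k}$, because, unlike $Y^{i}$ and $Y^{j}$, the clean label $Y^{k}$ is \emph{not} in the conditioning set, yet $\bar{Y}^{k}$ genuinely depends on $Y^{k}$, which is itself correlated with $Y^{i}$ and $Y^{j}$. The point to make rigorous is that this correlation induces no residual coupling among the noisy labels: the only route by which $\bar{Y}^{k}$ could become dependent on $\bar{Y}^{i}$ or $\bar{Y}^{j}$ runs through the clean labels $Y^{i},Y^{j}$, and those are blocked by the conditioning. Concretely I would verify, by marginalizing over $Y^{k}$ and using $\bar{Y}^{k}\perp(Y^{i},Y^{j})\mid Y^{k}$, that
\[
\mathbbm{P}(\bar{Y}^{k}\mid Y^{i},Y^{j})=\sum_{c}\mathbbm{P}(\bar{Y}^{k}\mid Y^{k}=c)\,\mathbbm{P}(Y^{k}=c\mid Y^{i},Y^{j}),
\]
so that the randomness left in $Y^{k}$ after conditioning feeds only into $\bar{Y}^{k}$ and does not link it to the other two noisy labels. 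With this observation the factorization above is complete, and the lemma follows.
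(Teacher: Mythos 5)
Your proof is correct and rests on the same structural fact as the paper's own one-line argument: each noisy label $\bar{Y}^{m}$ is conditionally independent of all remaining variables given its own clean label $Y^{m}$, and the paper simply asserts the lemma from this. You additionally spell out the weak-union/chaining steps and the marginalization over $Y^{k}$ — a welcome extra level of rigor, since the treatment of $\bar{Y}^{k}$ (whose clean label is \emph{not} in the conditioning set) is precisely the point the paper's proof glosses over.
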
	
	\begin{proof}
		Since we assume that the transition matrix is class-dependent and instance-independent, $\bar{Y}^{j}$ and any variable are independent given $Y^{j}$, and $\bar{Y}^{i}$ and any variable are independent given $Y^{i}$. Therefore, this lemma holds.
	\end{proof}
	\begin{Lemma} If ${M}_{4\times 2}=\mathbbm{P}(\bar{Y}^i \mid Y^j,Y^i)$, the first two rows and last two rows of ${M}$ are identical respectively, \ie, $M_{0p} = M_{1p}$ and $M_{2p} = M_{3p}, p=0,1.$
		\label{lemma4}
	\end{Lemma}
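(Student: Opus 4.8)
The plan is to derive this directly from the noise model, using the same conditional-independence mechanism that underlies Lemma~\ref{lemma3}. The central observation is that the corruption of class $i$ is governed solely by its own clean label $Y^{i}$. First I would invoke the class-dependent, instance-independent assumption of the problem setting: given $Y^{i}$, the noisy label $\bar{Y}^{i}$ is independent of all other variables, and in particular of $Y^{j}$. This yields
\begin{equation}
\mathbbm{P}(\bar{Y}^{i} \mid Y^{j}, Y^{i}) = \mathbbm{P}(\bar{Y}^{i} \mid Y^{i}),
\end{equation}
so the conditional law of $\bar{Y}^{i}$ is constant across the two possible values of $Y^{j}$ once $Y^{i}$ is fixed.

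Second, I would make the row indexing of the $4\times 2$ matrix ${M}$ explicit. Its rows enumerate the four joint configurations of $(Y^{j}, Y^{i})$, paired by their common value of $Y^{i}$: the first two rows hold $Y^{i}=0$ while $Y^{j}$ ranges over $\{0,1\}$, and the last two rows hold $Y^{i}=1$ while $Y^{j}$ ranges over $\{0,1\}$. Consequently rows $0$ and $1$ differ only in $Y^{j}$ (with $Y^{i}=0$ fixed), and rows $2$ and $3$ differ only in $Y^{j}$ (with $Y^{i}=1$ fixed).

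Combining the two facts closes the argument: since each entry $\mathbbm{P}(\bar{Y}^{i}=p \mid Y^{j}, Y^{i})$ collapses to $\mathbbm{P}(\bar{Y}^{i}=p \mid Y^{i})$ and therefore does not vary with $Y^{j}$, the two rows within each pair must agree entry-by-entry, giving $M_{0p}=M_{1p}$ and $M_{2p}=M_{3p}$ for $p=0,1$. I do not anticipate a genuine obstacle here: this is essentially a one-line consequence of the class-dependent, instance-independent noise assumption, and the only point requiring care is aligning the stated row ordering with the $(Y^{j},Y^{i})$ configurations so that the pairs $\{0,1\}$ and $\{2,3\}$ are exactly those holding $Y^{i}$ fixed. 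This degeneracy is precisely what will subsequently force the Kruskal rank of ${M}$ down, which is the lever driving the non-identifiability asserted in Theorem~\ref{theorem3}.
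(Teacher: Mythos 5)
Your proposal is correct and follows essentially the same route as the paper's proof: both invoke the class-dependent, instance-independent noise assumption to collapse $\mathbbm{P}(\bar{Y}^{i}\mid Y^{j},Y^{i})$ to $\mathbbm{P}(\bar{Y}^{i}\mid Y^{i})$, and both rely on the row ordering of ${M}$ pairing rows $\{0,1\}$ and $\{2,3\}$ by a common value of $Y^{i}$. The paper simply writes out the $4\times 2$ matrix explicitly and substitutes entry-by-entry, which is exactly the argument you describe in prose.
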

	\begin{proof} 
		\begin{equation*}
			\begin{aligned}
				{M} 
				& = \begin{pmatrix} \mathbbm{P}(\bar{Y}^{i}=0 \mid Y^{j}=0, Y^{i}=0) & \mathbbm{P}(\bar{Y}^{i}=1 \mid Y^{j}=0,Y^{i}=0) \\ \mathbbm{P}(\bar{Y}^{i}=0 \mid Y^{j}=1,Y^{i}=0 ) & \mathbbm{P}(\bar{Y}^{i}=1 \mid Y^{j}=1,Y^{i}=0) \\ \mathbbm{P}(\bar{Y}^{i}=0 \mid Y^{j}=0, Y^{i}=1) & \mathbbm{P}(\bar{Y}^{i}=1 \mid Y^{j}=0,Y^{i}=1) \\ \mathbbm{P}(\bar{Y}^{i}=0 \mid Y^{j}=1,Y^{i}=1 ) & \mathbbm{P}(\bar{Y}^{i}=1 \mid Y^{j}=1,Y^{i}=1) \end{pmatrix}\\
				&=\begin{pmatrix} \mathbbm{P}(\bar{Y}^{i}=0 \mid Y^{i}=0) & \mathbbm{P}(\bar{Y}^{i}=1 \mid Y^{i}=0) \\ \mathbbm{P}(\bar{Y}^{i}=0 \mid Y^{i}=0) & \mathbbm{P}(\bar{Y}^{i}=1 \mid Y^{i}=0) \\ \mathbbm{P}(\bar{Y}^{i}=0 \mid Y^{i}=1) & \mathbbm{P}(\bar{Y}^{i}=1 \mid Y^{j}=1) \\ \mathbbm{P}(\bar{Y}^{i}=0 \mid Y^{j}=1) & \mathbbm{P}(\bar{Y}^{i}=1 \mid Y^{j}=1) \end{pmatrix},
			\end{aligned}
		\end{equation*}
		where the second equation holds because $\bar{Y}^{i}$ is only dependent on ${Y}^{i}$. Accordingly, $M_{0p} = M_{1p}$ and $M_{2p} = M_{3p}, p=0,1$.
	\end{proof}
	\begin{thmbis}{theorem3} If $\bar{Y}^{i}$ and $\bar{Y}^{k}$ are not independent given $Y^{j}$, three noisy labels $\{\bar{Y}^{j},\bar{Y}^{i}, \bar{Y}^{k}\}$ will not suffice to identify ${T}^{j}$.
	\end{thmbis}
	\begin{proof}
		The likelihood of noisy labels can be formulated as a third-order tensor ${L}$, where the $(p,q,m)$ entry is 
		\begin{equation}
			{L}_{pqm} = \mathbbm{P}(\bar{Y}^i=p, \bar{Y}^j=q, \bar{Y}^k=m).
			\label{the3proof-0}
		\end{equation}
		According to Lemma~\ref{lemma3}, Eq.~(\ref{the3proof-0}) can be expanded by the total probability rule:
		\begin{equation}
			\begin{aligned}
				&{L}_{pqm} = \sum_{c_0,c_1=0}^{c_0,c_1=1} \mathbbm{P}(\bar{Y}^i=p, \bar{Y}^j=q, \bar{Y}^k=m \mid Y^j=c_0, Y^i=c_1) \mathbbm{P}(Y^j=c_0, Y^i=c_1)\\
				&=\sum_{c_0,c_1=0}^{c_0,c_1=1} \mathbbm{P}(\bar{Y}^i=p \mid Y^j,Y^i)\mathbbm{P}(\bar{Y}^j=q \mid Y^j,Y^i)\mathbbm{P}(\bar{Y}^k=m \mid Y^j,Y^i)\mathbbm{P}(Y^j,Y^i).
			\end{aligned}
			\label{the3proof-1}
		\end{equation}
		where we omit the value of $Y^i,Y^j$ in the last equation for simplicity. Let ${M}^{(1)}_{4\times 2}=\mathbbm{P}(\bar{Y}^j\mid Y^i,Y^j)$, ${M}^{(2)}_{4\times 2}=\mathbbm{P}(\bar{Y}^i\mid Y^i,Y^j)$, ${M}^{(3)}_{4\times 2}=\mathbbm{P}(\bar{Y}^k\mid Y^i,sY^j)$ and ${\Lambda}_{4\times1}=\mathbbm{P}(Y^i,Y^j)$, and the Eq.~(\ref{the3proof-1}) can be expressed as ${L}_{pqm} = \sum_{v=0}^{3} M^{(2)}_{vp} M^{(1)}_{vq} M^{(3)}_{vm} \Lambda_v$. 
		
		Let $\{ {A}^0, {B}^0$, ${C}^0, {\Lambda}^0 \}$ be a solution of $\{ {M}^{(1)}, {M}^{(2)}, {M}^{(3)}, {\Lambda}\}$, which means it fulfils the likelihood equations (Eq.~(\ref{the3proof-1})), \ie,
		\begin{equation}
			{L}_{pqm} = \sum_{v=0}^{3} B^0_{vp} A^0_{vq} C^0_{vm} \Lambda^0_v .
			\label{the3proof-2}
		\end{equation}
		Note that as stated in Lemma.~\ref{lemma4}, the first two rows and last two rows of ${B}^0$ are identical respectively. Next, we will show that a different solution can be constructed by simply switching the corresponding rows in ${A}^0$, ${C}^0$ and ${\Lambda}^0$, which is consistent with the result in \cite{ten2002uniqueness} when $\operatorname{Kr}({M}^{(2)})=1$. By letting 
		$${A}^1=\begin{pmatrix} A^0_{10} & A^0_{11}\\ A^0_{00} &  A^0_{01} \\ A^0_{30} & A^0_{31}\\ A^0_{20} &  A^0_{21} \end{pmatrix}, \quad {C}^1=\begin{pmatrix} C^0_{10} & C^0_{11}\\ C^0_{00} &  C^0_{01} \\ C^0_{30} & C^0_{31}\\ C^0_{20} &  C^0_{21} \end{pmatrix}, \quad \text{and} \quad {\Lambda}^1=\begin{pmatrix} \Lambda^0_1\\\Lambda^0_0 \\ \Lambda^0_3\\ \Lambda^0_2 \end{pmatrix}, $$ 
		then the Eq.~(\ref{the3proof-2}) is equivalent to
		\begin{equation}
			\begin{aligned}
				{L}_{pqm} &= B^0_{0p} A^0_{0q} C^0_{0m} \Lambda^0_0 + B^0_{1p} A^0_{1q} C^0_{1m} \Lambda^0_1 +  B^0_{2p} A^0_{2q} C^0_{2m} \Lambda^0_2 + B^0_{3p} A^0_{3q} C^0_{3m} \Lambda^0_3\\
				&= \textcolor{blue}{B^0_{1p}} A^0_{0q} C^0_{0m} \Lambda^0_0 + \textcolor{blue}{B^0_{0p}} A^0_{1q} C^0_{1m} \Lambda^0_1 + \textcolor{blue}{B^0_{3p}} A^0_{2q} C^0_{2m} \Lambda^0_2 + \textcolor{blue}{B^0_{2p}} A^0_{3q} C^0_{3m} \Lambda^0_3 \\
				&= B^0_{1p} \textcolor{blue}{A^1_{1q} C^1_{1m} \Lambda^1_1} + B^0_{0p} \textcolor{blue}{A^1_{0q} C^1_{0m} \Lambda^1_0} +B^0_{3p} \textcolor{blue}{A^1_{3q} C^1_{3m} \Lambda^1_3}  + B^0_{2p} \textcolor{blue}{A^1_{2q} C^1_{2m} \Lambda^1_2}\\
				&=\sum_{v=0}^{3} B^0_{vp} \textcolor{blue}{A^1_{vq} C^1_{vm} \Lambda^1_v} 
			\end{aligned}
			\label{the3proof-3}
		\end{equation}
		where the second equation holds because the first two rows and last two rows of ${B}^0$ are identical respectively, \ie, $B^0_{0p} = B^0_{1p}$ and $B^0_{2p} = B^0_{3p}$. Note that ${A}^1, {C}^1$ and ${\Lambda}^1$ are consistent with the form of ${M}^{(1)}, {M}^{(3)}$ and ${\Lambda}$ respectively. Then, according to Eq.~(\ref{the3proof-3}), it can be readily observed that the $\{ {A}^1, {B}^0$, ${C}^1, {\Lambda}^1 \}$ is a new solution of $\{ {M}^{(1)}, {M}^{(2)}, {M}^{(3)}, {\Lambda}\}$, hence the uniqueness is not guaranteed under this circumstance.  As ${M}^{(1)}=\begin{pmatrix} {T}^{j} \\ {T}^{j} \end{pmatrix}$ is not unique, then ${T}^{j}$ is unidentifiable.
	\end{proof}
	
	Note that according to Definition~\ref{define2_}, in the situation of Theorem~\ref{theorem3}, the model parameter $\theta:=\{{T}^j,\mathbbm{P}(Y^j),\mathbbm{P}(\bar{Y}^{i},\bar{Y}^{k}|{Y}^j)\}$. For convenience, in the proof of Theorem~\ref{theorem3}, we use $\theta^1 := \{{T}^j,\mathbbm{P}(Y^j,Y^i),\mathbbm{P}(\bar{Y}^{i}|Y^j,Y^i),\mathbbm{P}(\bar{Y}^{k}|Y^j,Y^i)\}$ as the model parameter. It is easy to prove that the above different solutions of $\theta^1$ can lead to two different solutions of $\theta$.
	
	\section{Proof of Theorem~\ref{theorem4}}
	\label{proof4}
	\begin{thmbis}{theorem4} If $\mathbbm{P}(\bar{Y}^{i}\mid {Y}^{j})$  is known, two noisy labels $\{\bar{Y}^{j},\bar{Y}^{i}\}$ are sufficient to identify ${T}^{j}$.
	\end{thmbis}
	\begin{proof}
		The proof of Theorem~\ref{theorem4} is much similar to that of Theorem~\ref{theorem1}, we can get a system of equations expressed as ${E} =  ({T}^{j})^\top {P} {M}$. The difference lies in that in Theorem~\ref{theorem4}, the matrix ${M}$ which is parameterized by $\mathbbm{P}(\bar{Y}^{i}\mid {Y}^{j})$ is given. Since ${M}$ is invertible (Similar to Lemma.~\ref{lemma2}), the problem can be converted to a simple bilinear decomposition problem: $${E} ({M})^{-1} = ({T}^{j})^\top {P},$$ \ie,
		\begin{equation}
			\begin{pmatrix} e_{00} & e_{01} \\ e_{10} & e_{11} \end{pmatrix} 	{\begin{pmatrix} 1-\rho^{\prime}_{-} & \rho^{\prime}_{-} \\ \rho^{\prime}_{+} & 1-\rho^{\prime}_{+} \end{pmatrix}}^{-1}=
			{\begin{pmatrix} 1-\rho_{-} & \rho_{-} \\ \rho_{+} & 1-\rho_{+} \end{pmatrix}}^\top
			\begin{pmatrix}1-p & 0 \\ 0 & p \end{pmatrix}.
			\label{theor4-1}
		\end{equation}
		Solving the above bilinear decomposition problem, the unique solution can be obtained as:
		\begin{equation}
			p= \frac{(1-\rho^{\prime}_{-})-(e_{00}+e_{10})}{1-\rho^{\prime}_{-}-\rho^{\prime}_{+}}.
			\label{theor4-2}
		\end{equation}
		
		Substituting Eq.~(\ref{theor4-2}) into Eq.~(\ref{theor4-1}), then right multiplying ${P}^{-1}$ on both side of the equation, the matrix ${T}^{j}$ can be derived as:
		$${T}^{j}=[{E} ({M})^{-1}({P})^{-1}]^{\top},$$
		which indicates that ${T}^{j}$ is identifiable given label correlation $\mathbbm{P}(\bar{Y}^{i}\mid {Y}^{j})$.
	\end{proof}
\section{Summary of the Inspirations from the proof of Theorem~\ref{theorem1}-\ref{theorem4}}
	\label{inspiration}
	From the proof of Theorem~\ref{theorem1}, we can know that the label correlations of two noisy labels $\{\bar{Y}^j, \bar{Y}^{i} \}$ can not offer enough information to achieve the identifiability of ${T}^j$. 
	
	From the proof of Theorem~\ref{theorem2}, three noisy labels $\{\bar{Y}^j, \bar{Y}^{i}, \bar{Y}^{k}\}$ can provide more information than two noisy labels $\{\bar{Y}^j, \bar{Y}^{i}, \bar{Y}^{k}\}$ to achieve the identifiability when $\bar{Y}^{i}$ and $\bar{Y}^{k}$ are independent given $Y^{j}$. Note that when satisfying the condition, the parameter $\mathbbm{P}(\bar{Y}^{i},\bar{Y}^{k}|{Y}^j)$  is reduced to $\mathbbm{P}(\bar{Y}^{i}|{Y}^j)$ and $\mathbbm{P}(\bar{Y}^{k}|{Y}^j)$.
	
	From the proof of Theorem~\ref{theorem3}, due to the entangled correlations, when $\bar{Y}^{i}$ and $\bar{Y}^{k}$ are not independent given $Y^{j}$, modelling three noisy labels $\{\bar{Y}^j, \bar{Y}^{i}, \bar{Y}^{k}\}$ will increase too many model parameters, making the identifiability decrease from the situation in Theorem~\ref{theorem2}.  
	
	From the proof of Theorem~\ref{theorem4}, by reducing the unknown model parameters via some extra information, the label correlations of two noisy labels $\{\bar{Y}^j, \bar{Y}^{i} \}$ can achieve the identifiability.
	
	The summary of $\Omega$, $\theta$, $\mathbbm{P}_\theta$, conditions, and the identifiability in Theorem~\ref{theorem1}-\ref{theorem4} is shown in Tab.~\ref{diff}.
	
	\begin{table*}[h]
		\caption{Summary of $\Omega$, $\theta$, $\mathbbm{P}_\theta$, condition, and the identifiability in Theorem~\ref{theorem1}-\ref{theorem4} .}
		\centering
	\setlength\tabcolsep{4.8pt}
		\begin{tabular}{l|c|c|c|c|c}
			\hline	
			& $\Omega$ & $\theta$ & $\mathbbm{P}_\theta$ & Condition & Identifiability of ${T}^j$ \\		
			\hline	
			Theorem~\ref{theorem1}  &
			$ \bar{Y}^j, \bar{Y}^{i}  $ & $ {T}^j,\mathbbm{P}(Y^j),\mathbbm{P}(\bar{Y}^{i}|{Y}^j) $ & $\mathbbm{P}(\bar{Y}^j, \bar{Y}^{i} )$ & -- &  \\		
			Theorem~\ref{theorem2} & $ \bar{Y}^j, \bar{Y}^{i}, \bar{Y}^{k} $ & $ {T}^j,\mathbbm{P}(Y^j),\mathbbm{P}(\bar{Y}^{i}|{Y}^j),\mathbbm{P}(\bar{Y}^{k}|{Y}^j) $	 & $\mathbbm{P}(\bar{Y}^j, \bar{Y}^{i}, \bar{Y}^{k})$ &  $\bar{Y}^{i}$ and $\bar{Y}^{k}$ are independent given $Y^{j}$ & \checkmark\\	
			Theorem~\ref{theorem3}  & $ \bar{Y}^j, \bar{Y}^{i}, \bar{Y}^{k} $ & 	$ {T}^j,\mathbbm{P}(Y^j),\mathbbm{P}(\bar{Y}^{i},\bar{Y}^{k}|{Y}^j)$ & $\mathbbm{P}(\bar{Y}^j, \bar{Y}^{i}, \bar{Y}^{k})$ & -- & \\		
			Theorem~\ref{theorem4} & $ \bar{Y}^j, \bar{Y}^{i}  $& $ {T}^j,\mathbbm{P}(Y^j) $	 & $\mathbbm{P}(\bar{Y}^j, \bar{Y}^{i} )$ & $\mathbbm{P}(\bar{Y}^{i}\mid {Y}^{j})$  is known  & \checkmark\\	
			\hline	
		\end{tabular}
		\label{diff}
	\end{table*}

\section{Proof of Theorem~\ref{thm:main2}}
\label{proof_6}
We have 
\begin{equation}
\begin{aligned}
  \|{T}^j-{\hat{T}}^j\|_{1} & \leq \|{T}^j-{\bar{T}}^j\|_{1} \\
  &\leq \|{T}^j-\mathbb{E}[{\bar{T}}^j]+\mathbb{E}[{\bar{T}}^j]-{\bar{T}}^j\|_{1}   \\
  &\leq \|{T}^j-\mathbb{E}[{\bar{T}}^j]\|_{1} + \|\mathbb{E}[{\bar{T}}^j]-{\bar{T}}^j\|_{1}.
\end{aligned}
\end{equation}
As ${\bar{T}}^j =\arg \min_{\bar{{T}}^{j}} \sum_{r=1}^R \Vert \bar{{T}}^{j} - \hat{{T}}^{j}_r \Vert_1 = \sum_{r=1}^R \frac{\hat{{T}}^{j}_r}{R}$, we first consider the $r$-{th} estimation $$\hat{{T}}^j_r=\begin{pmatrix} 1-\hat{\rho}^r_{-} & \hat{\rho}_{-}^r \\ \hat{\rho}_{+}^r& 1-\hat{\rho}_{+}^r\end{pmatrix}, r\in \{1,\ldots,R\}.$$

\begin{Lemma}\label{lemma:4}
When the training sample size $n$ is large,
\begin{align}
\|{T}^j-\mathbb{E}[{\hat{T}}^j_r]\|_{1} \leq 8C\Delta,\\
\operatorname{Var}[\hat{\rho}^r_+] \leq C^2(\frac{3}{4n}+\frac{1}{2\lambda_1 n}), \\
\operatorname{Var}[\hat{\rho}^r_-] \leq C^2(\frac{3}{4n}+\frac{1}{2\lambda_1 n}),
\end{align}
where $C$ is the maximum absolute value of the first derivative of $\hat{\rho}^r_+$ and $\hat{\rho}^r_-$ w.r.t. the estimated co-occurrence probabilities.
\end{Lemma}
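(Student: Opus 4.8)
The plan is to treat $\hat{\rho}^r_+$ and $\hat{\rho}^r_-$ as smooth scalar functions of the estimated co-occurrence probabilities and then control their bias and variance separately through a first-order (delta-method) expansion, with $C$ playing the role of a uniform derivative constant. Concretely, solving the bilinear decomposition (Eq.~(\ref{eq6})--Eq.~(\ref{eq7})) writes $\hat{\rho}^r_\pm = g_\pm(\hat{e}_{00},\hat{e}_{01},\hat{e}_{10},\hat{e}_{11},\hat{\rho}'_-,\hat{\rho}'_+)$, where the four $\hat{e}_{kv}$ are the noisy--noisy frequencies from $\mathcal{D}_t$ (Eq.~(\ref{pro_two_labels})) and $\hat{\rho}'_\pm$ are the noisy--clean conditional frequencies from the selected set $\mathcal{D}^j_s$ (Eq.~(\ref{pro_condition})). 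I would first note that, because $\hat{T}^j_r$ has rows $(1-\hat{\rho}^r_-,\hat{\rho}^r_-)$ and $(\hat{\rho}^r_+,1-\hat{\rho}^r_+)$, the entrywise $\ell_1$ norm collapses to $\|T^j-\mathbb{E}[\hat{T}^j_r]\|_1 = 2\,|\rho^r_- -\mathbb{E}[\hat{\rho}^r_-]| + 2\,|\rho^r_+-\mathbb{E}[\hat{\rho}^r_+]|$, so the entire bias bound reduces to controlling the two scalar biases.

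For the bias bound, the key observation is that the co-occurrence estimates $\hat{e}_{kv}$ are computed on $\mathcal{D}_t$ and are therefore unbiased for the true noisy co-occurrences, so the only systematic error enters through $\hat{\rho}'_\pm$: for large $n$ their expectations under the \emph{biased} selection distribution $\bar{D}^j_s$ differ from the unbiased conditionals by exactly $\Delta^r_0$ and $\Delta^r_1$ (the quantities defined in Eq.~(\ref{eq10})). A first-order expansion of $g_\pm$ about the true probabilities then gives $|\mathbb{E}[\hat{\rho}^r_\pm]-\rho^r_\pm| \le C\,(|\Delta^r_0|+|\Delta^r_1|)$, and Assumption~\ref{assum3} caps each $|\Delta^r_\cdot|$ by $\Delta$, so each scalar bias is at most $2C\Delta$. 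Substituting into the reduced $\ell_1$ expression yields $\|T^j-\mathbb{E}[\hat{T}^j_r]\|_1 \le 2(2C\Delta)+2(2C\Delta) = 8C\Delta$.

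For the two variance bounds I would again use the delta method: with every first derivative of $g_\pm$ bounded by $C$, $\operatorname{Var}[\hat{\rho}^r_\pm]$ is controlled by $C^2$ times the aggregated variance of the input frequencies. The noisy--noisy co-occurrences form an empirical multinomial over $n$ i.i.d.\ examples; only three of the four $\hat{e}_{kv}$ are free (the fourth is fixed by $\sum_{k,v}\hat{e}_{kv}=1$), and each empirical proportion has variance at most $\tfrac{1}{4n}$, contributing $\tfrac{3}{4n}$. The two conditionals $\hat{\rho}'_-$ and $\hat{\rho}'_+$ are empirical proportions over the selected subgroups with $\bar{y}^j=0$ and $\bar{y}^j=1$, of sizes $\lambda_0 n$ and $\lambda_1 n$, so their variances are at most $\tfrac{1}{4\lambda_0 n}$ and $\tfrac{1}{4\lambda_1 n}$; using the assumption $\lambda_1<\lambda_0$ these sum to at most $\tfrac{1}{2\lambda_1 n}$. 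Collecting the two contributions gives $\operatorname{Var}[\hat{\rho}^r_\pm]\le C^2\big(\tfrac{3}{4n}+\tfrac{1}{2\lambda_1 n}\big)$.

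I expect the main obstacle to be making the additive aggregation of input variances rigorous. The four co-occurrence estimates are negatively correlated (jointly multinomial), and since $\mathcal{D}^j_s\subseteq\mathcal{D}_t$ the co-occurrence and conditional estimates are computed from overlapping data, so the exact delta-method variance carries cross-covariance terms rather than a clean sum. The argument therefore needs the large-$n$ regime (stated in the lemma) both to justify neglecting the second-order Taylor remainder in the bias step and to absorb these cross terms into the stated upper bound; one must also verify that the single constant $C$ uniformly dominates all partial derivatives of $g_+$ and $g_-$ appearing in both the bias and the variance computations.
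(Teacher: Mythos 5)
Your proposal is correct and follows essentially the same route as the paper's proof in Appendix~\ref{sec:app4}: a first-order (delta-method) expansion of $\hat{\rho}^r_\pm$ in the five input frequencies $\{\hat{e}_{00},\hat{e}_{01},\hat{e}_{10},\hat{\rho}'_+,\hat{\rho}'_-\}$, with the bias driven solely by the selection offsets $\Delta_0^r,\Delta_1^r$ (giving $2\cdot 2C\Delta+2\cdot 2C\Delta=8C\Delta$) and the variance obtained by summing $C^2\operatorname{Var}[\theta_i]$ over the three free noisy--noisy proportions ($\le\frac{3}{4n}$) and the two conditionals ($\le\frac{1}{2\lambda_1 n}$ via $\lambda_1<\lambda_0$). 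The cross-covariance issue you flag as an obstacle is exactly the step the paper resolves by simply asserting independence of the estimated co-occurrence probabilities, so your treatment is, if anything, slightly more candid about that gap.
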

The proof of Lemma \ref{lemma:4} is provided in Appendix \ref{sec:app4}.

After that, according to Lemma \ref{lemma:4}, we can bound $\|{T}^j-\mathbb{E}[{\bar{T}}^j]\|_{1}$ by
\begin{equation}
\begin{aligned}
\|{T}^j-\mathbb{E}[{\bar{T}}^j]\|_{1}&=\|{T}^j-\mathbb{E}\left[\sum_{r=1}^R \frac{\hat{{T}}^{j}_r}{R}\right]  \|_{1} =\|{T}^j-\sum_{r=1}^R \frac{1}{R}\mathbb{E}\left[ \hat{{T}}^{j}_r\right]  \|_{1} \\
&= \|\sum_{r=1}^R \frac{1}{R}{T}^j-\sum_{r=1}^R \frac{1}{R}\mathbb{E}\left[ \hat{{T}}^{j}_r\right]  \|_{1}\leq \sum_{r=1}^R \frac{1}{R} \|{T}^j-\mathbb{E}\left[ \hat{{T}}^{j}_r\right]\|_{1} \leq \sum_{r=1}^R \frac{1}{R} 8C\Delta = 8C\Delta.
\label{final1}
\end{aligned}    
\end{equation}
Also, let
\begin{equation}
\begin{aligned}
\bar{{T}}_r^j&=\begin{pmatrix} 1-\bar{\rho}_{-} & \bar{\rho}_{-} \\ \bar{\rho}_{+} & 1-\bar{\rho}_{+} \end{pmatrix} = \begin{pmatrix} 1-\sum_{r=1}^R\frac{1}{R}\hat{\rho}^r_{-} & \sum_{r=1}^R\frac{1}{R}\hat{\rho}^r_{-} \\ \sum_{r=1}^R\frac{1}{R}\hat{\rho}^r_{+} & 1-\sum_{r=1}^R\frac{1}{R}\hat{\rho}^r_{+} \end{pmatrix},
\end{aligned}    
\end{equation}
we can get
\begin{equation}
\begin{aligned}
\operatorname{Var}[\bar{\rho}_+]&=\operatorname{Var}[\sum_{r=1}^R\frac{1}{R}\hat{\rho}^r_{+}]=\sum_{r=1}^R\frac{1}{R^2}\operatorname{Var}[\hat{\rho}^r_{+}]\\
&\leq \sum_{r=1}^R \frac{C^2}{R^2}(\frac{3}{4n}+\frac{1}{2\lambda_1 n})= \frac{C^2}{R}(\frac{3}{4n}+\frac{1}{2\lambda_1 n}).
\end{aligned}
\label{eq44}
\end{equation}
Similarly, we could prove
\begin{equation}
\operatorname{Var}[\bar{\rho}_-] \leq \frac{C^2}{R}(\frac{3}{4n}+\frac{1}{2\lambda_1 n}).
\end{equation}

Since $\|\mathbb{E}[{\bar{T}}^j]-{\bar{T}}^j\|_{1} = 2\left| \mathbb{E}[\bar{\rho}_{-}]-\bar{\rho}_{-}\right|+2\left| \mathbb{E}[\bar{\rho}_{+}]-\bar{\rho}_{+}\right|$, with the variances of $\bar{\rho}_+$ and $\bar{\rho}_-$ known, we consider to bound $\|\mathbb{E}[{\bar{T}}^j]-{\bar{T}}^j\|_{1}$ by employing certain concentration inequalities~\cite{boucheron2013concentration}.

First, by employing Chebyshev’s inequality~\cite{de1867valeurs}, we can derive below bound:
\begin{Lemma}\label{lemma:5}
for any $\delta>0$, with probability at least $1-\delta$, we have
\begin{equation}
\begin{aligned}
\|\mathbb{E}[{\bar{T}}^j]-{\bar{T}}^j\|_{1} 
\leq 4C\sqrt{(\frac{3}{4n}+\frac{1}{2\lambda_1 n})\frac{2}{R\delta}}.
\label{final2}
\end{aligned}    
\end{equation}
\end{Lemma}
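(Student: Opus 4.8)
The plan is to combine the two ingredients already in place — the exact decomposition $\|\mathbb{E}[\bar{T}^j]-\bar{T}^j\|_1 = 2|\mathbb{E}[\bar{\rho}_-]-\bar{\rho}_-| + 2|\mathbb{E}[\bar{\rho}_+]-\bar{\rho}_+|$ stated just above the lemma, and the variance estimates $\operatorname{Var}[\bar{\rho}_\pm]\le \frac{C^2}{R}(\frac{3}{4n}+\frac{1}{2\lambda_1 n})$ from Eq.~(\ref{eq44}) — with Chebyshev's inequality and a union bound. Writing $\sigma^2 := \frac{C^2}{R}(\frac{3}{4n}+\frac{1}{2\lambda_1 n})$ for the common variance bound, the task reduces to controlling the two centred scalar deviations $|\bar{\rho}_+-\mathbb{E}[\bar{\rho}_+]|$ and $|\bar{\rho}_--\mathbb{E}[\bar{\rho}_-]|$ separately and then adding them.

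First I would apply Chebyshev's inequality to each term: for any $t>0$, $\mathbbm{P}(|\bar{\rho}_+-\mathbb{E}[\bar{\rho}_+]|\ge t)\le \operatorname{Var}[\bar{\rho}_+]/t^2\le \sigma^2/t^2$, and identically for $\bar{\rho}_-$. To spend the confidence budget evenly, I would set each right-hand side equal to $\delta/2$, i.e.\ choose the threshold $t=\sigma\sqrt{2/\delta}$, so that each event $\{|\bar{\rho}_\pm-\mathbb{E}[\bar{\rho}_\pm]|\ge \sigma\sqrt{2/\delta}\}$ has probability at most $\delta/2$. A union bound over the two events then shows that, with probability at least $1-\delta$, both deviations are strictly below $\sigma\sqrt{2/\delta}$ simultaneously.

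On that high-probability event I would plug the two scalar bounds into the decomposition, obtaining $\|\mathbb{E}[\bar{T}^j]-\bar{T}^j\|_1 < 2\sigma\sqrt{2/\delta}+2\sigma\sqrt{2/\delta}=4\sigma\sqrt{2/\delta}$; resubstituting $\sigma^2=\frac{C^2}{R}(\frac{3}{4n}+\frac{1}{2\lambda_1 n})$ turns this into exactly $4C\sqrt{(\frac{3}{4n}+\frac{1}{2\lambda_1 n})\frac{2}{R\delta}}$, the claimed bound. The argument is essentially routine once Eq.~(\ref{eq44}) is granted; the only point demanding care is the $\delta$-bookkeeping, namely splitting the failure probability evenly between the two tails so the union bound returns confidence exactly $1-\delta$ rather than $1-2\delta$. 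There is no genuine obstacle here — the real subtlety is deferred to the companion bound in Theorem~\ref{thm:main2}, where replacing Chebyshev (with its $1/\sqrt{\delta}$ dependence) by an exponential-tail inequality such as Bernstein's is what yields the sharper $\sqrt{\log(1/\delta)}$ branch of the minimum.
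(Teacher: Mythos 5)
Your proof is correct and follows essentially the same route as the paper: the decomposition into $2|\mathbb{E}[\bar{\rho}_-]-\bar{\rho}_-|+2|\mathbb{E}[\bar{\rho}_+]-\bar{\rho}_+|$, Chebyshev's inequality applied to each scalar deviation via the variance bound of Eq.~(\ref{eq44}), and a combination of the two events. If anything, your explicit $\delta/2$ split with a union bound is cleaner than the paper's write-up, which states each individual Chebyshev bound at confidence $1-\delta$ (with a $1/(R\delta)$ factor) and only implicitly absorbs the union-bound cost into the factor $2/(R\delta)$ appearing in the final display.
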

The proof of Lemma \ref{lemma:5} is provided in Appendix \ref{sec:app5}.

Second, by employing Bernstein’s inequality~\cite{bernstein1946theory}, we can derive below bound:
\begin{Lemma}\label{lemma:6}
for any $\delta>0$, with probability at least $1-\delta$, we have
\begin{equation}
\begin{aligned}
\|\mathbb{E}[{\bar{T}}^j]-{\bar{T}}^j\|_{1} 
\leq  4C\sqrt{\frac{2\log (4 / \delta)}{R}(\frac{3}{4n}+\frac{1}{2\lambda_1 n})}+\frac{8 \log (4/\delta)}{3 R}.
\label{final2}
\end{aligned}    
\end{equation}
\end{Lemma}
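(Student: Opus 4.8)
The plan is to mirror the Chebyshev-based argument of Lemma~\ref{lemma:5} but to replace Chebyshev's inequality with Bernstein's inequality, which exploits the boundedness of the estimates to sharpen the $1/\delta$ dependence into a $\log(4/\delta)$ dependence. First I would recall the decomposition
$$\|\mathbb{E}[{\bar{T}}^j]-{\bar{T}}^j\|_{1} = 2\left|\mathbb{E}[\bar{\rho}_{-}]-\bar{\rho}_{-}\right| + 2\left|\mathbb{E}[\bar{\rho}_{+}]-\bar{\rho}_{+}\right|,$$
so that it suffices to control the two scalar deviations $|\bar{\rho}_{+}-\mathbb{E}[\bar{\rho}_{+}]|$ and $|\bar{\rho}_{-}-\mathbb{E}[\bar{\rho}_{-}]|$ separately. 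The key observation is that $\bar{\rho}_{+}=\frac{1}{R}\sum_{r=1}^{R}\hat{\rho}^{r}_{+}$ is an average of $R$ independent random variables (the same independence invoked when deriving Eq.~(\ref{eq44})), each lying in $[0,1]$ since it is an estimated transition probability.

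Next I would apply Bernstein's inequality to the centred sum $\sum_{r=1}^{R}(\hat{\rho}^{r}_{+}-\mathbb{E}[\hat{\rho}^{r}_{+}])$. Because each summand is bounded in absolute value by $M=1$ and has variance at most $v:=C^{2}(\frac{3}{4n}+\frac{1}{2\lambda_{1}n})$ by Lemma~\ref{lemma:4}, Bernstein's inequality yields, after rescaling the sum by $1/R$,
$$\mathbb{P}\left(\left|\bar{\rho}_{+}-\mathbb{E}[\bar{\rho}_{+}]\right|\geq t\right)\leq 2\exp\left(-\frac{R t^{2}/2}{v+t/3}\right),$$
and the identical bound holds for $\bar{\rho}_{-}$. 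Allocating a failure probability $\delta/2$ to each of the two quantities by a union bound forces the two-sided right-hand side $2\exp(\cdot)$ to equal $\delta/2$, so that its exponent equals $\log(4/\delta)$; this is precisely where the constant $4$ inside the logarithm originates.

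Then I would invert the tail bound: solving $R t^{2}/2=\log(4/\delta)\,(v+t/3)$ for $t$ and using $\sqrt{x+y}\leq\sqrt{x}+\sqrt{y}$ on the resulting quadratic root gives the additive form $t\leq\sqrt{2v\log(4/\delta)/R}+\tfrac{2\log(4/\delta)}{3R}$. Substituting $v$ and $M=1$, each scalar deviation is at most $C\sqrt{\frac{2\log(4/\delta)}{R}(\frac{3}{4n}+\frac{1}{2\lambda_{1}n})}+\frac{2\log(4/\delta)}{3R}$; inserting these into the decomposition and collecting the two factors of $2$ produces exactly the claimed bound. The main obstacle is not any single step but the careful bookkeeping of constants: one must track the boundedness constant $M=1$ correctly through the rescaling from the sum to its average, confirm that the $\delta/2$ split combined with the two-sided factor of $2$ yields $\log(4/\delta)$ rather than $\log(2/\delta)$, and verify that the square-root subadditivity used in the inversion ultimately delivers the coefficients $4C$ and $8/3$ after the two deviations are added. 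A secondary point worth flagging is that the independence of the $R$ per-class estimates $\{\hat{\rho}^{r}_{\pm}\}$—already implicit in Eq.~(\ref{eq44})—is genuinely required here, whereas the Chebyshev argument of Lemma~\ref{lemma:5} needed only the variance of the average.
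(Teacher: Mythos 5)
Your proposal is correct and follows essentially the same route as the paper's proof: the identical decomposition $\|\mathbb{E}[{\bar{T}}^j]-{\bar{T}}^j\|_{1}=2|\mathbb{E}[\bar{\rho}_{-}]-\bar{\rho}_{-}|+2|\mathbb{E}[\bar{\rho}_{+}]-\bar{\rho}_{+}|$, Bernstein's inequality applied to the average of the $R$ independent estimates with the variance bound $C^2(\tfrac{3}{4n}+\tfrac{1}{2\lambda_1 n})$ from Lemma~\ref{lemma:4}, and a $\delta/2$ union bound over the two quantities that produces the $\log(4/\delta)$ factor and the constants $4C$ and $8/3$. The only cosmetic difference is that you use the centered, variance-based form of Bernstein's inequality, whereas the paper invokes an uncentered version stated via $\sum_{r}\mathbb{E}[(\hat{\rho}^r_{\pm})^2]$ with bound $B=1$; your variant is if anything tidier, since it avoids the paper's second-moment bookkeeping entirely.
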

Proof of Lemma \ref{lemma:6} is provided in Appendix \ref{sec:app6}.

Finally, according to Eq.~(\ref{final1}), Lemmas~\ref{lemma:5}, and~\ref{lemma:6}, we can conclude that for any $\delta>0$, with probability at least $1-\delta$,
\begin{align*}
  \|{T}^j-{\hat{T}}^j\|_{1} \leq  8C\Delta + \min \left( 4C\sqrt{\left(\frac{3}{4n}+\frac{1}{2\lambda_1 n}
\right)\frac{2}{{R\delta}}},
4C\sqrt{(\frac{3}{4n}+\frac{1}{2\lambda_1 n})\frac{2\log (4 / \delta)}{R}}+\frac{8 \log (4/\delta)}{3 R} \right).  
\end{align*}

\subsection{Proof of Lemma~\ref{lemma:4}}
\label{sec:app4}
First of all, as mentioned in Section~\ref{ours} and Eq.~(\ref{eq10}), for the $r$-{th} estimation, our estimator need to first estimate $\{{e}^r_{00},{e}^r_{01}, {e}^r_{10},{\rho}'^{r}_+, {\rho}'^{r}_-\}$ by $\{\hat{e}^r_{00},\hat{e}^r_{01}, \hat{e}^r_{10},\hat{\rho}'^{r}_+, \hat{\rho}'^{r}_-\}$. For simplicity, we omitted the superscript $r$ below when no confusion is caused.

According to Eq.~(\ref{eq6}) and Eq.~(\ref{eq7}), it is easy to get that
$$ \hat{\rho}_+=\frac{\hat{e}_{00}-(\hat{e}_{00}+\hat{e}_{01})(1-\hat{\rho}'_{-})}{\hat{e}_{10}+\hat{e}_{00}+\hat{\rho}'_{-}-1},\quad \text{and}$$ $$\hat{\rho}_{-}=\frac{\hat{e}_{10}-(1-\hat{e}_{01}-\hat{e}_{00})\hat{\rho}'_{+}}{\hat{e}_{10}+\hat{e}_{00}-\hat{\rho}'_{+}},$$ 
which means $ \hat{\rho}_+$ and $\hat{\rho}_-$ can be seen as the function of the estimated co-occurrence probabilities ${\theta}=\{\hat{e}_{00},\hat{e}_{01}, \hat{e}_{10},\hat{\rho}'_+, \hat{\rho}'_-\}$, and 
\begin{equation}
\begin{aligned}
\hat{{T}}_r^j({\theta})=\begin{pmatrix} 1-\hat{\rho}_{-}({\theta}) & \hat{\rho}_{-}({\theta}) \\ \hat{\rho}_{+}({\theta}) & 1-\hat{\rho}_{+}({\theta}) \end{pmatrix}.
\end{aligned}
\end{equation}
Then, we begin to obtain the expectation and variance of $\hat{\rho}_+$ and $\hat{\rho}_{-}$.

Let ${\theta}^*=\{{e}_{00}, {e}_{01}, {e}_{10}, {\rho}'_{+}+\Delta_1, {\rho}'_{-}+\Delta_0\}$ be the expectation of ${\theta}$ as derived in Section~\ref{est_error} (See Eq.~(\ref{eq10})). According to Taylor's theorem~\cite{spivak2019calculus}, $\hat{\rho}_+\left({\theta}\right)$ can be expressed as
\begin{equation}
\begin{aligned}
\hat{\rho}_+\left({\theta}\right)=\hat{\rho}_+({\theta}^*)+\sum_{i=1}^5\left(\theta_i-\theta_i^*\right) \frac{\partial \hat{\rho}_+}{\partial\theta_i}\left({\theta}^*\right)+ o({\theta}).
\end{aligned}
\end{equation}
Since the estimated co-occurrence probabilities ${\theta}$ will converge to their expectation ${\theta}^*$ exponentially fast by counting~\cite{boucheron2013concentration}, when the training sample size is large, we have 
\begin{equation}
\begin{aligned}
\hat{\rho}_+\left({\theta}\right) \approx \hat{\rho}_+({\theta}^*)+\sum_{i=1}^5\left(\theta_i-\theta_i^*\right) \frac{\partial \hat{\rho}_+}{\partial\theta_i}\left({\theta}^*\right).
\end{aligned}
\end{equation}
Hence, we can get the expectation and the variance of $ \hat{\rho}_+$ as
\begin{align}
\mathbb{E}_{{\theta}}[\hat{\rho}_+({\theta})]  &\approx \hat{\rho}_+({\theta}^*)+\sum_{i=1}^5 \mathbb{E}_{{\theta}}[\left(\theta_i-\theta_i^*\right)] \frac{\partial \hat{\rho}_+}{\partial\theta_i}\left({\theta}^*\right)  \nonumber \\
&=\hat{\rho}_+({\theta}^*), \label{eq42}\quad \text{and} \\
\operatorname{Var}_{{\theta}} [\hat{\rho}_+({\theta})] & \approx \mathbb{E}_{{\theta}}\left[\left(\hat{\rho}_+({\theta})-\hat{\rho}_+({\theta}^*)\right)^2\right] \nonumber \\
& \approx \mathbb{E}_{{\theta}}\left[\left(\sum_{i=1}^5\left(\theta_i-\theta_i^*\right) \frac{\partial \hat{\rho}_+}{\partial\theta_i}\left({\theta}^*\right)\right)^2\right] \nonumber \\
& =\sum_{i=1}^5\left(\frac{\partial \hat{\rho}_+}{\partial\theta_i}\left({\theta}^*\right)\right)^2 \operatorname{Var}_{{\theta}}[\theta_i]
\nonumber +2 \sum_{i>j} \frac{\partial \hat{\rho}_+}{\partial\theta_i}\left({\theta}^*\right) \frac{\partial \hat{\rho}_+}{\partial\theta_j}\left({\theta}^*\right)
\operatorname{Cov}_{{\theta}}\left[\theta_i, \theta_j\right] \nonumber \\
& =\sum_{i=1}^5\left(\frac{\partial \hat{\rho}_+}{\partial\theta_i}\left({\theta}^*\right)\right)^2 \operatorname{Var}_{{\theta}}[\theta_i], \label{eq43}
\end{align}
where the last equation holds because of the independence of the estimated co-occurrence probabilities.
Similarly, as for $\hat{\rho}_-$, we have
\begin{align}
\mathbb{E}_{{\theta}}[\hat{\rho}_-({\theta})]  &\approx \hat{\rho}_-({\theta}^*), \quad \text{and} \\
\operatorname{Var}_{{\theta}} [\hat{\rho}_-({\theta})] & \approx \sum_{i=1}^5\left(\frac{\partial \hat{\rho}_-}{\partial\theta_i}\left({\theta}^*\right)\right)^2 \operatorname{Var}_{{\theta}}[\theta_i].
\end{align}
Now, we can obtain 
\begin{equation}
\begin{aligned}
    \mathbb{E}[\hat{{T}}^j_r({\theta})]&=\begin{pmatrix} 1-\mathbb{E}[\hat{\rho}_{-}({\theta})] & \mathbb{E}[\hat{\rho}_{-}({\theta})] \\ \mathbb{E}[\hat{\rho}_{+}({\theta})] & 1-\mathbb{E}[\hat{\rho}_{+}({\theta})]\end{pmatrix} \\
    & =\begin{pmatrix} 1-\hat{\rho}_{-}({\theta}^*) & \hat{\rho}_{-}({\theta}^*) \\ \hat{\rho}_{+}({\theta}^*) & 1-\hat{\rho}_{+}({\theta}^*) \end{pmatrix} =  \hat{{T}}^j_r({\theta}^*)
\end{aligned}    
\end{equation}

Let ${\theta}^\divideontimes=\{{e}_{00}, {e}_{01}, {e}_{10}, {\rho}'_{+}, {\rho}'_{-}\}$ be the true probabilities of ${\theta}$.
Since, according to Eq.~(\ref{system}), $\hat{{T}}^j({\theta}^\divideontimes)={{T}}^j$, then 
\begin{equation}
\begin{aligned}
\|{T}^j-\mathbb{E}[{\hat{T}}^j_r]\|_{1} &= \|\hat{{T}}^j({\theta}^\divideontimes)-\hat{{T}}^j({\theta}^*)\|_{1}\\
&=2|\hat{\rho}_{-}({\theta}^\divideontimes)-\hat{\rho}_{-}({\theta}^*)|+2|\hat{\rho}_{+}({\theta}^\divideontimes)-\hat{\rho}_{+}({\theta}^*)| \\
& \approx 2|\sum_{i=4}^5\left(\theta_i-\theta_i^*\right) \frac{\partial \hat{\rho}_-}{\partial\theta_i}\left({\theta}^*\right)| + 2|\sum_{i=4}^5\left(\theta_i-\theta_i^*\right) \frac{\partial \hat{\rho}_+}{\partial\theta_i}\left({\theta}^*\right)| \\
& \leq 2(\Delta_0 C+\Delta_1 C) + 2(\Delta_0 C+\Delta_1 C) \leq 8C\Delta,
\nonumber
\end{aligned}    
\end{equation}
where $C$ is the maximum absolute value of the first derivative of $\hat{\rho}^r_+$ and $\hat{\rho}^r_-$ w.r.t. the estimated co-occurrence probabilities ${\theta}$. According to Eq.~(\ref{eq10}), it is easy to get that
\begin{equation}
\begin{aligned}
&\operatorname{Var}[\hat{e}_{kv}]= \frac{{e}_{kv}(1-{e}_{kv})}{n}, \qquad \text{  for } k,v\in \{0,1\},\\
&\operatorname{Var}[\hat{\rho}'_+]= \frac{({\rho}'_++\Delta_1)(1-{\rho}'_+-\Delta_1)}{4\lambda_1 n},\\
&\operatorname{Var}[\hat{\rho}'_-]= \frac{({\rho}'_++\Delta_1)(1-{\rho}'_+-\Delta_1)}{4\lambda_0 n}.
\end{aligned}    
\end{equation}
Therefore,
\begin{equation*}
\begin{aligned}
&\operatorname{Var}[\theta_i]= \leq \frac{1}{4n}, \qquad \text{  for } i=1,2,3,\\
&\operatorname{Var}[\theta_i]= \leq \frac{1}{4\lambda_1 n}, \quad \text{for } i=4,5.
\end{aligned}    
\end{equation*}

\subsection{Proof of Lemma~\ref{lemma:5}}
\label{sec:app5}
According to Eq.~(\ref{eq44}), by employing Chebyshev’s inequality, for any $\delta>0$, with probability at least $1-\delta$, we have
$$
\left| \mathbb{E}[\bar{\rho}_{+}]-\bar{\rho}_{+}\right| \leq \sqrt{\frac{\operatorname{Var}[\bar{\rho}^r_+]}{\delta}}\leq C\sqrt{(\frac{3}{4n}+\frac{1}{2\lambda_1 n})\frac{1}{R\delta}}.$$
Similarly, we could prove that, for any $\delta>0$, with probability at least $1-\delta$, 
$$
\left| \mathbb{E}[\bar{\rho}_{-}]-\bar{\rho}_{-}\right| \leq \sqrt{\frac{\operatorname{Var}[\bar{\rho}^r_-]}{\delta}}\leq C\sqrt{(\frac{3}{4n}+\frac{1}{2\lambda_1 n})\frac{1}{R\delta}}.$$
Then, for any $\delta>0$, with probability at least $1-\delta$, we have
\begin{equation}
\begin{aligned}
\|\mathbb{E}[{\bar{T}}^j]-{\bar{T}}^j\|_{1} 
& = 2\left| \mathbb{E}[\bar{\rho}_{-}]-\bar{\rho}_{-}\right|+2\left| \mathbb{E}[\bar{\rho}_{+}]-\bar{\rho}_{+}\right||\\
& \leq 4C\sqrt{(\frac{3}{4n}+\frac{1}{2\lambda_1 n})\frac{2}{R\delta}}.
\label{final2}
\end{aligned}    
\end{equation}
\subsection{Proof of Lemma~\ref{lemma:6}}
\label{sec:app6}
We consider to bound $\|\mathbb{E}[{\bar{T}}^j]-{\bar{T}}^j\|_{1}$ by employing the below theorem implied by Bernstein’s inequality~\cite{boucheron2013concentration}.
\begin{Theorem}
Let $X_1, \ldots, X_n$ be independent random variables with finite variances, and assume $\max _{1 \leq i \leq n}\left|X_i\right| \leq B$ almost surely for some constant  $B>0$. Let $V=\sum_{i=1}^n \mathbb{E}[X_i^2]$. Then, for any $\delta>0$, with probability at least $1-\delta$,
$$\left|\frac{1}{n} \sum_{i=1}^n\left(\mathbb{E}[X_i]-X_i\right)\right| \leq \frac{\sqrt{2 V \log (2 / \delta)}}{n}+\frac{2 B \log (2/\delta)}{3 n}.$$
\label{Bernstein}
\end{Theorem}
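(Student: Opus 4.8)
The plan is to recognize the claimed inequality as a standard form of Bernstein's inequality and to establish it by the Cram\'er--Chernoff exponential moment method. First I would reduce the two-sided statement to a one-sided tail bound. Writing $S_n=\sum_{i=1}^n(X_i-\mathbb{E}[X_i])$, it suffices to bound $\mathbb{P}(S_n\ge s)$ for $s>0$; applying the identical argument to $-X_i$ controls $\mathbb{P}(-S_n\ge s)$, and a union bound over the two tails---each allotted failure probability $\delta/2$---produces the two-sided statement with the $\log(2/\delta)$ factor. Dividing by $n$ at the very end converts the bound on $S_n$ into the bound on $\frac1n\sum_{i=1}^n(\mathbb{E}[X_i]-X_i)$ that is asserted.

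For the one-sided tail I would apply Markov's inequality to $e^{\lambda S_n}$ with $\lambda>0$, giving $\mathbb{P}(S_n\ge s)\le e^{-\lambda s}\,\mathbb{E}[e^{\lambda S_n}]$, and then factor the moment generating function by independence as $\mathbb{E}[e^{\lambda S_n}]=\prod_{i=1}^n\mathbb{E}[e^{\lambda(X_i-\mathbb{E}[X_i])}]$. The crux is bounding each factor. Setting $Y_i=X_i-\mathbb{E}[X_i]$, which is centered and almost surely bounded because $|X_i|\le B$, I would expand the exponential in a Taylor series, use $\mathbb{E}[Y_i]=0$ to remove the linear term, and control the higher moments through boundedness via $\mathbb{E}[|Y_i|^k]\le c^{\,k-2}\mathbb{E}[Y_i^2]$ for an appropriate constant $c$ comparable to $B$. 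Summing the resulting series yields the Bennett-type estimate $\mathbb{E}[e^{\lambda Y_i}]\le\exp\!\big(c^{-2}\mathbb{E}[Y_i^2](e^{\lambda c}-1-\lambda c)\big)$; multiplying over $i$ and using $\mathbb{E}[Y_i^2]\le\mathbb{E}[X_i^2]$ so that $\sum_i\mathbb{E}[Y_i^2]\le V$ gives $\mathbb{E}[e^{\lambda S_n}]\le\exp\!\big(c^{-2}V(e^{\lambda c}-1-\lambda c)\big)$.

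Next I would convert the Bennett exponent into the Bernstein exponent using the elementary inequality $e^u-1-u\le\frac{u^2/2}{1-u/3}$, valid for $0\le u<3$, with $u=\lambda c$, obtaining $\mathbb{E}[e^{\lambda S_n}]\le\exp\!\big(\frac{V\lambda^2/2}{1-c\lambda/3}\big)$ for $0<\lambda<3/c$. Plugging this into the Chernoff bound and minimizing $-\lambda s+\frac{V\lambda^2/2}{1-c\lambda/3}$ over $\lambda$ yields the subexponential tail $\mathbb{P}(S_n\ge s)\le\exp\!\big(-\frac{s^2/2}{V+cs/3}\big)$. Finally I would invert this tail: setting the right-hand side equal to $\delta/2$, solving the resulting quadratic for $s$, and using $\sqrt{a+b}\le\sqrt{a}+\sqrt{b}$ gives $s\le\sqrt{2V\log(2/\delta)}+\tfrac{2c}{3}\log(2/\delta)$, and dividing by $n$ reproduces the stated bound.

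The main obstacle is the moment-generating-function estimate together with the careful tracking of constants. The cleanest derivation centers the summands, but then the almost-sure bound on $Y_i$ is $2B$ rather than $B$, so I would need to verify that the moment inequality can be arranged to carry precisely the constant $B$ appearing in the statement---for instance by exploiting that a bounded centered variable cannot concentrate all its mass at an endpoint, or by absorbing the slack into the factorial weights---and to confirm that neither the Bennett-to-Bernstein passage nor the final inversion of the quadratic tail degrades the constants $2$ and $3$ that appear in $\sqrt{2V\log(2/\delta)}$ and $\frac{2B\log(2/\delta)}{3n}$.
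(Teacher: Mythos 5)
The paper never proves this statement: it is invoked as a known form of Bernstein's inequality and attributed to the concentration-inequality literature (Boucheron--Lugosi--Massart), so you are supplying a proof where the paper supplies only a citation. Your architecture is the standard one — Chernoff bound, Bennett-type MGF estimate, the passage $e^u-1-u\le \frac{u^2/2}{1-u/3}$, then inversion of the quadratic tail — and your two-sided reduction with failure probability $\delta/2$ per tail, as well as the final inversion $t\le\sqrt{2VL}+\tfrac{2}{3}BL$ with $L=\log(2/\delta)$, are handled correctly.

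The gap is the one you flag in your last paragraph but do not close, and it is not minor bookkeeping: centering first and bounding $\mathbb{E}[|Y_i|^k]\le c^{k-2}\mathbb{E}[Y_i^2]$ forces $c=2B$ (since $|Y_i|\le 2B$ is all you can guarantee), and after the Bennett-to-Bernstein conversion and the inversion this yields a linear term $\tfrac{4}{3}BL/n$, twice what the theorem claims; neither of your suggested repairs (endpoint-mass considerations, absorbing slack into factorial weights) produces the missing factor of two. The standard fix is to avoid centering in the moment estimate altogether. With $\phi(u)=e^u-u-1$, use $\log u\le u-1$ to write
$\log\mathbb{E}\bigl[e^{\lambda(X_i-\mathbb{E}X_i)}\bigr]=-\lambda\mathbb{E}[X_i]+\log\mathbb{E}\bigl[e^{\lambda X_i}\bigr]\le\mathbb{E}\bigl[e^{\lambda X_i}-\lambda X_i-1\bigr]=\mathbb{E}[\phi(\lambda X_i)]$,
and then exploit that $\phi(u)/u^2$ is nondecreasing on all of $\mathbb{R}$ together with the one-sided bound $X_i\le B$ to get $\phi(\lambda X_i)\le \frac{X_i^2}{B^2}\phi(\lambda B)$. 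Summing over $i$ gives $\log\mathbb{E}[e^{\lambda S_n}]\le \frac{V}{B^2}\phi(\lambda B)$ with exactly the constant $B$ and exactly the variance proxy $V=\sum_i\mathbb{E}[X_i^2]$ of the statement — indeed, the appearance of raw second moments rather than variances in $V$ is the telltale sign that the intended argument is the uncentered one. With this lemma in place, the remaining steps of your proposal go through verbatim and deliver the stated constants.
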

Since we can get
\begin{equation}
\begin{aligned}
\|\mathbb{E}[{\bar{T}}^j]-{\bar{T}}^j\|_{1} &= 2\left| \mathbb{E}[\bar{\rho}_{-}]-\bar{\rho}_{-}\right|+2\left| \mathbb{E}[\bar{\rho}_{+}]-\bar{\rho}_{+}\right|\\
& = 2\left| \mathbb{E}[\sum_{r=1}^R\frac{1}{R}\hat{\rho}^r_{-}]-\sum_{r=1}^R\frac{1}{R}\hat{\rho}^r_{-}\right|+2\left| \mathbb{E}[\sum_{r=1}^R\frac{1}{R}\hat{\rho}^r_{+}]-\sum_{r=1}^R\frac{1}{R}\hat{\rho}^r_{+}\right| \\
& = 2\left| \sum_{r=1}^R\frac{1}{R}(\mathbb{E}[\hat{\rho}^r_{-}]-\hat{\rho}^r_{-})\right|
+2 \left| \sum_{r=1}^R\frac{1}{R}(\mathbb{E}[\hat{\rho}^r_{+}]-\hat{\rho}^r_{+})\right|,
\end{aligned}    
\end{equation}
where we consider to bound $\sum_{r=1}^R \mathbb{E}[(\hat{\rho}^r_{+})^2]$ and $\sum_{r=1}^R \mathbb{E}[(\hat{\rho}^r_{-})^2]$.

According to Eq.~(\ref{eq44}), we have
\begin{equation}
\begin{aligned}
\sum_{r=1}^R \mathbb{E}[(\hat{\rho}^r_{+})^2]&=\sum_{r=1}^R \left(\operatorname{Var}[\hat{\rho}^r_+] -\mathbb{E}[(\hat{\rho}^r_{+})]^2\right)\leq \sum_{r=1}^R (\operatorname{Var}[\hat{\rho}^r_+]) \\
&\leq \sum_{r=1}^R C^2(\frac{3}{4n}+\frac{1}{2\lambda_1 n}) = RC^2(\frac{3}{4n}+\frac{1}{2\lambda_1 n}).
\end{aligned}    
\end{equation}

After that, since $\max_r|\hat{\rho}^r_{+}| \leq 1$, by employing Theorem~\ref{Bernstein}, for any $\delta>0$, with probability at least $1-\delta$, we have
$$
\left|\frac{1}{R} \sum_{r=1}^R\left(\mathbb{E}[\hat{\rho}^r_{+}]-\hat{\rho}^r_{+}\right)\right| \leq C\sqrt{\frac{2\log (2 / \delta)}{R}(\frac{3}{4n}+\frac{1}{2\lambda_1 n})}+\frac{2 \log (2/\delta)}{3 R}.$$
Similarly, we could prove that 
$$
\left|\frac{1}{R} \sum_{r=1}^R\left(\mathbb{E}[\hat{\rho}^r_{-}]-\hat{\rho}^r_{-}\right)\right| \leq C\sqrt{\frac{2\log (2 / \delta)}{R}(\frac{3}{4n}+\frac{1}{2\lambda_1 n})}+\frac{2 \log (2/\delta)}{3 R}.$$
Then, for any $\delta>0$, with probability at least $1-\delta$, we have
\begin{equation*}
\begin{aligned}
\|\mathbb{E}[{\bar{T}}^j]-{\bar{T}}^j\|_{1} 
& = 2\left| \sum_{r=1}^R\frac{1}{R}(\mathbb{E}[\hat{\rho}^r_{-}]-\hat{\rho}^r_{-})\right|
+2 \left| \sum_{r=1}^R\frac{1}{R}(\mathbb{E}[\hat{\rho}^r_{+}]-\hat{\rho}^r_{+})\right|\\
& \leq 4C\sqrt{\frac{2\log (4 / \delta)}{R}(\frac{3}{4n}+\frac{1}{2\lambda_1 n})}+\frac{8 \log (4/\delta)}{3 R}.
\label{final2}
\end{aligned}    
\end{equation*}
\section{Proof of Theorem~\ref{thm:main}}
\label{proof_5}
For applying the Reweight algorithm~\cite{Liu2016TPAMI} in noisy multi-label learning, we have 
\begin{equation}
\begin{aligned}
\bar{R}_{n, w}(\{{T}^{j}\}^q_{j=1}, {f})
&=\frac{1}{n} \sum_{i=1}^{n} \sum_{j=1}^{q} \frac{\mathbbm{\hat{P}}( {Y}^{j}=\bar{y}^{j}_{i} \mid {X}={x}_i)}{\mathbbm{\hat{P}}( \bar{Y}^{j}=\bar{y}^{j}_{i} \mid {X}={x}_i)} \ell\left(f_j\left({x}_{i}\right), \bar{y}^{j}_{i}\right)\\
&=\frac{1}{n}\sum_{i=1}^{n} \sum_{j=1}^{q}\frac{[g'_j({x}_i)]_{\bar{y}^j_i}}{[({{{T}}^j)}^\top{g'}_j({x}_i)]_{\bar{y}^j_i}}\ell\left(f_j\left({x}_{i}\right), \bar{y}^{j}_{i}\right),
\end{aligned}
\end{equation}
where $g'_j({x})=[1-g_j({x}), g_j({x})]^\top$, $f_j({x})=\mb{I}[g_j({x})>0.5]$, and $\mb{I}[.]$ is the indicator function which takes 1 if the identity index is true and 0 otherwise.

Let $S=\{({x}_1,\bar{{y}}_1),\ldots, ({x}_i,\bar{{y}}_i), \ldots, ({x}_n,\bar{{y}}_n)\}$, $S^i=\{({x}_1,\bar{{y}}_1),\ldots,({x}_{i-1},\bar{{y}}_{i-1}), ({x}'_i,\bar{{y}}'_i), ({x}_{i+1},\bar{{y}}_{i+1}), \ldots, ({x}_n,\bar{{y}}_n)\}$, and
\begin{equation}
\Phi(S)=\sup_{{f}}(\bar{R}_{n,w}(\{{{T}^j}\}_{j=1}^q,{f})-\mathbb{E}_S[\bar{R}_{n,w}(\{{{T}^j}\}_{j=1}^q,{f})]).
\end{equation}

\begin{Lemma}\label{lemma:1}
Let  $\hat{{f}}$ be the learned classifier, and $U=\frac{1}{\min_j(1-\max(\mathbbm{P}(\bar{Y}^{j}=0 \mid Y^{j}=1),\mathbbm{P}(\bar{Y}^{j}=1 \mid Y^{j}=0)))}$.
For any $\delta>0$, with probability at least $1-\delta$, we have
\begin{align}
\mathbb{E}[\bar{R}_{n,w}(\{{{T}^j}\}_{j=1}^q,{\hat{f}})]- \bar{R}_{n,w}(\{{{T}^j}\}_{j=1}^q,{\hat{f}}) \leq \mathbb{E}[\Phi(S)]+UM\sqrt{\frac{(\log{1/\delta)}}{2n}}.
\nonumber
\end{align}
\end{Lemma}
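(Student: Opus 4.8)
The plan is to prove Lemma~\ref{lemma:1} by the classical bounded-differences (McDiarmid) argument: reduce the generalization gap of the learned classifier to a uniform deviation $\Phi(S)$, and then concentrate that deviation around its mean $\mathbb{E}[\Phi(S)]$.

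First I would note that for any \emph{fixed} $f$ the noisy examples in $S$ are i.i.d., so $\mathbb{E}_S[\bar{R}_{n,w}(\{{T}^j\}_{j=1}^q, f)]$ equals the population weighted risk; in particular, for the realized classifier this is the test error $\mathbb{E}[\bar{R}_{n,w}(\{{T}^j\}_{j=1}^q, \hat{f})]$ on the left-hand side. Since the learned classifier $\hat{f}$ lies in the hypothesis space ${F}$ over which $\Phi(S)$ takes its supremum, the generalization gap at $\hat{f}$ is dominated by the uniform deviation, i.e. $\mathbb{E}[\bar{R}_{n,w}(\hat{f})] - \bar{R}_{n,w}(\hat{f}) \le \Phi(S)$. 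This reduces the claim to controlling $\Phi(S) - \mathbb{E}[\Phi(S)]$ with high probability.

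Second, I would verify that $\Phi$ has the bounded-differences property. Passing from $S$ to $S^i$ (replacing only the $i$-th example) changes $\bar{R}_{n,w}$ solely through its $i$-th summand $\frac{1}{n}\sum_{j=1}^q \frac{\mathbbm{\hat{P}}(Y^j=\bar{y}^j_i\mid X=x_i)}{\mathbbm{\hat{P}}(\bar{Y}^j=\bar{y}^j_i\mid X=x_i)}\,\ell(f_j(x_i),\bar{y}^j_i)$. The decisive estimate is that the importance-reweighting factor is uniformly bounded: writing $\mathbbm{\hat{P}}(\bar{Y}^j=k\mid X=x)=\sum_{i=0}^1 T^j_{ik}\,\mathbbm{\hat{P}}(Y^j=i\mid X=x)\ge T^j_{kk}\,\mathbbm{\hat{P}}(Y^j=k\mid X=x)$ and using Assumption~\ref{assum1} to guarantee the diagonal entry $T^j_{kk}=1-\max(\rho^j_+,\rho^j_-)>0$, the factor is at most $U=1/\min_j(1-\max(\mathbbm{P}(\bar{Y}^j=0\mid Y^j=1),\mathbbm{P}(\bar{Y}^j=1\mid Y^j=0)))$. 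Combined with the uniform loss bound $\mathcal{L}\le M$ (so that the whole weighted per-example term is at most $U\mathcal{L}\le UM$), this yields $|\Phi(S)-\Phi(S^i)|\le UM/n$ for every $i$, since the supremum over ${F}$ of a difference of risks is controlled by the bound on the single perturbed term.

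Third, I would invoke McDiarmid's inequality~\cite{boucheron2013concentration} with constants $c_i=UM/n$. As $\sum_{i=1}^n c_i^2=(UM)^2/n$, it gives $\mathbbm{P}(\Phi(S)-\mathbb{E}[\Phi(S)]\ge t)\le \exp(-2nt^2/(UM)^2)$; equating the right-hand side with $\delta$ produces $t=UM\sqrt{\log(1/\delta)/(2n)}$. Hence, with probability at least $1-\delta$, $\Phi(S)\le \mathbb{E}[\Phi(S)]+UM\sqrt{\log(1/\delta)/(2n)}$, and chaining this with the first step completes the proof. I expect the main obstacle to be the second step: pinning down the exact bounded-differences constant $UM/n$. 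This hinges on the uniform bound $U$ on the importance weight, extracted from the structure of the class-dependent transition matrices together with Assumption~\ref{assum1}, and on checking that the sum over the $q$ labels still telescopes to a single $UM/n$ perturbation; the dependence on $q$, the Lipschitz constant $L$, and the neural-network complexity enters only later, through the Rademacher bound on $\mathbb{E}[\Phi(S)]$ that yields the first term of Theorem~\ref{thm:main}. Once this constant is secured, the concentration step is routine.
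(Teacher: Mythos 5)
Your proposal is correct and follows essentially the same route as the paper: reduce the gap at $\hat{f}$ to the uniform deviation $\Phi(S)$, establish the bounded-difference constant $UM/n$ by bounding the importance weight by $U$ and the loss by $M$, and apply McDiarmid's inequality with exactly the stated constants. The only (cosmetic) difference is how the weight bound is obtained — you lower-bound the denominator $\mathbbm{\hat{P}}(\bar{Y}^j=k\mid X=x)\ge T^j_{kk}\,\mathbbm{\hat{P}}(Y^j=k\mid X=x)$ in one line, whereas the paper parametrizes the weight by $p=g_j(x)$ and argues monotonicity via its derivative; both yield the same $U$.
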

The proof of Lemma \ref{lemma:1} is provided in Appendix \ref{sec:app1}. Using the same tricks following~\cite{XiaLW00NS19}, we can upper bound $\mathbb{E}[\Phi(S)]$ by the following lemma.


\begin{Lemma} \label{lemma:2}
\begin{align}
\mathbb{E}[\Phi(S)] \leq 2\mathbb{E}\left[\sup_{{f}}\frac{1}{n}\sum_{i=1}^{n}\sum_{j=1}^q\sigma_i\ell(f_j({x}_i),\bar{y}^j_i)\right],
\nonumber
\end{align}
where $\sigma_1,\ldots,\sigma_n$ are i.i.d. Rademacher random variables.
\end{Lemma}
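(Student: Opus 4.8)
The plan is to prove Lemma~\ref{lemma:2} by the classical ghost-sample symmetrization argument, with the importance-reweighted per-example terms of $\bar{R}_{n,w}$ playing the role of the summands that get symmetrized. Writing $\psi_i({f})=\sum_{j=1}^{q}\frac{\mathbbm{\hat{P}}({Y}^{j}=\bar{y}^{j}_{i}\mid{X}={x}_i)}{\mathbbm{\hat{P}}(\bar{Y}^{j}=\bar{y}^{j}_{i}\mid{X}={x}_i)}\,\ell(f_j({x}_i),\bar{y}^{j}_{i})$ for the $i$-th summand, so that $\bar{R}_{n,w}(\{{T}^j\}_{j=1}^q,{f})=\frac{1}{n}\sum_{i=1}^{n}\psi_i({f})$, the key observation is that each $\psi_i$ is a single deterministic function of the pair $({x}_i,\bar{{y}}_i)$ and of ${f}$. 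Even though the reweighting factor depends on ${f}$ through $g_j$, the whole summand is symmetrized as one unit, so the base loss and the weight never have to be separated at this stage; the ``$\ell$'' on the right-hand side of the lemma is read as this reweighted per-class loss term.

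First I would introduce an independent ghost sample $S'=\{({x}'_i,\bar{{y}}'_i)\}_{i=1}^n$ drawn i.i.d.\ from $\bar{D}$, and rewrite the population term as $\mathbb{E}_S[\bar{R}_{n,w}(\{{T}^j\}_{j=1}^q,{f})]=\mathbb{E}_{S'}\big[\frac{1}{n}\sum_{i=1}^{n}\psi'_i({f})\big]$, where $\psi'_i$ is $\psi_i$ evaluated on $({x}'_i,\bar{{y}}'_i)$. Substituting into the definition of $\Phi(S)$ and using that the supremum of an expectation is bounded by the expectation of the supremum, I obtain
\begin{align}
\mathbb{E}[\Phi(S)] \leq \mathbb{E}_{S,S'}\Big[\sup_{{f}}\frac{1}{n}\sum_{i=1}^{n}\big(\psi_i({f})-\psi'_i({f})\big)\Big].
\nonumber
\end{align}

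Next I would symmetrize. Since $({x}_i,\bar{{y}}_i)$ and $({x}'_i,\bar{{y}}'_i)$ are i.i.d.\ and enter the difference $\psi_i-\psi'_i$ antisymmetrically, exchanging the two members of any subset of index pairs leaves the joint law invariant; hence inserting i.i.d.\ Rademacher signs $\sigma_1,\dots,\sigma_n$ does not change the expectation, giving $\mathbb{E}_{S,S'}\big[\sup_{f}\frac{1}{n}\sum_i(\psi_i-\psi'_i)\big]=\mathbb{E}_{S,S',\sigma}\big[\sup_{f}\frac{1}{n}\sum_i\sigma_i(\psi_i-\psi'_i)\big]$. Bounding the supremum of a difference by the sum of two suprema and using that $-\sigma_i$ has the same law as $\sigma_i$, the two resulting pieces are equal, each to $\mathbb{E}_{S,\sigma}\big[\sup_{f}\frac{1}{n}\sum_{i}\sigma_i\psi_i({f})\big]$; this produces the factor $2$ and the Rademacher complexity of the reweighted summands, which is exactly the right-hand side of Lemma~\ref{lemma:2}.

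The symmetrization steps are routine, so the part to handle with care is the bookkeeping of the importance weights: I must keep the weight and the loss fused inside $\psi_i$ throughout and invoke exchangeability for $\psi_i$ as a whole, rather than attempting to symmetrize the base loss $\ell$ in isolation (which is impossible, since the weight is itself a function of the data pair and of ${f}$). Once this Rademacher bound on the reweighted summands is in place, the remaining post-lemma steps following~\cite{XiaLW00NS19} bound the weights through $U$, apply Talagrand's contraction to strip off the $L$-Lipschitz loss, sum over the $q$ class coordinates, and substitute the deep-network Rademacher complexity bound of~\cite{golowich2018size} to reach the $\mathcal{O}(1/\sqrt{n})$ rate of Theorem~\ref{thm:main}.
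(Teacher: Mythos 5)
Your symmetrization step reproduces only the first half of the paper's argument: the paper likewise invokes the standard ghost-sample/Rademacher trick to get $\mathbb{E}[\Phi(S)]\leq 2\,\mathbb{E}\bigl[\sup_{f}\frac{1}{n}\sum_{i=1}^{n}\sum_{j=1}^{q}\sigma_i\, w_{ij}(f)\,\ell(f_j({x}_i),\bar{y}^j_i)\bigr]$, where $w_{ij}(f)=\frac{[g'_{j}({x}_i)]_{\bar{y}^j_i}}{[({T}^j)^\top g'_{j}({x}_i)]_{\bar{y}^j_i}}$ is the importance weight. But that is where your proof stops, and it is not where the lemma stops. In the lemma as stated --- and as it is consumed downstream --- $\ell$ is the \emph{unweighted} base loss: Lemma~\ref{lemma:3} applies Talagrand contraction to exactly this $\ell$ (shown to be 1-Lipschitz in $h_j$), and Theorem~\ref{thm:main} ends up with the constant $2BqL(\sqrt{2d\log 2}+1)\Pi_{i=1}^{d}M_i/\sqrt{n}$, with no factor reflecting the weights. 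Your move of ``reading $\ell$ as the reweighted per-class loss term'' redefines the statement rather than proving it.

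The missing --- and genuinely nontrivial --- part is the paper's second step: $\mathbb{E}\bigl[\sup_{f}\frac{1}{n}\sum_{i}\sum_{j}\sigma_i w_{ij}(f)\ell(f_j({x}_i),\bar{y}^j_i)\bigr]\leq\mathbb{E}\bigl[\sup_{f}\frac{1}{n}\sum_{i}\sum_{j}\sigma_i \ell(f_j({x}_i),\bar{y}^j_i)\bigr]$, i.e., stripping the $f$-dependent weights from inside the Rademacher average. The paper proves this by a coordinate-by-coordinate peeling argument in the style of the proof of the contraction lemma: condition on $\sigma_1,\ldots,\sigma_{n-1}$, take $\epsilon$-near-maximizers $f^1,f^2$ for the two signs of $\sigma_n$, use the uniform bound $w_{ij}(f)\leq U$ to dominate the weighted difference at coordinate $n$ by a signed unweighted term, and iterate over all coordinates. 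Your plan instead defers weight handling to ``post-lemma steps'' that ``bound the weights through $U$,'' but in the paper's architecture the weights are gone after Lemma~\ref{lemma:2}, and $U$ surfaces only in the McDiarmid step of Lemma~\ref{lemma:1}. Moreover, if the weight stays fused with the loss as you propose, the contraction step would require Lipschitz continuity of $h_j\mapsto w_{ij}(f)\,\ell(f_j({x}_i),\bar{y}^j_i)$, where $w_{ij}$ itself depends on $h_j$ through the sigmoid --- an argument you do not supply, and one that would in any case yield a different constant than the theorem asserts. So the proposal establishes a different inequality and leaves the crux of Lemma~\ref{lemma:2} unaddressed.
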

Proof of Lemma \ref{lemma:2} is provided in Appendix \ref{sec:app2}.

Recall that $f_j({x})=\mathbb{I}\left[g_j({x})>0.5\right], j=1,\ldots,q,$ is the classifier, where $g_j$ is the output of the sigmoid function, \ie, $g_j({x})=1/(1+\exp{(-h_j({x}))}), j=1,\ldots,q$, and ${h}({x})$ is defined by a $d$-layer neural network, \ie, ${h}: {x}\mapsto {W}_d\sigma_{d-1}({W}_{d-1}\sigma_{d-2}(\ldots \sigma_1({W}_1{x})))\in\mathbb{R}^q$,  ${W}_1,\ldots,{W}_d$ are the parameter matrices, and $\sigma_1,\ldots,\sigma_{d-1}$ are activation functions. To further upper bound the Rademacher complexity, we need to consider the Lipschitz continuous property of the loss function w.r.t. to ${h}_j({x})$. To avoid more assumptions, We discuss the widely used {binary cross-entropy loss}, \ie, 
\begin{equation}
\mathcal{L}({{f}}({X}), {Y}) =\sum_{j=1}^{q} \ell(f_j({X}),Y^{j}) = \sum_{j=1}^{q} Y^{j}\log(g_j({X}))+(1-Y^{j})\log(1-g_j({X})).
\end{equation}

Then, we can further upper bound the Rademacher complexity by the following lemma.
\begin{Lemma} \label{lemma:3}
\begin{align}
    \mathbb{E}\left[\sup_{f}\frac{1}{n}\sum_{i=1}^{n}\sum_{j=1}^{q}\sigma_i\ell(f_j({x}_i),\bar{y}^j_i)\right]\leq qL\mathbb{E}\left[ \sup_{{h}\in {H} }\frac{1}{n}\sum_{i=1}^{n}\sigma_i h({x}_i)\right],
    \nonumber
\end{align}
where $H$ is the function class induced by the deep neural network.
\end{Lemma}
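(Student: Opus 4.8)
The plan is to prove Lemma~\ref{lemma:3} by stripping the per-sample loss off the network outputs one class at a time: this reduces the multi-label empirical Rademacher complexity on the left to $q$ copies of the scalar-valued network Rademacher complexity on the right, which is precisely where the factor $qL$ originates.

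First I would decouple the sum over the $q$ classes. Because the single Rademacher sign $\sigma_i$ multiplies the entire per-sample loss $\sum_{j=1}^q\ell(f_j(x_i),\bar y^j_i)$, I can interchange the two finite sums and bound the supremum of a sum by the sum of suprema,
\begin{align*}
\mathbb{E}\!\left[\sup_{f}\frac{1}{n}\sum_{i=1}^{n}\sum_{j=1}^{q}\sigma_i\ell(f_j(x_i),\bar y^j_i)\right]
\le \sum_{j=1}^{q}\mathbb{E}\!\left[\sup_{f_j}\frac{1}{n}\sum_{i=1}^{n}\sigma_i\ell(f_j(x_i),\bar y^j_i)\right].
\end{align*}
This inequality is valid because relaxing the joint constraint on the weight-sharing tuple $(f_1,\ldots,f_q)$ to independent constraints on each coordinate can only enlarge the supremum, and after the interchange the $j$-th summand depends on $f_j$ alone.

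Second, for each fixed $j$ and fixed labels the map from the scalar network output to $\ell(\cdot,\bar y^j_i)$ is $L$-Lipschitz by assumption (for the binary cross-entropy composed with the sigmoid its derivative with respect to the pre-activation $h_j$ equals $g_j-\bar y^j\in(-1,1)$, so one may take $L=1$). Applying Talagrand's contraction lemma to remove the loss gives
\begin{align*}
\mathbb{E}\!\left[\sup_{f_j}\frac{1}{n}\sum_{i=1}^{n}\sigma_i\ell(f_j(x_i),\bar y^j_i)\right]
\le L\,\mathbb{E}\!\left[\sup_{h_j}\frac{1}{n}\sum_{i=1}^{n}\sigma_i h_j(x_i)\right],
\end{align*}
where any additive constant needed to meet the $\phi(0)=0$ normalization of the contraction lemma vanishes in expectation since $\mathbb{E}[\sigma_i]=0$. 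Each coordinate output $h_j$ lies in the common scalar hypothesis class $H$ (the budget $\|W_d\|_F\le M_d$ bounds every row of $W_d$ identically), so each term is at most $L\,\mathbb{E}[\sup_{h\in H}\frac{1}{n}\sum_i\sigma_i h(x_i)]$; summing the $q$ identical bounds yields the claimed $qL$ factor. This meshes with the remainder of the argument: combining the factor $2$ of Lemma~\ref{lemma:2}, the $qL$ obtained here, and the scalar network bound $\mathbb{E}[\sup_{h\in H}\frac{1}{n}\sum_i\sigma_i h(x_i)]\le B(\sqrt{2d\log2}+1)\prod_{i=1}^{d}M_i/\sqrt{n}$ of \cite{golowich2018size} reproduces exactly the first term of Theorem~\ref{thm:main}.

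The main obstacle I anticipate lies in making the two reductions feeding the contraction rigorous rather than in the contraction itself. One must argue carefully that the projection of the jointly parameterized, weight-sharing network onto its $j$-th output is genuinely contained in the scalar class $H$, so that replacing $\sup_f$ first by $\sup_{f_j}$ and then by $\sup_{h\in H}$ is legitimate; and one must fix the precise meaning of $\ell(f_j(x),\bar y^j)$ here, namely that the loss is evaluated on the soft output $g_j$ rather than the binarized prediction $f_j$, and confirm its Lipschitz constant with respect to $h_j$. The importance-reweighting factor does not appear in this lemma; its boundedness enters the analysis only through the constant $U$ in the bounded-differences step of Lemma~\ref{lemma:1}. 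Once these points are settled, the summation over $j$ and the contraction step are routine.
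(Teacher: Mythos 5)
Your proposal is correct and follows essentially the same route as the paper's proof: decouple the supremum over the $q$ classes into a sum of per-class suprema, identify each coordinate $f_j$ with a scalar network output $h_j\in H$, and apply Talagrand's contraction lemma using the fact that the binary cross-entropy composed with the sigmoid has derivative $g_j-\bar y^j$ bounded by $1$ in absolute value with respect to $h_j$. Your additional remarks (the $\phi(0)=0$ normalization being harmless and the rows of $W_d$ sharing the Frobenius budget) only make explicit points the paper leaves implicit.
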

The proof of Lemma \ref{lemma:3} is provided in Appendix \ref{sec:app3}. Note that $\mathbb{E}\left[ \sup_{h\in H }\frac{1}{n}\sum_{i=1}^{n}\sigma_ih({x}_i)\right]$ measures the hypothesis complexity of deep neural networks, which can be bounded by the following theorem.
\begin{Theorem}\label{thm:network}
Assume the Frobenius norm of the weight matrices ${W}_1,\ldots,{W}_d$ are at most $M_1,\ldots, M_d$. Let the activation functions be 1-Lipschitz, positive-homogeneous, and applied element-wise (such as the ReLU). Let ${x}$ is upper bounded by B, \ie, for any ${x}
\in \mathcal{X}$, $\|{x}\|\leq B$. Then,
\begin{align}
\mathbb{E}\left[ \sup_{h\in H }\frac{1}{n}\sum_{i=1}^{n}\sigma_ih({x}_i)\right]\leq \frac{B(\sqrt{2d\log2}+1)\Pi_{i=1}^{d}M_i}{\sqrt{n}}.
\nonumber
\end{align}
\end{Theorem}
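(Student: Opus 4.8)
The plan is to follow the layer-by-layer ``peeling'' argument of Golowich, Rakhlin, and Shamir~\cite{golowich2018size}, since the left-hand side is exactly the (empirical) Rademacher complexity of the $d$-layer network class $H$ and all hypotheses of their bound are in force here: Frobenius-norm constraints $M_i$, $1$-Lipschitz positive-homogeneous activations, and inputs with $\|{x}\|\le B$. First I would peel off the top linear layer. Writing $h({x})={W}_d\phi_{d-1}({x})$, where $\phi_{d-1}$ is the representation produced by the first $d-1$ layers and $\|{W}_d\|_F\le M_d$, the duality between the Frobenius and Euclidean norms gives $\sup_{{W}_d}\sum_i\sigma_i h({x}_i)\le M_d\,\bigl\|\sum_i\sigma_i\phi_{d-1}({x}_i)\bigr\|$, which reduces the task to controlling the Rademacher average of the \emph{norm} of the $(d-1)$-layer representation.

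The crucial device is the log-exp linearization: for any $\lambda>0$, Jensen's inequality yields
\begin{equation}
\mathbb{E}_\sigma\Bigl[\sup_{W} \bigl\|\textstyle\sum_i\sigma_i\phi({x}_i)\bigr\|\Bigr]\le \frac{1}{\lambda}\log \mathbb{E}_\sigma\Bigl[\sup_{W} \exp\bigl(\lambda\bigl\|\textstyle\sum_i\sigma_i\phi({x}_i)\bigr\|\bigr)\Bigr].
\nonumber
\end{equation}
Working inside the logarithm, I would peel the remaining layers one at a time. At each layer the positive-homogeneity of the activation lets me factor out the corresponding bound $M_k$, and a contraction step (together with the $\pm\sigma_i$ symmetrization) passes the Rademacher average through the nonlinearity at the cost of a constant factor $2$ inside the exponent per layer. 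After peeling all $d-1$ hidden layers one is left with $\mathbb{E}_\sigma\exp\bigl(\lambda\,\prod_{i}M_i\,\|\sum_i\sigma_i {x}_i\|\bigr)$ multiplied by an accumulated $2^{d}$-type factor. Since $\|\sum_i\sigma_i {x}_i\|$ is a sub-Gaussian functional of the $\sigma_i$ with $\mathbb{E}\|\sum_i\sigma_i {x}_i\|\le\sqrt{\sum_i\|{x}_i\|^2}\le B\sqrt{n}$, this expectation is controlled by a moment-generating bound of the form $\exp\bigl(c\,\lambda^2(\prod_i M_i)^2 B^2 n\bigr)$.

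Finally I would substitute back, take the logarithm, divide by $\lambda$, and optimize the free parameter $\lambda$ to balance the additive $(d\log 2)/\lambda$ term against the $\lambda(\prod_i M_i)^2 B^2 n$ term; the optimal $\lambda$ produces the $\sqrt{2d\log 2}$ factor, the additive $+1$ from the base-case Rademacher average, and the $1/\sqrt{n}$ rate, so that after the $\tfrac1n$ normalization one recovers the stated bound. The main obstacle, and precisely the point of the argument, is ensuring that the per-layer cost enters \emph{additively inside the logarithm} rather than multiplicatively: a naive peeling accumulates a product of $d$ norm factors and incurs an exponential-in-$d$ dependence, whereas the log-exp trick combined with positive-homogeneity converts this into the benign $\sqrt{d}$ dependence. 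Because \ref{thm:network} coincides with an existing result, in the paper I would simply invoke~\cite{golowich2018size} directly; the sketch above records how that bound is established.
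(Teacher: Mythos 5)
Your proposal is correct and takes exactly the same route as the paper: Theorem~\ref{thm:network} is not proved in the paper at all, but is invoked directly as Theorem 1 of \cite{golowich2018size}, which is precisely what you conclude by doing. Your sketch of the layer-peeling argument with the log-exp linearization, the per-layer factor of $2$ entering additively inside the logarithm, the sub-Gaussian control of $\|\sum_i\sigma_i {x}_i\|$, and the optimization over $\lambda$ is a faithful summary of how that cited result is established, so nothing further is needed.
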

The proof of Theorem~\ref{thm:network} can be found in \cite{golowich2018size} (Theorem 1 therein). Theorem~\ref{thm:main} follows by combining Lemmas 1, 2, 3, and Theorem \ref{thm:network}.
\subsection{Proof of Lemma~\ref{lemma:1}}
\label{sec:app1}
We employ McDiarmid's concentration inequality \cite{boucheron2013concentration} to prove the lemma. We first check the bounded difference property of $\Phi(S)$, \ie,
\begin{align}
\Phi(S)-\Phi(S^i)  \leq \sup_{{f}}\frac{1}{n}\sum_{j=1}^q\left(\frac{[g'_j({x}_i)]_{\bar{y}^j_i}\ell(f_j({x}_i),\bar{y}^j_i)}{[({{T}}^j)^\top g'_j({x}_i)]_{\bar{y}^j_i}}-\frac{[g'_j({x}'_i)]_{\bar{y}'^{j}_i}\ell(f_j({x}'_i),\bar{y}'^{j}_i)}{[({{T}}^j)^\top g'_j({x}'_i)]_{\bar{y}'^{j}_i}}\right).
\end{align}
Before further upper bounding the above difference, 
we show that the importance weight $\frac{[g'_j({{x}})]_{\bar{y}^j}}{[({{T}}^j)^{\top} g'_j({x})]_{\bar{y}^j}}$ is upper bounded by $U=\frac{1}{\min_j(1-\max(\mathbbm{P}(\bar{Y}^{j}=0 \mid Y^{j}=1),\mathbbm{P}(\bar{Y}^{j}=1 \mid Y^{j}=0)))}$ for any $({x},\bar{{y}}^j)$ and $g'^\prime$. Specifically,
if $\bar{y}^j = 0$ and let $\mathbbm{P}(\bar{Y}^{j}=1 \mid Y^{j}=0)=\rho_-$, $g_{j}({x})=p$, we have
\begin{align}
\scriptsize
&w=\frac{[g'_j({{x}})]_{\bar{y}^j}}{[({{T}}^j)^{\top} g'_j({x})]_{\bar{y}^j}}=\frac{[g'_{j}({x})]_0}{[({{T}}^j)^{\top} g'_j({x})]_{0}}=\frac{(1-p)}{(1-p)(1-\rho_-)+p\rho_{-}}
\end{align}
Since $\frac{\mathrm{d}w}{\mathrm{d}p}=\frac{-\rho_{-}}{[(1-p)(1-\rho_-)+p\rho_{-}]^2}<0$, then $w\leq w|_{p=0}=\frac{1}{1-\rho_-}$, \ie,
\begin{align}
\frac{[g'_j({{x}})]_{\bar{y}^j}}{[({{T}}^j)^{\top} g'_j({x})]_{\bar{y}^j}}\leq \frac{1}{1-\mathbbm{P}(\bar{Y}^{j}=1 \mid Y^{j}=0)}, \text{when } \bar{y}^j = 0.\label{u1}
\end{align}
Similarly, we could get that \begin{align}
\frac{[g'_j({{x}})]_{\bar{y}^j}}{[({{T}}^j)^{\top} g'_j({x})]_{\bar{y}^j}}\leq \frac{1}{1-\mathbbm{P}(\bar{Y}^{j}=0 \mid Y^{j}=1)}, \text{when } \bar{y}^j = 1.\label{u2}
\end{align}
Hence, according to Eq.~(\ref{u1}) and~(\ref{u2}),  for any $({x},\bar{{y}}^j)$ and $g'^\prime$, 
$\frac{[g'_j({{x}})]_{\bar{y}^j}}{[({{T}}^j)^{\top} g'_j({x})]_{\bar{y}^j}}\leq\frac{1}{\min_j(1-\max(\mathbbm{P}(\bar{Y}^{j}=0 \mid Y^{j}=1),\mathbbm{P}(\bar{Y}^{j}=1 \mid Y^{j}=0)))}$.

Then, we can conclude that the weighted loss is upper bounded by $UM$ and that
\begin{align}
\Phi(S)-\Phi(S^i)&\leq \frac{U}{n} \sup_{{f}} \sum_{j=1}^q \ell(f_j({x}_i),\bar{y}^j_i) \nonumber \\
&\leq \frac{U}{n}\sup_{{f}}\mathcal{L}({f}({x}),\bar{{y}})  \leq  \frac{UM}{n}.
\end{align}
Similarly, we could prove that $\Phi(S^i)-\Phi(S)\leq \frac{UM}{n}$.

By employing McDiarmid's concentration inequality, for any $\delta>0$, with probability at least $1-\delta$, we have
\begin{equation}
\Phi(S)-\mathbb{E}[\Phi(S)]\leq UM\sqrt{\frac{\log(1/\delta)}{2n}}.
\end{equation}

\subsection{Proof of Lemma~\ref{lemma:2}}
\label{sec:app2}
Using the same trick to derive Rademacher complexity \cite{bartlett2002rademacher}, we have
\begin{align}
 \mathbb{E}[\Phi(S)]\leq 2\mathbb{E}\left[\sup_{{f}}\frac{1}{n}\sum_{i=1}^{n}\sum_{j=1}^q\sigma_i\frac{[g'_{j}({x}_i)]_{\bar{y}^j_i}}{[({{T}}^j)^\top g'_{j}({x}_i)]_{\bar{y}^j_i}}
\ell(f_j({x}_i),\bar{y}^j_i)\right],
\nonumber
\end{align}
where $\sigma_1,\ldots,\sigma_n$ are i.i.d. Rademacher random variables. 

Then, the proof of Lemma~2 is transformed to prove the following inequality:
\begin{align}
\mathbb{E}\left[\sup_{{f}}\frac{1}{n}\sum_{i=1}^{n}\sum_{j=1}^q\sigma_i\frac{[g'_{j}({x}_i)]_{\bar{y}^j_i}}{[({{T}}^j)^\top g'_{j}({x}_i)]_{\bar{y}^j_i}}
\ell(f_j({x}_i),\bar{y}^j_i)\right]\leq \mathbb{E}\left[\sup_{{f}}\frac{1}{n}\sum_{i=1}^{n}\sum_{j=1}^q\sigma_i\ell(f_j({x}_i),\bar{{y}}^j_i)\right].
\end{align}
Note that
\begin{align}
\mathbb{E}_{{\sigma}}\left[\sup_{{f}}\sum_{i=1}^{n}\sum_{j=1}^q\sigma_i\frac{[g'_{j}({x}_i)]_{\bar{y}^j_i}}{[({{T}}^j)^\top g'_{j}({x}_i)]_{\bar{y}^j_i}}
\ell(f_j({x}_i),\bar{y}^j_i)\right]\leq \mathbb{E}_{\sigma_1,\ldots,\sigma_{n-1}}\left[\mathbb{E}_{{\sigma_n}}\left[\sup_{{f}}\sum_{i=1}^{n}\sum_{j=1}^q\sigma_i\ell(f_j({x}_i),\bar{{y}}^j_i)\right]\right].
\end{align}

Let $s_{n-1}({f})=\sum_{i=1}^{n-1}\sum_{j=1}^q\sigma_i\frac{[g'_{j}({x}_i)]_{\bar{y}^j_i}}{[({{T}}^j)^\top g'_{j}({x}_i)]_{\bar{y}^j_i}}
\ell(f_j({x}_i),\bar{y}^j_i)$.

By definition of the supremum, for any $\epsilon>0$, there exist ${f}^1$ and ${f}^2$ (also $g'^{1}$ and $g'^{2}$) such that
\begin{equation*}
\begin{aligned}
&\sum_{j=1}^q\frac{[{g'^{1}_{j}}({x}_i)]_{\bar{y}^j_i}}{[({{T}}^j)^\top g'^{1}_{j}({x}_i)]_{\bar{y}^j_i}}
\ell(f^1_j({x}_i),\bar{y}^j_i)+s_{n-1}({f}^1)\geq (1-\epsilon)\sup_{ f}\left(\sum_{j=1}^q\frac{[g'_{j}({x}_i)]_{\bar{y}^j_i}}{[({{T}}^j)^\top g'_{j}({x}_i)]_{\bar{y}^j_i}}
\ell(f_j({x}_i),\bar{y}^j_i)+s_{n-1}({f})\right),
\end{aligned}
\end{equation*}
and 
\begin{equation*}
\begin{aligned}
&-\sum_{j=1}^q\frac{[{g'^{2}_{j}}({x}_i)]_{\bar{y}^j_i}}{[({{T}}^j)^\top g'^{2}_{j}({x}_i)]_{\bar{y}^j_i}}
\ell(f^2_j({x}_i),\bar{y}^j_i)+s_{n-1}({f}^2)\geq (1-\epsilon)\sup_{f}\left(-\sum_{j=1}^q\frac{[g'_{j}({x}_i)]_{\bar{y}^j_i}}{[({{T}}^j)^\top g'_{j}({x}_i)]_{\bar{y}^j_i}}
\ell(f_j({x}_i),\bar{y}^j_i)+s_{n-1}({f})\right).
\end{aligned}
\end{equation*}
Thus, for any $\epsilon$, we have
\begin{equation}
\begin{aligned}
&(1-\epsilon)\mathbb{E}_{\sigma_n}\left[\sup_{ {f}}\left(\sum_{j=1}^q\sigma_n\frac{[g'_{j}({x}_i)]_{\bar{y}^j_i}}{[({{T}}^j)^\top g'_{j}({x}_i)]_{\bar{y}^j_i}}
\ell(f_j({x}_i),\bar{y}^j_i)+s_{n-1}({f})\right)\right]\\
&=\frac{(1-\epsilon)}{2}\sup_{ {f}}\left(\sum_{j=1}^q\frac{[{g'}_{j}({x}_i)]_{\bar{y}^j_i}}{[({{T}}^j)^\top g'_{j}({x}_i)]_{\bar{y}^j_i}}
\ell(f_j({x}_i),\bar{y}^j_i)+s_{n-1}({f})\right)\\
&+\frac{(1-\epsilon)}{2}\sup_{ {f}}\left(-\sum_{j=1}^q\frac{[{g'}_{j}({x}_i)]_{\bar{y}^j_i}}{[({{T}}^j)^\top g'_{j}({x}_i)]_{\bar{y}^j_i}}
\ell(f_j({x}_i),\bar{y}^j_i)+s_{n-1}({f})\right)\\
&\leq\frac{1}{2}\left(\sum_{j=1}^q\frac{[{g'^{1}_{j}}({x}_i)]_{\bar{y}^j_i}}{[({{T}}^j)^\top g'^{1}_{j}({x}_i)]_{\bar{y}^j_i}}
\ell(f^1_j({x}_i),\bar{y}^j_i)+s_{n-1}({f}^1)-\sum_{j=1}^q\frac{[{g'^{2}_{j}}({x}_i)]_{\bar{y}^j_i}}{[({{T}}^j)^\top g'^{2}_{j}({x}_i)]_{\bar{y}^j_i}}
\ell(f^2_j({x}_i),\bar{y}^j_i)+s_{n-1}({f}^2)\right)\\
&\leq\frac{1}{2}\left(s_{n-1}({f}^1)+ s_{n-1}({f}^2)+U|\sum_{j=1}^q\ell(f^1_j({x}_i),\bar{y}^j_i)-\sum_{j=1}^q\ell(f^2_j({x}_i),\bar{y}^j_i)|\right),
\end{aligned}
\end{equation}
where the last inequality holds because $\frac{[g'_{j}({x})]_{\bar{y}^j}}{[({{T}}^j)^\top g'_{j}({x})]_{\bar{y}^j}}\leq U$ for any $({x},\bar{y}^j)$ and $g'$.

Let $s=\text{sgn}\left[\sum_{j=1}^q\ell(f^1_j({x}_i),\bar{y}^j_i)-\sum_{j=1}^q\ell(f^2_j({x}_i),\bar{y}^j_i)\right]U$, where $\text{sgn}[z]$ is the sign of a real number $z$. We have
\begin{equation}
\begin{aligned}
&(1-\epsilon)\mathbb{E}_{\sigma_n}\left[\sup_{ {f}}\left(\sum_{j=1}^q\sigma_n\frac{[g'_{j}({x}_i)]_{\bar{y}^j_i}}{[({{T}}^j)^\top g'_{j}({x}_i)]_{\bar{y}^j_i}}
\ell(f_j({x}_i),\bar{y}^j_i)+s_{n-1}({f})\right)\right]\\
&\leq\frac{1}{2}\left(s_{n-1}({f}^1)+ s_{n-1}({f}^2)+s(\sum_{j=1}^q\ell(f^1_j({x}_i),\bar{y}^j_i)-\sum_{j=1}^q\ell(f^2_j({x}_i),\bar{y}^j_i))\right)\\
&=\frac{1}{2}\left(s_{n-1}({f}^1)+s\sum_{j=1}^q\ell(f^1_j({x}_i),\bar{y}^j_i) \right)+\frac{1}{2}\left(s_{n-1}({f}^2)-s\sum_{j=1}^q\ell(f^2_j({x}_i),\bar{y}^j_i)\right)\\
&\leq \frac{1}{2}\sup_{{f}}\left(s_{n-1}({f})+s\sum_{j=1}^q\ell(f_j({x}_i),\bar{y}^j_i) \right)+\frac{1}{2}\sup_{{f}} \left(s_{n-1}({f})-s\sum_{j=1}^q\ell(f_j({x}_i),\bar{y}^j_i)\right)\\
&=\mathbb{E}_{\sigma_n}\left[\sup_{ {f}}\left(\sum_{i=1}^{n}\sum_{j=1}^q\sigma_i\ell(f_j({x}_i),\bar{{y}}^j_i)+s_{n-1}({f})\right)\right].
\end{aligned}
\end{equation}
Since the above inequality holds for any $\epsilon>0$, we have
\begin{equation}
\begin{aligned}
&\mathbb{E}_{\sigma_n}\left[\sup_{ {f}}\left(\sum_{j=1}^q\sigma_n\frac{[g'_{j}({x}_i)]_{\bar{y}^j_i}}{[({{T}}^j)^\top g'_{j}({x}_i)]_{\bar{y}^j_i}}
\ell(f_j({x}_i),\bar{y}^j_i)+s_{n-1}({f})\right)\right]\leq \mathbb{E}_{\sigma_n}\left[\sup_{ {f}}\left(\sum_{i=1}^{n}\sum_{j=1}^q\sigma_i\ell(f_j({x}_i),\bar{{y}}^j_i)+s_{n-1}({f})\right)\right].
\end{aligned}
\end{equation}

Proceeding in the same way for all other $\sigma$, we have
\begin{equation}
\begin{aligned}
&\mathbb{E}_{{\sigma}}\left[\sup_{{f}}\frac{1}{n}\sum_{i=1}^{n}\sum_{j=1}^q\sigma_i\frac{[g'_{j}({x}_i)]_{\bar{y}^j_i}}{[({{T}}^j)^\top g'_{j}({x}_i)]_{\bar{y}^j_i}}
\ell(f_j({x}_i),\bar{y}^j_i)\right]\leq \mathbb{E}_{{\sigma}}\left[\sup_{{f}}\frac{1}{n}\sum_{i=1}^{n}\sum_{j=1}^q\sigma_i\ell(f_j({x}_i),\bar{{y}}^j_i)\right],
\end{aligned}
\end{equation}
and thus
\begin{equation}
\begin{aligned}
&\mathbb{E}\left[\sup_{{f}}\frac{1}{n}\sum_{i=1}^{n}\sum_{j=1}^q\sigma_i\frac{[g'_{j}({x}_i)]_{\bar{y}^j_i}}{[({{T}}^j)^\top g'_{j}({x}_i)]_{\bar{y}^j_i}}
\ell(f_j({x}_i),\bar{y}^j_i)\right]\leq \mathbb{E}\left[\sup_{{f}}\frac{1}{n}\sum_{i=1}^{n}\sum_{j=1}^q\sigma_i\ell(f_j({x}_i),\bar{{y}}^j_i)\right].
\end{aligned}
\end{equation}
\subsection{Proof of Lemma~\ref{lemma:3}}
\label{sec:app3}
Before proving Lemma~\ref{lemma:3}, we show that the base loss function $\ell(f_j({x}),\bar{{y}}^j)$ is 1-Lipschitz-continuous w.r.t. $h_j({x}),j=\{1,\ldots,q\}$.

Recall that
\begin{align}
{\ell}(f_j({x}),\bar{{y}}^j) &= -(1-\bar{{y}}^j)\log(1-g_j({x}))-\bar{{y}}^j\log(g_j({x})) \nonumber \\
&=(\bar{y}^j-1)\log(1+e^{h_j({x})})+\bar{y}^j\log(1+e^{-h_j({x})}).   \nonumber
\end{align}
Take the derivative of $\ell(f_j({x}),\bar{y}_j)$ w.r.t. $h_j({x})$. If $\bar{{y}}^j = 0$, we have 
\begin{equation} \label{derivative1}
\begin{aligned}
&\frac{\mathrm{d}{\ell}(f_j({x}),\bar{{y}}^j)}{\mathrm{d} h_j({x})} =-\frac{e^{h_j({x})}}{1+e^{h_j({x})}}.
\end{aligned}
\end{equation}
If $\bar{{y}}^j = 1$, we have 
\begin{equation} \label{derivative2}
\begin{aligned}
&\frac{\mathrm{d}{\ell}(f_j({x}),\bar{{y}}^j)}{\mathrm{d} h_j({x})} =-\frac{e^{-h_j({x})}}{1+e^{-h_j({x})}}.
\end{aligned}
\end{equation}
According to Eqs.~(\ref{derivative1}) and (\ref{derivative2}), it is easy to conclude that $-1 \leq \frac{\mathrm{d}{\ell}(f_j({x}),\bar{{y}}^j)}{\mathrm{d} h_j({x})} \leq 1$, which indicates that the loss function is 1-Lipschitz with respect to $h_j({x}), \forall j \in \{1,\ldots,q\}$. 

Now we are ready to prove Lemma 3. Let ${H}$ be the hypothesis space of $h_j$, We have
\begin{equation}
\begin{aligned}
&\mathbb{E}\left[\sup_{{f}=\{f_1,\ldots,f_q\}}\frac{1}{n}\sum_{i=1}^{n}\sum_{j=1}^q\sigma_i\ell(f_j({x}_i),\bar{y}^j_i)\right]\\
&\leq \mathbb{E}\left[\sum_{j=1}^q\sup_{f_j}\frac{1}{n}\sum_{i=1}^{n}\sigma_i{\ell}(f_j({x}_i),\bar{y}^j_i)\right] \\
&=  \mathbb{E}\left[\sum_{j=1}^q\sup_{h_j\in H }\frac{1}{n}\sum_{i=1}^{n}\sigma_i{\ell}(f_j({x}_i),\bar{y}^j_i)\right] \\
&=  \sum_{j=1}^q \mathbb{E}\left[\sup_{h_j\in H }\frac{1}{n}\sum_{i=1}^{n}\sigma_i{\ell}(f_j({x}_i),\bar{y}^j_i)\right] \\
&\leq qL\mathbb{E}\left[ \sup_{h_j\in H }\frac{1}{n}\sum_{i=1}^{n}\sigma_ih_j({x}_i)\right] \\
&=qL\mathbb{E}\left[ \sup_{h\in H }\frac{1}{n}\sum_{i=1}^{n}\sigma_ih({x}_i)\right],
\end{aligned}
\end{equation}
where the second equation holds because $f_j$ and $h_j$ are in one-to-one correspondence; the fourth inequality holds because of the Talagrand Contraction Lemma \cite{ledoux2013probability}.
	
	
 
	\section{The Wilcoxon Signed-Ranks Test for Reweight-Ours against Baselines}
	\label{summary}
	Wilcoxon signed-ranks test~\cite{test} is employed to show whether Reweight-Ours has a significant performance
	than other comparing approaches. Note that the performance of these methods used for the Wilcoxon signed-ranks test is from both class-dependent label noise cases and instance-dependent label-noise cases. As shown in Tab.~\ref{win_loss}, Reweight-Ours outperforms all baselines on both OF1 and CF1 metrics at 0.01 significance level.
	
	\begin{table*}[h]
		\caption{Summary of the Wilcoxon signed-ranks test for Reweight-Ours against other comparing approaches at 0.01 significance level. The p-values are shown in the brackets.}
		\centering
		\scriptsize
		\setlength\tabcolsep{3.8pt}
		\begin{tabular}{c|cccccccccc}
			\hline	
			Reweight-Ours & \multirow{2}*{Standard} & \multirow{2}*{GCE} & \multirow{2}*{CDR} & \multirow{2}*{AGCN} & \multirow{2}*{CSRA} & \multirow{2}*{WSIC} &Reweight-T& Reweight-T &Reweight-DualT& Reweight-DualT\\
			against  &  &  &  &  &  &  &max& 97\% & max& 97\% \\	
			\hline	
			mAP  &  \textbf{win} [0.000]  & \textbf{win} [0.004]   & \textbf{win} [0.004]  &   \textbf{win} [0.000]    &  tie [0.772]  &  \textbf{win} [0.000]  & \textbf{win} [0.000]  & \textbf{win} [0.000]  & \textbf{win} [0.000]  & \textbf{win} [0.000]   \\	
			OF1  &  \textbf{win} [0.000]   &  \textbf{win} [0.000]   & \textbf{win} [0.000]   &   \textbf{win} [0.000]     &  \textbf{win} [0.000]   &  \textbf{win} [0.000]   & \textbf{win} [0.000]    & \textbf{win} [0.000]    &  \textbf{win} [0.000]   &  \textbf{win} [0.000]   \\
			CF1  &   \textbf{win} [0.000]   &  \textbf{win} [0.000]   & \textbf{win} [0.000]   &   \textbf{win} [0.000]     &  \textbf{win} [0.000]   &  \textbf{win} [0.000]   & \textbf{win} [0.000]    & \textbf{win} [0.000]    &  \textbf{win} [0.000]   &  \textbf{win} [0.000]  \\
			\hline	
		\end{tabular}
		\label{win_loss}
	\end{table*}

\section{Limitations}
	\label{limit}
	This work still has certain limitations, including:
 \begin{itemize}
     \item This work exploits the memorization effect~\cite{arpit2017closer} in deep learning to perform sample selection, while the memorization effect has not been found in other traditional machine learning methods, and therefore, the proposed estimator can not be applied to such learning methods. 
     \item This work estimates occurrence probabilities using frequency counting. Although this estimation error will converge to zero exponentially fast~\cite{Boucheron2013ConcentrationI}, when the number of one label appearing is too small, \eg, less than 50, the estimation of the transition matrix for this class label is still difficult to be accurate.
     \item Since our work assumes label noise is class-dependent but instance-independent, when this assumption does not hold, the estimation is not guaranteed. The discussions about the relaxation of instance-independent assumption can be found in Appendix~\ref{noise_assumption}, which reveals its applicability in certain typical instance-dependent cases.
 \end{itemize}
\section{Complete Numerical Experimental Results}
\label{complete}
In the main paper, we have provided illustrations comparing the classification performance of Reweight-Ours with several state-of-the-art methods. Here, complete numerical experimental results are presented in Tab.~\ref{cls_VOC2007_c}-~\ref{discuss2_VOC2007} for checks and references.

	\begin{table*}[t]
		\caption{Complete numerical results for classification performance on Pascal-VOC2007 dataset with class-dependent label noise.}
		\centering
		\scriptsize
		\setlength\tabcolsep{5.8pt}
		\begin{tabular}{l|l|cc|cc|cc|cc}
			\hline	
			& Noise rates $(\rho_{-}, \rho_{+})$  &  (0,0.2) & (0,0.6) & (0.2,0) & (0.6,0)  & (0.1,0.1) & (0.2,0.2) &(0.017,0.2)& (0.034,0.4) \\	
			\hline
			\multirow{12}*{\rotatebox{90}{mAP}}& Standard  & 84.25$\pm$1.07  & 77.16$\pm$0.94 & 82.70$\pm$0.54 &  68.65$\pm$1.57 &  83.07$\pm$0.45 &  78.87$\pm$0.52 & 83.92$\pm$0.59  &  80.97$\pm$0.42  \\	
			&GCE  & 83.85$\pm$1.09 & 73.32$\pm$2.22 & 83.03$\pm$0.51 & 67.47$\pm$1.74 & 83.68$\pm$0.66 & 79.39$\pm$0.95 & 84.40$\pm$0.34 & 80.68$\pm$0.52 \\	
			&CDR  & \secbest{84.60$\pm$0.43} & 77.45$\pm$1.23 & 82.76$\pm$0.53 & 68.86$\pm$2.05 & 83.22$\pm$0.57 & 79.02$\pm$0.62 & 84.37$\pm$0.25 & 81.14$\pm$0.28 \\		
			&AGCN  & 83.24$\pm$0.67 & 75.50$\pm$0.56 & 81.09$\pm$0.51 & 66.47$\pm$1.29 & 81.09$\pm$0.48 & 73.79$\pm$0.76 & 82.21$\pm$0.42 & 76.55$\pm$1.11 \\		
			&CSRA  & \textbf{85.11$\pm$0.51} & \textbf{79.47$\pm$1.22} & 82.93$\pm$0.65 & 67.36$\pm$2.25 & 83.69$\pm$0.69 & 78.10$\pm$0.53 & \textbf{84.94$\pm$0.36} & \secbest{81.51$\pm$0.14} \\	
			&WSIC & 84.14$\pm$0.26 & 76.17$\pm$1.31 & 82.30$\pm$0.64 & 66.82$\pm$3.87 & 83.41$\pm$0.31 & 77.93$\pm$1.00 & 84.17$\pm$0.48 & 80.74$\pm$0.44 \\	
			&Reweight-T max   & 84.20$\pm$0.46  & 76.97$\pm$1.20 & 83.04$\pm$0.39 &  71.36$\pm$2.47 &  83.48$\pm$0.15 &  79.10$\pm$0.52 & 84.06$\pm$0.24  &  81.01$\pm$0.99  \\	
			&Reweight-T 97\% & 84.00$\pm$0.68  & \secbest{78.97$\pm$0.69} & 83.07$\pm$0.29 &  73.96$\pm$1.69 &  82.71$\pm$0.30 &  78.80$\pm$0.28 & {84.37$\pm$0.22}  &  {81.42$\pm$0.25}  \\	
			&Reweight-DualT max & {84.46$\pm$0.20}  & 77.65$\pm$1.06 & 83.75$\pm$0.44 &  73.75$\pm$1.61 &  \secbest{83.94$\pm$0.31} &  \secbest{79.48$\pm$1.24} & \secbest{84.60$\pm$0.30}  &  \textbf{81.77$\pm$0.26}  \\	
			&Reweight-DualT 97\%   & 82.36$\pm$0.45  & 77.72$\pm$0.73 & \textbf{84.56$\pm$0.40} &  \textbf{75.76$\pm$2.11} &  79.69$\pm$1.40 &  75.26$\pm$1.70 & 81.84$\pm$0.81  &  77.40$\pm$1.86  \\	
			\cline{2-10}
			& Reweight-Ours  & {84.43$\pm$0.46}  & {78.72$\pm$0.41} & \secbest{84.08$\pm$0.24} &  \secbest{74.46$\pm$0.56} &  \textbf{84.03$\pm$0.29} &  \textbf{80.44$\pm$0.52} & 84.09$\pm$0.62  &  80.97$\pm$1.03  \\
   &Reweight-TrueT  & 84.47$\pm$0.58 & 77.60$\pm$1.23 & 84.20$\pm$0.42 & 75.68$\pm$1.55 & 84.23$\pm$0.46 & 80.29$\pm$0.35 & 84.39$\pm$0.53 & 81.53$\pm$0.56
  \\
			\hline	
			\multirow{12}*{\rotatebox{90}{OF1}}& Standard  & 75.24$\pm$1.40  & 32.02$\pm$5.49 & 78.85$\pm$0.43 &  15.08$\pm$0.25 &  79.24$\pm$0.43 &  {75.85$\pm$0.84} & 75.98$\pm$1.04  &  59.67$\pm$1.65  \\
			&GCE  & 76.17$\pm$1.57 & 36.13$\pm$4.07 & 79.28$\pm$0.44 & 14.85$\pm$0.22 & 79.73$\pm$0.70 & \secbest{76.27$\pm$0.55} & 76.80$\pm$0.68 & 60.26$\pm$2.43  \\	
			&CDR  & 76.05$\pm$0.68 & 34.11$\pm$3.43 & 79.04$\pm$0.46 & 14.99$\pm$0.19 & 79.34$\pm$0.60 & 76.00$\pm$0.47 & 76.56$\pm$0.52 & 59.31$\pm$1.04 \\		
			&AGCN  & 74.92$\pm$1.02 & 30.97$\pm$3.78 & 75.45$\pm$2.06 & 16.85$\pm$0.56 & 78.69$\pm$0.31 & 72.64$\pm$0.51 & 75.16$\pm$0.58 & 56.56$\pm$1.64  \\		
			&CSRA  & 76.94$\pm$1.03 & 33.65$\pm$2.73 & 77.71$\pm$1.23 & 15.94$\pm$0.32 & \textbf{80.36$\pm$0.53} & \textbf{76.92$\pm$0.34} & 77.91$\pm$0.63 & 62.19$\pm$1.97 \\	
			&WSIC & 75.01$\pm$1.18 & 16.48$\pm$6.78 & 79.02$\pm$0.59 & 14.88$\pm$0.21 & 78.55$\pm$1.05 & 72.88$\pm$3.44 & 72.30$\pm$2.82 & 53.26$\pm$9.44 \\	
			&Reweight-T max   & 76.97$\pm$0.45  & 41.54$\pm$2.64 & 79.65$\pm$0.44 &  47.68$\pm$5.65 &  \secbest{80.00$\pm$0.27} &  73.58$\pm$1.67 & 76.94$\pm$0.37  &  66.77$\pm$0.93\\	
			&Reweight-T 97\% & 77.71$\pm$0.65  & \secbest{68.28$\pm$2.03} & 80.16$\pm$0.24 &  \secbest{70.67$\pm$0.70} &  75.28$\pm$0.97 &   65.03$\pm$2.20 & 78.45$\pm$0.63  &  \secbest{74.11$\pm$0.67}  \\	
			&Reweight-DualT max & \secbest{78.38$\pm$0.41}  & \textbf{68.81$\pm$1.41} & 80.02$\pm$1.12 &  65.41$\pm$1.84 &  {79.87$\pm$0.27} &  65.36$\pm$7.31 & \secbest{78.55$\pm$0.36}  &  47.04$\pm$4.15  \\	
			&Reweight-DualT 97\%   & 68.17$\pm$3.53  & 61.81$\pm$3.29 & \secbest{80.74$\pm$0.50} &  \textbf{72.53$\pm$2.65} &  52.99$\pm$5.06 &  36.75$\pm$6.71 & 67.55$\pm$0.64  &  57.41$\pm$0.70  \\	
			\cline{2-10}
			& Reweight-Ours  & \textbf{78.62$\pm$0.58}  & 65.68$\pm$1.67 & \textbf{80.85$\pm$0.25} &  67.43$\pm$4.65 &  79.64$\pm$0.29 &  {75.52$\pm$0.86} & \textbf{79.25$\pm$0.52}  &  \textbf{74.35$\pm$1.65}  \\	
    &Reweight-TrueT  & 78.92$\pm$0.72 & 65.95$\pm$2.04 & 80.72$\pm$0.51 & 73.71$\pm$1.95 & 80.27$\pm$0.14 & 75.25$\pm$1.82 & 79.26$\pm$0.58 & 75.44$\pm$0.45
  \\
			\hline	
			\multirow{12}*{\rotatebox{90}{CF1}}& Standard  & 72.53$\pm$1.11  & 30.64$\pm$3.90 & 76.83$\pm$0.65 &  14.97$\pm$0.24 &  75.86$\pm$1.23 &  70.68$\pm$1.76 & 73.11$\pm$0.54  &  52.07$\pm$2.34  \\
			& GCE  & 73.10$\pm$1.27 & 33.07$\pm$4.65 & 77.25$\pm$0.66 & 14.77$\pm$0.20 & 76.73$\pm$1.57 & 71.24$\pm$1.42 & 73.37$\pm$0.98 & 56.89$\pm$2.84 \\	
			&CDR  & 73.08$\pm$0.47 & 33.06$\pm$1.84 & 76.95$\pm$0.79 & 14.88$\pm$0.17 & 76.09$\pm$1.42 & 70.78$\pm$1.09 & 73.33$\pm$0.85 & 54.16$\pm$3.19 \\		
			&AGCN  & 73.45$\pm$1.04 & 33.41$\pm$1.65 & 72.65$\pm$1.97 & 16.67$\pm$0.55 & 76.20$\pm$0.51 & 69.09$\pm$0.49 & 72.81$\pm$1.02 & 55.09$\pm$3.28 \\			
			&CSRA  & 74.10$\pm$0.56 & 33.44$\pm$3.65 & 75.28$\pm$1.32 & 15.71$\pm$0.23 & 77.52$\pm$0.94 & 73.44$\pm$0.62 & 74.98$\pm$0.48 & 58.60$\pm$2.24 \\	
			&WSIC & 70.13$\pm$2.04 & 13.64$\pm$6.97 & 75.32$\pm$2.00 & 14.77$\pm$0.16 & 74.17$\pm$1.87 & 64.95$\pm$6.47 & 65.41$\pm$4.40 & 43.43$\pm$11.31 \\	
			&Reweight-T max   & 74.05$\pm$0.51  & 39.37$\pm$1.36 & 77.28$\pm$0.47 &  50.35$\pm$5.52 & \secbest{77.20$\pm$0.39} &  \secbest{73.32$\pm$0.47} & 74.08$\pm$0.29  &  62.99$\pm$1.80  \\	
			&Reweight-T 97\% & 76.81$\pm$0.74  & \textbf{71.24$\pm$1.33} & 77.22$\pm$0.56 &  \secbest{67.65$\pm$1.37} &  74.99$\pm$0.37 &  68.07$\pm$0.74 & \textbf{77.62$\pm$0.42}  &   \textbf{74.56$\pm$0.26}  \\	
			&Reweight-DualT max & \secbest{75.27$\pm$0.56}  & 45.31$\pm$1.86 & 76.56$\pm$1.80 &  63.99$\pm$0.42 &  77.27$\pm$0.40 &  69.48$\pm$4.48 & 75.66$\pm$0.35  &  68.32$\pm$1.22  \\	
			&Reweight-DualT 97\%   & 71.93$\pm$1.64  & \secbest{63.85$\pm$3.69} &  \textbf{77.95$\pm$0.79}  &  \textbf{68.44$\pm$2.93} &  62.84$\pm$2.51 &  50.49$\pm$2.43 &  70.99$\pm$0.98  &  60.58$\pm$1.67  \\	
			\cline{2-10}
			& Reweight-Ours  & \textbf{76.86$\pm$0.48}  & 61.29$\pm$1.94 & \secbest{77.89$\pm$0.42} &  66.79$\pm$2.50 &  \textbf{78.04$\pm$0.40} &  \textbf{74.08$\pm$0.79} & \secbest{77.28$\pm$0.48}  &  \secbest{72.18$\pm$0.74}  \\	
   &Reweight-TrueT  & 76.22$\pm$0.63 & 62.91$\pm$1.25 & 77.92$\pm$0.99 & 68.51$\pm$2.29 & 77.60$\pm$0.45 & 68.89$\pm$3.06 & 76.39$\pm$0.77 & 71.54$\pm$0.86
  \\
			\hline		
		\end{tabular}
		\label{cls_VOC2007_c}
	\end{table*}
	\begin{table*}[!h]
		\caption{Complete numerical results for classification performance on Pascal-VOC2012 dataset with class-dependent label noise.}
		\centering
		\scriptsize
		\setlength\tabcolsep{5.8pt}
		\begin{tabular}{l|l|cc|cc|cc|cc}
			\hline	
			& Noise rates $(\rho_{-}, \rho_{+})$  &  (0,0.2) & (0,0.6) & (0.2,0) & (0.6,0)  & (0.1,0.1) & (0.2,0.2) &(0.017,0.2)& (0.034,0.4) \\	
			\hline
			\multirow{12}*{\rotatebox{90}{mAP}}& Standard  & {85.97$\pm$0.09}  & 80.02$\pm$0.62 & 85.70$\pm$0.19 &  76.13$\pm$0.86 &  85.91$\pm$0.10 &  82.54$\pm$0.51 & {86.03$\pm$0.24}  &  82.91$\pm$0.74  \\	
			& GCE  & 86.02$\pm$0.21 & 78.71$\pm$0.72 & 85.96$\pm$0.19 & 75.02$\pm$0.50 & \textbf{86.29$\pm$0.15} & 83.18$\pm$0.33 & 85.84$\pm$0.37 & 82.96$\pm$0.83 \\	
			&CDR  & 86.09$\pm$0.14 & 80.52$\pm$1.61 & 85.61$\pm$0.18 & 76.53$\pm$1.26 & 86.01$\pm$0.19 & 82.79$\pm$0.42 & 85.92$\pm$0.38 & 83.48$\pm$0.62 \\		
			&AGCN  & 85.16$\pm$0.12 & 79.67$\pm$0.43 & 84.91$\pm$0.38 & 77.54$\pm$0.29 & 84.69$\pm$0.30 & 80.15$\pm$0.49 & 84.75$\pm$0.12 & 81.85$\pm$0.35  \\		
			&CSRA  & \textbf{86.88$\pm$0.23} & \textbf{81.75$\pm$0.97} & 85.39$\pm$0.27 & 75.08$\pm$0.84 & 86.14$\pm$0.14 & 80.98$\pm$1.22 & \textbf{86.51$\pm$0.15} & \textbf{83.86$\pm$0.49}  \\	
			&WSIC & \secbest{86.39$\pm$0.38} & 80.75$\pm$0.49 & 85.53$\pm$0.28 & 77.00$\pm$1.03 & 85.67$\pm$0.17 & 82.01$\pm$0.87 & 86.07$\pm$0.22 & 83.19$\pm$0.19  \\	
			&Reweight-T max   & 85.40$\pm$0.43  & 79.06$\pm$1.69 & 85.80$\pm$0.32 &  78.59$\pm$0.69 &  85.98$\pm$0.32 &  82.88$\pm$0.41 & 85.51$\pm$0.51  &  82.38$\pm$1.67  \\	
			&Reweight-T 97\% & 85.97$\pm$0.27  & \secbest{81.04$\pm$0.79} & 85.81$\pm$0.21 &  80.12$\pm$0.75 &  85.66$\pm$0.55 & 82.81$\pm$0.52 & 85.99$\pm$0.51  &  83.28$\pm$1.07  \\	
			&Reweight-DualT max & 85.93$\pm$0.41  & 78.69$\pm$2.62 & \textbf{86.47$\pm$0.26} &  80.00$\pm$0.66 &  {86.23$\pm$0.34} &  \textbf{84.21$\pm$0.18} & \secbest{86.15$\pm$0.44}  &  {83.42$\pm$1.26}  \\	
			&Reweight-DualT 97\%   & 83.93$\pm$0.52  & 79.97$\pm$1.42 & \secbest{86.37$\pm$0.25} &  \textbf{81.84$\pm$0.87} &  82.39$\pm$0.85 &  78.61$\pm$1.01 & 83.33$\pm$1.40  &  80.95$\pm$1.15  \\		
			\cline{2-10}
			& Reweight-Ours  & {86.01$\pm$0.54}  & {80.33$\pm$1.85} & 86.12$\pm$0.10 &  \secbest{80.54$\pm$1.30} &  \secbest{86.23$\pm$0.28} &  \secbest{83.42$\pm$0.51} & 85.92$\pm$0.42  &  \secbest{83.56$\pm$1.31}  \\	
   &Reweight-TrueT  & 86.33$\pm$0.33 & 82.19$\pm$0.58 & 86.28$\pm$0.31 & 81.61$\pm$0.47 & 86.35$\pm$0.40 & 83.72$\pm$0.28 & 85.97$\pm$0.50 & 84.26$\pm$0.21
  \\
			\hline	
			\multirow{12}*{\rotatebox{90}{OF1}}& Standard  & 77.91$\pm$0.20  & 27.84$\pm$4.17 &  80.47$\pm$0.34 &  14.93$\pm$0.28 &  {81.27$\pm$0.22} &  78.51$\pm$0.13 & 77.83$\pm$0.36  &  61.67$\pm$2.10 \\	
			& GCE  & 78.25$\pm$0.28 & 24.67$\pm$3.13 & 80.89$\pm$0.19 & 14.73$\pm$0.28 & 81.47$\pm$0.34 & \textbf{78.78$\pm$0.11} & 78.32$\pm$0.37 & 63.26$\pm$1.40  \\	
			&CDR  & 78.02$\pm$0.23 & 29.45$\pm$6.39 & 80.65$\pm$0.36 & 14.91$\pm$0.26 & 81.34$\pm$0.35 & 78.58$\pm$0.14 & 77.82$\pm$0.34 & 61.94$\pm$1.91 \\		
			&AGCN  & 76.11$\pm$0.76 & 31.06$\pm$4.73 & 80.49$\pm$0.51 & 15.42$\pm$0.35 & 80.28$\pm$0.44 & 76.02$\pm$1.33 & 75.83$\pm$1.18 & 61.59$\pm$4.20  \\			
			&CSRA  & 78.61$\pm$0.56 & 32.36$\pm$6.27 & 79.88$\pm$0.57 & 15.71$\pm$0.50 & \textbf{81.82$\pm$0.35} & 78.12$\pm$1.11 & 79.05$\pm$0.59 & 61.77$\pm$3.46 \\	
			&WSIC & 78.56$\pm$0.78 & 27.90$\pm$6.72 & 80.56$\pm$0.32 & 15.09$\pm$0.24 & 80.98$\pm$0.49 & 77.36$\pm$1.51 & 78.52$\pm$0.65 & 57.56$\pm$4.11  \\	
			&Reweight-T max   & 77.76$\pm$0.72  & 43.96$\pm$6.52 & 81.24$\pm$0.31 &  48.88$\pm$2.45 &  \secbest{81.53$\pm$0.49} &  {78.63$\pm$0.17} & 78.66$\pm$0.22  &  68.04$\pm$2.70  \\	
			&Reweight-T 97\% & 79.31$\pm$0.20  & \textbf{71.79$\pm$2.35} & 81.56$\pm$0.29 &  75.10$\pm$1.81 &  77.81$\pm$1.09 &  71.28$\pm$1.99 & 79.20$\pm$0.35  &  73.90$\pm$1.81  \\
			&Reweight-DualT max & \secbest{80.44$\pm$0.69}  & 63.68$\pm$5.75 & \secbest{82.01$\pm$0.28} &  74.17$\pm$2.34 & 81.04$\pm$0.36 &  \secbest{78.71$\pm$0.83} & \secbest{80.47$\pm$0.35}  &  \secbest{75.56$\pm$1.72}  \\	
			&Reweight-DualT 97\%   & 60.81$\pm$1.36  & 58.42$\pm$2.41 & \textbf{82.20$\pm$0.20} &   \textbf{75.96$\pm$1.47} &  56.42$\pm$1.24 &  30.28$\pm$1.82 & 64.36$\pm$1.42  & 58.49$\pm$1.73  \\	
			\cline{2-10}
			& Reweight-Ours  & \textbf{80.54$\pm$0.61}  & \secbest{69.08$\pm$4.58} & 81.77$\pm$0.43 &  \secbest{75.38$\pm$3.05} &  81.11$\pm$0.56 & 77.78$\pm$0.37  &  \textbf{80.75$\pm$0.43}  &  \textbf{77.31$\pm$1.49}  \\
     &Reweight-TrueT  & 80.65$\pm$0.14 & 74.79$\pm$1.00 & 81.82$\pm$0.29 & 77.94$\pm$0.65 & 81.66$\pm$0.43 & 78.63$\pm$0.28 & 80.83$\pm$0.55 & 78.18$\pm$0.43
  \\
			\hline	
			\multirow{12}*{\rotatebox{90}{CF1}}& Standard  & 75.15$\pm$0.62  & 29.76$\pm$3.77 & 79.06$\pm$0.35 &  14.87$\pm$0.29 &  79.05$\pm$0.26 &  75.61$\pm$0.52 & 76.07$\pm$0.46  &  60.32$\pm$1.90  \\	
			& GCE  & 75.25$\pm$0.84 & 26.25$\pm$4.40 & 79.68$\pm$0.32 & 14.69$\pm$0.27 & 79.13$\pm$0.32 & 75.84$\pm$0.51 & 76.05$\pm$0.84 & 61.99$\pm$0.71 \\	
			&CDR  &75.32$\pm$0.66 & 31.49$\pm$4.03 & 79.20$\pm$0.20 & 14.85$\pm$0.25 & 79.14$\pm$0.45 & 75.56$\pm$0.49 & 76.11$\pm$0.44 & 60.02$\pm$1.91 \\		
			&AGCN  & 74.16$\pm$1.06 & 33.33$\pm$4.55 & 78.81$\pm$0.12 & 15.42$\pm$0.76 & 78.09$\pm$0.99 & 73.28$\pm$1.80 & 73.91$\pm$1.39 & 58.57$\pm$4.67  \\			
			&CSRA  & 75.91$\pm$0.98 & 32.83$\pm$4.88 & 78.86$\pm$0.25 & 15.48$\pm$0.49 & 79.87$\pm$0.26 & 75.53$\pm$1.12 & 76.40$\pm$0.73 & 59.53$\pm$3.12 \\	
			&WSIC & 76.46$\pm$1.39 & 30.19$\pm$5.09 & 79.49$\pm$0.57 & 14.98$\pm$0.23 & 78.50$\pm$0.91 & 74.10$\pm$2.39 & 76.05$\pm$0.90 & 55.11$\pm$4.30  \\	
			&Reweight-T max   & 75.54$\pm$0.59  & 39.66$\pm$5.13 & 79.79$\pm$0.27 &  49.60$\pm$3.31 &  \secbest{79.60$\pm$0.48} &  \textbf{76.45$\pm$0.29} & 76.82$\pm$0.42  &  65.53$\pm$2.66  \\	
			&Reweight-T 97\% & \textbf{78.94$\pm$0.10}  & \textbf{73.66$\pm$1.10} & 79.73$\pm$0.29 &  \textbf{73.38$\pm$1.41} &  77.54$\pm$0.88 &  72.39$\pm$1.57 & \secbest{78.78$\pm$0.35}  &  \textbf{75.54$\pm$1.13}  \\	
			&Reweight-DualT max & 78.37$\pm$0.54  & 58.96$\pm$3.99 & \secbest{80.06$\pm$0.22} &  72.61$\pm$1.76 &  79.29$\pm$0.35 &  \secbest{76.37$\pm$1.44} & 78.69$\pm$0.25  &  65.68$\pm$1.37  \\	
			&Reweight-DualT 97\%   & 67.96$\pm$1.13  & 61.89$\pm$2.49 & \textbf{80.11$\pm$0.33} &  71.87$\pm$3.50 &  65.09$\pm$1.07 &  52.28$\pm$1.03 & 69.26$\pm$0.82  &  63.10$\pm$1.57  \\	
			\cline{2-10}
			& Reweight-Ours  & \secbest{78.81$\pm$0.53}  & \secbest{66.76$\pm$3.38} & 79.82$\pm$0.43 & \secbest{72.84$\pm$2.16} &  \textbf{79.90$\pm$0.39} &  75.90$\pm$0.81 & \textbf{79.37$\pm$0.35}  &  \secbest{75.16$\pm$1.42}  \\	
     &Reweight-TrueT  & 78.95$\pm$0.28 & 71.82$\pm$1.45 & 79.95$\pm$0.53 & 73.87$\pm$1.65 & 79.69$\pm$0.58 & 75.82$\pm$0.78 & 79.12$\pm$0.40 & 75.99$\pm$0.37
  \\
			\hline		
		\end{tabular}
		\label{cls_VOC2012_c}
	\end{table*}
	\begin{table*}[!h]
		\caption{Complete numerical results for classification performance on MS-COCO dataset with class-dependent label noise.}
		\centering
		\scriptsize
		\setlength\tabcolsep{5.8pt}
		\begin{tabular}{l|l|cc|cc|cc|cc}
			\hline	
			& Noise rates $(\rho_{-}, \rho_{+})$  &  (0,0.2) & (0,0.6) & (0.2,0) & (0.6,0)  & (0.1,0.1) & (0.2,0.2) & (0.008,0.2)& (0.015,0.4) \\	
			\hline
			\multirow{12}*{\rotatebox{90}{mAP}}& Standard  & 69.92$\pm$0.06& {63.81$\pm$0.16} &66.77$\pm$0.52&55.45$\pm$0.48&67.77$\pm$0.28&62.50$\pm$0.23&69.76$\pm$0.09&66.82$\pm$0.05  \\	
			& GCE  & 69.90$\pm$0.05 & 62.58$\pm$0.17 & 67.32$\pm$0.11 & 54.01$\pm$0.70 & 68.62$\pm$0.16 & 63.21$\pm$0.35 & 69.99$\pm$0.11 & 66.72$\pm$0.19\\	
			&CDR  & 70.06$\pm$0.05 & 63.85$\pm$0.28 & 67.32$\pm$0.08 & 55.20$\pm$1.62 & 68.01$\pm$0.08 & 62.65$\pm$0.21 & 69.87$\pm$0.09 & 66.85$\pm$0.19\\		
			&AGCN  & \textbf{71.48$\pm$0.14} & \textbf{65.75$\pm$0.32} & \secbest{69.44$\pm$0.10} & 55.71$\pm$0.61 & \textbf{69.42$\pm$0.23} & \secbest{63.96$\pm$0.11} & \textbf{70.90$\pm$0.13} & \secbest{67.86$\pm$0.26} \\			
			&CSRA  & \secbest{71.18$\pm$0.10} & \secbest{65.28$\pm$0.11} & 67.93$\pm$0.18 & 51.49$\pm$0.73 & \secbest{68.83$\pm$0.12} & 61.80$\pm$0.98 & \secbest{70.76$\pm$0.16} & \textbf{68.02$\pm$0.15}\\	
			&WSIC & 68.92$\pm$0.09 & 63.09$\pm$0.28 & 66.22$\pm$0.06 & 53.61$\pm$0.36 & 67.41$\pm$0.15 & 62.33$\pm$0.18 & 68.95$\pm$0.15 & 66.29$\pm$0.21 \\	
			&Reweight-T max   & {69.99$\pm$0.18}  & {63.94$\pm$0.11} & 67.40$\pm$0.13 &  58.27$\pm$0.25 &  67.85$\pm$0.05 &  {63.28$\pm$0.12} & {69.76$\pm$0.07}  &  {66.24$\pm$0.51}  \\	
			&Reweight-T 97\% & 67.98$\pm$0.57&62.52$\pm$0.46&68.00$\pm$0.17&\secbest{59.44$\pm$0.81}&65.69$\pm$0.48&60.03$\pm$0.11&68.13$\pm$0.04&64.40$\pm$0.18  \\	
			&Reweight-DualT max & 67.57$\pm$0.21&60.39$\pm$0.53&68.57$\pm$0.25&58.42$\pm$0.82&{68.01$\pm$0.51}&62.17$\pm$0.32&68.76$\pm$0.08&65.75$\pm$0.15  \\	
			&Reweight-DualT 97\%   &64.97$\pm$0.20 & 58.85$\pm$0.43 & \textbf{69.68$\pm$0.25} & 49.17$\pm$3.02 & 56.36$\pm$0.62 & 49.41$\pm$0.38 & 63.27$\pm$0.36 & 58.21$\pm$0.86 \\	
			\cline{2-10}
			& Reweight-Ours  & {70.57$\pm$0.11} & 63.28$\pm$0.92 & {69.38$\pm$0.36} & \textbf{61.88$\pm$0.66} & {68.70$\pm$0.15} & \textbf{64.46$\pm$0.10} & {70.06$\pm$0.06} & {67.03$\pm$0.08} \\	
   & Reweight-TrueT & 70.78$\pm$0.17 & 65.50$\pm$0.42 & 69.45$\pm$0.07 & 62.40$\pm$0.12 & 69.49$\pm$0.06 & 65.27$\pm$0.36 & 70.46$\pm$0.06 & 68.03$\pm$0.12 \\
			\hline	
			\multirow{12}*{\rotatebox{90}{OF1}}& Standard  & 66.48$\pm$0.50 &19.18$\pm$0.97&69.58$\pm$0.38&7.05$\pm$0.01&{68.64$\pm$0.18}&{64.84$\pm$0.54}&66.07$\pm$0.15&51.70$\pm$0.42  \\	
			& GCE  & 66.67$\pm$0.38 & 19.61$\pm$1.61 & 69.82$\pm$0.25 & 7.03$\pm$0.02 & 69.44$\pm$0.18 & \secbest{64.98$\pm$0.79} & 66.58$\pm$0.32 & 52.04$\pm$0.51\\	
			&CDR  & 66.46$\pm$0.54 & 19.60$\pm$2.86 & 69.72$\pm$0.31 & 7.06$\pm$0.04 & 68.75$\pm$0.09 & 64.68$\pm$0.54 & 66.03$\pm$0.39 & 52.49$\pm$1.18 \\		
			&AGCN  & 67.14$\pm$0.52 & 16.02$\pm$0.76 & \textbf{70.63$\pm$0.12} & 7.04$\pm$0.02 & \secbest{69.66$\pm$0.25} & \textbf{66.07$\pm$0.55} & 66.61$\pm$0.48 & 52.38$\pm$1.05 \\			
			&CSRA  & 67.98$\pm$0.40 & 24.08$\pm$2.60 & 70.14$\pm$0.17 & 7.06$\pm$0.02 & \textbf{69.70$\pm$0.31} & 64.89$\pm$1.22 & \secbest{67.37$\pm$0.28} & 51.81$\pm$0.53  \\	
			&WSIC & 66.67$\pm$0.15 & 23.02$\pm$4.70 & 69.02$\pm$0.06 & 7.02$\pm$0.00 & 67.78$\pm$0.58 & 62.31$\pm$0.69 & 66.38$\pm$0.25 & 52.07$\pm$1.94 \\	
			&Reweight-T max   & \secbest{67.13$\pm$0.41}  & 39.47$\pm$1.47 & 69.84$\pm$0.11 &  59.26$\pm$1.76 &  64.03$\pm$2.00 &  57.68$\pm$4.00 &{66.45$\pm$0.30}  &  53.63$\pm$1.28  \\	
			&Reweight-T 97\% & 57.66$\pm$0.56&\secbest{54.06$\pm$1.19}&69.72$\pm$0.39&\secbest{64.79$\pm$0.85}&43.81$\pm$0.54&33.65$\pm$2.93&55.44$\pm$0.56&51.78$\pm$1.37  \\	
			&Reweight-DualT max & 65.22$\pm$0.28&55.01$\pm$0.86&70.16$\pm$0.26&56.90$\pm$4.55&59.62$\pm$1.59&49.79$\pm$0.18&65.64$\pm$0.71& \secbest{61.73$\pm$2.11} \\	
			&Reweight-DualT 97\%   & 29.39$\pm$0.36 & 29.30$\pm$0.74 & {70.24$\pm$0.26} & 48.87$\pm$6.87 & 25.83$\pm$0.16 & 8.51$\pm$0.26 & 39.29$\pm$0.45 & 35.78$\pm$1.10\\	
			\cline{2-10}
			& Reweight-Ours  & \textbf{70.10$\pm$0.10} & \textbf{61.74$\pm$0.64} & \secbest{70.52$\pm$0.26} & \textbf{65.78$\pm$0.56} & {64.45$\pm$0.47} & {58.60$\pm$3.30} & \textbf{69.40$\pm$0.31} & \textbf{65.93$\pm$0.42}\\	
      & Reweight-TrueT & 70.61$\pm$0.14 & 65.96$\pm$0.45 & 70.45$\pm$0.06 & 66.46$\pm$0.15 & 70.26$\pm$0.09 & 67.27$\pm$0.13 & 70.21$\pm$0.30 & 68.55$\pm$0.18 \\
			\hline	
			\multirow{12}*{\rotatebox{90}{CF1}}& Standard  & 60.27$\pm$0.52&22.73$\pm$0.15&64.66$\pm$0.82&7.07$\pm$0.02&62.38$\pm$0.27&56.78$\pm$1.31&60.04$\pm$0.17&45.35$\pm$0.60 \\	
			& GCE  & 60.76$\pm$0.08 & 21.06$\pm$1.12 & 65.27$\pm$0.22 & 7.04$\pm$0.03 & 63.60$\pm$0.25 & 56.72$\pm$1.56 & 60.66$\pm$0.19 & 44.28$\pm$1.12 \\	
			&CDR  & 60.26$\pm$0.55 & 22.42$\pm$1.78 & 65.27$\pm$0.10 & 7.07$\pm$0.04 & 62.63$\pm$0.08 & 56.41$\pm$1.36 & 59.85$\pm$0.38 & 45.41$\pm$0.85\\		
			&AGCN  & 61.79$\pm$0.98 & 19.30$\pm$1.20 & 66.29$\pm$0.11 & 7.05$\pm$0.04 & 64.09$\pm$0.39 & 58.97$\pm$1.15 & 60.35$\pm$0.54 & 43.89$\pm$1.46 \\			
			&CSRA  & 62.46$\pm$0.53 & 24.17$\pm$2.76 & 65.80$\pm$0.02 & 7.06$\pm$0.02 & 63.90$\pm$0.48 & 55.97$\pm$2.29 & 61.30$\pm$0.42 & 44.25$\pm$2.33 \\	
			&WSIC & 61.12$\pm$0.08 & 27.16$\pm$1.54 & 63.71$\pm$0.34 & 7.03$\pm$0.01 & 61.12$\pm$1.30 & 52.47$\pm$0.99 & 60.51$\pm$0.25 & 45.52$\pm$1.32 \\	
			&Reweight-T max   & 61.59$\pm$0.51  & 32.78$\pm$0.70 & 64.82$\pm$0.24 &  56.68$\pm$0.80 &  63.35$\pm$0.16 &  59.93$\pm$0.50 & 60.92$\pm$0.08  &  48.40$\pm$0.33  \\	
			&Reweight-T 97\% & 55.67$\pm$0.45&\secbest{52.79$\pm$1.04} &63.97$\pm$0.74&\secbest{56.70$\pm$2.11}&48.47$\pm$0.70&40.20$\pm$0.82&55.33$\pm$0.12&52.19$\pm$0.29 \\	
			&Reweight-DualT max & \secbest{64.79$\pm$0.23} &52.16$\pm$2.08&63.51$\pm$0.35&54.62$\pm$1.12& \secbest{66.29$\pm$0.34} &\textbf{61.33$\pm$0.09}&\secbest{65.18$\pm$0.22}&\secbest{60.88$\pm$0.59}  \\	
			&Reweight-DualT 97\%   &  32.15$\pm$0.43 & 30.32$\pm$0.91 & \secbest{65.23$\pm$0.28} & 33.76$\pm$8.88 & 29.99$\pm$0.10 & 12.77$\pm$0.21 & 41.29$\pm$0.83 & 37.08$\pm$1.29\\	
			\cline{2-10}
			& Reweight-Ours  & \textbf{67.18$\pm$0.17} & \textbf{57.46$\pm$0.52} & \textbf{65.42$\pm$0.49} & \textbf{58.63$\pm$1.30} & \textbf{66.65$\pm$0.15} & \secbest{61.13$\pm$1.01} & \textbf{66.42$\pm$0.10} & \textbf{62.94$\pm$0.28} \\	
   & Reweight-TrueT & 65.67$\pm$0.15 & 59.74$\pm$0.37 & 65.43$\pm$0.20 & 58.78$\pm$0.05 & 65.10$\pm$0.02 & 60.25$\pm$0.50 & 65.02$\pm$0.55 & 63.09$\pm$0.62 \\
			\hline		
		\end{tabular}
		\label{cls_MSCOCO_c}
	\end{table*}

	\begin{table*}[h]
		\caption{Complete numerical results for classification performance on Pascal-VOC2007 dataset with instance-dependent label noise. }
		\centering
		\scriptsize
		\setlength\tabcolsep{3.8pt}
		\begin{tabular}{l|l|ccc|ccc}
			\hline	
			& Noise type  & Pair-wise 10\% &	Pair-wise 20\% &	Pair-wise 30\% &	PMD-Type-I	& PMD-Type-II	&  PMD-Type-III  \\		
			\hline	
			\multirow{11}*{\rotatebox{90}{mAP}}& Standard &	84.76$\pm$0.29 & 82.64$\pm$0.28 & 79.94$\pm$0.35 & 77.99$\pm$0.63 & \secbest{83.11$\pm$0.18} & \secbest{82.69$\pm$0.38}
		 \\		
	 & GCE &	\secbest{84.92$\pm$0.22} & \best{83.16$\pm$0.27} & 79.81$\pm$0.93 & 77.30$\pm$0.67 & 83.02$\pm$0.33 & 82.59$\pm$0.14
	   \\
	 & CDR &84.69$\pm$0.23 & 82.69$\pm$0.28 & 79.94$\pm$0.42 & 77.91$\pm$0.91 & 83.01$\pm$0.23 & 82.61$\pm$0.34
	   \\
	 & AGCN &	83.09$\pm$0.51 & 80.27$\pm$0.30 & 74.80$\pm$1.07 & 75.90$\pm$0.52 & 81.48$\pm$0.20 & 81.69$\pm$0.38 \\
   	& CSRA &	\best{85.47$\pm$0.59} & \secbest{82.90$\pm$0.79} & \best{80.34$\pm$1.26} & \best{79.63$\pm$0.15} & \best{83.56$\pm$0.56} & \best{83.62$\pm$0.47} \\
   & WSIC & 84.42$\pm$0.30 & 82.26$\pm$0.30 & 79.16$\pm$0.46 & 77.81$\pm$0.56 & 82.65$\pm$0.34 & 82.51$\pm$0.24 \\	
			& Reweight-T max & 84.47$\pm$0.31 & 82.02$\pm$0.46 & 79.57$\pm$0.69 & 78.02$\pm$0.56 & 82.14$\pm$0.38 & 81.68$\pm$0.70
 \\		
			& Reweight-T 97\% & 83.98$\pm$0.44 & 80.82$\pm$0.86 & 79.49$\pm$0.48 & 78.01$\pm$0.71 & 82.30$\pm$0.58 & 81.36$\pm$0.38
 \\	
			& Reweight-DualT max & 84.60$\pm$0.34 & 82.37$\pm$0.38 & \secbest{80.20$\pm$0.60} & 78.20$\pm$0.75 & 82.29$\pm$0.81 & 81.83$\pm$0.66
 \\	
			& Reweight-DualT 97\% & 83.48$\pm$0.56 & 81.60$\pm$0.51 & 73.85$\pm$2.04 & 75.03$\pm$1.89 & 79.91$\pm$0.45 & 80.35$\pm$0.35
 \\
			\cline{2-8}	
			& Reweight-Ours &  84.45$\pm$0.38 & 82.29$\pm$0.44 & 79.11$\pm$1.07 & \secbest{78.72$\pm$0.61} & 82.35$\pm$0.65 & 81.77$\pm$0.60
 \\	
			\hline	
			\multirow{11}*{\rotatebox{90}{OF1}}& Standard &	80.04$\pm$0.27 & 78.13$\pm$0.34 & 75.29$\pm$0.47 & 58.08$\pm$1.93 & 76.97$\pm$0.76 & 76.12$\pm$0.24  \\		
		& GCE &	 79.96$\pm$0.29 & 78.62$\pm$0.31 & 74.90$\pm$0.98 & 56.90$\pm$3.62 & 77.10$\pm$0.77 & 76.24$\pm$0.45   \\
		& CDR &	 79.86$\pm$0.41 & 78.10$\pm$0.37 & 75.28$\pm$0.60 & 59.04$\pm$1.39 & 76.93$\pm$0.85 & 76.33$\pm$0.41 
		 \\
		& AGCN & 78.85$\pm$0.74 & 76.30$\pm$0.37 & 71.64$\pm$0.86 & 56.02$\pm$3.92 & 76.49$\pm$0.39 & 76.06$\pm$0.75  \\
		& CSRA & \textbf{81.21$\pm$0.74} & \secbest{79.31$\pm$0.56} & \secbest{76.17$\pm$0.89} & 59.08$\pm$1.98 & \textbf{78.45$\pm$0.51} & \best{77.67$\pm$0.29}  \\
   & WSIC & 78.03$\pm$1.77 & 77.25$\pm$0.25 & 73.19$\pm$2.50 & 55.89$\pm$1.54 & 72.04$\pm$1.16 & 74.86$\pm$1.56 \\		
			& Reweight-T max & 80.04$\pm$0.23 & 78.54$\pm$0.35 & 75.68$\pm$0.65 & 63.56$\pm$1.33 & 77.27$\pm$0.35 & 76.17$\pm$0.51 \\	
			& Reweight-T 97\% & 79.21$\pm$0.72 & 76.78$\pm$0.73 & 74.35$\pm$0.41 & \secbest{72.19$\pm$0.49} & 76.81$\pm$0.48 & 76.27$\pm$0.21 \\	
			& Reweight-DualT max &80.26$\pm$0.26 & 79.11$\pm$0.45 & \best{76.68$\pm$0.44} & 69.53$\pm$1.07 & 77.49$\pm$0.32 & 76.29$\pm$0.38
  \\	
			& Reweight-DualT 97\% &70.59$\pm$0.19 & 59.64$\pm$0.37 & 57.40$\pm$0.18 & 60.17$\pm$1.18 & 67.20$\pm$0.32 & 70.03$\pm$2.10 \\
			\cline{2-8}	
			& Reweight-Ours & \secbest{80.45$\pm$0.13} & \best{79.42$\pm$0.45} & 74.34$\pm$0.35 & \best{73.66$\pm$0.73} & \secbest{77.62$\pm$0.40} & \secbest{77.18$\pm$0.40}
 \\	
			\hline	
			\multirow{11}*{\rotatebox{90}{CF1}} & Standard &77.21$\pm$0.38 & 73.96$\pm$0.35 & 69.46$\pm$0.69 & 50.84$\pm$3.44 & 73.70$\pm$0.95 & 72.52$\pm$0.50  \\		
		& GCE &	 77.37$\pm$0.13 & 74.48$\pm$0.21 & 69.56$\pm$0.78 & 51.05$\pm$3.00 & 73.68$\pm$1.10 & 72.71$\pm$0.89 
		 \\
		& CDR &	 77.08$\pm$0.36 & 73.76$\pm$0.49 & 69.49$\pm$0.68 & 52.67$\pm$2.00 & 73.53$\pm$1.05 & 72.87$\pm$0.61  \\
		& AGCN & 76.71$\pm$0.67 & 72.50$\pm$0.67 & 66.35$\pm$1.42 & 53.73$\pm$2.90 & 73.15$\pm$0.42 & 73.12$\pm$0.91 \\
		& CSRA & \best{78.98$\pm$0.88} & 75.93$\pm$0.55 & 71.39$\pm$1.16 & 56.91$\pm$1.17 & 75.78$\pm$0.66 & \best{75.10$\pm$0.18}  \\
   & WSIC & 73.57$\pm$3.03 & 72.08$\pm$0.42 & 65.45$\pm$4.66 & 48.03$\pm$2.78 & 63.49$\pm$2.59 & 70.06$\pm$2.83 \\		
			& Reweight-T max & 77.62$\pm$0.21 & 74.75$\pm$0.65 & 71.00$\pm$0.78 & 60.28$\pm$1.39 & 74.39$\pm$0.52 & 73.43$\pm$0.40\\	
			& Reweight-T 97\% & 77.53$\pm$0.51 & 74.14$\pm$0.87 & 71.10$\pm$0.61 & \secbest{72.27$\pm$0.57} & \secbest{75.81$\pm$0.64} & 74.76$\pm$0.31
 \\	
			& Reweight-DualT max &  77.65$\pm$0.32 & 75.66$\pm$0.48 & \best{72.50$\pm$0.77} & 65.85$\pm$1.54 & 74.85$\pm$0.75 & 73.45$\pm$0.46 \\	
			& Reweight-DualT 97\% & 73.90$\pm$0.14 & 65.79$\pm$0.10 &60.23$\pm$0.18  & 62.28$\pm$1.36 & 71.22$\pm$1.00 & 71.50$\pm$1.02  \\
			\cline{2-8}	
			& Reweight-Ours & \secbest{78.32$\pm$0.21} & \best{76.57$\pm$0.51} & \secbest{71.42$\pm$0.62} & \best{72.41$\pm$0.62} & \best{76.23$\pm$0.63} & \secbest{75.05$\pm$0.38}
			\\	
			\hline		
		\end{tabular}
		\label{cls_VOC2007_ins_c}
	\end{table*}
	\begin{table*}[!h]
		\caption{Complete numerical results for classification performance on Pascal-VOC2012 dataset with instance-dependent label noise.}
		\centering
		\scriptsize
		\setlength\tabcolsep{3.8pt}
		\begin{tabular}{l|l|ccc|ccc}
			\hline	
			& Noise type  & Pair-wise 10\% &	Pair-wise 20\% &	Pair-wise 30\% &	PMD-Type-I	& PMD-Type-II	&  PMD-Type-III  \\		
			\hline	
			\multirow{11}*{\rotatebox{90}{mAP}}& Standard &	86.19$\pm$0.49 & 84.21$\pm$0.27 & 81.46$\pm$0.33 & 80.11$\pm$0.62 & 84.42$\pm$0.36 & 84.53$\pm$0.24
 \\
   & GCE & \best{86.70$\pm$0.33} & 84.67$\pm$0.23 & 82.14$\pm$0.37 & 79.98$\pm$0.58 & 84.59$\pm$0.29 & 84.44$\pm$0.29
 \\	
   & CDR & 86.38$\pm$0.30 & 84.41$\pm$0.40 & 81.56$\pm$0.38 & 80.36$\pm$0.60 & 84.52$\pm$0.30 & 84.53$\pm$0.20 \\	
   & AGCN & 85.28$\pm$0.25 & 82.75$\pm$0.29 & 78.88$\pm$1.18 & 77.25$\pm$1.57 & 84.10$\pm$0.46 & 83.96$\pm$0.22 \\	
   & CSRA & \secbest{86.45$\pm$0.69} & 84.60$\pm$0.19 & 80.95$\pm$0.78 & 80.30$\pm$0.69 & \best{85.45$\pm$0.22} & \best{85.10$\pm$0.28}
 \\	
   & WSIC & 86.41$\pm$0.21 & 84.47$\pm$0.56 & \best{82.41$\pm$0.64} & 80.20$\pm$0.59 & \secbest{84.70$\pm$0.15} & \secbest{84.64$\pm$0.37}\\	
			& Reweight-T max & 85.79$\pm$0.10 & 83.98$\pm$0.30 & 81.09$\pm$0.67 & 79.84$\pm$0.80 & 84.34$\pm$0.34 & 83.41$\pm$0.42 \\		
			& Reweight-T 97\% & 85.86$\pm$0.20 & 83.92$\pm$0.54 & 81.33$\pm$0.28 & \best{81.22$\pm$0.43} & 84.34$\pm$0.26 & 83.22$\pm$0.36 \\	
			& Reweight-DualT max & 86.36$\pm$0.18 & \best{84.79$\pm$0.33} & 82.11$\pm$0.50 & 80.75$\pm$0.41 & 84.49$\pm$0.35 & 83.52$\pm$0.38 \\	
			& Reweight-DualT 97\% & 83.90$\pm$0.80 & 79.15$\pm$1.74 & 75.36$\pm$0.91 & 76.93$\pm$1.11 & 82.22$\pm$0.42 & 81.64$\pm$0.51
 \\
			\cline{2-8}	
			& Reweight-Ours & 86.30$\pm$0.20 & \secbest{84.75$\pm$0.32} & \secbest{82.30$\pm$0.52} & \secbest{80.87$\pm$0.70} & 84.50$\pm$0.36 & 83.65$\pm$0.45
 \\	
			\hline	
			\multirow{11}*{\rotatebox{90}{OF1}}& Standard &	81.27$\pm$0.23 & 79.85$\pm$0.27 & 77.62$\pm$0.57 & 60.01$\pm$2.13 & 78.02$\pm$0.53 & 77.05$\pm$0.39
 \\
   & GCE & 81.49$\pm$0.34 & 79.81$\pm$0.20 & 77.96$\pm$0.39 & 58.80$\pm$1.92 & 77.90$\pm$0.43 & 76.88$\pm$0.32 \\	
   & CDR & 81.36$\pm$0.32 & 79.90$\pm$0.19 & 77.68$\pm$0.56 & 59.06$\pm$2.49 & 77.98$\pm$0.38 & 76.92$\pm$0.57\\	
   & AGCN & 80.79$\pm$0.44 & 78.79$\pm$0.61 & 75.74$\pm$0.90 & 56.91$\pm$4.16 & 77.76$\pm$0.63 & 76.76$\pm$1.06
 \\	
   & CSRA & \secbest{82.01$\pm$0.90} & 80.77$\pm$0.40 & 77.66$\pm$0.58 & 58.06$\pm$4.14 & \best{78.62$\pm$0.41} & \secbest{77.51$\pm$0.36}
 \\	
   & WSIC & 81.36$\pm$0.35 & 79.95$\pm$0.63 & 77.78$\pm$0.21 & 52.39$\pm$5.61 & 77.49$\pm$0.83 & 76.48$\pm$0.89 \\		
			& Reweight-T max & 81.47$\pm$0.33 & 80.39$\pm$0.38 & \secbest{78.01$\pm$0.35} & 66.39$\pm$0.76 & \secbest{78.18$\pm$0.53} & 76.93$\pm$0.46 \\	
			& Reweight-T 97\% & 79.18$\pm$0.74 & 77.33$\pm$0.86 & 74.90$\pm$0.37 & 72.42$\pm$1.02 & 77.34$\pm$0.43 & 76.30$\pm$0.68\\	
			& Reweight-DualT max &81.95$\pm$0.20 & \best{81.45$\pm$0.15} & \best{79.08$\pm$0.46} & \secbest{73.13$\pm$0.75} & 78.16$\pm$0.58 & 77.07$\pm$0.40
  \\	
			& Reweight-DualT 97\% & 65.02$\pm$1.97 & 60.00$\pm$1.68 & 56.34$\pm$0.79 & 56.86$\pm$1.49 & 64.86$\pm$1.78 & 65.03$\pm$0.91
 \\
			\cline{2-8}	
			& Reweight-Ours & \best{82.03$\pm$0.28} & \secbest{81.27$\pm$0.13} & 77.08$\pm$0.57 & \best{73.42$\pm$0.78} & 77.86$\pm$0.56 & \best{77.66$\pm$0.40}
 \\	
			\hline	
			\multirow{11}*{\rotatebox{90}{CF1}} & Standard &	78.89$\pm$0.60 & 76.29$\pm$0.45 & 72.50$\pm$0.74 & 56.57$\pm$4.64 & 76.00$\pm$0.80 & 75.51$\pm$0.52
\\
   & GCE & 79.15$\pm$0.72 & 76.08$\pm$0.55 & 72.92$\pm$0.31 & 55.47$\pm$5.42 & 75.95$\pm$0.85 & 74.85$\pm$0.56
 \\	
   & CDR & 79.15$\pm$0.39 & 76.36$\pm$0.36 & 72.44$\pm$0.60 & 56.40$\pm$5.18 & 76.02$\pm$0.64 & 75.66$\pm$0.69
 \\	
   & AGCN & 79.31$\pm$0.40 & 75.22$\pm$0.51 & 70.26$\pm$1.08 & 53.36$\pm$3.38 & 76.28$\pm$0.30 & 75.37$\pm$1.16 \\	
   & CSRA & \secbest{80.25$\pm$0.79} & \secbest{77.37$\pm$0.63} & 72.59$\pm$0.68 & 57.37$\pm$3.93 & 77.06$\pm$0.27 & 76.06$\pm$0.46 \\	
   & WSIC & 78.97$\pm$0.64 & 76.41$\pm$0.73 & 72.43$\pm$0.53 & 49.99$\pm$6.30 & 75.82$\pm$0.46 & 74.45$\pm$1.13 \\
			& Reweight-T max & 79.64$\pm$0.27 & 77.03$\pm$0.70 & 73.06$\pm$0.89 & 63.28$\pm$1.59 & 76.68$\pm$0.34 & 75.45$\pm$0.75 \\	
			& Reweight-T 97\% & 78.89$\pm$0.34 & 75.60$\pm$0.94 & 71.68$\pm$0.38 & \secbest{73.77$\pm$0.46} & \secbest{77.43$\pm$0.45} & \secbest{76.82$\pm$0.51}\\	
			& Reweight-DualT max & 78.49$\pm$0.39 & 76.64$\pm$0.18 & \best{75.57$\pm$0.88} & 65.35$\pm$2.36 & 74.90$\pm$0.65 & 73.86$\pm$0.67 \\	
			& Reweight-DualT 97\% & 71.38$\pm$1.03 & 63.15$\pm$1.99 & 58.79$\pm$0.53 & 61.46$\pm$0.91 & 71.80$\pm$0.62 & 71.86$\pm$0.56 \\
			\cline{2-8}	
			& Reweight-Ours & \best{80.48$\pm$0.32} & \best{79.01$\pm$0.32} & \secbest{73.58$\pm$0.68} & \best{74.12$\pm$0.62} & \best{77.57$\pm$0.49} & \best{77.02$\pm$0.60}
			\\	
			\hline		
		\end{tabular}
		\label{cls_VOC2012_ins_c}
	\end{table*}
	\begin{table*}[!h]
		\caption{Complete numerical results for classification performance on MS-COCO dataset with instance-dependent label noise.}
		\centering
		\scriptsize
		\setlength\tabcolsep{3.8pt}
		\begin{tabular}{l|l|ccc|ccc}
			\hline	
			& Noise type  & Pair-wise 10\% &	Pair-wise 20\% &	Pair-wise 30\% &	PMD-Type-I	& PMD-Type-II	&  PMD-Type-III  \\		
			\hline	
			\multirow{11}*{\rotatebox{90}{mAP}}& Standard &	70.50$\pm$0.11 & 67.78$\pm$0.07 & 65.45$\pm$0.12 & 61.02$\pm$0.19 & 64.90$\pm$0.16 & 64.66$\pm$0.20
		\\		
		& GCE &	 70.99$\pm$0.16 & 68.40$\pm$0.17 & 66.24$\pm$0.13 & 60.77$\pm$0.27 & 65.06$\pm$0.18 & 64.67$\pm$0.10
		 \\
		& CDR &	70.52$\pm$0.04 & 67.99$\pm$0.16 & 65.40$\pm$0.22 & 61.00$\pm$0.12 & 65.02$\pm$0.27 & 64.82$\pm$0.11 \\
		& AGCN & \best{71.97$\pm$0.11} & \best{69.54$\pm$0.17} & \best{67.01$\pm$0.14} & \secbest{62.12$\pm$0.11} & \secbest{65.46$\pm$0.22} & \secbest{65.19$\pm$0.10}
		 \\
		& CSRA & \secbest{71.62$\pm$0.29} & \secbest{68.76$\pm$0.10} & \secbest{66.03$\pm$0.09} & 61.82$\pm$0.21 & \textbf{65.49$\pm$0.20} & \textbf{64.99$\pm$0.06} \\	
   & WSIC & 69.22$\pm$0.08 & 66.85$\pm$0.09 & 64.46$\pm$0.15 & 61.14$\pm$0.17 & 64.75$\pm$0.01 & 64.38$\pm$0.12 \\	
			& Reweight-T max & 70.42$\pm$0.12 & 67.85$\pm$0.11 & 65.58$\pm$0.17 & 61.29$\pm$0.21 & 64.95$\pm$0.02 & 64.54$\pm$0.28
 \\		
			& Reweight-T 97\% & 68.31$\pm$0.05 & 64.93$\pm$0.22 & 61.83$\pm$0.53 & 59.28$\pm$0.16 & 62.97$\pm$0.28 & 62.50$\pm$0.10
 \\	
			& Reweight-DualT max & 68.24$\pm$0.24 & 65.40$\pm$0.57 & 62.88$\pm$0.77 & 60.17$\pm$0.33 & 63.02$\pm$0.09 & 62.05$\pm$0.26 \\	
			& Reweight-DualT 97\% & 64.13$\pm$0.11 & 59.83$\pm$0.11 & 56.70$\pm$0.77 & 53.37$\pm$0.51 & 58.28$\pm$0.15 & 57.95$\pm$0.31
\\
			\cline{2-8}	
			& Reweight-Ours & 70.80$\pm$0.13 & 68.34$\pm$0.19 & \secbest{66.59$\pm$0.06} & \best{62.25$\pm$0.36} & 64.97$\pm$0.06 & 64.51$\pm$0.16 \\	
			\hline	
			\multirow{11}*{\rotatebox{90}{OF1}}& Standard &	70.35$\pm$0.17 & 67.86$\pm$0.04 & 65.53$\pm$0.26 & 48.45$\pm$2.43 & 61.39$\pm$0.44 & 59.30$\pm$0.40  \\		
		& GCE &	70.91$\pm$0.14 & 68.30$\pm$0.05 & \secbest{66.22$\pm$0.17} & 46.62$\pm$0.85 & 61.70$\pm$0.55 & 59.14$\pm$0.31  \\
		& CDR &	70.43$\pm$0.27 & 68.09$\pm$0.18 & 65.66$\pm$0.52 & 48.25$\pm$1.76 & 61.46$\pm$0.73 & 59.35$\pm$0.28   \\
		& AGCN &\secbest{71.07$\pm$0.23} & \secbest{68.75$\pm$0.27} & 66.04$\pm$0.17 & 42.82$\pm$1.86 & 61.74$\pm$0.45 & 59.63$\pm$0.78 \\
		& CSRA &	70.73$\pm$0.33 & 68.32$\pm$0.25 & 65.98$\pm$0.14 & 47.15$\pm$1.43 & 61.97$\pm$0.61 & 59.10$\pm$0.59  \\
   & WSIC & 69.40$\pm$0.16 & 67.66$\pm$0.15 & 65.47$\pm$0.43 & 50.92$\pm$0.98 & \secbest{62.72$\pm$0.42} & \secbest{61.31$\pm$0.30}
 \\		
			& Reweight-T max & 70.34$\pm$0.53 & 68.54$\pm$0.34 & 66.16$\pm$0.44 & 56.60$\pm$0.29 & 62.16$\pm$0.12 & 59.80$\pm$0.62
 \\	
			& Reweight-T 97\% & 56.52$\pm$0.32 & 54.00$\pm$0.68 & 54.13$\pm$0.95 & 51.43$\pm$1.34 & 52.35$\pm$0.04 & 51.93$\pm$0.74
 \\	
			& Reweight-DualT max &\secbest{67.18$\pm$0.63} & 64.80$\pm$0.80 & 62.88$\pm$1.34 & 61.06$\pm$1.06 & 58.86$\pm$0.60 & 59.63$\pm$1.04
  \\	
			& Reweight-DualT 97\% & 35.60$\pm$0.29 & 33.68$\pm$0.42 & 34.19$\pm$0.62 & 31.45$\pm$0.88 & 33.84$\pm$1.24 & 33.70$\pm$0.90
\\
			\cline{2-8}	
			& Reweight-Ours & \best{71.15$\pm$0.13} & \best{69.68$\pm$0.21} & \best{66.54$\pm$0.46} & \best{63.75$\pm$0.12} & \best{64.94$\pm$0.19} & \best{63.60$\pm$0.40} \\	
			\hline	
			\multirow{11}*{\rotatebox{90}{CF1}} & Standard &64.81$\pm$0.26 & 61.41$\pm$0.27 & 57.70$\pm$0.58 & 42.38$\pm$1.28 & 54.53$\pm$0.33 & 52.34$\pm$0.16  \\		
		& GCE &	 65.65$\pm$0.19 & 61.86$\pm$0.26 & 58.51$\pm$0.31 & 40.51$\pm$0.39 & 55.08$\pm$0.80 & 52.37$\pm$0.32  \\
		& CDR &	 64.91$\pm$0.38 & 61.58$\pm$0.36 & 57.65$\pm$0.49 & 42.83$\pm$1.22 & 54.72$\pm$0.77 & 52.30$\pm$0.36  \\
		& AGCN &	\secbest{66.92$\pm$0.48} & 63.41$\pm$0.50 & 58.40$\pm$0.37 & 39.44$\pm$0.31 & 55.00$\pm$0.13 & 53.12$\pm$0.29  \\
		& CSRA &66.38$\pm$0.40 & 62.87$\pm$0.53 & 58.41$\pm$0.43 & 41.52$\pm$1.18 & 55.39$\pm$0.98 & 52.63$\pm$0.70 \\
   & WSIC & 64.08$\pm$0.24 & 61.78$\pm$0.40 & 58.39$\pm$0.73 & 46.75$\pm$0.51 & 56.80$\pm$0.28 & 55.51$\pm$0.26
 \\		
			& Reweight-T max & 65.09$\pm$0.63 & 62.46$\pm$0.52 & 58.65$\pm$0.49 & 47.58$\pm$0.37 & 55.93$\pm$0.30 & 53.30$\pm$0.33
 \\	
			& Reweight-T 97\% & 54.88$\pm$0.52 & 51.90$\pm$0.56 & 53.18$\pm$1.04 & 48.72$\pm$0.86 & 51.04$\pm$0.13 & 51.18$\pm$0.13 \\	
			& Reweight-DualT max &  66.37$\pm$0.36 & \secbest{64.28$\pm$0.63} & \secbest{60.67$\pm$0.86} & \secbest{59.75$\pm$0.21} & \secbest{58.78$\pm$0.40} & \secbest{58.05$\pm$0.60} \\	
			& Reweight-DualT 97\% & 38.34$\pm$0.39 & 36.92$\pm$0.20 & 43.24$\pm$0.44 & 31.85$\pm$0.55 & 36.89$\pm$0.48 & 36.73$\pm$0.23  \\
			\cline{2-8}	
			& Reweight-Ours & \best{68.24$\pm$0.39} & \best{66.68$\pm$0.35} & \best{63.50$\pm$0.21} & \best{60.93$\pm$0.48} & \best{61.16$\pm$0.15} & \best{60.44$\pm$0.36}
			\\	
			\hline		
		\end{tabular}
		\label{cls_MSCOCO_ins_c}
	\end{table*}

\begin{table*}[h]
		\caption{Complete numerical results for classification performance with different loss correction ways on Pascal-VOC2007 dataset with class-dependent label noise. The best performances are in \textbf{bold}.}
		\centering
		\scriptsize
		\setlength\tabcolsep{5.8pt}
		\begin{tabular}{l|l|cc|cc|cc|cc}
			\hline	
			& Noise rates $(\rho_{-}, \rho_{+})$  &  (0,0.2) & (0,0.6) & (0.2,0) & (0.6,0)  & (0.1,0.1) & (0.2,0.2) &(0.017,0.2)& (0.034,0.4) \\		
			\hline	
			\multirow{3}*{\rotatebox{90}{mAP}}& Reweight-Ours & {84.43$\pm$0.46}  & {78.72$\pm$0.41} & {84.08$\pm$0.24} &  \textbf{74.46$\pm$0.56} &  {84.03$\pm$0.29} &  {80.44$\pm$0.52} & 84.09$\pm$0.62  &  {80.97$\pm$1.03}   \\		
			& Backward-Ours &83.41$\pm$0.18 & 67.22$\pm$2.97 & 81.13$\pm$0.45 & 63.82$\pm$1.36 & 79.00$\pm$0.88 & 70.44$\pm$1.80 & 81.27$\pm$0.37 & 69.22$\pm$1.96\\	
			& Forward-Ours & {84.96$\pm$0.28} & \textbf{80.23$\pm$0.25} & 83.41$\pm$0.72 & 71.45$\pm$3.65 & 83.19$\pm$0.24 & 76.77$\pm$2.14 & {84.31$\pm$0.51} & 80.46$\pm$1.02  \\	
			& Revision-Ours & \textbf{84.99$\pm$0.33} & 79.26$\pm$0.51 & \textbf{84.37$\pm$0.24}  & 74.44$\pm$1.16 & \textbf{84.49$\pm$0.36} & \textbf{80.61$\pm$0.73} & \textbf{84.98$\pm$0.11} & \textbf{82.14$\pm$0.10}\\
			\hline	
			\multirow{3}*{\rotatebox{90}{OF1}}& Reweight-Ours  & {78.62$\pm$0.58}  & 65.68$\pm$1.67 & {80.85$\pm$0.25} &  {67.43$\pm$4.65} &  79.64$\pm$0.29 &  {75.52$\pm$0.86} & {79.25$\pm$0.52}  &  {74.35$\pm$1.65} \\	
			& Backward-Ours & 74.98$\pm$0.56 & 16.35$\pm$5.06 & 78.55$\pm$0.33 & 14.57$\pm$0.29 & 74.35$\pm$1.12 & 63.60$\pm$4.39 & 71.68$\pm$0.78 & 36.38$\pm$4.13 \\	
			& Forward-Ours  & \textbf{80.09$\pm$0.45} & \textbf{70.38$\pm$0.29} & 80.44$\pm$0.50 & 59.35$\pm$10.36 & {79.77$\pm$0.35} & 73.92$\pm$2.33 & \textbf{79.92$\pm$0.62} & {75.18$\pm$1.24} \\
			& Revision-Ours &79.07$\pm$0.94 & 63.97$\pm$3.14 & \textbf{81.20$\pm$0.32} & \textbf{68.74$\pm$4.18} & \textbf{80.22$\pm$0.40} & \textbf{75.68$\pm$1.16} & 79.83$\pm$0.33 & \textbf{75.23$\pm$0.56}\\	
			\hline	
			\multirow{3}*{\rotatebox{90}{CF1}}& Reweight-Ours & {76.86$\pm$0.48}  & 61.29$\pm$1.94 & {77.89$\pm$0.42} &  {66.79$\pm$2.50} &  {78.04$\pm$0.40} &  \textbf{74.08$\pm$0.79} & {77.28$\pm$0.48}  & \textbf{72.18$\pm$0.74} \\	
			& Backward-Ours& 70.64$\pm$0.59 & 15.01$\pm$4.66 & 75.48$\pm$0.84 & 14.55$\pm$0.26 & 66.26$\pm$2.15 & 46.19$\pm$8.35 & 65.56$\pm$1.59 & 23.78$\pm$2.95\\	
			& Forward-Ours  & \textbf{77.85$\pm$0.56} & \textbf{64.64$\pm$1.69} & 77.14$\pm$0.35 & 62.26$\pm$3.06 & 77.72$\pm$0.25 & 70.45$\pm$4.38 & {77.74$\pm$0.80} & 71.76$\pm$1.45  \\
			& Revision-Ours &  76.82$\pm$0.90 & 58.56$\pm$3.72 & \textbf{78.26$\pm$0.51} & \textbf{66.89$\pm$1.46} & \textbf{78.55$\pm$0.58} & 73.94$\pm$1.06 & \textbf{77.86$\pm$0.33} & 72.12$\pm$1.16\\	
			\hline		
		\end{tabular}
		\label{discuss1_VOC2007}
	\end{table*}
	\begin{table*}[h]
		\caption{Complete numerical results for classification performance with different base learning algorithms on Pascal-VOC2007 dataset with class-dependent label noise. The best performances are in \textbf{bold}.}
		\centering
		\scriptsize
		\setlength\tabcolsep{5.8pt}
		\begin{tabular}{l|l|cc|cc|cc|cc}
			\hline	
			& Noise rates $(\rho_{-}, \rho_{+})$  &  (0,0.2) & (0,0.6) & (0.2,0) & (0.6,0)  & (0.1,0.1) & (0.2,0.2) &(0.017,0.2)& (0.034,0.4) \\		
			\hline	
						\multirow{6}*{\rotatebox{90}{mAP}}& Standard  & 84.25$\pm$1.07  & 77.16$\pm$0.94 & 82.70$\pm$0.54 &  68.65$\pm$1.57 &  83.07$\pm$0.45 &  78.87$\pm$0.52 & 83.92$\pm$0.59  &  80.97$\pm$0.42  \\	
			& +R-Ours  & {84.43$\pm$0.46}  & {78.72$\pm$0.41} & {84.08$\pm$0.24} &  {74.46$\pm$0.56} &  {84.03$\pm$0.29} &  {80.44$\pm$0.52} & 84.09$\pm$0.62  &  80.97$\pm$1.03  \\	\cline{2-10}
			&AGCN  & 83.24$\pm$0.67 & 75.50$\pm$0.56 & 81.09$\pm$0.51 & 66.47$\pm$1.29 & 81.09$\pm$0.48 & 73.79$\pm$0.76 & 82.21$\pm$0.42 & 76.55$\pm$1.11 \\	
			& +R-Ours& 85.07$\pm$0.56 & \textbf{80.35$\pm$0.69} & 83.41$\pm$0.68 & 66.07$\pm$4.32 & 83.79$\pm$0.41 & 78.18$\pm$2.53 & 84.38$\pm$0.30 & 80.76$\pm$0.91 \\	\cline{2-10}
			&CSRA  & {85.11$\pm$0.51} & {79.47$\pm$1.22} & 82.93$\pm$0.65 & 67.36$\pm$2.25 & 83.69$\pm$0.69 & 78.10$\pm$0.53 & {84.94$\pm$0.36} & {81.51$\pm$0.14} \\
			& +R-Ours & \textbf{85.83$\pm$0.53} & 77.64$\pm$3.21 & \textbf{84.66$\pm$0.60} & \textbf{75.41$\pm$1.77} & \textbf{84.68$\pm$0.44} & \textbf{81.01$\pm$0.57} & \textbf{85.74$\pm$0.29} & \textbf{82.64$\pm$0.35} \\	
			\hline	
			\multirow{6}*{\rotatebox{90}{OF1}}& Standard  & 75.24$\pm$1.40  & 32.02$\pm$5.49 & 78.85$\pm$0.43 &  15.08$\pm$0.25 &  79.24$\pm$0.43 &  {75.85$\pm$0.84} & 75.98$\pm$1.04  &  59.67$\pm$1.65  \\
			& +R-Ours & {78.62$\pm$0.58}  & 65.68$\pm$1.67 & {80.85$\pm$0.25} &  \textbf{67.43$\pm$4.65} &  79.64$\pm$0.29 &  {75.52$\pm$0.86} & {79.25$\pm$0.52}  &  {74.35$\pm$1.65} \\\cline{2-10}
			&AGCN  & 74.92$\pm$1.02 & 30.97$\pm$3.78 & 75.45$\pm$2.06 & 16.85$\pm$0.56 & 78.69$\pm$0.31 & 72.64$\pm$0.51 & 75.16$\pm$0.58 & 56.56$\pm$1.64  \\
			& +R-Ours & 80.28$\pm$0.41 & \textbf{71.36$\pm$2.52} & 79.18$\pm$1.38 & 46.77$\pm$3.34 & 79.06$\pm$0.92 & 73.14$\pm$2.63 & 79.49$\pm$0.76 & 75.78$\pm$2.11\\	\cline{2-10}
			&CSRA  & 76.94$\pm$1.03 & 33.65$\pm$2.73 & 77.71$\pm$1.23 & 15.94$\pm$0.32 & {80.36$\pm$0.53} & {76.92$\pm$0.34} & 77.91$\pm$0.63 & 62.19$\pm$1.97 \\
			& +R-Ours & \textbf{80.61$\pm$0.80} & 61.48$\pm$3.61 & \textbf{81.22$\pm$0.45} & 57.48$\pm$9.33 & \textbf{80.43$\pm$0.44} & \textbf{76.04$\pm$0.80} & \textbf{81.22$\pm$0.34} & \textbf{77.21$\pm$0.45}\\
			\hline	
			\multirow{6}*{\rotatebox{90}{CF1}}
			& Standard  & 72.53$\pm$1.11  & 30.64$\pm$3.90 & 76.83$\pm$0.65 &  14.97$\pm$0.24 &  75.86$\pm$1.23 &  70.68$\pm$1.76 & 73.11$\pm$0.54  &  52.07$\pm$2.34  \\
			& +R-Ours & {76.86$\pm$0.48}  & 61.29$\pm$1.94 & {77.89$\pm$0.42} &  \textbf{66.79$\pm$2.50} &  {78.04$\pm$0.40} &  {74.08$\pm$0.79} & {77.28$\pm$0.48}  & {72.18$\pm$0.74} \\	\cline{2-10}
			&AGCN  & 73.45$\pm$1.04 & 33.41$\pm$1.65 & 72.65$\pm$1.97 & 16.67$\pm$0.55 & 76.20$\pm$0.51 & 69.09$\pm$0.49 & 72.81$\pm$1.02 & 55.09$\pm$3.28 \\
& +R-Ours &\textbf{78.74$\pm$0.95} & \textbf{68.58$\pm$3.35} & 77.62$\pm$0.65 & 51.45$\pm$1.57 & 78.02$\pm$0.61 & 71.84$\pm$3.18 & 78.09$\pm$0.24 & 74.32$\pm$1.45 \\	\cline{2-10}			
			&CSRA  & 74.10$\pm$0.56 & 33.44$\pm$3.65 & 75.28$\pm$1.32 & 15.71$\pm$0.23 & 77.52$\pm$0.94 & 73.44$\pm$0.62 & 74.98$\pm$0.48 & 58.60$\pm$2.24 \\
			& +R-Ours & 78.65$\pm$0.75 & 59.28$\pm$3.71 & \textbf{78.32$\pm$0.73} & 62.59$\pm$6.25 & \textbf{78.98$\pm$0.45} & \textbf{75.28$\pm$0.37} & \textbf{79.52$\pm$0.39} & \textbf{74.38$\pm$0.98} \\	
			\hline		
		\end{tabular}
		\label{discuss2_VOC2007}
	\end{table*}
 
    \end{appendices}
\end{document}